\title{
\vspace{-5.5mm}
   \bf \Large Distributionally Robust Reinforcement Learning with Interactive Data Collection:  Fundamental Hardness and 
     Near-Optimal Algorithms
}
\author{
    Miao Lu\thanks{Equal contributions. Email to \texttt{miaolu@stanford.edu}, \texttt{hanzhong@stu.pku.edu.cn}} \thanks{Department of Management Science and Engineering, Stanford University.} \qquad 
    Han Zhong\footnotemark[1] \thanks{Center for Data Science, Peking University.} \qquad
    Tong Zhang\thanks{Department of Computer Science, University of Illinois Urbana-Champaign.}\qquad 
    Jose Blanchet\footnotemark[2]
}
\date{\small{April 5, 2024; \quad Revised: July 13, 2026}}
\begin{document}

%%%%%%%%%%%%%%%%%%%% title/author/date/abstract/content %%%%%%%%%%%%%%%%%%%%

\maketitle

\begin{abstract}
    The sim-to-real gap, which represents the disparity between training and testing environments, poses a significant challenge in reinforcement learning (RL). 
    A promising approach to addressing this challenge is distributionally robust RL, often framed as a robust Markov decision process (RMDP). 
    In this framework, the objective is to find a robust policy that achieves good performance under the worst-case scenario among all environments within a pre-specified uncertainty set centered around the training environment. 
    Unlike previous work, which relies on a generative model or a pre-collected offline dataset enjoying good coverage of the deployment environment, we tackle robust RL via interactive data collection, where the learner interacts with the training environment only and refines the policy through trial and error. 
    In this robust RL paradigm, two main challenges emerge: managing distributional robustness while striking a balance between exploration and exploitation during data collection. 
    Initially, we establish that sample-efficient learning without additional assumptions is unattainable owing to the curse of support shift; i.e., the potential disjointedness of the distributional supports between the training and testing environments. 
    To circumvent such a hardness result, we introduce the vanishing minimal value assumption to RMDPs with a total-variation (TV) distance robust set, postulating that the minimal value of the optimal robust value function is zero. 
    We prove that such an assumption effectively eliminates support shift pathologies for RMDPs with a TV distance robust set, and present an algorithm with near-optimal sample complexity. 
    To demonstrate the breadth of our framework, we
    further extend our algorithm and theory to new robust set formulations and robust Markov game settings.
    Finally, to illustrate the operational relevance of our framework, we apply our algorithm to the data-driven robust inventory control, yielding explicit learning guarantees for robust decision-making under demand shifts.
    Our work makes the initial step to uncovering the inherent difficulty of robust RL via interactive data collection and sufficient conditions for designing a sample-efficient algorithm accompanied by sharp sample complexity analysis.
\end{abstract}

\noindent
\textbf{Keywords:} distributionally robust reinforcement learning, interactive data collection, robust Markov decision process, robust Markov game, sample complexity, online regret

\newpage 

\tableofcontents

%%%%%%%%%%%%%%%%%%%% input main text here %%%%%%%%%%%%%%%%%%%%

\newpage
\section{Introduction}

Reinforcement learning (RL) serves as a framework for addressing complex decision-making problems through iterative interactions with environments.
Recent advancements in deep reinforcement learning have enabled the successful application of the general RL framework across various domains, including mastering strategic games, such as Go \citep{silver2017mastering}, robotics \citep{kober2013reinforcement}, and aligning large language models (LLMs; \citealp{ouyang2022training}).
The critical factors contributing to these successes encompass not only the potency of deep neural networks and modern deep RL algorithms but also the availability of substantial training data.
However, there are scenarios, such as healthcare \citep{wang2018supervised}, inventory control \citep{boute2022deep}, and autonomous driving \citep{kiran2021deep}, among others, where collecting RL data in the target domain is challenging, costly, or even infeasible.
In such cases, the sim-to-real transfer \citep{kober2013reinforcement,sadeghi2016cad2rl,peng2018sim,zhao2020sim} becomes a remedy, where the RL agent is trained in simulated environments and subsequently deployed in the real world.
Nevertheless, the discrepancy between the training environments and the testing environments, also known as the sim-to-real gap, will typically result in suboptimal performance of RL agents in real-world applications.
One promising strategy to mitigate performance degradation due to the sim-to-real gap is robust RL \citep{iyengar2005robust,pinto2017robust,hu2022provable}, which aims to learn policies exhibiting strong (i.e. robust) performance under environmental deviations from the training environment.
It effectively hedges the epistemic uncertainty arising from the differences between the training environment and the unknown testing environments.

A robust RL problem is formulated as a robust Markov decision process (RMDP), with different types of robust sets characterizing different environment shifts.
Prior theoretical works on robust RL have developed algorithms with provable sample complexity guarantees, but they typically rely on either a generative model\footnote{A generative model here means a mechanism that when queried at some state, action, and time step, returns a sample of next state. Here we distinguish this notion with the notion of simulator or simulated environment which generally refers to a human-made training environment that mimics the real-world environment.
} \citep{yang2022toward, panaganti2022sample, xu2023improved, shi2023curious} or offline datasets with good coverage of the deployment environment \citep{zhou2021finite, panaganti2022robust, shi2022distributionally,ma2022distributionally,blanchet2023double}. Notably, the current literature does not explicitly address the \emph{exploration} problem, which stands as one of the fundamental challenges in reinforcement learning through trial-and-error \citep{sutton2018reinforcement}.
Meanwhile, the empirical success of robust RL methods \citep{pinto2017robust, kuang2022learning, moos2022robust} typically relies on reinforcement learning through interactive data collection in the training environment, where the agent iteratively and actively interacts with the environment, collecting data, optimizing and robustifying its policy.
Given that all of the existing literature on the theory of robust RL relies on a generative model or a pre-collected offline dataset,  it is natural to ask:
\begin{center}
   \emph{Can we design a provably sample-efficient robust RL algorithm that relies on \\ interactive data collection in the training environment?}
\end{center}

Answering this question faces a fundamental challenge: during interactive data collection, the learner no longer has oracle control over the training data distributions that are induced by the policy learned through the interaction process.
In particular, it could be the case that certain data patterns that are crucial for the policy to be robust across all testing environments are not accessible through interactive data collection, even with a sophisticated exploration mechanism.
For example, specific states may not be accessible within the training environment dynamics but could be reached in the testing environment dynamics.

In contrast, previous work has demonstrated that robust RL through a generative model or a pre-collected offline dataset with good coverage does not face such difficulties.
For the generative model setup, fortunately, the learner can directly query any state-action pair and observe the sampled next state from the generator.
Intuitively, once the states that could appear in the testing environment trajectory are queried enough times, it is then possible to guarantee the performance of the learned policy in testing environments.
The situation is similar if one has a pre-collected offline dataset that enjoys good coverage of testing environments.
In this work, we make the initial step towards studying the theory and applications of robust RL with interactive data collection.
At a high level, our results and contributions are three-fold.
\begin{itemize}
    \item (\emph{Fundamental hardness.}) We first prove a hardness result for robust RL with interactive data collection. Precisely, certain RMDPs that are solvable sample-efficiently with a generative model or with sufficient offline data with good coverage properties are, in contrast, \emph{intractable} for robust RL through interactive data collection. This shows a gap between robust RL with these two different kinds of data-type oracles.
    \item (\emph{Solvable class and sample-efficient algorithm.}) We  identify a tractable subclass of RMDPs, for which we further propose a novel robust RL algorithm that can provably learn a near-optimal robust policy through interactive data collection.
    This implies that robust RL with interactive data collection is still possible for certain subclasses of RMDPs.
    \item (\emph{Extensions and applications.}) To demonstrate the breadth of our theory, we extend it to two additional robust RL settings: a different robust-set formulation and a multi-agent extension. Finally, to showcase the practical relevance of our framework, we instantiate it in a representative operations research application that naturally exhibits sim-to-real (model-shift) concerns.
    %(i) data-driven robust inventory control, and (ii) two-player zero-sum robust Markov games that jointly model strategic opponents and environment ambiguity. In both cases, we obtain end-to-end finite-sample guarantees under interactive data collection from the nominal system only.
\end{itemize}
Together, our work answers the above question and shows that robust RL with interactive data collection is not only theoretically characterizable, but also directly applicable to canonical operations problems. In the following section, we explain more explicitly the problem setup and the contributions we make.

%These two results combined answer the above question.
%In the following section, we explain more explicitly the problem setup we study and the theoretical contributions we make.

\subsection{Contributions}

This work primarily studies robust RL in a finite-horizon RMDP with an $\cS\times\cA$-rectangular total-variation distance (TV) robust set (see Assumption~\ref{ass: sa} and Definition~\ref{def: tv})\footnote{We notice that all of the previous work on sample-efficient robust RL in RMDPs with TV robust sets \citep{yang2022toward, panaganti2022sample, panaganti2022robust, xu2023improved, blanchet2023double, shi2023curious} relies on defining the TV distance through the general $f$-divergence so that a strong duality representation holds.
But this implicitly requires the testing environment transition probability to be absolutely continuous w.r.t. the training environment transition probability.
In this paper, we do not make such a restriction.
We prove the same strong duality even if the absolute continuity does not hold.
In fact, all the previous work can be directly extended to such TV distance definition via our more general strong duality result.} with interactive data collection.
% We give both a fundamental hardness result in the general case and a sample-efficient algorithm within tractable settings.

\paragraph{Fundamental hardness.} We construct a class of hard-to-learn RMDPs (see Example~\ref{exp: hard} and Figure~\ref{fig:hardmdp}) and demonstrate that \emph{any} learning algorithm inevitably incurs an $\Omega(\rho  \cdot HK)$-online regret (see \eqref{eq: regret}) under at least one RMDP instance.
Here, $\rho$ signifies the radius of the TV robust uncertainty set, $H$ is the horizon, and $K$ denotes the number of interactive episodes. This linear regret lower bound underscores the impossibility of sample-efficient robust RL via interactive data collection in general.

\paragraph{Identifying a tractable case.} Upon close examination of the challenging instance, we recognize that the primary obstacle to achieving sample-efficient learning lies in the \emph{curse of support shift}, i.e., the disjointedness of distributional support between the training environment and the testing environments.
In a broader sense, the curse of support shift also refers to situations where states that often appear in testing environments are extremely hard to reach in the training environment\footnote{We remark that an existing work of \cite{dong2022online} also studies the problem of robust RL with interactive data collection.
They consider $\cS\times\cA$-rectangular RMDPs with a TV robust set, assuming that the support of the training environment transition is the full state space.
They claim the existence of an algorithm that enjoys a $\widetilde{\cO}(\sqrt{K})$-online regret. We point out that their proof exhibits an essential flaw (misuse of Lemma 12 therein) and therefore the regret they claim is invalid.
}.

To rule out these pathological instances, we propose the \emph{vanishing minimal value} assumption (Assumption~\ref{ass: zero min}), positing that the optimal robust value function reaches zero at a specific state. Such an assumption naturally applies to the sparse reward RL paradigm and offers a broader scope compared to the ``fail-state" assumption utilized in prior studies on offline RMDPs with function approximation \citep{panaganti2022robust}. For a comprehensive discussion on this comparison, please refer to Remark~\ref{remark:assumption:compare}. On the theoretical front, we establish that the vanishing minimal value assumption effectively mitigates the support shift issues between training and the testing environments (Proposition~\ref{prop: equivalent robust set}), rendering robust RL with interactive data collection feasible for RMDPs equipped with TV robust sets.

\paragraph{Efficient algorithm with sharp sample complexity.} Under the vanishing minimal value assumption, we develop an algorithm named \underline{OP}timistic \underline{RO}bust \underline{V}alue \underline{I}teration for \underline{TV} Robust Set (\texttt{OPROVI-TV}, Algorithm~\ref{alg: tv}).
We first prove that \texttt{OPROVI-TV} achieves sublinear online robust regret of order $\widetilde{\cO}(\sqrt{K})$ over $K$ episodes of interactive data collection (Theorem~\ref{thm: regret tv}).
By a standard online-to-batch conversion, this regret guarantee further implies that \texttt{OPROVI-TV} can find an $\varepsilon$-optimal robust policy within
\begin{align}
    \widetilde{\mathcal{O}}\Bigg(\min\{H, \rho^{-1}\} \cdot \frac{H^{2}SA }{\varepsilon^2}\Bigg)\label{eq: sample complexity}
\end{align}
interactive samples (Corollary~\ref{cor: sample complexity tv}).
Here $S$ and $A$ denote the number of states and actions, $\rho$ represents the radius of the TV robust set, and $H$ is the horizon length of each episode.
To the best of our knowledge, this is the first provably sample-efficient algorithm for robust RL with interactive data collection.

According to \eqref{eq: sample complexity}, the sample complexity of finding an $\varepsilon$-optimal robust policy decreases as the radius $\rho$ of the robust set increases.
This coincides with the findings of \cite{shi2023curious} who consider robust RL in infinite-horizon discounted RMDPs with TV robust sets within the generative model setup.
When the radius $\rho=0$, an RMDP reduces to a standard MDP, and the sample complexity \eqref{eq: sample complexity} recovers the minimax-optimal sample complexity for online RL in standard MDPs up to logarithm factors, i.e., $\widetilde{\cO}(H^3SA/\varepsilon^2)$.
At the other extreme, when $\rho\rightarrow 1$\footnote{We do not signify the situation when $\rho = 1$ since in that case the TV robust set  contains all possible transition probabilities, making  the problem statistically trivial. In that case, no sample is needed.}, finding an $\varepsilon$-optimal robust policy turns out to require nearly a factor of $H$ fewer samples, up to logarithmic factors, than finding the optimal policy in a standard MDP.

\paragraph{Extensions to other robust RL setups.}
Going beyond the main results on robust RL in finite-horizon RMDPs with $\cS\times\cA$-rectangular TV robust sets, we further extend our algorithm and theory to other types of robust decision-making setups.
Specifically:
\begin{itemize}
    \item (\emph{Robust RL with other robust sets.}) We first study the problem of robust RL in another type of RMDPs,  $\cS\times\cA$-rectangular discounted RMDPs equipped with robust sets consisting of transition probabilities with bounded ratio to the nominal kernel (Section~\ref{subsec: extentions}).
    This class of RMDPs naturally does not suffer from the support shift issue, and we prove that it is equivalent to the $\cS\times\cA$-rectangular RMDP with TV robust set and vanishing minimal value assumption in an appropriate sense due to Proposition~\ref{prop: equivalent robust set}.
    Consequently, using Algorithm~\ref{alg: tv} through the auxiliary construction, we can also solve robust RL for this new model sample-efficiently (Corollary~\ref{cor: regret discount}).
    Such a result also echoes our intuition on the curse of support shift.
    \item (\emph{Robust multi-agent RL in robust Markov games.})
    We further extend our framework to robust Markov games \citep{kardes2005robust} that jointly capture strategic opponents and environment ambiguity (Section~\ref{sec:rmg_extension}).
    This setting is motivated by multi-agent decision-making problems in operations research (e.g., security games, competitive resource allocation), where one must simultaneously hedge against the worst-case opponent behavior and the worst-case transition.
    We introduce the robust Nash value with Bellman--Shapley recursion and establish its existence and strong duality under $\cS\times\cA\times\cB$-rectangular TV robust set (Proposition~\ref{prop:robust_rmg_minimax}).
    Here $\cA$ and $\cB$ denote the action spaces of the two players in the game  with $A$ and $B$ denoting their cardinalities.
    Building on this structure, we develop \texttt{OPROVI-TV-MG} (Algorithm~\ref{alg:onesided_oprovitv_mg}), a game-theoretic extension of \texttt{OPROVI-TV} that estimates the joint-action transition kernels and performs optimistic robust max--min planning by solving per-state matrix games in each backward pass.
    Under the vanishing minimal value assumption for RMGs, our Theorem~\ref{thm:onesided_regret} proves that \texttt{OPROVI-TV-MG} achieves a sublinear online robust regret over $K$ episodes of interactive data collection of order
    $${\widetilde{\cO}\left(\min\{H,\rho^{-1}\}\,H S\sqrt{A B K}\right)}$$
    against any adaptive Markov opponent sequence, showing that Player~1 can approach the robust Nash value through online interaction.
\end{itemize}

\paragraph{Applications to data-driven inventory control.}
To demonstrate the practical relevance of our framework, we apply our algorithm and theory to a canonical operations-management problem:
\begin{itemize}
    \item (\emph{Data-driven robust inventory control under demand distribution shifts.})
    We apply our framework to data-driven robust inventory control under demand distribution shift (Section~\ref{sec:inventory}).
    Motivated by the fact that the actual demand law may deviate from the training demand law simulating the real-world inventory or representing the historical pattern, we model demand perturbations via an $\cS\times\cA$-rectangular TV ambiguity set, which induces a conservative TV ball on the transition kernel (Lemma~\ref{lem:tv:contraction}).
    Exploiting that the induced inventory system forms a finite-horizon RMDP equipped with $\cS\times\cA$-rectangular TV robust set and satisfies the vanishing minimal value condition due to the existence of absorbing aggregated emergency (fail) state, we are able to directly apply \texttt{OPROVI-TV} (Algorithm~\ref{alg: tv}) to learn a distributionally robust ordering policy from interactive data collection in the training environment only.
    In particular, Theorem~\ref{thm:inventory_control} establishes that \texttt{OPROVI-TV} finds an $\varepsilon$-optimal robust inventory policy within
    $$\widetilde{\cO}\left(\min\{H,\rho^{-1}\} \cdot \frac{H^{2} (B+I) Q}{\varepsilon^2}\right)$$
    episodes of interactive data collection in the training environment. Here $\rho$ signifies the robust set size reflecting the demand shift, $I$ denotes the inventory capacity, $B$ denotes the backlog threshold, and $Q$ is the order capacity (see more concrete definitions in Section~\ref{sec:inventory}).
\end{itemize}

\subsection{Related Works}\label{subsec: related works}

\paragraph{Robust reinforcement learning in robust Markov decision processes.}
Robust RL is usually framed as a robust Markov decision process (RMDP) \citep{iyengar2005robust,el2005robust,wiesemann2013robust}.
There is a long line of work dedicated to the problem of how to solve for the optimal robust policy of a given RMDP, i.e., planning  \citep{iyengar2005robust,el2005robust,xu2010distributionally,wang2022policy,wang2022convergence, kuang2022learning, pmlr-v202-wang23i, yu2023fast, zhou2023natural, li2023first, wang2023foundation, ding2024seeing}.
Recently, the community has also witnessed a growing body of work on sample-efficient robust RL in RMDPs with different data collection oracles, including the generative model setup \citep{yang2022toward,panaganti2022sample,si2023distributionally,wang2023finite,yang2023avoiding, xu2023improved, clavier2023towards, wang2023sample, shi2023curious}, offline setting \citep{zhou2021finite, panaganti2022robust, shi2022distributionally,ma2022distributionally, blanchet2023double,liu2024minimax,wang2024sample}, and interactive data collection setting \citep{badrinath2021robust, wang2021online, liu2024distributionally}.

Our work falls into the paradigm of sample-efficient robust RL through interactive data collection.
\cite{wang2021online} and \cite{badrinath2021robust} propose efficient online learning algorithms to obtain the optimal robust policy of an infinite horizon RMDP, but none of them handle the challenge of exploration in online RL by assuming the access to \emph{explorative policies}. This assumption enables the learner to collect high-quality data essential for effective learning and decision-making. In contrast, our work focuses on developing efficient algorithms for the fully online setting, where there is no predefined exploration policy to use. Under this more challenging setting, we address the exploration challenge through algorithmic design rather than relying on assumed access to explorative policies.

During the preparation of this work, we are aware of several concurrent and independent works \citep{liu2024distributionally,liu2024minimax,wang2024sample}, which study a different type of RMDPs known as $d$-rectangular linear MDPs \citep{ma2022distributionally,blanchet2023double}.
In particular, \citet{liu2024minimax} and \citet{wang2024sample} consider the offline setting, while \citet{liu2024distributionally} investigate robust RL through interactive data collection (off-dynamics learning), thus bearing closer relevance to our work.
More specifically, under the existence of a ``fail-state", the algorithm in \citet{liu2024distributionally} can learn an $\varepsilon$-optimal robust policy with provable sample efficiency.
In contrast, our work first explicitly uncovers the fundamental hardness of robust RL in RMDPs with TV robust set and without additional assumptions.
To overcome the inherent difficulty, we adopt a vanishing minimal value assumption that strictly generalizes the ``fail-state" assumption used in \cite{liu2024distributionally}.
Moreover, our focus is on tabular $\cS\times\cA$-rectangular RMDPs, with customized algorithmic design and theoretical analysis which allow us to obtain a sharp sample complexity bound.

Finally, in Table~\ref{table: tv}, we compare the learning guarantees of our algorithms with those of prior work on robust RL for RMDPs with $\cS\times\cA$-rectangular TV robust sets under various settings (generative model/offline dataset), and report the regret guarantee for our robust Markov game extension.

\begin{table}[!t]
    \centering
    \begingroup
    \small
    \renewcommand\arraystretch{1.25}
    \setlength{\tabcolsep}{3pt}
    \begin{tabular}{ | c | c | c | c | }
    \hline
    Model Assump. & Algorithm
    & Data oracle
    & \makecell[c]{Sample complexity \\
    / regret}
      \\
      \hline
      \multirow{5}*{general case}&\texttt{RPVL} \citep{xu2023improved} & generative model & $\widetilde{\mathcal{O}}\left(\frac{H^{5}SA}{\varepsilon^2}\right)$ \\
      \cline{2-4}
      & \texttt{DRVI} \citep{shi2023curious}& generative model &$\widetilde{\mathcal{O}}\left(\frac{\min\{H_{\gamma}, \rho^{-1}\}H^{2}_{\gamma}SA}{\varepsilon^2}\right)$ \\
      \cline{2-4}
      & lower bound \citep{shi2023curious} & generative model &$\Omega\left(\frac{\min\{H_{\gamma}, \rho^{-1}\}H^{2}_{\gamma}SA}{\varepsilon^2}\right)$ \\
      \cline{2-4}
      &$\texttt{P}^2$\texttt{MPO} \citep{blanchet2023double} &\makecell[c]{offline dataset %\\ (robust partial cov. $\cC^{\star}_{\mathrm{rob}}$)
      } &$\widetilde{\cO}\left(\frac{\cC^{\star}_{\mathrm{rob}}H^4S^2A}{\varepsilon^2}\right)$\\
      \cline{2-4}
      &lower bound (this work) &interactive data collection & intractable\\
    \hline
    \hline
    \makecell[c]{``fail-state" \\ assumption}&\texttt{RFQI} \citep{panaganti2022robust} &\makecell[c]{offline dataset %\\ (full coverage $\cC_{\mathrm{full}}$)
    }& $\widetilde{\cO}\left(\frac{\cC_{\mathrm{full}}H_{\gamma}^4SA}{\rho^2\varepsilon^2}\right)$\\
    \hline
    \makecell[c]{vanishing \\ minimal value \\
    (Assumption~\ref{ass: zero min})}& \texttt{OPROVI-TV} (this work) & interactive data collection  & $\widetilde{\mathcal{O}}\left(\frac{\min\{H, \rho^{-1}\}H^{2}SA}{\varepsilon^2}\right)$    \\
    \hline
    \hline
    \makecell[c]{vanishing \\ minimal value \\
    (Assumption~\ref{ass:onesided_vmv})}&\texttt{OPROVI-TV-MG} (this work) &interactive data collection
    & {$\widetilde{\cO}\left(\min\{H, \rho^{-1}\}H S\sqrt{ABK}\right)$}\\
    \hline
    \end{tabular}
    \endgroup
    \caption[Learning guarantees for robust RL with TV ambiguity and Markov games]{\small
    Comparison of learning guarantees for robust RL with TV ambiguity and its Markov-game extension. For the RMDP rows, the last column reports the sample complexity for learning an $\varepsilon$-optimal robust policy; for the RMG row, it reports online regret against adaptive Markov opponents. The rows involving infinite-horizon $\gamma$-discounted RMDPs use $H_{\gamma}:=(1-\gamma)^{-1}$ as the effective horizon. The quantities $\cC_{\mathrm{rob}}^\star$ and $\cC_{\mathrm{full}}$ denote robust partial and full coverage coefficients.
    }
    \label{table: tv}
\end{table}

\paragraph{Sample-efficient online non-robust reinforcement learning.} Our work is also closely related to online non-robust RL, which is often formulated as a Markov decision process (MDP) with online data collection. For non-robust online RL, the key challenge is the exploration-exploitation tradeoff.
There has been a long line of work \citep{azar2017minimax,dann2017unifying,jin2018q,zanette2019tighter,zhang2020almost,zhang2021reinforcement,menard2021ucb,wu2022nearly,li2023q,zhang2023settling} addressing this challenge in the context of tabular MDPs, where the state space and action space are finite and also relatively small.
In particular, many algorithms (e.g., \texttt{UCBVI} in \citet{azar2017minimax}) have been proven capable of finding an $\varepsilon$-optimal policy within $\widetilde{\cO}(H^3SA/\varepsilon^2)$ sample complexity. Notably, a standard MDP corresponds to an RMDP with a TV robust set and $\rho = 0$, suggesting that \texttt{OPROVI-TV} can naturally achieve nearly minimax-optimality for non-robust RL.
Moving beyond the tabular setups, recent works also investigate online non-robust RL with linear function approximation \citep{jin2020provably,ayoub2020model,zhou2021nearly,zhong2023theoretical,huang2023tackling,he2023nearly,agarwal2023vo} and even general function approximations \citep{jiang2017contextual,sun2019model,du2021bilinear,jin2021bellman,foster2021statistical,liu2022welfare, zhong2022gec,liu2023one,huang2023horizon,xu2023bayesian,agarwal2023vo}.

% \paragraph{Corruption robust reinforcement learning.} Generally speaking, our research is also related to another form of robust RL, namely corruption robust RL \citep{lykouris2021corruption,wei2022model,zhang2022corruption,ye2023corruptiononline,ye2023corruption,yang2023towards, ye2024towards}.
% This branch of researches on robust RL aim to address scenarios where training data is corrupted, presenting a distinct challenge from distributionally robust RL.
% The latter concerns testing time robustness, where the agent is evaluated in a perturbed environment after being trained on nominal data.
% These two forms of robust RL, while sharing the overarching goal to enhance agent resilience, operate within different contexts and confront distinct challenges.
% Thus, a direct comparison between these two types of robust RL is difficult because each addresses unique aspects of resilience.

\paragraph{Robust RL in robust Markov games.}
Robust Markov games, also known as robust stochastic games, extend robust Markov decision process to multi-agent settings with both strategic interaction between agents and environment ambiguity \citep{kardes2005robust}.
On the offline and generative model side, \citet{blanchet2023double} study offline RL in robust Markov games under general function approximations and propose the robust Nash equilibrium gap (RNE gap) as the performance criterion for a learned joint policy profile.
\citet{shi2024sampleefficient} consider the tabular robust Markov games with a generative model oracle, studying the sample complexity of learning robust variants of Nash, correlated, and coarse correlated equilibria using equilibrium-gap metrics.
\citet{shi2024breaking} propose a different model of robust Markov games and obtain improved sample complexity guarantees that avoid the curse of dimensionality in the joint action space.
All these works differ from ours in both data access and learning objective:
they focus on robust equilibrium learning for a jointly controlled policy profile, whereas we study online interactive data collection with unknown transitions and an external adversarial opponent.
Our goal is not to learn an approximate RNE for all players, but to control the regret of Player~1 against robust Nash value.
On the online side, recent works \citep{farhat2025sample,zheng2025distributionally} also study learning in robust Markov games. However, as discussed in Remark~\ref{rmk:online_rmg_diff}, they use cumulative equilibrium-gap objectives that are closer to the online analogue of the RNE gap, whereas our formulation uses a robust regret benchmark against the realized opponent policies, tailored to the setting where only Player~1 is under our control.
% More broadly, prior robust multi-agent RL works under model uncertainty, such as \citet{zhangk2020robust}, mainly focus on robust equilibrium computation or planning rather than online exploration with regret guarantees.

\paragraph{Data-driven inventory control.} Learning-based inventory control for unknown demand has been studied extensively in non-robust settings \citep[e.g.,][]{huh2011adaptive,shi2016nonparametric,agrawal2019learning,zhang2020closing,yuan2021marrying,lyu2024ucb,fan2024don}.
On the robustness side, the classical work goes back to Scarf's minimax formulation \citep{scarf1957min} and subsequent robust and distributionally robust approaches to inventory control \citep{bertsimas2006robust,klabjan2013robust,xin2022distributionally}, which primarily focus on modeling demand ambiguity and solving the corresponding robust optimization or control problem. Our setting is different from both lines of prior work: we consider finite-sample learning from interactive data collected in a training environment, seeking a policy robust to demand shifts at deployment.

\paragraph{Additional follow-up works.}
Following the initial version of this paper, several subsequent works extend the scope of the problem into different directions.
\citet{liu2024upper} study the linear function approximation setting and improve upon the earlier result from \citet{liu2024distributionally}.
\citet{he2025sample,ghosh2025orvit} consider online distributionally robust RL under other robust sets beyond the TV-based model studied here.
\citet{ghosh2025scaling} further investigates the problem under general function approximation.
Finally, \citet{zheng2025distributionally,farhat2025sample} extend the interactive robust RL problem to robust Markov games.

\subsection{Notations}
For any positive integer $H\in\mathbb{N}_+$, we denote $\{1, 2, \ldots, H\}$ by $[H]$.  Given a set $\cX$, we denote $\Delta(\cX)$ as the set of probability distributions over $\cX$. For any distribution $p\in\Delta(\cX)$, we define the shorthand for expectation and variance as
\begin{align}
    \mathbb{E}_{p(\cdot)}[f] := \mathbb{E}_{X\sim p(\cdot)}[f(X)],\quad \mathbb{V}_{p(\cdot)}[f] = \mathbb{E}_{p(\cdot)}[f^2] - (\mathbb{E}_{p(\cdot)}[f])^2.
\end{align}
For any set $\mathcal{Q} \subseteq \Delta(\cX)$, we define the robust expectation operator as
\begin{align}
    \mathbb{E}_{\mathcal{Q}}[f] := \inf_{p(\cdot)\in\mathcal{Q}}\mathbb{E}_{X\sim p(\cdot)}[f(X)].
\end{align}
For any $x, a\in\mathbb{R}$, we denote $(x)_+ =\max\{x, 0\}$ and $x\vee a = \max\{x, a\}$.
We use $\cO(\cdot)$ to hide absolute constant factors and use $\widetilde{\cO}$ to further hide logarithmic factors.

\section{Preliminaries}

\subsection{Robust Markov Decision Processes}\label{subsec: robust MDP}

We first introduce our underlying model for doing robust RL, the episodic robust Markov decision process (RMDP), denoted by a tuple $(\mathcal{S}, \mathcal{A}, H, P^{\star}, R, \mathbf{\Phi})$.
Here the set $\mathcal{S}$ is the state space and the set $\mathcal{A}$ is the action space, both with finite cardinality.
The integer $H$ is the length of each episode.
The set $P^{\star}=\{P_h^{\star}\}_{h=1}^H$ is the collection of \emph{nominal} transition kernels where $P_h^{\star}:\mathcal{S}\times\mathcal{A}\mapsto\Delta(\mathcal{S})$.
The set $R=\{R_h\}_{h=1}^H$ is the collection of reward functions where $R_h:\mathcal{S}\times\mathcal{A}\mapsto[0,1]$.
For simplicity, we denote $\mathcal{P} = \{P(\cdot|\cdot,\cdot):\mathcal{S}\times\mathcal{A}\mapsto\Delta(\mathcal{S})\}$ as the space of all possible transition kernels, and we denote $S = |\cS|$ and $A = |\cA|$.

Most importantly and different from standard MDPs, the RMDP is equipped with a mapping $\mathbf{\Phi}:\mathcal{P}\mapsto 2^{\mathcal{P}}$ that characterizes the \emph{robust set} of any transition kernel in $\mathcal{P}$.
Formally, for any transition kernel $P\in\mathcal{P}$, we call $\mathbf{\Phi}(P)$ the \emph{robust set} of $P$.
One could interpret the nominal transition kernel $P^{\star}_h$ as the transition of the training environment, while $\boldsymbol{\Phi}(P^{\star}_h)$ contains all possible transitions of the testing environments.

Given an RMDP $(\mathcal{S}, \mathcal{A}, H, P^{\star}, R, \mathbf{\Phi})$, we consider using a Markovian policy to make decisions.
A Markovian decision policy (or simply, policy) is defined as $\pi=\{\pi_h\}_{h=1}^H$ with $\pi_h:\mathcal{S}\mapsto\Delta(\mathcal{A})$ for each step $h\in[H]$.
To measure the performance of a policy $\pi$ in the RMDP, we introduce its \emph{robust value function}, defined as
\begin{align}
    V_{h, P^{\star}, \mathbf{\Phi}}^{\pi}(s)&:= \inf_{\widetilde{P}_h\in\mathbf{\Phi}(P_h^{\star}), 1\leq h\leq H} \mathbb{E}_{\{\widetilde{P}_h\}_{h=1}^H,\{\pi_h\}_{h=1}^H}\left[\sum_{i=h}^HR_{i}(s_i,a_i)\, \middle|\, s_h=s\right],\quad \forall s\in\mathcal{S},\label{eq: robust V}\\
    Q_{h, P^{\star}, \mathbf{\Phi}}^{\pi}(s, a)&:= \inf_{\widetilde{P}_h\in\mathbf{\Phi}(P_h^{\star}), 1\leq h\leq H}\mathbb{E}_{\{\widetilde{P}_h\}_{h=1}^H,\{\pi_h\}_{h=1}^H}\left[\sum_{i=h}^HR_{i}(s_i,a_i)\, \middle|\, s_h=s,a_h=a\right],\quad \forall (s,a)\in\mathcal{S}\times\mathcal{A}.\label{eq: robust Q}
\end{align}
Here the expectation is taken w.r.t. the state-action trajectories induced by policy $\pi$ under the transition $\widetilde{P}$.
One can also extend the definition of the robust value functions in terms of any collection of transition kernel $P = \{P_h\}_{h=1}^H\subset\mathcal{P}$ as $ V_{h, P, \mathbf{\Phi}}^{\pi}$ and $Q_{h, P, \mathbf{\Phi}}^{\pi}$, which we usually use in the sequel.

Among all the policies, we define the optimal robust policy $\pi^{\star}$ as the policy that can maximize the robust value function at the initial time step $h=1$, i.e.,
\begin{align}\label{eq: optimal robust policy}
    \pi^{\star} = \argmax_{\pi=\{\pi_h\}_{h=1}^H} V_{1,P^{\star},\boldsymbol{\Phi}}^{\pi} (s_1),\quad \forall s_1\in\mathcal{S}.
\end{align}
In other words, the optimal robust policy $\pi^{\star}$ maximizes the worst case expected total rewards in all possible testing environments.
For simplicity and without loss of generality, we assume in the sequel that the initial state $s_1\in\cS$ is fixed.
Our results could be directly generalized to $s_1\sim p_0(\cdot) \in\Delta(\cS)$.
Similarly, we can also define the optimal robust policy associated with a given stochastic process defined through any collection of transition kernels $P = \{P_h\}_{h=1}^H\subset\mathcal{P}$ in the same way as \eqref{eq: optimal robust policy}.
We denote the optimal robust value functions associated with $P$ as $V_{h, P, \mathbf{\Phi}}^{\star}$ and $ Q_{h, P, \mathbf{\Phi}}^{\star}$ respectively.

\paragraph{$\cS\times\cA$-rectangularity and robust Bellman equations.}
We consider robust sets $\boldsymbol{\Phi}$ that have the $\cS\times\cA$-rectangular structure \citep{iyengar2005robust}, which requires that the robust set is decoupled and independent across different $(s,a)$-pairs.
This kind of structure results in a dynamic programming representation of the robust value functions (efficient planning), and is thus commonly adopted in the literature of distributionally robust RL.
More specifically, we assume the following.
\begin{assumption}[$\cS\times\cA$-rectangularity]\label{ass: sa}
    We assume that, for any transition kernel $P\in\cP$, the robust set $\boldsymbol{\Phi}(P)$ takes the form
    \begin{align}
        \mathbf{\Phi}(P) = \bigotimes_{(s,a)\in\mathcal{S}\times\mathcal{A}} \mathcal{P}(s,a; P),\quad where\quad \mathcal{P}(s,a; P)\subseteq\Delta(\cS).
    \end{align}
\end{assumption}

Under the $\cS\times\cA$-rectangularity (Assumption~\ref{ass: sa}), we have the so-called robust Bellman equation \citep{iyengar2005robust,blanchet2023double} which gives a dynamic programming representation of robust value functions.

\begin{proposition}[Robust Bellman equation]\label{prop: robust bellman equation}
    Under Assumption~\ref{ass: sa}, for any transition $P=\{P_h\}_{h=1}^H\subseteq\mathcal{P}$ and any policy $\pi=\{\pi_h\}_{h=1}^H$ with $\pi_h:\mathcal{S}\mapsto\Delta(\mathcal{A})$, it holds that for any $(s,a,h)\in\cS\times\cA\times[H]$,
    \begin{align}
        V_{h, P, \mathbf{\Phi}}^{\pi}(s) = \mathbb{E}_{\pi_h(\cdot|s)}\big[Q_{h, P, \mathbf{\Phi}}^{\pi}(s, \cdot)\big],\quad Q_{h, P, \mathbf{\Phi}}^{\pi}(s, a) = R_h(s,a) + \mathbb{E}_{\mathcal{P}(s,a;P_h)}\big[V_{h+1, P, \mathbf{\Phi}}^{\pi}\big].\label{eq: robust bellman V Q}
    \end{align}
\end{proposition}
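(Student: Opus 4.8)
The plan is to prove the robust Bellman equation \eqref{eq: robust bellman V Q} by exploiting the $\cS\times\cA$-rectangular structure (Assumption~\ref{ass: sa}) to decouple the infimum over worst-case transitions across time steps. The first equation, relating $V^{\pi}_{h,P,\boldsymbol{\Phi}}$ to $Q^{\pi}_{h,P,\boldsymbol{\Phi}}$, follows almost immediately from the definitions \eqref{eq: robust V} and \eqref{eq: robust Q}: the only difference between the two value functions is that $Q$ conditions additionally on $a_h=a$ while $V$ averages $a_h\sim\pi_h(\cdot\,|\,s)$. Since the choice of $a_h$ is made by the fixed policy and does not interact with the infimum over $\{\widetilde P_i\}_{i\geq h}$ (the transition at step $h$ only affects steps $h+1$ onward), one can pull the expectation $\mathbb{E}_{\pi_h(\cdot|s)}$ outside the infimum. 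I would make this precise by noting that the trajectory distribution factors as $\pi_h(a_h|s_h)\,\widetilde P_h(s_{h+1}|s_h,a_h)\cdots$, so conditioning on $s_h=s$ and then averaging over $a_h$ reproduces exactly the $Q$-to-$V$ relation.

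The substantive part is the second equation, the one-step recursion for $Q$. The plan is a standard backward-induction / tower-property argument, but the key step that makes it work is the interchange of the infimum over the worst-case kernels with the one-step expectation. I would start from the definition of $Q^{\pi}_{h,P,\boldsymbol{\Phi}}(s,a)$, peel off the immediate reward $R_h(s,a)$ (which is deterministic given $(s,a)$ and so passes through both the infimum and the expectation untouched), and write the remaining sum $\sum_{i=h+1}^H R_i(s_i,a_i)$ as an expectation over $s_{h+1}\sim\widetilde P_h(\cdot\,|\,s,a)$ of the conditional continuation value. By the tower property, this continuation value, once we condition on $s_{h+1}$, is governed only by the kernels $\{\widetilde P_i\}_{i\geq h+1}$ at the remaining steps. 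The crucial observation is that $\cS\times\cA$-rectangularity makes the robust set a product set $\boldsymbol{\Phi}(P)=\bigotimes_{(s,a)}\mathcal{P}(s,a;P)$, so the joint infimum $\inf_{\widetilde P_i\in\boldsymbol{\Phi}(P_i),\,i\geq h}$ separates into an independent infimum over $\widetilde P_h(\cdot\,|\,s,a)\in\mathcal{P}(s,a;P_h)$ at the current state-action pair and an independent infimum over the remaining kernels.

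The main obstacle I anticipate is justifying the interchange of the inner infimum (over the downstream kernels) with the outer expectation over $s_{h+1}$. Concretely, after separating the product set, one faces $\inf_{\widetilde P_h}\mathbb{E}_{s_{h+1}\sim\widetilde P_h(\cdot|s,a)}\big[\inf_{\{\widetilde P_i\}_{i\geq h+1}}\mathbb{E}[\cdots\mid s_{h+1}]\big]$, and one must recognize the inner infimum-expectation as precisely $V^{\pi}_{h+1,P,\boldsymbol{\Phi}}(s_{h+1})$. This requires that the downstream infimum can be taken pointwise in $s_{h+1}$ and then pushed inside the expectation — which is exactly what rectangularity guarantees, since the downstream robust set does not depend on the choice of $\widetilde P_h$ and decouples across the realized next state. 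I would make this rigorous by arguing both inequalities: the ``$\geq$'' direction follows because any feasible downstream kernel gives an upper bound on the continuation value that exceeds $V^{\pi}_{h+1}$ pointwise, and the ``$\leq$'' direction follows by choosing, for each $s_{h+1}$, a (near-)minimizing downstream kernel, which is admissible precisely because the product structure imposes no coupling constraint between $\widetilde P_h$ and the downstream kernels. Collecting the terms yields $Q^{\pi}_{h,P,\boldsymbol{\Phi}}(s,a)=R_h(s,a)+\inf_{\widetilde P_h(\cdot|s,a)\in\mathcal{P}(s,a;P_h)}\mathbb{E}_{\widetilde P_h(\cdot|s,a)}\big[V^{\pi}_{h+1,P,\boldsymbol{\Phi}}\big]$, which by the definition of the robust expectation operator $\mathbb{E}_{\mathcal{P}(s,a;P_h)}[\cdot]$ is exactly \eqref{eq: robust bellman V Q}.
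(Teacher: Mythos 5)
Your proposal is correct and follows essentially the same route as the proof the paper itself defers to (Appendix A.1 of \cite{blanchet2023double}): peel off the immediate reward, apply the tower property, and use $\cS\times\cA$-rectangularity to decouple the infimum over the step-$h$ kernel entry $\widetilde P_h(\cdot|s,a)$ from the downstream infimum, which is then recognized pointwise in $s_{h+1}$ as $V^{\pi}_{h+1,P,\boldsymbol{\Phi}}(s_{h+1})$. The one spot to tighten is the ``$\leq$'' direction, where the per-$s_{h+1}$ near-minimizing downstream kernels must be merged into a single admissible kernel; this is exactly what the product structure delivers (a backward induction shows the coordinate-wise greedy adversary is simultaneously near-optimal from every state, and the same decoupling also justifies pulling $\mathbb{E}_{\pi_h(\cdot|s)}$ outside the infimum in the first identity), so there is no genuine gap.
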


For the robust value functions of the optimal robust policy, we also have the following dynamic programming solution which plays a key role in our algorithm design and theoretical analysis.

\begin{proposition}[Robust Bellman optimal equation]\label{prop: robust bellman optimal equation}
    Under Assumption~\ref{ass: sa}, for any $P=\{P_h\}_{h=1}^H\subseteq\mathcal{P}$, the robust value functions of any optimal robust policy of $P$ satisfy that, for any $(s,a,h)\in\cS\times\cA\times[H]$,
    \begin{align}
        V_{h, P, \mathbf{\Phi}}^{\star}(s) = \max_{a\in\cA}Q_{h, P, \mathbf{\Phi}}^{\star}(s, a),\quad
        Q_{h, P, \mathbf{\Phi}}^{\star}(s, a) = R_h(s,a) +\mathbb{E}_{\mathcal{P}(s,a;P_h)}\big[V_{h+1, P, \mathbf{\Phi}}^{\star}\big].\label{eq: robust bellman optimal V Q}
    \end{align}
    By taking $\pi^{\star}_h(\cdot|s) = \argmax_{a\in\cA}Q_{h, P, \mathbf{\Phi}}^{\star}(s, a)$, then $\pi^{\star} = \{\pi^{\star}_h\}_{h=1}^H$ is an optimal robust policy under $P$.
\end{proposition}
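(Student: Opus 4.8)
The plan is to prove the robust Bellman optimal equation by combining the policy-evaluation version of the robust Bellman equation (Proposition~\ref{prop: robust bellman equation}, already established) with a backward-induction argument over the horizon $h=H, H-1, \dots, 1$. The key structural fact I would rely on is the $\cS\times\cA$-rectangularity (Assumption~\ref{ass: sa}): because the robust set factorizes as $\boldsymbol{\Phi}(P) = \bigotimes_{(s,a)} \mathcal{P}(s,a;P)$, the adversary's worst-case choice of transition at a given $(s,a)$-pair is \emph{independent} of its choices at all other pairs. This decoupling is exactly what lets the inner infimum over $\widetilde{P}\in\boldsymbol{\Phi}(P_h^{\star})$ commute with the sequential dynamic-programming recursion, so that the per-step robust expectation operator $\mathbb{E}_{\mathcal{P}(s,a;P_h)}[\cdot]$ appears cleanly in the recursion.

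First I would set up the induction. Define the value functions $\widetilde{V}_h, \widetilde{Q}_h$ by the right-hand sides of \eqref{eq: robust bellman optimal V Q}, i.e., $\widetilde{V}_h(s) = \max_a \widetilde{Q}_h(s,a)$ and $\widetilde{Q}_h(s,a) = R_h(s,a) + \mathbb{E}_{\mathcal{P}(s,a;P_h)}[\widetilde{V}_{h+1}]$, with the terminal condition $\widetilde{V}_{H+1}\equiv 0$, and let $\pi^\star_h(\cdot|s) = \argmax_a \widetilde{Q}_h(s,a)$. The goal is to show $\widetilde{V}_h = V^\star_{h,P,\boldsymbol{\Phi}}$ and that this $\pi^\star$ attains it. The induction hypothesis at step $h+1$ would be that $\widetilde{V}_{h+1} = V^\star_{h+1,P,\boldsymbol{\Phi}} = \max_\pi V^\pi_{h+1,P,\boldsymbol{\Phi}}$, simultaneously for all starting states, and that $\pi^\star$ restricted to steps $h+1,\dots,H$ is optimal from step $h+1$ onward.

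For the inductive step I would argue both inequalities. The direction $\widetilde{V}_h \le V^\star_{h}$ follows by plugging $\pi^\star$ into Proposition~\ref{prop: robust bellman equation}: since $\pi^\star$ from $h+1$ onward is robust-optimal by hypothesis, applying the policy-evaluation robust Bellman recursion to $\pi^\star$ shows $V^{\pi^\star}_h$ satisfies the same recursion as $\widetilde{V}_h$, hence $\widetilde{V}_h = V^{\pi^\star}_h \le V^\star_h$. The reverse direction $V^\star_h \le \widetilde{V}_h$ is where the main obstacle lies: I must show \emph{no} policy can exceed $\widetilde{V}_h$. Here I would take an arbitrary policy $\pi$, apply the robust Bellman equation \eqref{eq: robust bellman V Q} to it, and bound it step by step. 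The delicate point is that the worst-case transition $\widetilde P$ against $\pi$ from step $h$ onward need not coincide with the one against $\pi^\star$; one must verify that the \emph{same} rectangular robust operator $\mathbb{E}_{\mathcal{P}(s,a;P_h)}[\cdot]$ governs both, and that the monotonicity $V^\pi_{h+1} \le \widetilde{V}_{h+1}$ (induction hypothesis) is preserved under this operator. Monotonicity of $\mathbb{E}_{\mathcal{P}(s,a;P_h)}[\cdot] = \inf_{p\in\mathcal{P}(s,a;P_h)}\mathbb{E}_p[\cdot]$ in its argument is the crucial lemma: if $f\le g$ pointwise then $\mathbb{E}_{\mathcal{Q}}[f]\le\mathbb{E}_{\mathcal{Q}}[g]$, which follows immediately since each $\mathbb{E}_p$ is monotone and the infimum preserves the inequality. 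Combined with $\max_{a}$ being monotone, this pushes the bound through the recursion to give $V^\pi_h(s) \le \widetilde V_h(s)$ for every $\pi$ and $s$.

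The heart of the argument—and the step I expect to require the most care—is justifying that the rectangular structure makes the adversary's optimization \emph{separable across time and state}, so that the nested $\inf$ over the full transition collection $\{\widetilde P_h\}_{h=1}^H$ in the definition \eqref{eq: robust V}--\eqref{eq: robust Q} can be replaced by the one-step robust operator at each level of the recursion without incurring any gap. This is precisely the content already packaged into Proposition~\ref{prop: robust bellman equation} for a \emph{fixed} policy; thus the cleanest route is not to re-derive the rectangular interchange from scratch, but to (i) invoke Proposition~\ref{prop: robust bellman equation} for both $\pi^\star$ and an arbitrary competitor $\pi$, and (ii) use only the elementary monotonicity of the robust expectation operator together with the elementary fact $\mathbb{E}_{\pi_h(\cdot|s)}[\widetilde{Q}_h(s,\cdot)] \le \max_a \widetilde{Q}_h(s,a)$ to complete the two-sided comparison. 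If one prefers a self-contained treatment, the remaining technical check is that $\widetilde{V}_{H+1}\equiv 0$ matches the base case and that all quantities are well-defined (the infima are attained or at least finite, which holds since values are bounded in $[0,H]$). With monotonicity and the robust Bellman evaluation equation in hand, the backward induction closes and yields both the optimal-value recursion and the optimality of the greedy policy $\pi^\star$.
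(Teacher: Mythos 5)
Your proposal is correct and takes essentially the route the paper intends: the paper gives no standalone proof, remarking only that the optimal equation is a corollary of Proposition~\ref{prop: robust bellman equation} (citing the finite-horizon robust Bellman equation of Blanchet et al.), and your backward induction — invoking that evaluation equation for both the greedy policy and an arbitrary competitor, together with monotonicity of the robust expectation operator — is exactly the standard way to fill in that remark. No gaps.
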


We remark that the original version of the robust Bellman equation \citep{iyengar2005robust} is for infinite horizon RMDPs and a customized proof of robust Bellman equation for finite horizon RMDPs (Proposition~\ref{prop: robust bellman equation}) can be found in Appendix A.1 of \cite{blanchet2023double}.
The robust Bellman optimal equation (Proposition~\ref{prop: robust bellman optimal equation}) is then a corollary or can be directly proved in a similar manner.

\paragraph{Total-variation distance robust set.}
In Assumption~\ref{ass: sa}, the robust set $\cP(s,a;P)$ is often modeled as a ``distribution ball" centered at $P(\cdot|s,a)$.
In this paper, we mainly consider this type of robust sets specified by a \emph{total-variation distance} ball.
We put it in the following definition.

\begin{definition}[Total-variation distance robust set]\label{def: tv}
    Total-variation distance (TV) robust set is defined as
    \begin{align}
        \mathcal{P}_{\rho}(s,a; P) := \left\{\widetilde{P}(\cdot)\in\Delta(\mathcal{S}):D_{\mathrm{TV}}\big(\widetilde{P}(\cdot)\big\|P(\cdot|s,a)\big)\leq \rho\right\},
        \end{align}
    for some $\rho\in[0,1)$, where $D_{\mathrm{TV}}(\cdot\|\cdot)$ denotes the total variation distance defined as
    \begin{align}
        D_{\mathrm{TV}}\big(p(\cdot)\|q(\cdot)\big) := \frac{1}{2}\sum_{s\in\mathcal{S}}\big|p(s) - q(s)\big|,\quad\forall p(\cdot),q(\cdot)\in\Delta(\cS).\label{eq: tv}
    \end{align}
\end{definition}

Throughout the paper, when $\rho=0$ we use the convention $1/0:=+\infty$. Hence $\min\{H,\rho^{-1}\}=H$ at $\rho=0$; products such as $\rho\min\{H,\rho^{-1}\}$ are interpreted as $0$.

The TV robust set has recently been extensively studied by \cite{yang2022toward, panaganti2022sample, panaganti2022robust, xu2023improved, blanchet2023double, shi2023curious}, which all focus on robust RL with a generative model or with a pre-collected offline dataset.
Our work follows this RMDP setup and studies robust RL via interactive data collection (see Section~\ref{subsec: interactive data}).

More importantly, we emphasize that by \eqref{eq: tv} in Definition~\ref{def: tv}, we \emph{do not} define the TV distance through the notion of $f$-divergence which requires that the distribution $p$ is absolutely continuous w.r.t. $q$, as is generally adopted by the above previous works on RMDP with TV robust sets.
According to \eqref{eq: tv}, we \textcolor{blue!75}{\emph{allow $p$ to have a different support than $q$}}.
That is, there might exist an $s\in\cS$ such that $p(s)>0$ and $q(s)=0$.
Given that, the TV robust set in Definition~\ref{def: tv} could contain transition probabilities that have different supports than the nominal transition probability $P^{\star}(\cdot|s,a)$.

An essential property of the TV robust set is that the robust expectation involved in the robust Bellman equations (Propositions~\ref{prop: robust bellman equation} and \ref{prop: robust bellman optimal equation}) has a duality representation that only uses the expectation under the nominal transition kernel.
Previous works, e.g., \cite{yang2022toward}, have proved such a result when the TV distance is defined through $f$-divergence.
Here we extend such a result to the TV distance defined directly through \eqref{eq: tv} that allows different supports between $p$ and $q$.

\begin{proposition}[Strong duality representation]\label{prop: strong duality}
    Under Definition~\ref{def: tv}, the following duality representation for the robust expectation holds,
    for any $V:\mathcal{S}\mapsto[0,H]$ and $P_h:\cS\times\cA\mapsto\Delta(\cS)$,
    \begin{align}
        \mathbb{E}_{\mathcal{P}_{\rho}(s,a;P_h)}\big[V\big]
        = \sup_{\eta\in[0,H]} \left\{ - \mathbb{E}_{P_h(\cdot|s,a)}\big[(\eta-V)_+\big] - \rho\cdot \left(\eta-\min_{s\in\cS} V(s)\right)_+ + \eta \right\}.\label{eq: strong duality main}
    \end{align}
\end{proposition}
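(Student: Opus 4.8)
The plan is to observe that, for each fixed $(s,a)$, the robust expectation
$\mathbb{E}_{\mathcal{P}_{\rho}(s,a;P_h)}[V]=\inf_{\widetilde{P}}\{\langle\widetilde{P},V\rangle:\widetilde{P}\in\Delta(\cS),\,D_{\mathrm{TV}}(\widetilde{P}\|P_h(\cdot|s,a))\le\rho\}$
is a linear program over the probability simplex with one additional convex (piecewise-linear) constraint. Writing $p:=P_h(\cdot|s,a)$ and $V_{\min}:=\min_{s'\in\cS}V(s')$, the feasible set is nonempty and compact for $\rho\in[0,1)$, so the infimum is attained; I would establish \eqref{eq: strong duality main} by proving the two directions separately, namely weak duality ($\ge$) for every dual candidate $\eta\in[0,H]$, and tightness ($=$) via an explicit primal-dual pair. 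The one structural feature to track throughout is that $\widetilde{P}$ is allowed to place mass on states outside $\mathrm{supp}(p)$, so the relevant minimum of $V$ is the global one over all of $\cS$ rather than over $\mathrm{supp}(p)$; this is exactly what distinguishes the direct definition \eqref{eq: tv} from the $f$-divergence version and is the reason $\min_{s\in\cS}V(s)$ appears in \eqref{eq: strong duality main}.

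For weak duality, the first step is the elementary identity $\eta-(\eta-V(s'))_+=\min\{V(s'),\eta\}$, so that the right-hand side of \eqref{eq: strong duality main} equals $\mathbb{E}_{p}[\min\{V,\eta\}]-\tfrac{\rho}{2}(\eta-V_{\min})_+$, which I denote by $g(\eta)$. Fixing any feasible $\widetilde{P}$ and any $\eta$, I would bound $\langle\widetilde{P},V\rangle\ge\langle\widetilde{P},\min\{V,\eta\}\rangle$ using $V\ge\min\{V,\eta\}$ and $\widetilde{P}\ge0$, then write $\langle\widetilde{P},\min\{V,\eta\}\rangle=\mathbb{E}_p[\min\{V,\eta\}]+\langle\widetilde{P}-p,\min\{V,\eta\}\rangle$. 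Since $\sum_{s'}(\widetilde{P}-p)(s')=0$, I may recenter $\min\{V,\eta\}$ by the constant $V_{\min}$ and bound the perturbation term below in terms of the displaced probability mass, which is exactly $D_{\mathrm{TV}}(\widetilde{P}\|p)\le\rho$, times the oscillation $(\eta-V_{\min})_+$ of the recentred truncated value $\min\{V,\eta\}-V_{\min}\in[0,(\eta-V_{\min})_+]$; this produces the penalty term of \eqref{eq: strong duality main} and hence $\langle\widetilde{P},V\rangle\ge g(\eta)$ for all feasible $\widetilde{P}$ and all $\eta$, giving the $\ge$ direction after taking the infimum on the left and the supremum over $\eta$ on the right.

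For tightness, I would exhibit a maximizing $\eta^\star$ and a feasible $\widetilde{P}^\star$ that turn both inequalities above into equalities. Let $s^\star\in\arg\min_{s\in\cS}V(s)$, so $V(s^\star)=V_{\min}$, and construct $\widetilde{P}^\star$ by greedily removing probability mass from the states with the largest $V$-values, up to the total TV budget, and relocating all of it onto $s^\star$ -- a move that is legitimate under Definition~\ref{def: tv} even when $p(s^\star)=0$. Taking $\eta^\star$ to be the $V$-level at which this greedy removal stops (a quantile of $V$ under $p$), the relocated mass sits where $\min\{V,\eta^\star\}=V_{\min}$ while the removed mass comes only from states where $\min\{V,\eta^\star\}=\eta^\star$, so the truncation step and the perturbation bound are simultaneously tight, yielding $\langle\widetilde{P}^\star,V\rangle=g(\eta^\star)$. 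Concavity of $g$ on the compact interval $[0,H]$ guarantees the supremum is attained, and the degenerate cases ($\rho=0$, where the ball is the singleton $\{p\}$ and $\eta=H$ is optimal, or a budget large enough to exhaust all mass above $V_{\min}$, where $\eta^\star$ saturates at a boundary of $[0,H]$) are handled directly.

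The main obstacle I anticipate is this tightness step together with the careful bookkeeping of the support shift: I must verify that the greedy relocation onto a global minimizer $s^\star$ is always feasible and optimal, that ties at the stopping level $\eta^\star$ can be split so as to match the budget exactly, and that the resulting $\widetilde{P}^\star$ is a genuine probability distribution. Getting $\min_{s\in\cS}V(s)$ -- rather than the minimum restricted to $\mathrm{supp}(p)$ -- to emerge from this construction is precisely the point where the generalized definition \eqref{eq: tv} is used, and is what makes the result stronger than the $f$-divergence-based duality of the prior works.
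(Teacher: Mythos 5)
Your route is genuinely different from the paper's. The paper does not prove the duality from scratch: it invokes the known strong duality of \cite{yang2022toward} (valid when the nominal kernel has full support, with the dual variable restricted to $[0,H]$ via Lemma H.8 of \cite{blanchet2023double}), and then handles a general nominal distribution $Q^\star$ by mixing it with the uniform distribution, $(1-\epsilon)Q^\star+\epsilon U$, and passing to the limit $\epsilon\to 0^+$ on both the primal and dual sides; the only real work is the limit interchange, done via a $2H\epsilon$ bound on the dual side and a compactness/subsequence argument on the primal side. Your proposal is instead a self-contained primal--dual argument: weak duality from the identity $\eta-(\eta-V)_+=\min\{V,\eta\}$ plus a recentering and a TV bound on the perturbation, and tightness from an explicit worst-case kernel that relocates the budgeted mass from the upper quantiles of $V$ onto a global minimizer, paired with the corresponding quantile level $\eta^\star$. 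Your approach is more informative (it exhibits the adversarial kernel explicitly, which is essentially the same object the paper constructs separately in the proof of Proposition~\ref{prop: equivalent robust set}) and avoids importing the prior result, at the cost of more bookkeeping; the paper's approach is shorter but opaque about where $\min_{s\in\cS}V(s)$ comes from.

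There is, however, one concrete point where your writeup does not close, and it is exactly the bookkeeping you flagged as delicate: the constant in front of the penalty. Under Definition~\ref{def: tv} as written, $D_{\mathrm{TV}}(\widetilde P\|p)=\frac{1}{2}\sum_{s}|\widetilde P(s)-p(s)|\le\rho$ means the total probability mass that can be relocated is $\sum_{s:\widetilde P(s)<p(s)}(p(s)-\widetilde P(s))=D_{\mathrm{TV}}(\widetilde P\|p)\le\rho$ itself. Carrying your recentering bound through therefore gives $\langle\widetilde P-p,\min\{V,\eta\}-V_{\min}\rangle\ge-\rho\,(\eta-V_{\min})_+$, and your tight construction (remove mass $\rho$ from the top, deposit it on $s^\star$) attains this with coefficient $\rho$. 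So your argument, executed faithfully, proves the identity with penalty $\rho\,(\eta-\min_s V(s))_+$, not the $\frac{\rho}{2}(\eta-\min_s V(s))_+$ appearing in \eqref{eq: strong duality main}; you assert that the perturbation bound ``produces the penalty term of \eqref{eq: strong duality main}'' without verifying the constant, and as written the two sides of your claimed equality differ. To land on the stated formula you would need the budget to constrain $\sum_s|\widetilde P(s)-p(s)|\le\rho$ (no factor $\frac12$), i.e.\ relocatable mass $\rho/2$ --- which is the normalization implicitly used in Proposition~\ref{prop: equivalent robust set} (where the retained mass is $1-\rho/2$) but not the one in Definition~\ref{def: tv} or in the hardness example, where the adversary moves mass $\rho$. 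You must pin down which normalization is in force and redo the two displays above with that choice; otherwise the tightness step will verify equality against the wrong dual function. A second, minor point: the statement itself writes $(\eta-f)_+$ where it means $(\eta-V)_+$; make sure your proof is phrased for $V$ throughout.
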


\begin{proof}[Proof of Proposition~\ref{prop: strong duality}]
    Please refer to Appendix~\ref{subsec: proof prop strong duality} for a detailed proof of Proposition~\ref{prop: strong duality}.
\end{proof}

\begin{remark}\label{rmk: extension}
    Despite all previous works on RMDPs with TV robust sets relying on the definition of TV distance $D_{\mathrm{TV}}(p(\cdot)\|q(\cdot))$ with absolute continuity of $p$ with respect to $q$ to obtain the strong duality representation in the form of \eqref{eq: strong duality main}, their results can be directly extended to TV distance that allows for different support between $p$ and $q$ thanks to Proposition~\ref{prop: strong duality}.
\end{remark}

Finally, another useful property of the robust value functions of an RMDP with TV robust sets is a fine characterization of the gap between the maximum and the minimum of the robust value function,
which is first identified and utilized by \cite{shi2023curious} for an infinite horizon RMDP with TV robust sets.
In this work, we prove and use a similar result for the finite horizon case, concluded in the following proposition.

\begin{proposition}[Gap between maximum and minimum]\label{prop: gap}
    Under Assumption~\ref{ass: sa} with the robust set specified by Definition~\ref{def: tv}, the robust value functions satisfy that
    \begin{align}
        \max_{(s,a)\in\cS\times\cA} Q_{h,P,\boldsymbol{\Phi}}^{\pi} (s,a) - \min_{(s,a)\in\cS\times\cA}Q_{h,P,\boldsymbol{\Phi}}^{\pi} (s,a) &\leq \min\big\{H,\rho^{-1}\big\},\\
        \max_{s\in\cS} V_{h,P,\boldsymbol{\Phi}}^{\pi} (s) - \min_{s\in\cS}V_{h,P,\boldsymbol{\Phi}}^{\pi} (s) &\leq \min\big\{H,\rho^{-1}\big\},
    \end{align}
    for any transition $P = \{P_h\}_{h=1}^H\subset\cP$, any policy $\pi$, and any step $h\in[H]$.
\end{proposition}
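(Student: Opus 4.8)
The plan is to prove both inequalities simultaneously by backward induction on $h$, using the robust Bellman equation (Proposition~\ref{prop: robust bellman equation}) to pass from $Q_{h,P,\boldsymbol\Phi}^{\pi}$ down to $V_{h+1,P,\boldsymbol\Phi}^{\pi}$; throughout I abbreviate $V_h^\pi := V_{h,P,\boldsymbol\Phi}^\pi$, $Q_h^\pi := Q_{h,P,\boldsymbol\Phi}^\pi$, and write $\mathrm{span}(f) := \max f - \min f$. The bound $\min\{H,\rho^{-1}\}$ factors into two independent arguments. The $\le H$ part is immediate: since $R_h\in[0,1]$ and at most $H-h+1$ reward terms remain, every robust value lies in $[0,H-h+1]\subseteq[0,H]$, so its span is at most $H$. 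The real content is the $\le\rho^{-1}$ bound, and for this the engine of the proof is a one-step \emph{span-contraction} property of the robust expectation operator $\mathbb{E}_{\mathcal{P}_\rho(s,a;P_h)}[\cdot]$.

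The key lemma I would isolate is: for any $W:\mathcal{S}\to\mathbb{R}$, with $\overline W=\max_s W(s)$ and $\underline W=\min_s W(s)$, every pair $(s,a)$ satisfies
\[
\underline W \;\le\; \mathbb{E}_{\mathcal{P}_\rho(s,a;P_h)}[W] \;\le\; (1-\rho)\,\overline W + \rho\,\underline W .
\]
The lower bound is trivial, as every $q$ in the robust set obeys $\mathbb{E}_q[W]\ge\underline W$. For the upper bound I would exhibit a single admissible perturbation: picking $s_\star\in\arg\min_s W(s)$ and setting $q^\star=(1-\rho)P_h(\cdot|s,a)+\rho\,\delta_{s_\star}$ (the Dirac mass at $s_\star$), a direct computation gives $D_{\mathrm{TV}}\big(q^\star\|P_h(\cdot|s,a)\big)=\rho\,(1-P_h(s_\star|s,a))\le\rho$, so $q^\star\in\mathcal{P}_\rho(s,a;P_h)$, whence $\mathbb{E}_{\mathcal{P}_\rho(s,a;P_h)}[W]\le\mathbb{E}_{q^\star}[W]=(1-\rho)\mathbb{E}_{P_h}[W]+\rho\,\underline W\le(1-\rho)\overline W+\rho\,\underline W$. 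It is worth stressing that $q^\star$ remains admissible even when $s_\star$ lies outside the support of $P_h(\cdot|s,a)$, which is exactly where the support-agnostic TV definition \eqref{eq: tv} is used rather than an $f$-divergence formulation; alternatively one could read this bound off the strong-duality formula \eqref{eq: strong duality main}. Taking the maximum and minimum over $(s,a)$ then yields the contraction $\max_{s,a}\mathbb{E}_{\mathcal{P}_\rho(s,a;P_h)}[W]-\min_{s,a}\mathbb{E}_{\mathcal{P}_\rho(s,a;P_h)}[W]\le(1-\rho)(\overline W-\underline W)$.

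I would then apply the lemma with $W=V_{h+1}^\pi$. By the robust Bellman equation, $Q_h^\pi(s,a)=R_h(s,a)+\mathbb{E}_{\mathcal{P}_\rho(s,a;P_h)}[V_{h+1}^\pi]$; since $R_h$ contributes at most $1$ to the span while the robust-expectation term contracts the span of $V_{h+1}^\pi$ by $(1-\rho)$, subadditivity of $\mathrm{span}(\cdot)$ gives $\mathrm{span}(Q_h^\pi)\le 1+(1-\rho)\,\mathrm{span}(V_{h+1}^\pi)$. Because $V_h^\pi(s)=\mathbb{E}_{\pi_h(\cdot|s)}[Q_h^\pi(s,\cdot)]$ is a convex combination of the values $\{Q_h^\pi(s,a)\}_a$, its span is no larger than $\mathrm{span}(Q_h^\pi)$, so the same recursion controls $\mathrm{span}(V_h^\pi)$. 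Unrolling from the terminal condition $\mathrm{span}(V_{H+1}^\pi)=0$ gives $\mathrm{span}(V_h^\pi),\,\mathrm{span}(Q_h^\pi)\le\sum_{k=0}^{H-h}(1-\rho)^k=\rho^{-1}\big(1-(1-\rho)^{H-h+1}\big)\le\rho^{-1}$, and combining with the trivial $\le H$ bound closes the argument.

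The hard part is not the induction but pinning down the sharp constant in the contraction lemma: one must verify that relocating a full $\rho$-fraction of mass onto the minimizing state keeps $q^\star$ inside the TV ball under the convention $D_{\mathrm{TV}}=\tfrac12\|\cdot\|_1$, which is what makes the contraction factor exactly $(1-\rho)$ and the geometric series sum to $\rho^{-1}$ rather than a looser $2\rho^{-1}$. A less careful accounting of how much mass the TV constraint permits (for instance, reading the penalty coefficient off the dual without matching it to the TV normalization) would cost a factor of two here, so the crux is to handle this primal feasibility computation precisely.
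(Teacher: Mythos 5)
Your proof is correct and takes essentially the same route as the paper's: both establish the one-step span contraction by exhibiting the admissible perturbation that moves a $\rho$-fraction of mass onto the minimizing state of $V_{h+1,P,\boldsymbol{\Phi}}^{\pi}$ (your $q^\star=(1-\rho)P_h(\cdot|s,a)+\rho\,\delta_{s_\star}$ is exactly the paper's $\widetilde{P}_h+\rho\,\delta_{s_0}$ with $\widetilde{P}_h=(1-\rho)P_h^\star$), then unroll the recursion $\mathrm{span}(Q_h)\le 1+(1-\rho)\,\mathrm{span}(V_{h+1})$ to get the sharp bound $\rho^{-1}\bigl(1-(1-\rho)^{H-h+1}\bigr)$. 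The only cosmetic difference is that you isolate the contraction as a standalone lemma on the robust expectation operator, whereas the paper embeds it directly in the induction.
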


\begin{proof}[Proof of Proposition~\ref{prop: gap}]
    Please refer to Appendix~\ref{subsec: proof prop gap} for a detailed proof of Proposition~\ref{prop: gap}.
\end{proof}

We note that in the proof of Proposition~\ref{prop: gap}, we actually show a tighter form of bound of the gap between the maximum and minimum as
$$
    \frac{1}{\rho}\cdot\Big(1 - (1-\rho)^{H}\Big).
$$
But in the sequel, we mainly use the form of $\min\{H,\rho^{-1}\}$ for its brevity and the fact of $(1-(1-\rho)^H)/\rho = \Theta(\min\{H,\rho^{-1}\})$ in the sense that
$$
    c\cdot \min\big\{H,\rho^{-1}\big\}\leq (1-(1-\rho)^H)/\rho \leq \min\big\{H,\rho^{-1}\big\}
$$
for any $H\geq H_0\in\mathbb{N}_+$ and $\rho\in[0,1)$ with some absolute constant $c>0$ that is independent of $(H,\rho)$; when $\rho=0$, the ratio $(1-(1-\rho)^H)/\rho$ is interpreted as its limit $H$.

In contrast with a crude bound of $H$, such a fine upper bound decreases when $\rho$ is large, which is essential to understanding the statistical limits of doing robust RL in RMDPs with TV robust sets.

\subsection{Robust RL with Interactive Data Collection}\label{subsec: interactive data}

\vspace{3mm}
\noindent In this paper, we study how to learn the optimal robust policy $\pi^{\star}$ in \eqref{eq: optimal robust policy} from interactive data collection.
Specifically, the learner is required to interact with \emph{only} the \emph{training environment}, i.e., $P^{\star}$, for some $K\in\mathbb{N}$ episodes.
In each episode $k\in[K]$, the learner adopts a policy $\pi^k$ to interact with the training environment $P^{\star}$ and to collect data.
When the $k$-th episode ends, the learner updates its policy to $\pi^{k+1}$ based on historical data and proceeds to the subsequent $k+1$-th episode. The learning process ends after a total of $K$ episodes.

\paragraph*{Sample complexity.} We use the notion of \emph{sample complexity} as the key evaluation metric.
For any given algorithm and predetermined accuracy level $\varepsilon>0$, the sample complexity is the minimum number of episodes $K$ required for the algorithm to output an $\varepsilon$-optimal robust policy $\widehat{\pi}$ which satisfies
\begin{align}
    V_{1,P^{\star},\mathbf{\Phi}}^{\star}(s_1)-V_{1,P^{\star},\mathbf{\Phi}}^{\widehat{\pi}}(s_1) \le \varepsilon.
\end{align}
The goal is to design algorithms whose sample complexity has small or even optimal dependence on $S, A, H, \rho$, and $1/\varepsilon$.
Such a metric is connected with the sample complexity used in robust RL with generative models and offline settings (see \hyperref[subsec: related works]{related works} for the references), wherein the sample complexity means the minimum number of generative samples or pre-collected offline data required to achieve $\varepsilon$-optimality.
In contrast, here the sample complexity is measuring the least number of interactions with the training environment needed to learn $\pi^{\star}$, where no generative or offline sample is available.
Such a learning protocol casts unique challenges on the algorithmic design and theoretical analysis to get the optimal sample complexity.

\paragraph{Online regret.}
Another evaluation metric that is related to the minimization of sample complexity is the \emph{online regret}.
For online RL in standard non-robust MDPs, the notion of regret refers to the cumulative gaps between the non-robust optimal value functions and the non-robust value functions of the policies executed during each episode \citep{auer2008near}.
Here for robust RL in RMDPs, we similarly define the regret as the cumulative difference between the optimal robust policy $\pi^\star$ and the executed policies $\{\pi^k\}_{k=1}^K$, but in terms of their robust value functions $V_{1, P^{\star},\boldsymbol{\Phi}}^{\pi}$. Its formal definition is given as follows:
\begin{align}\label{eq: regret}
    \mathrm{Regret}_{\boldsymbol{\Phi}}(K) := \sum_{k=1}^KV_{1,P^{\star},\mathbf{\Phi}}^{\star}(s_1)-V_{1,P^{\star},\mathbf{\Phi}}^{\pi^k}(s_1).
\end{align}
The goal is to design algorithms that can achieve a sublinear-in-$K$ regret  with small dependence on $S,A,H, \rho$.
Intuitively, a sublinear-regret algorithm would approximately learn the optimal robust policy $\pi^{\star}$ purely from interacting with the training environment $P^{\star}$.
It turns out that any sublinear-regret algorithm can be easily converted to a polynomial-sample complexity algorithm by applying the standard online-to-batch conversion \citep{jin2018q}, which we show in detail in our theoretical analysis part.

\section{A Hardness Result: The Curse of Support Shift}\label{sec: hardness}

Unfortunately, we show in this section that in general such a problem of robust RL with online data collection is \emph{impossible} -- there exists a simple class of two RMDPs such that any algorithm suffers an $\Omega(K)$ online regret lower bound.
However, previous works on robust RL with a generative model or offline data with good coverage do provide sample-efficient ways to find the optimal robust policy for this class of RMDPs.
This is a separation between robust RL with interactive data collection and generative model/offline data.

We first explicitly present the hard example, which is a two-state, two-action RMDP with total-variation distance robust set.
Please see also Figure~\ref{fig:hardmdp} for an illustration of the example.

\begin{example}[Hard example of robust RL with interactive data collection]\label{exp: hard}
    Consider two RMDPs $\cM_0$ and $\cM_1$ which only differ in their nominal transition kernels.
    The state space is $\cS = \{s_{\mathrm{good}}, s_{\mathrm{bad}}\}$, and the action space is $\cA = \{0,1\}$.
    The horizon length $H=3$.
    The reward function $R$  is always $1$ at the good state $s_{\mathrm{good}}$ and is $0$ at the bad state $s_{\mathrm{bad}}$, i.e.,
    \begin{align}
        R_h(s,a) = \left\{\begin{aligned}
            &1,\quad s = s_{\mathrm{good}}\\
             &0,\quad s = s_{\mathrm{bad}}
        \end{aligned}\right.,
        \quad \forall (a,h)\in\cA\times[H].
    \end{align}
    For the good state $s_{\mathrm{good}}$, the next state is always $s_{\mathrm{good}}$.
    For the bad state $s_{\mathrm{bad}}$, there is a chance to get to the good state $s_{\mathrm{good}}$, with the transition probability depending on the action it takes.
    Formally,
    \begin{align}
P_h^{\star, \cM_{\theta}}(s_{\mathrm{good}}|s_{\mathrm{good}}, a)&=1,\quad \forall (a,h)\in\cA\times\{1,2\},\quad \forall \theta\in\{0,1\},\\
P_2^{\star, \cM_{\theta}}(s_{\mathrm{good}}|s_{\mathrm{bad}}, a) &= \left\{\begin{aligned}
            &p,\quad a = \theta \\
             &q,\quad a=1-\theta
        \end{aligned}\right.,
        \quad \forall \theta\in\{0,1\},
    \end{align}
    where $p,q$ are two constants satisfying $0<q<p<1$.
    Intuitively, when at the bad state, the optimal action would result in a higher transition probability $p$ to the good state than the transition probability $q$ induced by the other action.
    Finally, we consider the robust set being specified by a total-variation distance ball centered at the nominal transition kernel, that is, for any $P$,
    \begin{align}\label{eq: tv robust set}
        \mathbf{\Phi}(P) =\!\! \bigotimes_{(s,a)\in\mathcal{S}\times\mathcal{A}} \mathcal{P}_{\rho}(s,a; P),\quad \text{where}\quad \mathcal{P}_{\rho}(s,a; P) = \left\{\widetilde{P}(\cdot)\in\Delta(\mathcal{S}):D_{\mathrm{TV}}\big(\widetilde{P}(\cdot)\big\|P(\cdot|s,a)\big)\leq \rho\right\},
    \end{align}
    where $\rho\in[0,q]$ is the parameter characterizing the size of the robust set.
    We set $s_1 = s_{\mathrm{good}}$.
\end{example}

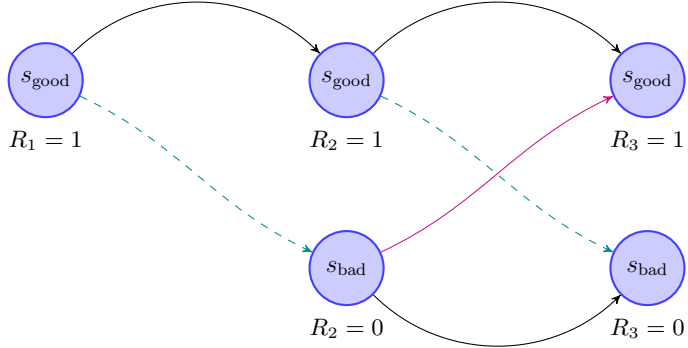
\begin{figure}[!t]
    \centering
    \begin{tikzpicture}[node distance=1.5cm,>=stealth',bend angle=45,auto]

  \tikzstyle{place}=[circle,thick,draw=blue!75,fill=blue!20,minimum size=6mm]
  \tikzstyle{hold}=[circle,draw=white,fill=white,minimum size=6mm]
  \tikzstyle{absorb}=[rectangle,rounded corners=.1cm, thick,draw=blue!75,fill=blue!20,minimum height=6mm,minimum width=9.6cm]
\tikzset{every loop/.style={min distance=15mm, font=\footnotesize}}
% \tikzset{every edge/.style={font=\footnotesize}}
  \begin{scope}[xshift=-3.5cm]
        \node[place, text=black, minimum width =35pt, minimum height =35pt, scale = 0.8] (0) at  (0, 0) {\large $s_{\mathrm{good}}$};

    \node[place, text=black, minimum width =35pt, minimum height =35pt, scale = 0.8] (1) at  (4,0) {\large $s_{\mathrm{good}}$};

    \node[place, text=black, minimum width =35pt, minimum height =35pt, scale = 0.8] (2) at  (8,0) {\large $s_{\mathrm{good}}$};

    \node[place, text=black, minimum width =35pt, minimum height =35pt, scale = 0.8] (3) at  (4, -2.5) {\large $s_{\mathrm{bad}}$};

    \node[place, text=black, minimum width =35pt, minimum height =35pt, scale = 0.8] (4) at  (8,-2.5) {\large $s_{\mathrm{bad}}$};

    \draw[->] (0) edge [out=45, in=135, draw=black] (1);

    \draw[->] (1) edge [out=45, in=135, draw=black] (2);

    \draw[->] (3) edge [out=315, in=225, draw=black] (4);

    \draw[->] (3) edge [out=25, in=205, draw=luolan] (2);

    \draw[->] (0) edge [out=335, in=155, dashed, draw=tianqin] (3);

    \draw[->] (1) edge [out=335, in=155, dashed, draw=tianqin] (4);

    \node[text=black] at  (0, -0.8) {\small $R_1 = 1$};

    \node[text=black] at  (4, -0.8) {\small $R_2 = 1$};

    \node[text=black] at  (8, -0.8) {\small $R_3 = 1$};

    \node[text=black] at  (4, -3.3) {\small $R_2 = 0$};

    \node[text=black] at  (8, -3.3) {\small $R_3 = 0$};
  \end{scope}

\end{tikzpicture}
\vspace*{-1mm}
    \caption{
Illustration of the hard example in Example~\ref{exp: hard}.
The solid lines represent possible transitions of the nominal transition kernel.
The \textcolor{tianqin}{dashed lines} represent the transitions induced by the worst case transition kernel in the robust set.
The \textcolor{luolan}{red solid line} represents the transition where the two RMDP instances differ in that different actions lead to higher transition probability from $s_{\mathrm{bad}}$ to $s_{\mathrm{good}}$.
We notice that when starting from $s_1 = s_{\mathrm{good}}$, the nominal transition kernel keeps the agent at $s_{\mathrm{good}}$ and no information at $s_{\mathrm{bad}}$ is revealed.
	}
    \label{fig:hardmdp}
\end{figure}

For this class of RMDPs, we have the following hardness result for doing robust RL with interactive data collection, an $\Omega(\rho\cdot K)$-online regret lower bound.

\begin{theorem}[Hardness result (based on Example~\ref{exp: hard})]\label{thm: hard example}
    There exist two RMDPs $\{\cM_0, \cM_1\}$ such that the following regret lower bound holds:
    \begin{align}
        \inf_{\mathcal{ALG}}\sup_{\theta\in\{0,1\}}\mathbb{E}\left[\mathrm{Regret}^{\cM_{\theta},\mathcal{ALG}}_{\boldsymbol{\Phi}}(K)\right] \geq  \Omega\big(\rho \cdot HK\big),
    \end{align}
    where $\mathrm{Regret}^{\cM_{\theta},\mathcal{ALG}}_{\boldsymbol{\Phi}}(K)$ refers to the online regret of algorithm $\mathcal{ALG}$ for RMDP $\mathcal{M}_{\theta}$.
\end{theorem}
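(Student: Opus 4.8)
The plan is to exploit the fact that, under the nominal dynamics of Example~\ref{exp: hard}, any trajectory starting from $s_1 = s_{\mathrm{good}}$ stays at $s_{\mathrm{good}}$ forever, so the learner never observes a transition out of $s_{\mathrm{bad}}$ --- precisely the transition in which $\cM_0$ and $\cM_1$ differ. The argument is a two-point (Le Cam style) lower bound, made especially sharp here because the two instances induce \emph{identical} observation distributions rather than merely close ones.

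First I would compute the optimal robust value and the robust value of an arbitrary executed policy by backward induction via the robust Bellman equations (Propositions~\ref{prop: robust bellman equation} and \ref{prop: robust bellman optimal equation}) together with the worst-case characterization of the TV robust expectation. At step $H = 3$, $V_3(s_{\mathrm{good}}) = 1$ and $V_3(s_{\mathrm{bad}}) = 0$. At step $2$, since the worst-case transition in the TV ball of radius $\rho \le q$ shifts $\rho$ mass toward the lower-value state $s_{\mathrm{bad}}$, one gets $Q_2(s_{\mathrm{bad}}, \theta) = p - \rho$ and $Q_2(s_{\mathrm{bad}}, 1 - \theta) = q - \rho$, while $V_2(s_{\mathrm{good}}) = 2 - \rho$ irrespective of the policy. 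Propagating once more to step $1$ (where the adversary again shifts $\rho$ mass from $s_{\mathrm{good}}$ onto $s_{\mathrm{bad}}$), a policy $\pi$ that places probability $w$ on the correct action $\theta$ at $(s_{\mathrm{bad}}, 2)$ satisfies
\[
    V_{1, P^{\star,\cM_\theta}, \boldsymbol{\Phi}}^{\star}(s_1) - V_{1, P^{\star,\cM_\theta}, \boldsymbol{\Phi}}^{\pi}(s_1) = \rho(1 - w)(p - q).
\]
This is the key identity: the per-episode regret equals $\rho(p - q)$ times the probability that $\pi$ selects the wrong action at $s_{\mathrm{bad}}$.

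Next I would formalize the indistinguishability. Under $P^{\star,\cM_\theta}$ with $s_1 = s_{\mathrm{good}}$, every sampled trajectory equals $(s_{\mathrm{good}}, s_{\mathrm{good}}, s_{\mathrm{good}})$ almost surely, regardless of the actions taken, so the law of the entire interaction history is \emph{exactly the same} under $\cM_0$ and $\cM_1$. Consequently, for any algorithm the probability $v^k$ that its $k$-th policy $\pi^k$ plays action $0$ at $(s_{\mathrm{bad}}, 2)$ has the same distribution under both instances; write $\bar v^k$ for its common expectation. Since the correct action is $0$ under $\cM_0$ and $1$ under $\cM_1$, plugging into the identity above gives $\mathbb{E}[\mathrm{Regret}^{\cM_0, \mathcal{ALG}}_{\boldsymbol{\Phi}}(K)] = \rho(p - q)\sum_{k=1}^K (1 - \bar v^k)$ and $\mathbb{E}[\mathrm{Regret}^{\cM_1, \mathcal{ALG}}_{\boldsymbol{\Phi}}(K)] = \rho(p - q)\sum_{k=1}^K \bar v^k$.

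Finally I would add the two expected regrets; the algorithm-dependent terms cancel and the sum is exactly $\rho(p - q)K$. Taking the worst case over $\theta$,
\[
    \sup_{\theta \in \{0,1\}} \mathbb{E}\big[\mathrm{Regret}^{\cM_\theta, \mathcal{ALG}}_{\boldsymbol{\Phi}}(K)\big] \ge \frac{1}{2}\rho(p - q)K,
\]
and since $p - q$ is an absolute constant and $H = 3 = \Theta(1)$, this is $\Omega(\rho HK)$. I expect the main obstacle to be the robust value computation: one must correctly apply the TV worst-case rule --- in particular the support-shifting behavior that lets the adversary move mass onto $s_{\mathrm{bad}}$ although the nominal kernel never does --- and check that $\rho \le q$ keeps all truncated probabilities nonnegative, so that the closed forms for $Q_2$ and $V_1$ remain valid.
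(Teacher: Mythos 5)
Your argument is essentially the paper's: an explicit backward-induction computation of the robust values in Example~\ref{exp: hard}, the observation that the nominal dynamics keep every trajectory at $s_{\mathrm{good}}$ so the interaction law is identical under $\cM_0$ and $\cM_1$, and an average over $\theta$ to force expected regret at least $\rho(p-q)K/2$ on one instance. Your per-episode identity $V_{1}^{\star}-V_{1}^{\pi}=\rho(1-w)(p-q)$ is correct and is in fact a slightly more direct route than the paper, which reaches the same quantity as a lower bound via a robust performance-difference lemma (Lemma~\ref{lem: performance difference}) combined with $D_{\mathrm{TV}}\bigl(\pi_2^{\star,\cM_\theta}(\cdot|s_{\mathrm{bad}})\,\|\,\pi_2(\cdot|s_{\mathrm{bad}})\bigr)=1-w$; since the example is small enough to solve exactly, nothing is lost by your shortcut.

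The one place you fall short of the stated theorem is the factor of $H$. You dispose of it by declaring $H=3=\Theta(1)$, which makes $\Omega(\rho HK)$ literally equal to $\Omega(\rho K)$ but renders the $H$-dependence vacuous. The paper instead constructs, for arbitrary $H$, a pair of RMDPs of horizon $3H$ by concatenating $H$ copies of Example~\ref{exp: hard} (resetting $s_{\mathrm{bad}}$ back to $s_{\mathrm{good}}$ at the start of each block), so that each block contributes an independent $\Omega(\rho K)$ to the regret and the total is genuinely $\Omega(\rho HK)$ for a horizon that grows with $H$. If the theorem is read as asserting a lower bound that scales linearly in the horizon --- which is how it is used in Table~\ref{table: tv} and in the comparison with the upper bound --- your proof needs this concatenation step; as written it only establishes $\Omega(\rho K)$ for a fixed constant-horizon instance.
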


\begin{proof}[Proof of Theorem~\ref{thm: hard example}]
    We intuitively explain why robust RL with interactive data collection may fail in the Example~\ref{exp: hard} in this section. We refer the readers to a rigorous proof of Theorem~\ref{thm: hard example} in Appendix~\ref{subsec: proof thm hard example}.
\end{proof}

The reason why any algorithm fails for this class of RMDPs is the \emph{support shift} of the worst-case transition kernel.
In robust RL, the performance of a policy $\pi$ is evaluated via the robust expected total rewards, or equivalently, the expected return under the most adversarial transition kernel $P^{\dagger,\pi}$.
In this example, as we explicitly show in the proof, when in the good state $s_{\mathrm{good}}$, the worst-case transition kernel $P^{\dagger,\pi}$ would transit the state to $s_{\mathrm{bad}}$ with a constant probability $\rho$.
But the state $s_{\mathrm{bad}}$ is out of the scope of the data collection process because starting from $s_1=s_{\mathrm{good}}$ the nominal transition kernel always transits the state to $s_{\mathrm{good}}$.
As a result, the performance of the learned policy at the bad state $s_{\mathrm{bad}}$ is not guaranteed, and inevitably incurs an $\Omega(\rho\cdot K)$-lower bound of regret, a hardness result. Furthermore, by strategically constructing RMDPs with the horizon $3H$ based on Example~\ref{exp: hard}, we can derive a lower bound of $\Omega(\rho \cdot HK)$.

In contrast, doing robust RL with a generative model or an offline dataset with good coverage properties does not face such difficulty.
It turns out that any RMDP with $\cS\times\cA$-rectangular total-variation robust set (including Example~\ref{exp: hard}) can be solved in a sample-efficient manner therein, see \cite{yang2022toward, panaganti2022sample, panaganti2022robust,  xu2023improved, blanchet2023double, shi2023curious} and Remark~\ref{rmk: extension}.
The intuitive reason is that, for the generative model setting, the learner can directly query any state-action pair to estimate the nominal transition kernel $P^{\star}$, and thus no support shift problem happens.
The same reason holds for the offline setup with a good-coverage dataset.

% \begin{remark}[Broad understanding of support shift]
There is a broader understanding of the curse of  support shift that hinders the tractability of robust RL via interactive data collection.
The concept of support shift can be comprehended within a broader context beyond the disjointness of certain parts of the support sets of the training and testing environments.
Instead, ensuring a ``high probability of disjointness" is enough to maintain the integrity of the hardness result. For instance, we can modify the state $s_{\mathrm{good}}$ in Example~\ref{exp: hard} so that it is no longer an absorbing state. Rather, $s_{\mathrm{good}}$ could transit to $s_{\mathrm{bad}}$ with a small probability, such as $2^{-H}$.
This modification expands the support of the training environment to encompass the entire state space.
Nevertheless, acquiring information about $s_{\mathrm{bad}}$ necessitates exponential samples, thereby preserving the hardness result.
% \end{remark}

In the next section of this paper, we aim to figure out that for specific types of RMDPs, e.g., the RMDP with total-variation robust set as in Example~\ref{exp: hard}, under what kind of structural assumptions can we perform sample-efficient robust RL with interactive data collection.

\section{A Solvable Case, Efficient Algorithm, and Sharp Analysis}\label{sec: tv}

Motivated by the hard instance (Example~\ref{exp: hard}) in the previous section, in this section, we consider a special subclass of RMDP with $\cS\times\cA$-rectangular total variation robust set that we show allows for sample-efficient robust RL through interactive data collection.
In Section~\ref{subsec: fail state assumption}, we introduce the assumption we impose on the RMDP we consider.
We propose our algorithm design in Section~\ref{subsec: alg tv}, with theoretical analysis in Section~\ref{subsec: theory tv}.
Throughout this section, our choice of the mapping $\boldsymbol{\Phi}$ is always given by \eqref{eq: tv robust set}.

\subsection{Vanishing Minimal Value: Eliminating Support Shift}\label{subsec: fail state assumption}

To overcome the difficulty of support shift identified in Section~\ref{sec: hardness}, we make the following \emph{vanishing minimal value} assumption on the underlying RMDP.

\begin{assumption}[Vanishing minimal value]\label{ass: zero min}
    We assume that the underlying RMDP satisfies that $$\min_{s\in\cS}V_{1,P^{\star},\boldsymbol{\Phi}}^{\star}(s) = 0.$$
    Also, without loss of generality, we assume that the initial state $s_1\notin \argmin_{s\in\cS}V_{1,P^{\star},\boldsymbol{\Phi}}^{\star}(s)$.
\end{assumption}

Assumption~\ref{ass: zero min} imposes that the minimal robust expected total rewards over all possible initial states is $0$.
Assuming that the initial state $s_1\notin \argmin_{s\in\cS}V_{1,P^{\star},\boldsymbol{\Phi}}^{\star}(s)$ avoids making the problem trivial.
A close look at Assumption~\ref{ass: zero min} actually gives that the minimal robust value function of any policy $\pi$ at any step is zero, that is, $\min_{s\in\cS}V_{h,P^{\star},\boldsymbol{\Phi}}^{\pi}(s) = 0$ for any policy $\pi$ and any step $h\in[H]$.
With this observation, the following proposition explains why such an assumption can help to overcome the difficulty.

\begin{proposition}[Equivalent expression of TV robust set with vanishing minimal value] \label{prop: equivalent robust set}
    For any function $V : \cS \mapsto [0, H]$ with $\min_{s \in \cS}V(s) = 0$, we have that
    \$
    \EE_{\cP_\rho(s, a; P_h^\star)}\left[ V \right] = \rho' \cdot \EE_{ \cB_{\rho'}(s, a; P_h^{\star}) }   [V],\quad \text{where}\quad  \rho' = 1-\rho \in (0,1],
    \$
    where the total-variation robust set $\cP_\rho(s, a; P_h^\star)$ is defined in \eqref{eq: tv robust set} and the set $\cB_{\rho'}(s, a; P_h^{\star})$ is defined as\footnote{Here we implicitly define $\frac{0}{0} = 0$ and $\frac{a}{0}=\infty$ for any $a>0$.}
    \$
\cB_{\rho'}(s, a; P_{h}^{\star}) = \left\{\widetilde{P}(\cdot) \in \Delta(\cS):  \sup_{s' \in \cS}\frac{\widetilde{P}(s')}{P_h^\star(s' | s, a)} \le \frac{1}{\rho'} \right\}.
\$
\end{proposition}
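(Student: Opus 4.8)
The plan is to apply a strong duality representation to \emph{both} sides of the claimed identity and then verify that the two resulting one-dimensional dual programs coincide. First I would simplify the left-hand side using the structural hypothesis $\min_{s\in\cS}V(s)=0$. This makes the penalty term in Proposition~\ref{prop: strong duality} collapse: for every $\eta\in[0,H]$ we have $(\eta-\min_{s}V(s))_+=(\eta)_+=\eta$. Substituting this into \eqref{eq: strong duality main} and collecting the terms that are linear in $\eta$ yields
\begin{align*}
\EE_{\cP_\rho(s,a;P_h^\star)}[V]
=\sup_{\eta\in[0,H]}\Big\{\rho'\eta-\EE_{P_h^\star(\cdot|s,a)}\big[(\eta-V)_+\big]\Big\},
\qquad \rho'=1-\tfrac{\rho}{2}.
\end{align*}

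Next I would attack the right-hand side directly. Writing $b(s')=(1/\rho')\,P_h^\star(s'\mid s,a)$, the set $\cB_{\rho'}(s,a;P_h^\star)$ is exactly the box-constrained simplex $\{\widetilde P\in\Delta(\cS):\widetilde P(s')\le b(s')\ \forall s'\}$; the convention $a/0=\infty$ forces $\widetilde P(s')=0$ wherever $P_h^\star(s'\mid s,a)=0$, which is consistent with $b(s')=0$ there. Hence $\EE_{\cB_{\rho'}}[V]$ is the value of the linear program $\min_{\widetilde P\ge 0}\sum_{s'}\widetilde P(s')V(s')$ subject to $\sum_{s'}\widetilde P(s')=1$ and $\widetilde P(s')\le b(s')$. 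This LP is feasible (take $\widetilde P=P_h^\star$, which is legitimate since $1/\rho'\ge 1$) and its objective is bounded below by $0$, so finite-dimensional LP strong duality applies with no further constraint qualification. Introducing a free multiplier $\eta$ for the equality constraint and multipliers $\lambda(s')\ge 0$ for the box constraints, minimizing the Lagrangian over $\widetilde P\ge 0$ forces $\lambda(s')\ge(\eta-V(s'))_+$ for finiteness, and optimizing $\lambda$ out (the coefficients $b(s')\ge 0$) gives the dual
\begin{align*}
\EE_{\cB_{\rho'}(s,a;P_h^\star)}[V]
=\sup_{\eta\in\mathbb{R}}\Big\{\eta-\tfrac{1}{\rho'}\,\EE_{P_h^\star(\cdot|s,a)}\big[(\eta-V)_+\big]\Big\}.
\end{align*}

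To conclude I would multiply this dual identity by $\rho'>0$, obtaining $\rho'\cdot\EE_{\cB_{\rho'}}[V]=\sup_{\eta\in\mathbb{R}}\{\rho'\eta-\EE_{P_h^\star}[(\eta-V)_+]\}$, whose objective is identical to that of the simplified left-hand side. The only remaining point is to reconcile the domains: the left side optimizes over $\eta\in[0,H]$ whereas the dual ranges over all of $\mathbb{R}$. I would show the unconstrained optimizer already lies in $[0,H]$: for $\eta<0$ the objective equals $\rho'\eta<0$ (since $V\ge 0$ makes $(\eta-V)_+=0$), which is worse than the value $0$ attained at $\eta=0$; and for $\eta>H$ it equals $-\tfrac{\rho}{2}\eta+\EE_{P_h^\star}[V]$, which is strictly decreasing, so the optimum is never taken above $H$. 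Restricting the supremum to $[0,H]$ therefore leaves it unchanged, and matching the two objectives proves the identity.

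I expect the main obstacle to be the second paragraph — setting up the box-constrained LP and its dual cleanly, in particular handling the coordinates with $P_h^\star(s'\mid s,a)=0$ through the stated $a/0=\infty$ convention and justifying strong duality without exhibiting a Slater point (legitimate for finite LPs with a feasible, bounded primal, but worth stating explicitly). Everything else reduces to an algebraic comparison of two identical one-variable concave programs.
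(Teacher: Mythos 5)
Your proposal is correct, but it reaches the identity by a different route than the paper. Both arguments start the same way: apply the strong duality representation (Proposition~\ref{prop: strong duality}) and use $\min_s V(s)=0$ to reduce the left-hand side to $\sup_{\eta\in[0,H]}\{\rho'\eta-\mathbb{E}_{P_h^\star(\cdot|s,a)}[(\eta-V)_+]\}$, which is exactly the paper's display \eqref{eq:8881}. The divergence is in how the right-hand side is handled. The paper proves the two inequalities separately by hand: its Part (i) is a weak-duality computation (for any $Q\in\cB_{\rho'}$ and any $\eta$, bound the dual objective by $\rho'\,\mathbb{E}_Q[V]$ using $Q(s')\le P_h^\star(s'|s,a)/\rho'$ and $(x)_+\ge x$), and its Part (ii) establishes achievability by explicitly constructing a worst-case distribution $\widetilde{P}_h^\star$ via a threshold $\widetilde\eta$ and an interpolation weight $\lambda$ (a water-filling argument), then verifying it lies in $\cB_{\rho'}$ and attains the dual value. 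You instead recognize $\mathbb{E}_{\cB_{\rho'}(s,a;P_h^\star)}[V]$ as a box-constrained LP over the simplex and invoke finite-dimensional LP strong duality to get its dual $\sup_{\eta\in\mathbb{R}}\{\eta-\tfrac{1}{\rho'}\mathbb{E}_{P_h^\star(\cdot|s,a)}[(\eta-V)_+]\}$ in one stroke, then match the two dual programs after checking that the unconstrained optimizer lies in $[0,H]$ (your case analysis for $\eta<0$ and $\eta>H$ is correct, including the boundary case $\rho=0$ where the objective is flat above $H$). Your Lagrangian derivation and the treatment of coordinates with $P_h^\star(s'|s,a)=0$ under the $a/0=\infty$ convention are both sound, and feasibility plus boundedness is indeed enough for finite LPs. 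What each approach buys: yours is shorter and avoids the explicit interpolation construction entirely, at the cost of appealing to LP duality as a black box; the paper's is fully self-contained and, as a by-product, exhibits the adversarial transition $\widetilde{P}_h^\star$ explicitly, which is the kind of object that is reused in downstream variance and regret arguments. No gaps.
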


\begin{proof}[Proof of Proposition~\ref{prop: equivalent robust set}]
    Please refer to Appendix~\ref{subsec: proof prop equivalent robust set} for a detailed proof of Proposition~\ref{prop: equivalent robust set}.
\end{proof}

As Proposition~\ref{prop: equivalent robust set} indicates, under Assumption~\ref{ass: zero min}, the robust Bellman equations (Propositions~\ref{prop: robust bellman equation} and \ref{prop: robust bellman optimal equation}) at step $h\in[H]$ are equivalent to taking an infimum over another robust set $\cB_{\rho'}(s, a; P_{h}^{\star})$ that shares the \emph{same} support as the nominal transition kernel $P^{\star}(\cdot|s,a)$, discounted by a constant $\rho'\leq 1$.
Intuitively, this new robust set rules out the difficulty originated in unseen states in training environments and the discount factor $\rho'$ hedges the difficulty from prohibitively small probability of reaching certain states that may appear often in the testing environments.
This renders robust RL with interactive data collection possible.

To understand this from another perspective, it could be shown that under the conclusions of Proposition~\ref{prop: equivalent robust set},
the robust value functions of any policy $\pi$ are equivalent to the robust value functions of this policy under another \emph{discounted} RMDP $(\cS,\cA,H, P^{\star}, R', \boldsymbol{\Phi}')$ with $R_h' (s,a) = (\rho')^{h-1}R_h(s,a)$ and $\boldsymbol{\Phi}'$ given by
\begin{align}
    \boldsymbol{\Phi}'(P) = \bigotimes_{(s,a)\in\mathcal{S}\times\mathcal{A}} \mathcal{B}_{\rho'}(s,a; P). \label{eq: bounded ratio robust set}
\end{align}
And therefore we are equivalently considering this new type of RMDPs.
Please refer to Section~\ref{subsec: extentions} for more discussions on the connections between the two types of RMDPs.

\paragraph{Examples of Assumption~\ref{ass: zero min}.} In the sequel, we provide a concrete condition that makes Assumption~\ref{ass: zero min} hold, which imposes that the state space of the RMDP has a ``closed" subset of ``fail-states" with zero rewards.

\begin{condition}[Fail-states] \label{assumption:fail:state}
    There exists a subset $\cS_f\subset\cS$ of fail states such that
    \$
    R_h(s, a) = 0, \quad P_h^\star(\cS_f| s, a) = 1, \quad \forall (s, a, h) \in \cS_f\times \mathcal{A} \times [H].
    \$
\end{condition}

This type of ``fail-states" condition is first proposed by \citet{panaganti2022robust} (with $|\cS_f|=1$) to handle the computational issues for robust offline RL under function approximations (out of the scope of our work).
In contrast, here we make the vanishing minimal value assumption in order to tackle the \emph{support shift} or \emph{extrapolation} issue for the interactive data collection setup.
The comparison between the vanishing minimal value assumption (Assumption~\ref{ass: zero min}) and the ``fail-states" condition (Condition~\ref{assumption:fail:state}) is given below.

\begin{remark}[Comparison between Assumption~\ref{ass: zero min} and Condition~\ref{assumption:fail:state}]\label{remark:assumption:compare}
We first observe that Condition~\ref{assumption:fail:state} implies that $\min_{s\in\cS} V_{h, P^\star, \boldsymbol{\Phi}}^\pi(s) = 0$ for any policy $\pi$ and step $h \in [H]$, therefore satisfying the minimal value assumption (Assumption~\ref{ass: zero min}).
Conversely, the vanishing minimal value assumption in Assumption~\ref{ass: zero min} is strictly more general than the fail-state condition in Condition~\ref{assumption:fail:state}. To illustrate, one can consider an RMDP characterized by the state space $\cS = \{s_1, s_2\}$, action space $\cA = \{a_1\}$, time horizon $H = 2$, reward function $R_h(s, a) = \mathbf{1}\{s = s_2\}$, and transition probabilities defined as follows:
    \$
    P_1^\star(s_1 |s_1, a_1) = 1 - \rho, \quad P_1^\star(s_2 | s_1, a_1) = \rho, \quad P_1^\star(s_1 | s_2, a_1) = 0, \quad P_1^\star(s_2 | s_2, a_1) = 1,
    \$
    where $\rho$ is the radius of the robust set. It is evident that no fail-state emerges within such an RMDP structure. However, this RMDP satisfies the vanishing minimal value assumption since $V_{1,P^{\star},\boldsymbol{\Phi}}^{\star}(s_1) = 0$.
\end{remark}

\begin{remark}[Reduction to non-robust MDP without loss of generality]\label{rmk: reduction}
    It is noteworthy that assuming the vanishing minimal value (Assumption~\ref{ass: zero min}) or the presence of fail-states (Condition~\ref{assumption:fail:state}) in the non-robust case ($\rho = 0$) is without loss of generality. This is achievable by expanding the prior state space $\mathcal{S}$ of MDP to include an additional state $s_f$, denoted as the fail-state. More importantly, this augmentation does not alter the optimal value or the optimal value function of the original MDP. Consequently, it becomes sufficient to seek the optimal policy within the augmented MDP, which satisfies the conditions of vanishing minimal value (Assumption~\ref{ass: zero min}) or the existence of fail-states (Condition~\ref{assumption:fail:state}).
    This indicates that our algorithm and theoretical analysis in the sequel can be directly reduced to non-robust MDPs without additional assumptions.
\end{remark}

\subsection{Algorithm Design: OPROVI-TV}\label{subsec: alg tv}

In this section, we propose our algorithm that solves robust RL with interactive data collection for RMDPs with $\cS\times\cA$-rectangular total-variation (TV) robust sets (Assumption~\ref{ass: sa} and Definition~\ref{def: tv}) and satisfying the vanishing minimal value assumption (Assumption~\ref{ass: zero min}).
Our algorithm, \underline{OP}timistic \underline{RO}bust \underline{V}alue \underline{I}teration for \underline{TV} Robust Set (\texttt{OPROVI-TV}, Algorithm~\ref{alg: tv}), can automatically balance exploitation and exploration during the interactive data collecting process while managing the distributional robustness of the learned policy.

In each episode $k$, the algorithm operates in three stages: (i) training environment transition estimation (Line~\ref{line: part 1 start} to \ref{line: part 1 end}); (ii) optimistic robust planning based on the training environment transition estimator (Line~\ref{line: part 2 start} to \ref{line: part 2 end}); and finally (iii) executing the policy in the training environment and collecting data (Line~\ref{line: part 3 start} to \ref{line: part 3 end}).
In the following, we elaborate more on the first two parts of Algorithm~\ref{alg: tv}.

\begin{algorithm}[t]
    \caption{\underline{OP}timistic \underline{RO}bust \underline{V}alue \underline{I}teration for TV Robust Set (\texttt{OPROVI-TV})}\label{alg: tv}
    \begin{algorithmic}[1]
        \STATE \textbf{Initialize:} dataset $\mathbb{D} = \emptyset$.
        \FOR{episode $k=1,\cdots,K$}
        \STATE \textcolor{blue!55}{\texttt{Training environment transition estimation:}}\label{line: part 1 start}
        \STATE Update the count functions $N_h^k(s,a,s')$ and $N_h^k(s,a)$ based on $\mathbb{D}$ according to \eqref{eq: count function}.
        \STATE Calculate the transition kernel estimator $\widehat{P}_h^k$ according to \eqref{eq: hat p tv}.\label{line: part 1 end}
        \STATE \textcolor{blue!55}{\texttt{Optimistic robust planning:}}\label{line: part 2 start}
        \STATE Set $\overline{V}_{H+1}^k = \underline{V}_{H+1}^k = 0$.
        \FOR{step $h=H,\cdots,1$}
        \STATE Set $\overline{Q}_h^k(\cdot,\cdot)$ and $\underline{Q}_h^k(\cdot,\cdot)$ as \eqref{eq: Q overline} and \eqref{eq: Q underline}, with the bonus function $\texttt{bonus}_h^k(\cdot,\cdot)$ defined in \eqref{eq: bernstein bonus}.
        \STATE Set $\pi_h^k(\cdot|\cdot) = \argmax_{a\in\mathcal{A}}\,\overline{Q}_h^k(\cdot,a)$, $\overline{V}_h^k(\cdot) = \mathbb{E}_{\pi_h^k(\cdot|\cdot)} [\overline{Q}_h^k(\cdot,\cdot)]$, and $\underline{V}_h^k(\cdot) =  \mathbb{E}_{\pi_h^k(\cdot|\cdot)} [ \underline{Q}_h^k(\cdot,\cdot)]$.\label{line: V}
        \ENDFOR\label{line: part 2 end}
        \STATE \textcolor{blue!55}{\texttt{Execute the policy in training environment and collect data:}}\label{line: part 3 start}
        \STATE Receive the initial state $s_1^k\in\cS$.
        \FOR{step $h=1,\cdots,H$}
        \STATE Take action $a_h^k\sim \pi^k_h(\cdot|s_h^k)$, observe reward $R_h(s_h^k,a_h^k)$ and the next state $s_{h+1}^k$.
        \ENDFOR
        \STATE Set $\mathbb{D}$ as $\mathbb{D}\cup\{(s_h^k,a_h^k,s_{h+1}^k)\}_{h=1}^H$.
        \ENDFOR\label{line: part 3 end}
        \STATE \textbf{Output:} Randomly (uniformly) return a policy from $\{\pi^k\}_{k=1}^K$.
    \end{algorithmic}
\end{algorithm}

\subsubsection{Training Environment Transition Estimation}
At the beginning of each episode $k\in[K]$, we maintain an estimate of the transition kernel $P^{\star}$ of the training environment by using the historical data $\mathbb{D} = \{(s_h^{\tau}, a_h^{\tau}, s_{h+1}^{\tau})\}_{\tau=1, h = 1}^{k-1, H}$ collected from the interaction with the training environment.
Specifically, we simply adopt a vanilla empirical estimator, defined as
\begin{align}\label{eq: hat p tv}
    \widehat{P}_h^k(s'|s,a)
    =
    \begin{cases}
    \dfrac{N_h^k(s,a,s')}{N_h^k(s,a)}, & N_h^k(s,a)>0,\\[1ex]
    \dfrac{1}{S}, & N_h^k(s,a)=0,
    \end{cases}
    \quad \forall (s,a,h,s')\in\cS\times\cA\times[H]\times\cS,
\end{align}
where the count functions $N_h^k(s,a,s')$ and $N_h^k(s,a)$ are calculated on the current dataset $\mathbb{D}$ by
\begin{align}
    N_h^k(s,a,s')= \sum_{\tau=1}^{k-1}\mathbf{1}\big\{(s_h^{\tau}, a_h^{\tau},s_{h+1}^{\tau}) = (s,a,s')\big\}, \quad N_h^k(s,a) = \sum_{s'\in\cS}N_h^k(s,a,s'),\label{eq: count function}
\end{align}
for any $(s,a,h,s')\in\cS\times\cA\times[H]\times\cS$.
This just coincides with the transition estimator adopted by existing non-robust online RL algorithms \citep{auer2008near, azar2017minimax, zhang2021reinforcement}.

\subsubsection{Optimistic Robust Planning}

Given $\widehat{P}^k$ that estimates the training environment, we perform an  optimistic robust planning to construct the policy $\pi^k$ to execute.
Basically, the optimistic robust planning follows the robust Bellman optimal equation (Proposition~\ref{prop: robust bellman optimal equation}) to approximate the optimal robust policy, but differs in that it maintains an upper bound and a lower bound of the optimal robust value function and chooses the policy that maximizes the optimistic estimate to incentivize exploration during data collection.
Here the purpose of maintaining the lower bound estimate is to facilitate the construction of the variance-aware optimistic bonus (see following), which helps to sharpen our theoretical analysis.

\paragraph{Simplifying the robust expectation.}
To better utilize the vanishing minimal value condition (Assumption~\ref{ass: zero min}), we take a closer look into the robust Bellman equation.
Due to the strong duality (Proposition~\ref{prop: strong duality}), the robust expectation $\mathbb{E}_{\mathcal{P}_{\rho}(s,a;P)}[V]$ for any $V\in[0,H]$ satisfying $\min_{s\in\cS}V(s) = 0$ is equivalent to
\begin{align}
    \mathbb{E}_{\mathcal{P}_{\rho}(s,a;P)}\big[V\big] &= \sup_{\eta\in[0,H]} \bigg\{ - \mathbb{E}_{P(\cdot|s,a)}\Big[\big(\eta-V\big)_+\Big] - \rho\cdot\Big(\eta-\min_{s'\in\cS} V(s')\Big)_+ + \eta \bigg\} \\
    &=\sup_{\eta\in[0,H]} \bigg\{- \mathbb{E}_{P(\cdot|s,a)}\Big[\big(\eta-V\big)_+\Big] + \left(1-\rho\right) \cdot \eta\bigg\}.\label{eq: duality tv}
\end{align}
{Consequently, in the algorithmic recursions below, whenever the notation $\mathbb{E}_{\mathcal{P}_{\rho}(s,a;P)}[V]$ is applied to an estimated value function, it is understood as the dual-form backup on the right hand side of \eqref{eq: duality tv}. When $\min_{s\in\cS}V(s)=0$, Proposition~\ref{prop: strong duality} implies that this dual-form backup coincides with the true TV robust expectation. Thus, under Assumption~\ref{ass: zero min}, the robust Bellman equations for the true robust value functions remain unchanged, while the same dual-form backup is used for the optimistic and pessimistic value estimates in the algorithm.}

\paragraph{Optimistic robust planning.}
With this in mind, the optimistic robust planning goes as follows.
Starting from $\overline{V}_{H+1}^k = \underline{V}_{H+1}^k = 0$, we recursively define that
\begin{align}
    \overline{Q}_h^k(s,a) &= \min\left\{R_h(s,a) + \mathbb{E}_{\mathcal{P}_{\rho}(s,a;\widehat{P}_h^k)}\Big[\overline{V}_{h+1}^k\Big] + \texttt{bonus}_h^k(s,a),\min\big\{H, \rho^{-1}\big\}\right\},\quad \forall (s,a)\in\cS\times\cA,\label{eq: Q overline}\\
    \underline{Q}_h^k(s,a) &= \max\left\{R_h(s,a) +  \mathbb{E}_{\mathcal{P}_{\rho}(s,a;\widehat{P}_h^k)}\Big[\underline{V}_{h+1}^k\Big] - \texttt{bonus}_h^k(s,a), 0\right\},\quad \forall (s,a)\in\cS\times\cA,\label{eq: Q underline}
\end{align}
{where $\mathbb{E}_{\mathcal{P}_{\rho}(s,a;\widehat{P}_h^k)}$ is understood in the dual-form sense specified above, and the bonus function $\texttt{bonus}_h^k(s,a)\geq 0$ is defined later.}
Here we truncate the optimistic estimate $\overline{Q}_h^k$ via the upper bound $\min\{H,\rho^{-1}\}$ of the true optimal robust value function $Q^{\star}_{h,P^{\star}, \boldsymbol{\Phi}}$. This truncation arises from the combined implication of Proposition~\ref{prop: gap} and the fact that $\min_{(s,a)\in\cS\times\cA}Q_{h,P^{\star},\boldsymbol{\Phi}}^{\star}(s,a) = 0$ under Assumption~\ref{ass: zero min}.

As we establish in Lemma~\ref{lem: optimism and pessimism restate}, $\overline{Q}^k_h$ and $\underline{Q}^k_h$ form upper and lower bounds for $Q^{\star}_{h,P^{\star},\boldsymbol{\Phi}}$ and $Q^{\pi^k}_{h,P^{\star},\boldsymbol{\Phi}}$ under a proper choice of the bonus.
After performing \eqref{eq: Q overline} and \eqref{eq: Q underline}, we choose the data collection policy $\pi^k_h$ to be the optimal policy with respect to the optimistic estimator $\overline{Q}^k_h$ and define $\overline{V}_h^k$ and $\underline{V}_h^k$ accordingly by
\begin{align}
    \pi_h^k(\cdot|\cdot) = \argmax_{a\in\mathcal{A}}\,\overline{Q}_h^k(\cdot,a), \quad \overline{V}_h^k(s) = \mathbb{E}_{\pi_h^k(\cdot|s)} \Big[\overline{Q}_h^k(s,\cdot)\Big],\quad \underline{V}_h^k(s) =  \mathbb{E}_{\pi_h^k(\cdot|s)} \Big[ \underline{Q}_h^k(s,\cdot)\Big].\label{eq: V}
\end{align}
We remark that the purpose of maintaining the lower bound estimate \eqref{eq: Q underline} is to facilitate the construction of the bonus and to help to sharpen our theoretical analysis.
The construction of the policy $\pi^k$ is still based on the optimistic estimator, which is why we name it optimistic robust planning.
As indicated by theory, the optimistic robust planning effectively guides the policy to explore uncertainty \emph{robust} value function estimates, striking a balance between exploration and exploitation while managing distributional robustness.

\paragraph{Bonus function.}
In Algorithm~\ref{alg: tv}, the bonus function $\texttt{bonus}_h^k(s,a)$ is a Bernstein-style bound defined as
\begin{align}
\!\!\!\!\!\!\!\!\!\!\!\!\texttt{bonus}_h^k(s,a)=\sqrt{\frac{\mathbb{V}_{\widehat{P}_h^k(\cdot|s,a)}\Big[\Big(\overline{V}_{h+1}^k+\underline{V}_{h+1}^k\Big) / 2\Big] c_1\iota}{N_h^k(s,a)\vee 1}} + \frac{2\mathbb{E}_{\widehat{P}_h^k(\cdot|s,a)}\Big[\overline{V}_{h+1}^{k} - \underline{V}_{h+1}^{k}\Big]}{H}  + \frac{c_2H^2S\iota}{N_h^k(s,a)\vee 1} + \frac{1}{\sqrt{K}}\label{eq: bernstein bonus}
%\substack{\displaystyle{\texttt{lower order}}\\ \displaystyle{\texttt{terms}}},\label{eq: bernstein bonus}
\end{align}
where $\iota = \log(S^3AH^2K^{3/2}/\delta)$, $c_1,c_2>0$ are absolute constants, and $\delta$ signifies a pre-selected fail probability.
Under \eqref{eq: bernstein bonus}, $\overline{Q}_h^k$ and $\underline{Q}_h^k$ become upper and lower bounds of the optimal robust value functions (Lemma~\ref{lem: optimism and pessimism restate}).
More importantly, the bonus \eqref{eq: bernstein bonus} is carefully designed for robust value functions such that the summation of this bonus term (especially the leading variance term in \eqref{eq: bernstein bonus}) over time steps is well controlled, for which we also develop new analysis methods.
This is critical for obtaining a sharp sample complexity of Algorithm~\ref{alg: tv}.

\subsection{Theoretical Guarantees}\label{subsec: theory tv}

This section establishes the online regret and the
sample complexity of \texttt{OPROVI-TV} (Algorithm~\ref{alg: tv}).
Our main result is the following theorem, upper bounding the online regret of Algorithm~\ref{alg: tv}.

\begin{theorem}[Online regret of \texttt{OPROVI-TV}]\label{thm: regret tv}
    Given an RMDP with $\cS\times\cA$-rectangular total-variation robust set of radius $\rho\in[0,1)$ (Assumption~\ref{ass: sa} and Definition~\ref{def: tv}) satisfying Assumptions~\ref{ass: zero min}, choosing the bonus function as \eqref{eq: bernstein bonus} with sufficiently large $c_1,c_2>0$, then  with probability at least $1-\delta$, Algorithm~\ref{alg: tv} satisfies
    \begin{align}
        \mathrm{Regret}_{\boldsymbol{\Phi}}(K) \leq \mathcal{O}\bigg(\sqrt{\min\big\{H,\rho^{-1}\big\} H^2SAK\iota'}\,\bigg),
    \end{align}
    where $\iota' = \log^2(SAHK/\delta)$
    %\han{choose $\iota' = \log^2(SAHK/\delta)$? btw, we have used the notation $\iota$?}
    and $\mathcal{O}(\cdot)$ hides absolute constants and lower order terms in $K$.
\end{theorem}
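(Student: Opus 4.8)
The plan is to run the standard optimistic value-iteration regret analysis, but carried out with respect to the robust Bellman backup of Proposition~\ref{prop: robust bellman optimal equation}, and to squeeze the sharp factor $\min\{H,\rho^{-1}\}$ out of the range bound of Proposition~\ref{prop: gap} via a robust law of total variance. First I would establish the optimism/pessimism guarantee of Lemma~\ref{lem: optimism and pessimism restate}: with probability at least $1-\delta$, $\underline{V}_h^k(s) \le V_{h,P^\star,\boldsymbol{\Phi}}^{\pi^k}(s) \le V_{h,P^\star,\boldsymbol{\Phi}}^\star(s) \le \overline{V}_h^k(s)$ for all $(s,h,k)$, by backward induction on $h$. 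The only nonstandard ingredient is that the backup applies the robust expectation $\mathbb{E}_{\mathcal{P}_{\rho}(s,a;\widehat{P}_h^k)}$ rather than a plain empirical expectation. Using the dual form \eqref{eq: duality tv}, the one-step error $|\mathbb{E}_{\mathcal{P}_{\rho}(s,a;\widehat{P}_h^k)}[V] - \mathbb{E}_{\mathcal{P}_{\rho}(s,a;P_h^\star)}[V]|$ is bounded by $\sup_{\eta\in[0,H]}|\mathbb{E}_{\widehat{P}_h^k(\cdot|s,a)}[(\eta-V)_+] - \mathbb{E}_{P_h^\star(\cdot|s,a)}[(\eta-V)_+]|$, which I would control by a Bernstein inequality made uniform over the one-dimensional dual variable $\eta$ through an $\varepsilon$-net of $[0,H]$ and a union bound over $(s,a,h,k)$; this produces exactly the logarithmic factor $\iota=\log(S^3AH^2K^{3/2}/\delta)$ in \eqref{eq: bernstein bonus} and shows the bonus dominates the backup error so optimism is preserved. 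The truncation at $\min\{H,\rho^{-1}\}$ in \eqref{eq: Q overline}, together with $\min_s V_{h,P^\star,\boldsymbol{\Phi}}^\star(s)=0$ from Assumption~\ref{ass: zero min} and Proposition~\ref{prop: gap}, keeps every iterate inside $[0,\min\{H,\rho^{-1}\}]$.

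Given optimism, I would write $\mathrm{Regret}_{\boldsymbol{\Phi}}(K) \le \sum_{k=1}^K\big(\overline{V}_1^k(s_1)-\underline{V}_1^k(s_1)\big)$ and then control the running gap $\overline{V}_h^k-\underline{V}_h^k$ by a recursion. Subtracting \eqref{eq: Q underline} from \eqref{eq: Q overline} shows that, up to truncation, the gap at step $h$ is the propagated gap of the robust expectation plus $2\,\texttt{bonus}_h^k$. I would convert the robust expectation of the next-step gap into an expectation along the actually sampled trajectory under $P^\star$, paying a Freedman martingale term for the discrepancy between $\mathbb{E}_{\mathcal{P}_{\rho}(s,a;\widehat{P}_h^k)}[\cdot]$ and the realized next state $s_{h+1}^k$. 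Crucially, the bonus term $2\mathbb{E}_{\widehat{P}_h^k}[\overline{V}_{h+1}^k-\underline{V}_{h+1}^k]/H$ absorbs the first-order error from using $\widehat{P}_h^k$ in place of $P_h^\star$ while propagating the gap, so the recursion carries only a factor $1+\mathcal{O}(1/H)$ per step, a total blow-up of $(1+\mathcal{O}(1/H))^H=\mathcal{O}(1)$. Unrolling then gives $\mathrm{Regret}_{\boldsymbol{\Phi}}(K) \le \mathcal{O}\big(\sum_{k,h}\texttt{bonus}_h^k(s_h^k,a_h^k)\big)$ plus lower-order martingale terms.

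It remains to bound $\sum_{k,h}\texttt{bonus}_h^k(s_h^k,a_h^k)$. The pieces $c_2H^2S\iota/(N_h^k\vee1)$ and $1/\sqrt{K}$ sum to terms that are constant in $K$ or lower order and are dominated by the main term for large $K$. For the leading variance piece I would apply Cauchy--Schwarz, $\sum_{k,h}\sqrt{\mathbb{V}_{\widehat{P}_h^k}[\cdot]\,c_1\iota/(N_h^k\vee1)} \le \sqrt{\big(\sum_{k,h}\mathbb{V}_{\widehat{P}_h^k}[\cdot]\big)\big(\sum_{k,h}c_1\iota/(N_h^k\vee1)\big)}$, bound $\sum_{k,h}1/(N_h^k\vee1)\le\widetilde{\mathcal{O}}(SAH)$ by the standard pigeonhole argument on visitation counts, and bound the total variance by $\sum_{k,h}\mathbb{V}_{\widehat{P}_h^k}[\cdot]\le\widetilde{\mathcal{O}}\big(\min\{H,\rho^{-1}\}\,HK\big)$. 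Multiplying the two yields the claimed $\mathcal{O}\big(\sqrt{\min\{H,\rho^{-1}\}H^2SAK\iota'}\big)$.

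The hard part will be this last total-variance bound, a robust law of total variance that delivers $\min\{H,\rho^{-1}\}$ rather than $H$. Two difficulties intertwine. First, the variance in \eqref{eq: bernstein bonus} is taken under the empirical nominal kernel $\widehat{P}_h^k$, whereas the natural telescoping identity for summing conditional variances holds under the worst-case kernel that realizes the robust value; I would bridge this by first replacing $\widehat{P}_h^k$ with $P_h^\star$ through an additional concentration step and then invoking the equivalent bounded-ratio representation of Proposition~\ref{prop: equivalent robust set} to relate the nominal variance to the worst-case one. Second, the improvement to $D:=\min\{H,\rho^{-1}\}$ comes from the observation that, since $\min_s V_{h,P^\star,\boldsymbol{\Phi}}^\star=0$ and the range is at most $D$ by Proposition~\ref{prop: gap}, the realized robust return lies in $[0,H]$ while having expectation at most $D$, so its second moment is at most $H\cdot D$; the telescoped sum of conditional variances over an episode is thus $\mathcal{O}(HD)$, giving $\sum_{k,h}\mathbb{V}\le\widetilde{\mathcal{O}}(DHK)$. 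Making this telescoping rigorous for robust value functions, while simultaneously absorbing the error accumulated from the $\widehat{P}_h^k\to P_h^\star$ and iterate-versus-truth substitutions inside the variance, is the main technical obstacle.
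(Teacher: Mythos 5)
Your proposal is correct and follows essentially the same route as the paper: optimism/pessimism via Bernstein concentration of the dual form uniformly over an $\varepsilon$-net of $\eta$, a gap recursion with $(1+\mathcal{O}(1/H))^H=\mathcal{O}(1)$ blow-up absorbed by the $\mathbb{E}_{\widehat{P}_h^k}[\overline{V}-\underline{V}]/H$ bonus term, Cauchy--Schwarz plus pigeonhole on the counts, and a robust total-variance law obtained by bridging $\widehat{P}_h^k\to P_h^\star\to$ the adversarial kernel and telescoping under the latter, which is exactly the decomposition in the paper's Lemmas~\ref{lem: control of bonus restate}, \ref{lem: total variance restate}, and \ref{lem:total:variance:tv}. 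The only cosmetic difference is that the paper bridges the nominal and worst-case variances directly via the TV-ball membership of the adversarial kernel (paying $\rho\,\overline{H}^2\le\overline{H}$ per step) rather than via the bounded-ratio representation of Proposition~\ref{prop: equivalent robust set}, but both yield the same $\min\{H,\rho^{-1}\}HK$ total-variance bound.
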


\begin{proof}[Proof of Theorem~\ref{thm: regret tv}]
    See Appendix~\ref{sec: proof tv} for a detailed proof of Theorem~\ref{thm: regret tv}.
\end{proof}

Theorem~\ref{thm: regret tv} shows that Algorithm~\ref{alg: tv} enjoys a sublinear online regret of $\widetilde{\cO}(\sqrt{K})$, meaning that it is able to approximately find the optimal robust policy through interactive data collection.
This is in contrast with the general hardness result in Section~\ref{sec: hardness} where sample-efficient learning is impossible in the worst case.
Thus we show the effectiveness of the minimal value assumption for robust RL with interactive data collection.

As a corollary, we have the following sample complexity bound for Algorithm~\ref{alg: tv}.

\begin{corollary}[Sample complexity of \texttt{OPROVI-TV}]\label{cor: sample complexity tv}
Under the same setup and conditions as in Theorem~\ref{thm: regret tv}, with probability at least $1-\delta$, %the output $\widehat{\pi}$ of Algorithm~\ref{alg: tv} is an $\varepsilon$-optimal robust policy as long as the number of episode $K$ satisfies
    % \begin{align}
    %     K\geq \mathcal{O}\left(\frac{\min\big\{H,\rho^{-1}\big\} H^2SA\iota'}{\varepsilon^2}\right),
    % \end{align}
    Algorithm~\ref{alg: tv} can output an $\varepsilon$-optimal policy within
    \begin{align}
    \mathcal{O}\left(\frac{\min\big\{H,\rho^{-1}\big\} H^2SA\iota''}{\varepsilon^2}\right)\label{eq: complexity}
    \end{align}
    episodes,
    where $\iota'' = \log(SAH/\varepsilon \delta)$
    %\han{should be $\iota' = \log^2(SAH/ \varepsilon \delta)$?}
    and $\mathcal{O}(\cdot)$ hides absolute constants.
    Here the valid range of $\varepsilon$ is given by $\varepsilon\in(0,c\cdot \min\{1,1/(\rho H)\}]$ for some constant $c>0$.
\end{corollary}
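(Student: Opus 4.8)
The plan is to obtain the sample complexity \eqref{eq: complexity} from the online regret bound of Theorem~\ref{thm: regret tv} by a standard online-to-batch conversion, exploiting the fact that Algorithm~\ref{alg: tv} returns a policy $\widehat{\pi}$ drawn uniformly at random from the executed policies $\{\pi^k\}_{k=1}^K$. First I would record the elementary averaging identity: since $\widehat{\pi}$ is uniform over $\{\pi^k\}_{k=1}^K$,
\begin{align}
\mathbb{E}_{\widehat{\pi}}\Big[V_{1,P^{\star},\boldsymbol{\Phi}}^{\star}(s_1)-V_{1,P^{\star},\boldsymbol{\Phi}}^{\widehat{\pi}}(s_1)\Big] = \frac{1}{K}\sum_{k=1}^K\Big(V_{1,P^{\star},\boldsymbol{\Phi}}^{\star}(s_1)-V_{1,P^{\star},\boldsymbol{\Phi}}^{\pi^k}(s_1)\Big) = \frac{\mathrm{Regret}_{\boldsymbol{\Phi}}(K)}{K}.
\end{align}
Thus the expected suboptimality of the output policy equals the per-episode average regret, and on the probability-$(1-\delta)$ event of Theorem~\ref{thm: regret tv} it is bounded by $\mathcal{O}\big(\sqrt{\min\{H,\rho^{-1}\}H^2SA\iota'/K}\big)$. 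This is exactly the $\varepsilon$-optimality guarantee claimed by the corollary, read as an in-expectation statement over the random draw of $\widehat{\pi}$ on the good event.

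Next I would solve for the number of episodes needed to drive this average suboptimality below $\varepsilon$. Momentarily suppressing the logarithmic factor, requiring $\sqrt{\min\{H,\rho^{-1}\}H^2SA/K} \le \varepsilon$ yields $K \ge \min\{H,\rho^{-1}\}H^2SA/\varepsilon^2$, which is precisely \eqref{eq: complexity}. The remaining work is purely bookkeeping of the logarithmic and lower-order contributions, and this is where the two genuine subtleties lie.

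The main obstacle is handling the self-referential logarithmic factor together with the stated admissible range of $\varepsilon$. Since $\iota'=\log^2(SAHK/\delta)$ itself depends on $K$, substituting the candidate value of $K$ produces an implicit inequality in $K$; I would resolve this in the usual way by noting that $\log K$ grows only logarithmically, so that plugging $K=\Theta(\min\{H,\rho^{-1}\}H^2SA/\varepsilon^2)$ back in gives $\log K = \mathcal{O}(\log(SAH/(\varepsilon\delta)))$, allowing $\iota'$ to be absorbed into the single factor $\iota''=\log(SAH/(\varepsilon\delta))$ in \eqref{eq: complexity}. Finally I would verify that the lower-order-in-$K$ terms hidden in the $\mathcal{O}(\cdot)$ of Theorem~\ref{thm: regret tv} — the additive $H^2S$-type contribution accumulated from the bonus \eqref{eq: bernstein bonus} and the additive $1/\sqrt{K}$ term — are dominated by the leading $\sqrt{K}$ term at the chosen $K$. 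Tracking precisely when this domination holds is what produces the constraint $\varepsilon\in(0,c\cdot\min\{1,1/(\rho H)\}]$: note that $\min\{1,1/(\rho H)\}=\min\{H,\rho^{-1}\}/H$, the natural per-step value scale, and that $\min\{H,\rho^{-1}\}$ upper-bounds both the optimal robust value (via Proposition~\ref{prop: gap} and Assumption~\ref{ass: zero min}) and the per-episode regret, so $\varepsilon$ must be taken below this scale for the leading-order analysis to be meaningful and non-vacuous.
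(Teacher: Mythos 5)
Your proposal is correct and follows essentially the same route as the paper, whose proof of this corollary is simply the one-line invocation of Theorem~\ref{thm: regret tv} plus the standard online-to-batch conversion; your averaging identity, the resolution of the self-referential $\log K$ factor, and the accounting for the lower-order terms behind the $\varepsilon$ range are exactly the details that argument elides. No gaps.
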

\begin{proof}[Proof of Corollary~\ref{cor: sample complexity tv}]
    This follows from Theorem~\ref{thm: regret tv} and a standard online to batch conversion.
\end{proof}

This further shows that Algorithm~\ref{alg: tv} is able to find $\varepsilon$-optimal robust policy within polynomial interactive samples in $H$, $S$, $A$, and $\varepsilon^{-1}$.
We note that as the radius $\rho$ of the TV robust set increases, the sample needed to be $\varepsilon$-optimal decreases.
When $\rho$ tends to $1$, the sample complexity reduces to nearly $\widetilde{\cO}(H^2SA/\varepsilon^2)$.
Thus, we observe that robust RL through interactive data collection for this RMDP example is statistically easier when the radius $\rho$ increases, which matches the conclusion in the generative model setup \citep{yang2022toward, shi2023curious} as well as the offline learning setup \citep{panaganti2022robust}.

Finally, we compare the sample complexity \eqref{eq: complexity} with prior arts on non-robust online RL and robust RL with a generative model.
On the one hand, \eqref{eq: complexity} with $\rho=0$ equals to $$\widetilde{\cO}\left(\frac{H^3SA}{\varepsilon^2}\right),$$ which matches the minimax sample complexity lower bound for online RL in non-robust MDPs \citep{azar2017minimax}.
This means that our algorithm design can naturally handle non-robust MDPs as a special case (please also see Remark~\ref{rmk: reduction} for why one can reduce Algorithm~\ref{alg: tv} to general non-robust MDPs under Assumption~\ref{ass: zero min}).
On the other hand, the previous work of \cite{shi2023curious} for robust RL in infinite horizon RMDPs with a TV robust set and a generative model showcases a minimax optimal sample complexity of $$\widetilde{\mathcal{O}}\left(\frac{\min\big\{H_\gamma,\rho^{-1}\big\} H_\gamma^2 SA}{\varepsilon^2}\right),$$
for $\rho\in[0,1)$, where we define $H_\gamma := 1/(1-\gamma)$ as the effective horizon of the infinite $\gamma$-discounted RMDPs.
%$\gamma$ is the discount factor of the infinite horizon RMDP.  If we identify the effective horizon $H_{\gamma}$ of the infinite horizon model as $1/(1-\gamma)$,
As a result, the sample complexity \eqref{eq: complexity} of Algorithm~\ref{alg: tv} matches their result.
We highlight that our algorithm does not rely on a generative model and operates purely through interactive data collection.

\section{Extension \texorpdfstring{I}{I}: Robust Set with Bounded Transition Probability Ratio}\label{subsec: extentions}

In this section, we show that our algorithm design (Algorithm~\ref{alg: tv}) can also be applied to $\cS\times\cA$-rectangular discounted RMDPs with robust sets given by \eqref{eq: bounded ratio robust set} (i.e., bounded ratio between training and testing transition probabilities).
We establish that our main theoretical result in Section~\ref{subsec: theory tv} can imply a sublinear regret upper bound for this model, which means that this type of RMDPs can also be solved sample-efficiently through the auxiliary construction based on Algorithm~\ref{alg: tv}.
This coincides with our intuition on support shift in Section~\ref{subsec: fail state assumption}.

\paragraph{$\cS\times\cA$-rectangular discounted RMDPs with robust set \eqref{eq: bounded ratio robust set}.}
We first define the model formally.
A finite-horizon discounted RMDP is specified by $\cM_{\gamma} = (\cS,\cA,H,P^{\star}, R_{\gamma},\boldsymbol{\Phi}')$, where the robust set $\boldsymbol{\Phi}'$ is given by \eqref{eq: bounded ratio robust set}, i.e.,
\begin{align}
     \boldsymbol{\Phi}'(P) = \bigotimes_{(s,a)\in\mathcal{S}\times\mathcal{A}} \left\{\widetilde{P}(\cdot) \in \Delta(\cS):  \sup_{s' \in \cS}\frac{\widetilde{P}(s')}{P(s' | s, a)} \le \frac{1}{\rho'} \right\}:= \bigotimes_{(s,a)\in\mathcal{S}\times\mathcal{A}} \cB_{\rho'}(s,a;P).\label{eq: robust set new}
\end{align}
This robust set contains transition probabilities that share the same support as the nominal transition kernel.
The reward function $R_{\gamma} = \{\gamma^{h-1}\cdot R_h\}_{h=1}^H$, where $\gamma\in(0,1)$ is the discount factor and $R_h\in[0,1]$ is the true reward at step $h$.
That is, the robust value function is now the worst case expected discounted total reward.

\paragraph{Algorithm and regret bound.}
Now we theoretically show that we can apply Algorithm~\ref{alg: tv} to solve robust RL in $\cS\times\cA$-rectangular discounted RMDPs with robust set \eqref{eq: robust set new} via interactive data collection.

As motivated by the discussions under Proposition~\ref{prop: equivalent robust set}, we define an auxiliary finite-horizon TV-RMDP $\widetilde{\cM}$ as
$\widetilde{\cM} = (\widetilde{\cS},\cA,H, \widetilde{P}^{\star}, \widetilde{R}, \widetilde{\boldsymbol{\Phi}})$ which includes an additional ``fail-state" $s_f$.
More specifically, the state space $\widetilde{\cS} = \cS\cup\{s_f\}$.
The transition kernel $\widetilde{P}^{\star}$ is defined as, for any step $h\in[H]$,
    \begin{align}
        \widetilde{P}_h^{\star}(\cdot|s,a) = P_h^{\star}(\cdot|s,a),\quad\forall (s,a)\in\cS\times\cA \quad \text{and}\quad  \widetilde{P}_h^{\star}(\cdot|s_f,a) = \delta_{s_f}(\cdot),\quad \forall a\in\cA.\label{eq: transition new}
    \end{align}
    The reward function $\widetilde{R}$ is defined as, for any step $h\in[H]$,
    \begin{align}
        \widetilde{R}_h(s,a) = \left(\frac{\gamma}{\rho'}\right)^{h-1}\cdot R_h(s,a),\quad \forall (s,a)\in\cS\times\cA \quad \text{and}\quad \widetilde{R}_h(s_f,a) = 0,\quad \forall  a\in\cA.
    \end{align}
    We suppose that the discount factor $\gamma\leq \rho'$ so that the reward function $\widetilde{R}_h\in[0,1]$.
    The robust mapping $\widetilde{\boldsymbol{\Phi}}$ is defined as, for any $P:\widetilde{\cS}\times\cA\mapsto\Delta(\widetilde{\cS})$,
    \begin{align}
        \widetilde{\boldsymbol{\Phi}}(P) = \bigotimes_{(s,a)\in\widetilde{\cS}\times\cA}\left\{\widetilde{P}(\cdot)\in\Delta(\widetilde{\cS}):D_{\mathrm{TV}}\big(\widetilde{P}(\cdot)\big\|P(\cdot|s,a)\big)\leq \rho\right\} := \bigotimes_{(s,a)\in\widetilde{\cS}\times\cA}\widetilde{\cP}_{\rho}(s,a; P),\quad \rho = 1-\rho'.
    \end{align}
Therefore, $\widetilde{\cM}$ is an RMDP with $\widetilde{\cS}\times\cA$-rectangular TV robust set of radius $\rho$ and satisfying Assumption~\ref{ass: zero min} (because it satisfies the ``fail-state" Condition~\ref{assumption:fail:state}).
Furthermore, for any initial state $s_1\in\widetilde{\cS}\setminus\{s_f\} = \cS$, the interaction with the transition kernel $\widetilde{P}^{\star}$ is equivalent to the interaction with the transition kernel $P^{\star}$ of the original RMDP $\cM_{\gamma}$, since by the definition \eqref{eq: transition new}, starting from any $s\neq s_f$ the agent would follow the same dynamics as $P^{\star}$.
    What's more, for any policy $\widetilde{\pi}_h:\widetilde{\cS}\mapsto\Delta(\cA)$ for $\widetilde{\cM}$, it naturally induces the unique policy $\widetilde{\pi}_{\cS, h}:\cS\mapsto\Delta(\cA)$ for the original RMDP $\cM_{\gamma}$.

Therefore, we can
run Algorithm~\ref{alg: tv} on the auxiliary RMDP $\widetilde{\cM}$, starting from the initial state $s_1\in\widetilde{\cS}\setminus\{s_f\}$, which only needs the interaction with $P^{\star}$.
Suppose the output policy by the algorithm is $\{\widetilde{\pi}^k\}_{k=1}^K$, then the following corollary shows the induced policy $\{\widetilde{\pi}^k_\cS\}_{k=1}^K$ for the original RMDP $\cM_{\gamma}$ enjoys a sublinear regret.

\begin{corollary}[Online regret of Algorithm~\ref{alg: tv} for discounted RMDPs with robust sets \eqref{eq: robust set new}]\label{cor: regret discount}
    Consider an $\cS\times\cA$-rectangular $\gamma$-discounted RMDP with robust set \eqref{eq: robust set new} satisfying $0\leq \gamma\leq \rho'\in(0,1]$.
    There exists an algorithm $\mathcal{ALG}$ (specified by the above discussion) such that its online regret for this RMDP is bounded by
    \begin{align}
        \mathrm{Regret}_{\boldsymbol{\Phi}'}^{\mathcal{ALG}}(K) \leq \mathcal{O}\bigg(\sqrt{\min\big\{H,(1-\rho')^{-1}\big\} H^2SAK\iota'}\,\bigg),
    \end{align}
    where $\iota' = \log^2(SAHK/\delta)$ and $(1-\rho')^{-1}$ is interpreted as $+\infty$ when $\rho'=1$
    and $\mathcal{O}(\cdot)$ hides absolute constants and lower order terms in $K$.
\end{corollary}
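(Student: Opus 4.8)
The plan is to prove Corollary~\ref{cor: regret discount} by a reduction argument that transfers the regret guarantee of Theorem~\ref{thm: regret tv} from the auxiliary TV-RMDP $\widetilde{\cM}$ back to the original discounted RMDP $\cM_\gamma$ with robust set \eqref{eq: robust set new}. The central claim to establish is that the robust value functions of the two models coincide (up to the policy projection $\widetilde\pi \mapsto \widetilde\pi_\cS$), so that running \texttt{OPROVI-TV} on $\widetilde\cM$ is tantamount to solving $\cM_\gamma$. First I would verify the setup prerequisites: by Condition~\ref{assumption:fail:state}, the fail-state $s_f$ is absorbing with zero reward, so $\widetilde\cM$ satisfies Assumption~\ref{ass: zero min}; moreover $\gamma \le \rho'$ guarantees $\widetilde R_h(s,a) = (\gamma/\rho')^{h-1} R_h(s,a) \in [0,1]$, and $\rho = 2 - 2\rho' \in [0,1)$ is a valid TV radius since $\rho' \in (1/2, 1]$. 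These checks certify that $\widetilde\cM$ meets every hypothesis of Theorem~\ref{thm: regret tv}.

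The heart of the argument is a value-equivalence lemma. Using Proposition~\ref{prop: equivalent robust set} with $\rho' = 1 - \rho/2$ (consistent with $\rho = 2 - 2\rho'$), for any $V$ with $\min_s V(s) = 0$ one has $\EE_{\widetilde\cP_\rho(s,a;\widetilde P^\star_h)}[V] = \rho' \cdot \EE_{\cB_{\rho'}(s,a;P^\star_h)}[V]$. I would then run an induction backward over $h = H, H-1, \dots, 1$ on the robust Bellman equation (Proposition~\ref{prop: robust bellman equation}) to show that for every policy $\widetilde\pi$ on $\widetilde\cM$ and every non-fail state $s \in \cS$,
\begin{align}
    \widetilde{V}_{h, \widetilde P^\star, \widetilde{\boldsymbol\Phi}}^{\widetilde\pi}(s) = \left(\frac{\gamma}{\rho'}\right)^{h-1} (\rho')^{H-h+1-1}\, V_{h, P^\star, \boldsymbol\Phi'}^{\widetilde\pi_\cS, \gamma}(s),\notag
\end{align}
where the exact power of $\rho'$ and $\gamma$ must be bookkept carefully: each application of the robust expectation on $\widetilde\cM$ contributes one factor of $\rho'$ (turning $\widetilde\cP_\rho$ into the $\cB_{\rho'}$ operator), while the inflated reward $(\gamma/\rho')^{h-1}$ is designed so that the accumulated $\rho'$ factors cancel and exactly reproduce the discount $\gamma^{h-1}$ of $\cM_\gamma$. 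The key structural fact enabling the induction is that $\min_s \widetilde V_{h+1}^{\widetilde\pi}(s) = 0$ at each step (since $s_f$ is reachable-valued at zero), which licenses the use of Proposition~\ref{prop: equivalent robust set} at every level. I would carry out the same induction for the optimal robust value function via Proposition~\ref{prop: robust bellman optimal equation}, so that optimal values and the optimal policy also correspond.

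Once value equivalence is established, the regret simply transfers: the per-episode robust suboptimality gap $V_{1,P^\star,\boldsymbol\Phi'}^{\star,\gamma}(s_1) - V_{1,P^\star,\boldsymbol\Phi'}^{\widetilde\pi^k_\cS,\gamma}(s_1)$ on $\cM_\gamma$ equals a fixed positive multiple of the corresponding gap $\widetilde V_{1}^\star(s_1) - \widetilde V_{1}^{\widetilde\pi^k}(s_1)$ on $\widetilde\cM$ (the multiplicative constant being a power of $\rho'$ that is at most $1$), so summing over $k$ and invoking Theorem~\ref{thm: regret tv} applied to $\widetilde\cM$ with TV radius $\rho = 2 - 2\rho'$ yields $\mathrm{Regret}_{\boldsymbol\Phi'}^{\mathcal{ALG}}(K) \le \cO(\sqrt{\min\{H, (2-2\rho')^{-1}\} H^2 SAK \iota'})$, noting that $\widetilde S = S + 1 = \Theta(S)$ absorbs into the $\cO(\cdot)$. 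The main obstacle I anticipate is the precise power-counting in the induction — making sure the interplay between the reward inflation factor $(\gamma/\rho')^{h-1}$, the discount $\gamma^{h-1}$ on $\cM_\gamma$, and the per-step $\rho'$ factors from Proposition~\ref{prop: equivalent robust set} telescopes cleanly rather than leaving a residual horizon-dependent constant. I would also need to confirm that the minimal-value-zero property is preserved at every intermediate step of the induction (not just at the top level), since Proposition~\ref{prop: equivalent robust set} requires $\min_s V(s) = 0$ pointwise in $h$; this follows because the fail-state guarantees $\widetilde V_{h}^{\widetilde\pi}(s_f) = 0$ throughout, but it must be stated explicitly to justify each inductive step.
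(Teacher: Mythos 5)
Your proposal follows essentially the same route as the paper: construct the auxiliary fail-state TV-RMDP $\widetilde{\cM}$, run \texttt{OPROVI-TV} on it, establish a step-wise value equivalence by backward induction on the robust Bellman equation using Proposition~\ref{prop: equivalent robust set} (with the zero-minimum property supplied at every step by the absorbing fail-state), and transfer the regret via Theorem~\ref{thm: regret tv}. One correction to your bookkeeping: the intermediate identity should read $(\rho')^{h-1}\, V_{h,\widetilde{P}^{\star},\widetilde{\boldsymbol{\Phi}}}^{\widetilde{\pi}}(s) = V_{h,P^{\star},\boldsymbol{\Phi}'}^{\widetilde{\pi}_{\cS}}(s)$ for all $s\in\cS$ --- the inflation factor $(\gamma/\rho')^{h-1}$ is already built into $\widetilde{R}_h$ and must not reappear as an external multiplier --- so at $h=1$ the two value functions are exactly equal and the regrets coincide, whereas the exponents as you wrote them would leave a residual factor $(\rho')^{-(H-1)}$ that inflates the transferred regret exponentially in $H$ when $\rho'$ is near $1/2$.
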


\begin{proof}[Proof of Corollary~\ref{cor: regret discount}]
    See Appendix~\ref{subsec: proof cor regret discount} for a detailed proof of Corollary~\ref{cor: regret discount}.
\end{proof}

Corollary~\ref{cor: regret discount} shows that besides $\cS\times\cA$-rectangular RMDPs with TV robust set and vanishing minimal value assumption, the $\cS\times\cA$-rectangular discounted RMDP with robust set of bounded transition probability ratio \eqref{eq: robust set new} can also be solved sample-efficiently by robust RL via interactive data collection.
This also echoes our intuition on the support shift issue in Section~\ref{subsec: fail state assumption}.
Furthermore, the regret decays as $\rho'$ decays in which case the transition probability ratio bound becomes higher, i.e., the robust set becomes larger.

\begin{remark}
    The upper bound in Corollary~\ref{cor: regret discount} does not depend on the discount factor $\gamma$ since Algorithm~\ref{alg: tv} adopts a coarse bound of $\widetilde{R}_h\leq 1$.
    The upper bound can be directly improved to be $\gamma$-dependent using a tighter truncation in step \eqref{eq: Q overline} of Algorithm~\ref{alg: tv}.
\end{remark}

% !TEX root = ../main.tex

\section{Extension II: Robust Decision Making in Multi-Agent Systems}
\label{sec:rmg_extension}

Many operations research problems involve strategic interaction among opponents, e.g., competition, security, and markets, and the underlying model may shift between training and deployment environments.
This section extends our distributionally robust RL framework from single-agent
RMDPs to multi-agent robust Markov games.

\subsection{Learning against an Adversarial Opponent under Environment Ambiguity}
\label{subsec:rmg_onesided}

Consider a sequential decision-making problem where a learning agent (Player~1) competes against an adversarial opponent (Player~2) while simultaneously facing ambiguity in the underlying environment dynamics.
This is motivated by a wide range of operations research applications---including robust operations planning, security games, and competitive resource allocation---where a decision-maker must account for both strategic behavior of other agents and the uncertainty stemming from exogenous factors such as demand volatility, model misspecification, or incomplete system knowledge.

Existing formulations of multi-agent reinforcement learning (MARL) and Markov games (MG) typically presume a known or stationary environment and thus fail to capture robustness requirements against model uncertainty.
In contrast, our formulation explicitly integrates the adversarial interaction and the environment ambiguity within a unified robust Markov game (RMG).
This integration enables the study of policies that are resilient to worst-case opponent strategies as well as worst-case perturbations of the transition dynamics, thereby providing a principled benchmark for robustness, stability, and performance guarantees in complex and uncertain multi-agent operational systems.

\paragraph{Robust Markov game.}
Specifically, we consider a two-player robust Markov game $(\mathcal S,\mathcal A,\mathcal B,H,P^\star,R,\mathbf{\Phi})$,
where $\mathcal S$ is the state space, $\mathcal A$ and $\mathcal B$ are the action spaces of Player~1 and Player~2, respectively, $H$ denotes the horizon, and $R_h:\mathcal S\times\mathcal A\times\mathcal B\to[0,1]$ denotes the stage-$h$ reward.
Player~1 wants to maximize its expected reward, while Player~2 wants to minimize it.
The training environment is governed by the transition kernel $P_h^\star(\cdot| s,a,b)$, which is unknown to the learner but accessible through online interaction, as in the single-agent setup.
The robust set mapping is denoted by $\mathbf{\Phi}$.
Following the single-agent setup, we assume the following.
\begin{assumption}[$\cS\times\cA\times \cB$-rectangularity and TV robust set]\label{ass: sa rmg}
    We assume that, for any transition kernel $P\in\cP$, the robust set $\boldsymbol{\Phi}(P)$ takes the form
    \begin{align}
        \mathbf{\Phi}(P) = \bigotimes_{(s,a,b)\in\mathcal{S}\times\mathcal{A}\times\mathcal{B}} \mathcal{P}_{\rho}(s,a,b; P),
    \end{align}
    where $\mathcal P_\rho(s,a,b;P)$ is the TV-robust set for the transition kernel $P(\cdot | s,a,b)$,  defined as
\[
\mathcal P_\rho(s,a,b;P)
:=
\Big\{\widetilde P(\cdot)\in\Delta(\mathcal S):
D_{\mathrm{TV}}\big(\widetilde P(\cdot)\big\|P(\cdot| s,a,b)\big)\le\rho\Big\}. \label{eq: game_TV-robust set}
\]
\end{assumption}
% , which is defined as follows:
% \[
% \mathbf{\Phi}(P)
% := \bigotimes_{(s,a,b)\in\mathcal S\times\mathcal A\times\mathcal B}\ \mathcal P_\rho(s,a,b;P),
% \]
% where $\mathcal P_\rho(s,a,b;P)$ is the TV-robust set for the transition kernel $P(\cdot| s,a,b)$, which is defined as follows:
% \[
% \mathcal P_\rho(s,a,b;P)
% :=
% \Big\{\widetilde P(\cdot)\in\Delta(\mathcal S):
% D_{\mathrm{TV}}\big(\widetilde P(\cdot),P(\cdot| s,a,b)\big)\le\rho\Big\}. \label{eq: game_TV-robust set}
% \]
When Player~2 has a singleton action set $\mathcal{B}=\{b_0\}$, $\mathbf{\Phi}$ reduces to the $\cS\times\cA$-rectangular TV-robust set of the RMDP as defined in Section~\ref{subsec: robust MDP}. Moreover, when $\rho=0$, each ambiguity set collapses to the nominal transition kernel, and the model reduces to the standard zero-sum Markov game \citep{shapley1953stochastic,littman1994markov}.

\paragraph{Robust Nash value and Bellman-Shapley recursion.}
For any Markovian policy pair $(\pi,\nu)$ with $\pi = \{\pi_h: \mathcal{S} \to \Delta(\mathcal A)\}_{h=1}^H$ and $\nu = \{\nu_h: \mathcal{S} \to \Delta(\mathcal B)\}_{h=1}^H$, we define the robust value function as
\begin{align}
    V_{h, P^{\star}, \mathbf{\Phi}}^{\pi, \nu}(s)
    &:= \inf_{\{\widetilde{P}_i\in\mathbf{\Phi}(P_i^{\star})\}_{i=h}^H}
    \mathbb{E}_{\{\widetilde{P}_i\}_{i=h}^H,\pi,\nu}\left[\sum_{i=h}^HR_{i}(s_i,a_i,b_i)\, \middle|\, s_h=s\right],\\
    Q_{h, P^{\star}, \mathbf{\Phi}}^{\pi, \nu}(s, a, b)
    &:= \inf_{\{\widetilde{P}_i\in\mathbf{\Phi}(P_i^{\star})\}_{i=h}^H}
    \mathbb{E}_{\{\widetilde{P}_i\}_{i=h}^H,\pi,\nu}\left[\sum_{i=h}^HR_{i}(s_i,a_i,b_i)\, \middle|\, s_h=s,a_h=a,b_h=b\right].
\end{align}
Here the expectation is taken w.r.t. the state-action trajectories induced by the policies $\pi$ and $\nu$ under the transition $\widetilde{P} = \{\widetilde{P}_h\}_{h=1}^H$.
The associated robust Bellman recursion is given by
\begin{equation}
Q^{\pi,\nu}_{h,P^\star,\mathbf{\Phi}}(s,a,b)
=
R_h(s,a,b)
+
\mathbb{E}_{\mathcal P_\rho(s,a,b;P_h^\star)}
\big[V^{\pi,\nu}_{h+1,P^\star,\mathbf{\Phi}}\big],
\qquad
V^{\pi,\nu}_{h,P^\star,\mathbf{\Phi}}(s)
=
\mathbb{E}_{\pi_h,\nu_h}
\big[Q^{\pi,\nu}_{h,P^\star,\mathbf{\Phi}}(s,\cdot,\cdot)\big],
\label{eq:game_robust_bellman}
\end{equation}
where the worst-case expectation over the robust set $\mathcal P_\rho$ corresponds to the worst-case transition within the TV ball \eqref{eq: game_TV-robust set}.
Now we define the robust Nash value \citep{zhang2020robust,blanchet2023double},
which serves as the natural performance benchmark under simultaneous adversarial opposition and environment ambiguity. The following proposition shows the existence of the robust Nash value and the strong duality property.

\begin{proposition}[Robust Bellman equation, minimax value, and Markov perfect robust saddle]
    \label{prop:robust_rmg_minimax}
    Define $V^\star_{H+1,P^\star,\mathbf{\Phi}}(\cdot)\equiv 0$ and, for $h=H,H-1,\ldots,1$,
    \begin{equation}
    \begin{aligned}
    Q^\star_{h,P^\star,\mathbf{\Phi}}(s,a,b)
    &:=
    R_h(s,a,b)
    +
    \mathbb E_{\mathcal P_\rho(s,a,b;P_h^\star)}
    \!\left[V^\star_{h+1,P^\star,\mathbf{\Phi}}\right],
    \label{eq:robust_shapley_Q}\\
    V^\star_{h,P^\star,\mathbf{\Phi}}(s)
    &:=
    \max_{\pi\in\Delta(\mathcal A)}
    \min_{\nu\in\Delta(\mathcal B)}
    \mathbb E_{a\sim\pi,\,b\sim\nu}
    \!\left[
    Q^\star_{h,P^\star,\mathbf{\Phi}}(s,a,b)
    \right].
    \end{aligned}
    \end{equation}
    Then the following holds:
    \begin{enumerate}
    \item (Robust Nash equilibrium.) There exists a Markovian policy pair $(\pi^\star,\nu^\star)$ such that
    for all $h\in[H]$ and $s\in\mathcal S$, and any Markovian policy pair $(\pi,\nu)$,
    \begin{equation}
    V^{\pi,\nu^\star}_{h,P^\star,\mathbf{\Phi}}(s)~\le~
    V^{\pi^\star,\nu^\star}_{h,P^\star,\mathbf{\Phi}}(s)
    ~\le~
    V^{\pi^\star,\nu}_{h,P^\star,\mathbf{\Phi}}(s),
    \label{eq:robust_saddle_ineq}
    \end{equation}
    and $V^{\pi^\star,\nu^\star}_{h,P^\star,\mathbf{\Phi}}(s)=V^\star_{h,P^\star,\mathbf{\Phi}}(s)$.
    Moreover, for each $(h,s)\in[H]\times \cS$, $(\pi_h^\star(\cdot|s),\nu_h^\star(\cdot|s))$ is a Nash equilibrium of the one-shot
    zero-sum matrix game with payoff matrix $Q^\star_{h,P^\star,\mathbf{\Phi}}(s,\cdot,\cdot)$.
    \item (Strong duality.) For every $h\in[H]$ and $s\in\mathcal S$, it holds that
    \begin{equation}
    \max_{\pi}\min_{\nu}\,V^{\pi,\nu}_{h,P^\star,\mathbf{\Phi}}(s)
    =
    V^\star_{h,P^\star,\mathbf{\Phi}}(s)
    =
    \min_{\nu}\max_{\pi}\,V^{\pi,\nu}_{h,P^\star,\mathbf{\Phi}}(s).
    \label{eq:robust_global_minimax}
    \end{equation}
    \end{enumerate}
    \end{proposition}

\begin{proof}[Proof of Proposition~\ref{prop:robust_rmg_minimax}]
    See Appendix \ref{subsec:proof_rmg_minimax} for a detailed proof.
\end{proof}

\paragraph{Learning objective.}
The learner interacts only with the training kernel $P^\star$ for $K$ episodes. At the beginning of episode $k$, Player~2 may choose a Markov policy $\nu^k$ based on the history before episode $k$. At step $h$, Player~1 samples $a_h^k\sim \pi_h^k(\cdot|s_h^k)$, Player~2 samples $b_h^k\sim \nu_h^k(\cdot|s_h^k)$ without observing Player~1's current sampled action, and the next state $s_{h+1}^k$ is generated from $P_h^\star(\cdot| s_h^k,a_h^k,b_h^k)$.
Player~2's actions are observable to the learner.

For any adaptive Markov opponent sequence $\{\nu^k\}_{k=1}^K$ and any learner policy sequence $\{\pi^k\}_{k=1}^K$ executed during the interaction, we measure the performance of $\{\pi^k\}_{k=1}^K$ via the following definition of robust regret
\begin{equation}
\mathrm{Regret}_{\mathbf{\Phi}, \{\nu^k\}_{k=1}^K}(K)
:=
\sum_{k=1}^K
V^\star_{1,P^\star,\mathbf{\Phi}}(s_1)
-
V^{\pi^k,\nu^k}_{1,P^\star,\mathbf{\Phi}}(s_1).
\label{eq:onesided_regret}
\end{equation}
Intuitively, it quantifies the cumulative regret of the learner (Player~1) relative to the robust Nash value.
This benchmark is the robust analogue of the online regret notions used in non-robust zero-sum Markov games, e.g., \cite{xie2020learning,tian2020provably}.
The key point is that the learner controls only Player~1, while Player~2 may adapt its Markov policy to past episodes, so the comparator should be a value level that the learner could secure before seeing the opponent's future policies.
By Proposition~\ref{prop:robust_rmg_minimax},
\[
V^\star_{1,P^\star,\mathbf{\Phi}}(s_1)
=
\max_{\pi}\min_{\nu}V^{\pi,\nu}_{1,P^\star,\mathbf{\Phi}}(s_1),
\]
namely, the robust Nash value is exactly the largest reward guarantee that Player~1 can secure simultaneously against the worst-case opponent policies and the worst-case transition kernels in the robust set.
In particular, for the robust Nash policy $\pi^\star$, we have
$V^{\pi^\star,\nu}_{1,P^\star,\mathbf{\Phi}}(s_1)\ge V^\star_{1,P^\star,\mathbf{\Phi}}(s_1)$ for any $\nu$.
Thus the comparator is valid and is achievable by a fixed policy, instead of relying on hindsight knowledge of the realized opponent sequence. The term $V^{\pi^k,\nu^k}_{1,P^\star,\mathbf{\Phi}}(s_1)$ then evaluates the learner's policy under the actual opponent policy $\nu^k$ faced in episode $k$, while still allowing nature to pick the worst-case transition kernel in $\mathbf{\Phi}(P^\star)$.
Therefore, the regret in \eqref{eq:onesided_regret} characterizes the two aspects of our problem: strategic opposition from Player~2 and distributional ambiguity in the environment.
This is consistent with regret notions in prior non-robust Markov games.
Specifically, \cite{xie2020learning} studies the online learning setting against an external opponent using a similar value benchmark, while \cite{tian2020provably} explicitly advocates the minimax-value benchmark as a statistically meaningful weakening of the stronger best-policy-in-hindsight regret.

Although the robust Nash value can be informally viewed as a $\max_{\text{Player~1}}\min_{\text{Player~2}}\min_{\text{environment}}$ problem, the last minimization over the environment should not be absorbed into the minimization of Player~2 so as to reduce the problem to a standard zero-sum Markov game. The key point is that the environment minimization is performed entrywise: for each fixed $(s,a,b)$, the environment may choose a different worst-case kernel in $\mathcal P_\rho(s,a,b;P_h^\star)$. Therefore, if one tries to absorb the environment minimization into Player~2, the minimizing player would have to choose a transition perturbation that depends on Player~1's action $a$. This is not the same as a standard zero-sum game, where Player~2 only chooses an action $b$ without observing $a$. In this sense, absorbing the environment minimization into Player~2 changes the information structure of the game. At the same time, the environment is still restricted to the fixed rectangular ambiguity set, rather than choosing an arbitrary perturbation after observing Player~1's action, and this restriction is what keeps the problem tractable in our setting.

\begin{remark}[Comparison with recent works on online robust Markov games]
\label{rmk:online_rmg_diff}
During the preparation of this work, we became aware of two recent works \citep{farhat2025sample,zheng2025distributionally} on online learning in robust Markov games under a different performance criterion.
Their regret is closer to the online counterpart of the robust Nash equilibrium gap (RNE gap) introduced by \citet{blanchet2023double}: in each episode, it measures the gain that a player could obtain by unilaterally deviating to the robust best response against the current joint policy, and then accumulates this equilibrium-gap quantity over episodes.
By contrast, our regret in \eqref{eq:onesided_regret} is tailored to learning against an external adversarial opponent: it compares the learner's robust value under the realized opponent policy $\nu^k$ with the robust Nash value $V^\star_{1,P^\star,\mathbf{\Phi}}(s_1)$.
These two notions emphasize different objectives: their criterion is symmetric and equilibrium-gap based, requiring the centralized control of all agents, whereas ours is minimax-value based and directly evaluates the controlled learner's performance against the realized opponent sequence.
This distinction is why the analysis below targets the regret in \eqref{eq:onesided_regret} rather than the cumulative equilibrium-gap.
\end{remark}

\subsection{Algorithm and Theory}
\label{subsubsec:onesided_vmv_dual}

In this subsection, we propose a game-theoretic extension of \texttt{OPROVI-TV} for RMGs introduced in Section~\ref{subsec:rmg_onesided}.
As in the single-agent setting, the goal is to manage exploration as well as robust planning so that the learner can approach the robust Nash value purely through interactive data collection with the environment and the opponent.
To make this possible, we impose the following vanishing minimal value assumption, which is the natural counterpart of the vanishing minimal value condition in Section~\ref{subsec: fail state assumption}.

\begingroup
\begin{assumption}[Vanishing minimal value for RMG]
\label{ass:onesided_vmv}
For every Markov policy pair $(\pi,\nu)$,
\[
\min_{s\in\cS}
V^{\pi,\nu}_{1,P^\star,\boldsymbol{\Phi}}(s)=0.
\]
\end{assumption}

Because rewards are nonnegative, this step-1 condition implies
$\min_{s\in\cS}V^{\pi,\nu}_{h,P^\star,\boldsymbol{\Phi}}(s)=0$
for every Markov policy pair $(\pi,\nu)$ and every step $h\in[H]$.
This follows by a Bellman-induction argument: at a zero-value state, all nonnegative Bellman terms must vanish, and compactness of the TV ball gives a zero-valued successor state.
This condition is imposed so that the TV-dual representation used in our robust Bellman updates remains valid throughout the analysis.
In contrast to the single-agent setting, Player~2 may adapt its Markov policy across episodes.
Therefore, the analysis must apply not only to the robust Nash value, but also to the value functions induced by the learner's policy together with any realized opponent policy.
A simple sufficient condition is the existence of an absorbing zero-reward fail state $s_f$ such that
$R_h(s_f,a,b)=0$ and $P_h^\star(s_f\mid s_f,a,b)=1$ for all $h\in[H]$ and $(a,b)\in\cA\times\cB$.
Then every policy pair has value zero at $s_f$, so Assumption~\ref{ass:onesided_vmv} holds.
\endgroup

Under Assumption~\ref{ass:onesided_vmv}, we now extend \texttt{OPROVI-TV} from the single-agent robust MDP setting to the two-player zero-sum robust Markov game given in
Section~\ref{subsec:rmg_onesided}. Compared to the single-agent case, the game extension differs in two respects:
(i) we estimate the joint-action transition kernel $P_h^\star(\cdot| s,a,b)$ from observed tuples
$(s_h^k,a_h^k,b_h^k,s_{h+1}^k)$, where the opponent's actions $b_h^k$ are observable,
and (ii) we replace the state-wise $\max$ backup in value iteration by an upper-only robust minimax backup, implemented by solving a matrix game at each state.
In each episode $k$, the learner constructs an estimated nominal model $\widehat P^k$ and computes an
optimistic robust max--min plan by solving $S$ independent matrix games per stage in the backward pass.

\begin{algorithm}[t]
    \caption{\underline{OP}timistic \underline{RO}bust \underline{V}alue \underline{I}teration for TV Robust Set (\texttt{OPROVI-TV-MG})} \label{alg:onesided_oprovitv_mg}
    \begin{algorithmic}[1]
        \STATE \textbf{Initialize:} dataset $\mathbb{D} = \emptyset$.
        \FOR{episode $k=1,\cdots,K$}
        \STATE \texttt{Training environment transition estimation:}
        \STATE Set $N_h^k(s,a,b,s') := \sum_{\tau=1}^{k-1}\mathbf 1\!\left\{(s_h^\tau,a_h^\tau,b_h^\tau,s_{h+1}^\tau)=(s,a,b,s')\right\}; \,  N_h^k(s,a,b):=\sum_{s'\in\mathcal S}N_h^k(s,a,b,s')$.

        \STATE {Calculate $\widehat{P}_h^k$ by setting $\widehat{P}_h^k(s'|s,a,b)=N_h^k(s,a,b,s')/N_h^k(s,a,b)$ if $N_h^k(s,a,b)>0$, and setting $\widehat{P}_h^k(\cdot|s,a,b)$ to the uniform distribution over $\mathcal S$ otherwise.}
        \STATE \texttt{Optimistic robust planning:}
        \STATE Set $\overline{V}_{H+1}^k = 0$.
        \FOR{step $h=H,\cdots,1$}
        \STATE Set $\overline{Q}_h^k(\cdot,\cdot,\cdot)$ as \eqref{eq: game Q overline}, with the bonus function $\texttt{bonus}_h^k(\cdot,\cdot,\cdot)$ defined in \eqref{eq: mg_bonus}.
        \STATE Compute $\pi_h^k(\cdot|s)\in\argmax_{\pi\in\Delta(\mathcal A)}\min_{\nu\in\Delta(\mathcal B)}\mathbb{E}_{a\sim\pi,b\sim\nu}\big[\overline{Q}_h^k(s,a,b)\big]$ and a best response $\tilde{\nu}_h^k(\cdot|s)\in\argmin_{\nu\in\Delta(\mathcal B)}\mathbb{E}_{a\sim\pi_h^k(\cdot|s),b\sim\nu}\big[\overline{Q}_h^k(s,a,b)\big]$.
        \STATE Set $\overline{V}_h^k(s) = \mathbb{E}_{a \sim \pi_h^k(\cdot|s), b \sim \tilde{\nu}_h^k(\cdot|s)}[\overline{Q}_h^k(s,a,b)]$.
        \ENDFOR
        \STATE \texttt{Execute the policy in training environment and collect data:}
        \STATE Receive the fixed initial state $s_1^k=s_1\in\cS$.
        \FOR{step $h=1,\cdots,H$}
        \STATE Player~1 takes action $a_h^k\sim \pi^k_h(\cdot|s_h^k)$; Player~2 takes action $b_h^k\sim \nu^k_h(\cdot|s_h^k)$.
        \STATE Observe reward $R_h(s_h^k,a_h^k,b_h^k)$ and the next state $s_{h+1}^k \sim P_h^\star(\cdot|s_h^k,a_h^k,b_h^k)$.
        \ENDFOR
        \STATE Set $\mathbb{D}$ as $\mathbb{D}\cup\{(s_h^k,a_h^k,b_h^k,s_{h+1}^k)\}_{h=1}^H$.
        \ENDFOR
    \end{algorithmic}
\end{algorithm}

Our algorithm \texttt{OPROVI-TV-MG} is a direct game-theoretic extension of the single-agent algorithm
\texttt{OPROVI-TV} (Algorithm~\ref{alg: tv}).
At a high level, both algorithms share the same episodic ``estimate--plan--execute'' template.
The game-specific modifications are:
\begin{enumerate}
\item \textbf{Joint-action model estimation:} replace the empirical kernel $\widehat P_h^k(\cdot| s,a)$ by the
joint-action estimator $\widehat P_h^k(\cdot| s,a,b)$ using the observed opponent actions $b_h^k$.
\item \textbf{Robust minimax planning:} replace the state-wise greedy maximization in \texttt{OPROVI-TV} by a state-wise
zero-sum matrix game in each Bellman backup, i.e., replace $\max_{a\in\cA}$ with
$\max_{\pi\in\Delta(\cA)}\min_{\nu\in\Delta(\cB)}$ as suggested by the Bellman recursion
(Proposition~\ref{prop:robust_rmg_minimax}).
Unlike the single-agent case, where optimistic and pessimistic recursions bracket the value of the same learner policy, a game value also depends on the opponent policy. The optimistic recursion remains useful because it upper bounds the robust Nash value via monotonicity of the max--min operator; see \eqref{eq:onesided_optimism_hoeffding_new}. In contrast, a max--min pessimistic update would minimize over an internally selected opponent distribution, while the episode is played against the external policy $\nu^k$, which may adapt across episodes. We therefore specify the optimistic robust minimax backup below.
\end{enumerate}

\paragraph{Optimistic robust minimax planning.}
\begingroup
Given the estimated nominal model $\widehat P_h^k(\cdot| s,a,b)$, we use the following bonus,
\begin{equation}
    \begin{aligned}
    \mathrm{bonus}_h^k(s,a,b)
    & =
    c_b\cdot \min\{H,\rho^{-1}\}\cdot
    \sqrt{
    \frac{S\iota}{N_h^k(s,a,b)\vee 1}
    },
    \end{aligned} \label{eq: mg_bonus}
    \end{equation}
where $c_b>0$ is a sufficiently large absolute constant and $\iota=\log(SABHK/\delta)$.
\endgroup
\begingroup
We use a TV-dual convention analogous to \eqref{eq: duality tv}, with $\eta$ restricted to $[0,\min\{H,\rho^{-1}\}]$, which is without loss for the clipped value functions used below. For any transition kernel $P$, tuple $(s,a,b)$, and $V:\mathcal S\to[0,\min\{H,\rho^{-1}\}]$, set
\[
\mathbb{E}_{\mathcal{P}_{\rho}(s,a,b;P)}[V]
:=
\sup_{\eta\in[0,\min\{H,\rho^{-1}\}]}
\Big\{
- \mathbb E_{P(\cdot|s,a,b)}[(\eta-V)_+]
+(1-\rho)\eta\Big\}.
\]
Under Assumption~\ref{ass:onesided_vmv}, this notation agrees with the true TV-robust Bellman term whenever $V$ is $V^\star$ or a realized value function.
With this convention, the optimistic $Q$-estimate is
\begin{equation}
    \begin{aligned}
    \overline{Q}_h^k(s,a,b)
    =
    \min\Big\{
    &R_h(s,a,b)
    +
    \mathbb{E}_{\mathcal{P}_{\rho}(s,a,b;\widehat{P}_h^k)}
    \big[\overline{V}_{h+1}^k\big]
    +
    \texttt{bonus}_h^k(s,a,b),\min\big\{H, \rho^{-1}\big\}
    \Big\}.
    \end{aligned}
    \label{eq: game Q overline}
\end{equation}
The policy update is the state-wise matrix-game step in Algorithm~\ref{alg:onesided_oprovitv_mg}: for each $s$, solve the zero-sum game with payoff matrix $\overline Q_h^k(s,\cdot,\cdot)$ to obtain $\pi_h^k$, the planning best response $\tilde{\nu}_h^k$, and the value $\overline V_h^k$.
For finite action spaces, this matrix game reduces to a standard linear program.
\endgroup

% \paragraph{Execution.}
% The learner executes $\pi^k$ in the training environment: at each step $h$, Player~1 samples
% $a_h^k\sim\pi_h^k(\cdot| s_h^k)$, observes the opponent action $b_h^k \sim \nu_h^k(\cdot| s_h^k)$ (chosen arbitrarily/adaptively but
% non-anticipating), receives reward $R_h(s_h^k,a_h^k,b_h^k)$, and observes the next state
% $s_{h+1}^k\sim P_h^\star(\cdot| s_h^k,a_h^k,b_h^k)$, which is appended to the dataset.

\paragraph{Theoretical Results.}
We now state our results on the regret \eqref{eq:onesided_regret} for \texttt{OPROVI-TV-MG}.

% Given an RMDP with $\cS\times\cA$-rectangular total-variation robust set of radius $\rho\in[0,1)$ (Assumption~\ref{ass: sa} and Definition~\ref{def: tv}) satisfying Assumptions~\ref{ass: zero min}, choosing the bonus function as \eqref{eq: bernstein bonus} with sufficiently large $c_1,c_2>0$, then  with probability at least $1-\delta$, Algorithm~\ref{alg: tv} satisfies

\begin{theorem}[Online regret \eqref{eq:onesided_regret} of \texttt{OPROVI-TV-MG}]
\label{thm:onesided_regret}
Consider an RMG with $\cS\times\cA\times\cB$-rectangular TV robust set of radius $\rho\in[0,1)$ (Assumption~\ref{ass: sa rmg}) satisfying Assumption~\ref{ass:onesided_vmv}. Fix any adaptive Markov opponent sequence
$\nu=\{\nu^k\}_{k=1}^K$ that is non-anticipating with respect to Player~1's current action, and use the bonus function in \eqref{eq: mg_bonus}.
Then, with probability at least $1-\delta$, the online robust regret of \texttt{OPROVI-TV-MG} (Algorithm~\ref{alg:onesided_oprovitv_mg}) is bounded by
\[
\mathrm{Regret}_{\mathbf{\Phi}, \{\nu^k\}_{k=1}^K}(K)
\le
{
\widetilde{\mathcal O}\left(
\min\big\{H, \rho^{-1}\big\}\cdot H S\sqrt{ABK}
\right).}
\]
Here $\widetilde{\mathcal O}$ hides absolute constants and logarithmic factors in $(S,A,B,H,K,1/\delta)$ and lower order terms in $K$.
\end{theorem}

\begin{proof}[Proof of Theorem~\ref{thm:onesided_regret}]
    See Appendix \ref{sec:game_proof} for a detailed proof.
\end{proof}

\begingroup
Theorem~\ref{thm:onesided_regret} gives sublinear online robust regret against any non-anticipating adaptive Markov opponent sequence.
The rate is weaker than the single-agent bound because the regret is measured along the opponent sequence that is actually realized.
Such a sequence may adapt to past data and need not coincide with the worst response used in the max--min planning problem.
Thus a state-wise pessimistic max--min recursion would not give the trajectory-wise optimistic--pessimistic width used in the single-agent Bernstein argument.
Accordingly, \texttt{OPROVI-TV-MG} uses only the optimistic recursion and controls the robust Bellman error through a uniform $L_1$ transition-estimation bound.
The price is the extra factor coming from this uniform concentration step, rather than from a variance-sensitive width recursion.
\endgroup

% \han{@Miao, should we add a comment for centralized setting (but omit the proof to avoid the repetition with the recent work)}

\section{Application: Data-Driven Robust Inventory Control}
\label{sec:inventory}

Inventory control is a canonical operations research problem where a decision maker repeatedly trades off ordering costs against service and shortage risks under stochastic demand. In modern practice, data-driven inventory policies are often trained or tuned in a training environment (e.g., a model calibrated from historical data) and then deployed under demand shifts and model misspecification.
This motivates a distributionally robust formulation: seeking inventory policy that performs well under worst-case perturbations of the demand law around the training environment.
In this section, we present that a standard periodic-review inventory model can be written as a finite-horizon MDP; therefore, our robust RL framework and guarantees apply.

\subsection{Inventory Control as a Finite-horizon MDP}
\label{subsec:inv_mdp}

We consider a single-item, periodic-review inventory system over a planning horizon of length $H$.
The system state $x_h$ represents the inventory level.
State $x_h$ is possibly negative, representing backlog at period $h\in[H]$.
At each period $h$, the decision maker chooses an order quantity $q_h$, observes a stochastic demand realization $D_h$, incurs a cost, and transits to the next inventory level $x_{h+1}$.
We formulate this problem as a finite-horizon tabular MDP
$(\mathcal{S}_{\mathrm{inv}},\mathcal{A}_{\mathrm{inv}},H,P^\star,R)$,
whose components are defined as follows.
\begin{enumerate}
\item \textbf{State and action spaces.}
We adopt standard capacity and service-level truncations,
\[
\mathcal{S}_{\mathrm{inv}} := \{-B,-B+1,\cdots,I\} \cup \{s_f\},\quad
\mathcal{A}_{\mathrm{inv}} := \{0,1,\cdots,Q\}.
\]
% \han{Here $I\in\NN_+$ is the inventory capacity, $Q\in\NN_+$ is the order capacity, and $B\in\NN_+$ is the backlog threshold. We introduce $s_f$ as an aggregated emergency (fail) state representing all inventory positions below $-B$. Such states may correspond to exceptional operating regimes involving severe service shortfalls, emergency sourcing, outsourced fulfillment, or other exception-handling measures that are not modeled explicitly in the truncated state space.}
Here $I\in\NN_+$ is the inventory capacity, $Q\in\NN_+$ is the order capacity, and $B\in\NN_+$ is the backlog threshold. We introduce $s_f$ as an aggregated fail-state representing all inventory positions below $-B$, namely exceptional operating regimes that are not modeled explicitly in the truncated state space.
These truncations are common in practice when inventory or backlog is constrained by storage limits, service-level requirements, or contractual considerations.

\item \textbf{Nominal transition kernel induced by demand.}
Given an inventory state $x_h\in\{-B,\ldots,I\}$ and an order quantity $q_h$, the post-order inventory is
$x'_h:=\min\{x_h+q_h,I\}$.
Let the (truncated) demand support be $\mathcal D:=\{0,1,\ldots,D\}$, and let $D_h\in\mathcal D$ denote the demand realized in the period $h$.
Then the pre-truncation next inventory state is
$x_{h+1}^{\mathrm{pre}} := x'_h - D_h$.
We then apply a service truncation,
\[
x_{h+1}=T_h(x_h,q_h,D_h):=
\begin{cases}
x_{h+1}^{\mathrm{pre}}, & \text{if } x_{h+1}^{\mathrm{pre}}\ge -B,\\
s_f, & \text{if } x_{h+1}^{\mathrm{pre}}<-B.
\end{cases}
\]
% \han{We make $s_f$ absorbing in the truncated MDP: once the backlog threshold is violated, the subsequent dynamics over the remaining horizon are no longer modeled explicitly and are instead aggregated into a single sink state. Thus the model does not distinguish among different post-violation paths after the system enters this exceptional operating regime. For any $(q, D_h)\in\mathcal A_{\mathrm{inv}}\times\mathcal{D}$, we therefore set $T_h(s_f,q, D_h)=s_f$.}
We make $s_f$ absorbing in the truncated MDP: once the backlog threshold is violated, the subsequent dynamics over the remaining horizon are no longer modeled explicitly and are instead aggregated into this sink state. Thus the model does not distinguish among different post-violation paths after the system enters this exceptional operating regime. For any $(q, D_h)\in\mathcal A_{\mathrm{inv}}\times\mathcal{D}$, we therefore set $T_h(s_f,q, D_h)=s_f$. If the conditional demand law is $d_h^\star(\cdot| x,q)\in\Delta(\mathcal D)$ for each $(h,x,q)$, then the induced transition kernel is the pushforward,
\[
P_h^\star(\cdot| x,q)= d_h^\star(\cdot| x,q)\circ T_h(x,q,\cdot)^{-1}.
\]

\item \textbf{Reward function induced by a cost transformation.}
Let the one-period cost at step $h$ be
\[
c_h(x_h,q_h,D_h)
:= c^{\mathrm{order}}\cdot \mathbf 1\{q_h>0\}
+ c^{\mathrm{hold}}\cdot (x_{h+1}^{\mathrm{pre}})^+
+ c^{\mathrm{back}}\cdot (-x_{h+1}^{\mathrm{pre}})^+,
\]
where $c^{\mathrm{order}}>0$ is a fixed ordering/setup cost, $c^{\mathrm{hold}}>0$ is a holding cost coefficient, and $c^{\mathrm{back}}>0$ is a backlog cost coefficient. When $x_{h+1}\neq s_f$, we have $x_{h+1}=x_{h+1}^{\mathrm{pre}}$, so the cost can be written exactly as a deterministic transition-cost function of $(x_h,q_h,x_{h+1})$:
\[
c_h(x_h,q_h,x_{h+1})
:= c^{\mathrm{order}}\cdot \mathbf 1\{q_h>0\}
+ c^{\mathrm{hold}}\cdot (x_{h+1})_+
+ c^{\mathrm{back}}\cdot (-x_{h+1})_+,
\quad \text{for } x_{h+1}\neq s_f.
\]
% \han{For the transition $x_{h+1}=s_f$, which represents the event of $x_{h+1}^{\mathrm{pre}}<-B$, the exact overflow magnitude and the subsequent post-violation dynamics are no longer tracked in the truncated model. To keep the finite MDP well defined without introducing additional parameters, we assign the maximal cost $c_{\max}$ to this aggregated emergency (fail) state, which serves as a conservative reduced-form representation of leaving the normal operating regime.}
For the transition $x_{h+1}=s_f$, which represents the event of $x_{h+1}^{\mathrm{pre}}<-B$, the exact overflow magnitude and the subsequent post-violation dynamics are no longer tracked in the truncated model. To keep the finite MDP well defined without introducing additional parameters, we assign the maximal cost $c_{\max}$ to this aggregated fail-state, which serves as a conservative reduced-form representation of leaving the normal operating regime.
We define $c_h(x_h,q_h,s_f):=c_{\max}$ for all $(x_h,q_h)$, where a valid uniform cost upper bound $c_{\max}$ is
\[
c_{\max}:= c^{\mathrm{order}} + c^{\mathrm{hold}} \cdot I + c^{\mathrm{back}}\cdot (D+B).
\]
Finally, to align with the reward-maximization convention in our framework, we define the reward
\[
R_h(x_h,q_h,x_{h+1})
:= 1-\frac{c_h(x_h,q_h,x_{h+1})}{c_{\max}}\in[0,1].
\]
% \han{In particular, $R_h(\cdot,\cdot,s_f)=0$ and $s_f$ is absorbing, so the inventory MDP satisfies the vanishing minimal value condition through this aggregated emergency (fail) state.}
In particular, $R_h(\cdot,\cdot,s_f)=0$, and since $s_f$ is absorbing, the inventory MDP satisfies the vanishing minimal value condition.
Also, we assume that $x_1\neq s_f$.
Thus one can show that Assumption~\ref{ass: zero min} holds.
\end{enumerate}

\subsection{Distributional Robustness via an \texorpdfstring{$\mathcal{S}\times \mathcal{A}$-rectangular}{S x A-rectangular} TV Uncertainty Set}
\label{subsec:inv_rmdp}

In inventory control, we consider the distribution shift induced by the demand distribution.
Specifically, we model demand shift given inventory state $x$ and order quantity $q$ by a TV robust set,
\[
\mathcal U^D_{h,\rho}(x,q)
:=\Big\{\widetilde d_h(\cdot| x,q)\in\Delta(\mathcal D):
D_{\mathrm{TV}}\big(\widetilde d_h(\cdot| x,q)\big\|d_h^\star(\cdot| x,q)\big)\le\rho\Big\}.
\]
Each $\widetilde d_h(\cdot| x,q)\in\mathcal U^D_{h,\rho}(x,q)$ induces a transition kernel
\[
\widetilde P_h(\cdot| x,q)=\widetilde d_h(\cdot| x,q)\circ T_h(x,q,\cdot)^{-1}.
\]
Hence, the induced transition robust set is
\[
\mathcal U^P_{h,\rho}(x,q)
= \Big\{\widetilde P_h(\cdot| x,q)=\widetilde d_h(\cdot| x,q)\circ T_h(x,q,\cdot)^{-1}:\widetilde d_h(\cdot| x,q)\in\mathcal U^D_{h,\rho}(x,q)\Big\}.
\]
We also define the TV ball around the nominal transition kernel as follows,
\[
\mathcal{P}_{\rho}(x,q;P_h^\star)
:= \Big\{\widetilde P_h(\cdot| x,q): D_{\mathrm{TV}}\big(\widetilde P_h(\cdot| x,q)\big\| P_h^\star(\cdot| x,q)\big)\le \rho\Big\}.
\]

\begin{lemma}
\label{lem:tv:contraction}
For any $(h,x,q)\in[H]\times\cS_{\mathrm{inv}}\times\cA_{\mathrm{inv}}$, it holds that $\mathcal U^P_{h,\rho}(x,q) \subseteq \mathcal{P}_{\rho}(x,q;P_h^\star)$.
\end{lemma}

\begin{proof}[Proof of Lemma~\ref{lem:tv:contraction}]
This follows directly from the data-processing inequality for TV distance under measurable maps: for any distributions $\mu,\nu$ on $\mathcal D$ and any measurable $g$,
\[
D_{\mathrm{TV}}\big(\mu\circ g^{-1}\big\|\nu\circ g^{-1}\big)\le D_{\mathrm{TV}}(\mu\|\nu).
\]
Applying it with $g(d)=T_h(x,q,d)$ proves Lemma~\ref{lem:tv:contraction}.
\end{proof}

Next, we explain why we only model transition ambiguity, and not reward ambiguity. Under any demand law $\widetilde d_h(\cdot| x,q)$, the induced next-state transition is $\widetilde P_h(\cdot| x,q)=\widetilde d_h(\cdot| x,q)\circ T_h(x,q,\cdot)^{-1}$. Because the one-step reward is a deterministic function of the realized next state, namely $R_h(x,q,x')$ with $x'=T_h(x,q,D_h)$, the conditional law of the reward given $(x,q)$ is fully determined by $\widetilde P_h(\cdot| x,q)$.
Therefore, demand perturbations do not introduce independent degrees of freedom in the reward: all reward variability is characterized by the next-state distribution.
Meanwhile, for any bounded $V_{h+1}:\mathcal S_{\mathrm{inv}}\to\mathbb R$ and $(x,q)\in \cS_{\mathrm{inv}}\times\cA_{\mathrm{inv}}$, we have,
\begin{align*}
\inf_{\widetilde d_h\in\mathcal U^D_{h,\rho}(x,q)}
\mathbb E_{\substack{D_h\sim \widetilde d_h(\cdot| x,q) \\
x'=T_h(x,q,D_h)}}
\!\left[ R_h(x,q,x')+V_{h+1}(x')\right]
=
\inf_{\widetilde P_h\in\mathcal U^P_{h,\rho}(x,q)}
\mathbb E_{x'\sim \widetilde P_h(\cdot| x,q)}
\!\left[ R_h(x,q,x')+V_{h+1}(x')\right];
\end{align*}
and due to Lemma~\ref{lem:tv:contraction}, we have $\mathcal U^P_{h,\rho}(x,q)\subseteq \mathcal{P}_{\rho}(x,q;P_h^\star)$. Therefore, it suffices (and is conservative) to work with the $\mathcal{S}\times \mathcal{A}$-rectangular TV ball $\mathcal{P}_{\rho}(\cdot,\cdot;P_h^\star)$ at the transition level, without introducing a separate reward ambiguity set.

Returning to our framework of RMDPs in Section~\ref{subsec: robust MDP}, the corresponding $\mathcal{S}\times \mathcal{A}$-rectangular TV robust-set mapping is defined by
\[
\mathbf{\Phi}_{\mathrm{inv}}(P^\star_h)
= \bigotimes_{(x,q)\in\mathcal{S}_{\mathrm{inv}}\times\mathcal{A}_{\mathrm{inv}}}\ \mathcal{P}_\rho(x,q;P^\star_h).
\]
Thus we define the corresponding RMDP for inventory control as $\cM_{\mathrm{inv}}=(\cS_{\mathrm{inv}}, \cA_{\mathrm{inv}}, H, P^{\star}, R,\mathbf{\Phi}_{\mathrm{inv}})$, where $R=\{R_h(x,q,x')\}_{h\in[H]}$ is a known transition-dependent reward.

\paragraph{Our robust inventory control objective.}
A Markovian inventory policy $\pi=\{\pi_h\}_{h=1}^H$ with $\pi_h:\mathcal S_{\mathrm{inv}}\mapsto\Delta(\mathcal A_{\mathrm{inv}})$ induces the robust value function
\[
V_{h,P^{\star},\mathbf{\Phi}_{\mathrm{inv}}}^{\pi}(x)
= \inf_{\widetilde{P}_h\in\mathbf{\Phi}_{\mathrm{inv}}(P_h^{\star}), 1\leq h\leq H}
\mathbb{E}_{\widetilde{P},\pi}\left[\sum_{i=h}^H R_{i}(x_i,q_i,x_{i+1})\, \middle|\, x_h=x\right],
\quad \forall x\in\mathcal S_{\mathrm{inv}}.
\]
Under rectangularity, the corresponding robust Bellman recursion uses the same TV ambiguity set, with the known one-period reward kept inside the robust expectation:
\[
Q_{h,P^\star,\mathbf{\Phi}_{\mathrm{inv}}}^{\pi}(x,q)
=
\inf_{\widetilde P_h\in\mathcal P_\rho(x,q;P_h^\star)}
\mathbb E_{x'\sim \widetilde P_h(\cdot|x,q)}
\!\left[
R_h(x,q,x')+
V_{h+1,P^\star,\mathbf{\Phi}_{\mathrm{inv}}}^{\pi}(x')
\right].
\]
The optimal robust inventory policy $\pi^\star = \{\pi_h^\star\}_{h=1}^H$ is
\[
\pi^\star \in \argmax_{\pi}\ V_{1,P^{\star},\mathbf{\Phi}_{\mathrm{inv}}}^{\pi}(x_1).
\]
Let $V_{1,P^{\star},\mathbf{\Phi}_{\mathrm{inv}}}^{\star} := V_{1,P^{\star},\mathbf{\Phi}_{\mathrm{inv}}}^{\pi^\star}$ be the optimal robust value function. By construction, $V_{1,P^{\star},\mathbf{\Phi}_{\mathrm{inv}}}^{\star}(s_f)=0$, so the vanishing minimal value condition (Assumption~\ref{ass: zero min}) holds.
\paragraph{Interactive learning protocol and regret objective.}
We adopt the interactive data-collection protocol in Section~\ref{subsec: robust MDP}. Across episodes $k=1,\ldots,K$, the learner interacts only with the training inventory transition $P^\star$.
In episode $k$, it executes a policy $\pi^k$, observes realized rewards and state transitions, updates its estimate of $P^\star$, and proceeds to the next episode. The performance criterion is the online regret
\[
\mathrm{Regret}_{\mathbf{\Phi}_{\mathrm{inv}}}(K)
:=\sum_{k=1}^KV_{1,P^{\star},\mathbf{\Phi}_{\mathrm{inv}}}^{\star}(x_1)-V_{1,P^{\star},\mathbf{\Phi}_{\mathrm{inv}}}^{\pi^k}(x_1).
\]
The goal is to design an algorithm that achieves sublinear regret in $K$ (or equivalently, via an online-to-batch conversion, to output an $\varepsilon$-optimal robust inventory policy).

\subsection{Theoretical Guarantee for Robust Inventory Learning}
Since $\cM_{\mathrm{inv}}$ is a finite-horizon RMDP with $\mathcal{S}\times \mathcal{A}$-rectangular TV robust set (Assumption~\ref{ass: sa}), the vanishing minimal value condition (Assumption~\ref{ass: zero min}) also holds. The transition-dependent reward only changes the one-step planning operator: in \texttt{OPROVI-TV} (Algorithm~\ref{alg: tv}), the robust expectation of $V_{h+1}$ is replaced by the robust expectation of the known bounded function $R_h(x,q,\cdot)+V_{h+1}(\cdot)$. No additional reward parameter is learned. Moreover, in the true robust recursion, $R_h(x,q,s_f)+V_{h+1}(s_f)=0$ because $s_f$ is absorbing and has zero reward, so the TV-duality simplification under Assumption~\ref{ass: zero min} continues to apply. Hence the regret bound (Theorem~\ref{thm: regret tv}) and sample complexity guarantee (Corollary~\ref{cor: sample complexity tv}) carry over with the same order.

\begin{theorem}[Sample-efficient robust learning for inventory control]
\label{thm:inventory_control}
Let $\cM_{\mathrm{inv}}$ be the inventory RMDP with TV radius $\rho\in[0,1)$.
Then, with probability at least $1-\delta$, \texttt{OPROVI-TV} with the transition-reward Bellman update above achieves
\begin{align}
\mathrm{Regret}_{\mathbf{\Phi}_{\mathrm{inv}}}(K)
\le \widetilde{\mathcal{O}}\!\left(\sqrt{\min\big\{H,\rho^{-1}\big\}\cdot H^2 (I+B)QK}\right),
\end{align}
and outputs an $\varepsilon$-optimal robust inventory policy using
\begin{align}
\widetilde{\mathcal{O}}\!\left(\frac{\min\big\{H,\rho^{-1}\big\}\cdot H^2 (I+B)Q}{\varepsilon^2}\right)
\label{eq: inventory_complexity}
\end{align}
episodes. Here $\widetilde{\mathcal{O}}$ omits absolute constants and logarithmic factors in $(I,B,Q,H,K,1/\delta)$.
\end{theorem}

\begin{proof}[Proof of Theorem~\ref{thm:inventory_control}]
The statement follows by applying the proof of Theorem~\ref{thm: regret tv} and Corollary~\ref{cor: sample complexity tv} to $\cM_{\mathrm{inv}}$, replacing each continuation value $V_{h+1}(\cdot)$ in the one-step robust expectation by the known bounded function $R_h(x,q,\cdot)+V_{h+1}(\cdot)$.
\end{proof}

Theorem~\ref{thm:inventory_control} provides a finite-sample guarantee for learning the distributionally robust inventory policy via interactive data collection from the training inventory environment only, and without assuming access to a generative model of the inventory transition.
The complexity bound scales only with the tabular size $I+B$ and $Q$, without scaling with the demand bound $D$ due to the truncation-based transition design.
It becomes smaller as $\rho$ increases, reflecting that larger ambiguity sets lead to smaller value spans and statistically easier robust learning in our framework.

\section{Conclusions and Discussions}\label{subsec: discussion}

In this work, we show that without any structural assumptions, robust RL through interactive data collection necessarily induces a linear regret lower bound in the worst case due to the curse of support shift.
Meanwhile, under the vanishing minimal value assumption, which effectively rules out the support-shift pathology for RMDPs with a TV robust set, we develop a sample-efficient robust RL algorithm for this class of problems.
Beyond the main finite-horizon RMDP setup, we further extend our algorithm and theory to (i) discounted RMDPs with ratio-bounded robust sets, (ii) robust Markov games.
To demonstrate the operational relevance of our theory, we instantiate the framework for data-driven robust inventory control under demand shifts.
Together, these extensions and applications show that the ideas developed in this paper form a broader framework for robust sequential decision-making with interactive data collection.

% \paragraph{Extension to function approximation setting.} The vanishing minimal value assumption also suffices for developing sample-efficient algorithms for $\cS \times \cA$-rectangular TV-robust-set RMDPs with linear or even general function approximation \citep{blanchet2023double}. Nonetheless, achieving the nearly optimal rate under general function approximation remains elusive.

% \paragraph{Extension to other types of robust set.}
% Beyond the TV distance based robust set we consider, recent literature on robust RL also investigate other types of $\phi$-divergence based robust set including KL divergence, $\chi^2$ distance \citep{yang2022toward, shi2022distributionally, blanchet2023double, xu2023improved, shi2023curious}.
% An interesting direction of future work is to investigate is it also possible and, if possible, can we design provably sample-efficient robust RL algorithms with interactive data collection for RMDPs with those types of robust sets.
% Notably, the KL divergence based robust set naturally does not suffer from the curse of support shifts that gives rise to the hardness for the TV robust set case.
% However, we find that there are other difficulties for robust RL in KL divergence based RMDPs through interactive data collection.
% Meanwhile, the optimal sample complexity for robust RL in RMDPs with KL divergence robust set is still elusive even in the offline learning setup \citep{shi2022distributionally}.
% We leave the investigation of RMDPs with KL divergence robust set for future work.

\section*{Acknowledgement}
The authors would like to thank the anonymous reviewers for their helpful comments.
The authors would also like to thank Pan Xu and Zhishuai Liu for their feedback on an early draft of this work.

\bibliography{reference}

\bibliographystyle{ims}

\newpage 

\appendix 

\section{Proofs for Properties of RMDPs with TV Robust Sets}

\subsection{Proof of Proposition~\ref{prop: strong duality}}\label{subsec: proof prop strong duality}

To simplify the notations, we present the following lemma, which directly implies Proposition~\ref{prop: strong duality}.

\begin{lemma}[Strong duality for TV robust set]\label{lem: tv}
    The following duality for total variation robust set holds,
    for $f:\mathcal{S}\mapsto[0,H]$,
    \begin{align*}
        \inf_{Q(\cdot):D_{\mathrm{TV}}(Q(\cdot)\|Q^{\star}(\cdot))\leq \sigma}\mathbb{E}_{Q(\cdot)}[f]
        = \sup_{\eta\in[0,H]} \left\{ - \mathbb{E}_{Q^{\star}(\cdot)}\big[(\eta-f)_+\big] - \sigma\cdot \left(\eta-\min_{s\in\cS} f(s)\right)_+ + \eta \right\},
    \end{align*}
    where $ \sigma\in[0,1]$ and the TV distance $D_{\mathrm{TV}}(Q(\cdot)\|Q^{\star}(\cdot))$ is defined as
    \begin{align}
        D_{\mathrm{TV}}(Q(\cdot)\|Q^{\star}(\cdot)) = \frac{1}{2}\sum_{s\in\cS}|Q(s) - Q^{\star}(s)|.
    \end{align}
\end{lemma}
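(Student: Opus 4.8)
The plan is to prove the identity by establishing the two inequalities separately. Write $p^\star$ for the left-hand (primal) infimum and $d^\star$ for the right-hand supremum, and dispose of the degenerate case $\sigma=0$ first: there the constraint forces $Q=Q^\star$, so $p^\star=\mathbb{E}_{Q^\star}[f]$, while the supremum defining $d^\star$ is attained at $\eta=H\ge\max_s f(s)$ and also equals $\mathbb{E}_{Q^\star}[f]$. Throughout I will use the elementary identity $\eta-(\eta-f(s))_+=\min\{\eta,f(s)\}$ to rewrite the dual objective as $g(\eta)=\mathbb{E}_{Q^\star}[\min\{\eta,f\}]-\frac{\sigma}{2}(\eta-\min_s f(s))_+$. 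Since $f$ takes values in $[0,H]$, every maximizer of $g$ lies in $[\min_s f(s),\max_s f(s)]\subseteq[0,H]$, so restricting $\eta$ to $[0,H]$ costs nothing.

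For the direction $p^\star\ge d^\star$ (weak duality) I would fix an arbitrary feasible $Q$ and an arbitrary $\eta$ and lower-bound $\mathbb{E}_Q[f]$. First, $\mathbb{E}_Q[f]\ge\mathbb{E}_Q[\min\{\eta,f\}]$ since $f\ge\min\{\eta,f\}$ pointwise. Next, because $\sum_s(Q(s)-Q^\star(s))=0$ I may subtract the midpoint constant before bounding, giving $|\mathbb{E}_Q[g]-\mathbb{E}_{Q^\star}[g]|\le\frac12\,\mathrm{osc}(g)\sum_s|Q(s)-Q^\star(s)|$ for any test function $g$, where $\mathrm{osc}(\min\{\eta,f\})\le(\eta-\min_s f(s))_+$. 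Combining this with the transport budget imposed by $D_{\mathrm{TV}}(Q\|Q^\star)\le\sigma$ produces exactly the penalty term $\frac{\sigma}{2}(\eta-\min_s f(s))_+$, so that $\mathbb{E}_Q[\min\{\eta,f\}]\ge g(\eta)$ after rewriting via the min-identity. Taking the infimum over feasible $Q$ and the supremum over $\eta$ yields $p^\star\ge d^\star$. The key observation is that this chain uses only the $L^1$-type oscillation bound and never the absolute continuity of $Q$ with respect to $Q^\star$; this is exactly what allows the statement to hold for distributions with mismatched support.

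For the reverse direction $p^\star\le d^\star$ I would exhibit one feasible distribution $Q^\dagger$ and one threshold $\eta^\star$ with $\mathbb{E}_{Q^\dagger}[f]=g(\eta^\star)$. The construction is a transport (water-filling) argument: starting from $Q^\star$, remove probability mass from the states where $f$ is largest and place all of it on a state $s_{\min}$ attaining $\min_s f(s)$, spending the full transport budget of the TV ball. The threshold $\eta^\star$ is the value of $f$ at which removal stops, so that states with $f(s)>\eta^\star$ are fully emptied, the threshold state is partially emptied, and states with $f(s)<\eta^\star$ are untouched. I would then verify $D_{\mathrm{TV}}(Q^\dagger\|Q^\star)\le\sigma$ by accounting for removed plus added mass, and compute $\mathbb{E}_{Q^\dagger}[f]$, checking it equals $g(\eta^\star)$; equality holds because $Q^\dagger$ makes every inequality in the weak-duality chain tight (no residual mass above $\eta^\star$, the full budget used, all freed mass deposited at the minimum). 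Placing mass on $s_{\min}$ is legitimate even when $Q^\star(s_{\min})=0$, which is once more where support shift is accommodated.

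I expect the bookkeeping in the reverse direction to be the main obstacle: the threshold $\eta^\star$ generically lies strictly between two attained values of $f$, so the threshold state must be emptied only partially, and the partial mass has to be chosen so that the budget binds with equality and the value of $Q^\dagger$ matches $g(\eta^\star)$; the degenerate configurations (the budget already exceeding the mass above some level, or several states tying at the minimum) require separate care. A cleaner but less transparent alternative is to note that the primal is a finite linear program, strictly feasible for $\sigma>0$ (Slater's condition, as $Q^\star$ lies in the interior of the ball), invoke LP strong duality, and reduce the resulting two-multiplier dual (one multiplier for the TV constraint and one for $\sum_s Q(s)=1$) to the single-variable form in $\eta$; I would keep the explicit-witness route as the primary plan since it makes the role of support shift explicit.
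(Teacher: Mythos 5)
Your overall strategy is genuinely different from the paper's. The paper does not prove the duality from scratch: it quotes the known identity for the case where $Q^\star$ has full support, and then handles a general $Q^\star$ by mixing it with the uniform distribution, $(1-\epsilon)Q^\star+\epsilon\, U$, and passing to the limit $\epsilon\to 0^+$ on both sides (the dual side via a uniform $2\epsilon H$ perturbation bound, the primal side via a compactness/subsequence argument on the optimizers of the perturbed problems). Your plan --- weak duality from an oscillation bound plus an explicit water-filling witness --- is self-contained, makes the role of support shift transparent, and is the standard direct route to this identity; the LP-duality fallback you mention would also work. The architecture is sound.

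The concrete gap is a factor of two in the penalty term, and it is not cosmetic. The constraint $D_{\mathrm{TV}}(Q\|Q^\star)\le\sigma$ with $D_{\mathrm{TV}}=\frac12\sum_{s}|Q(s)-Q^\star(s)|$ gives $\sum_{s}|Q(s)-Q^\star(s)|\le 2\sigma$, so your bound $|\mathbb{E}_Q[g]-\mathbb{E}_{Q^\star}[g]|\le\frac12\,\mathrm{osc}(g)\sum_{s}|Q(s)-Q^\star(s)|$ yields a penalty of $\sigma\cdot(\eta-\min_s f(s))_+$, not the claimed $\frac{\sigma}{2}\cdot(\eta-\min_s f(s))_+$. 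Symmetrically, your witness $Q^\dagger$ may transport probability mass $\sigma$ (not $\sigma/2$) while remaining in the ball, and it achieves $\mathbb{E}_{Q^\star}[\min\{\eta^\star,f\}]-\sigma\,(\eta^\star-\min_s f(s))$. Both directions therefore close at coefficient $\sigma$, and they cannot be forced to close at $\frac{\sigma}{2}$: with two states, $f=(0,1)$, $Q^\star=(0,1)$ and $\sigma=\frac12$, the primal infimum is $\frac12$ (attained at $Q=(\frac12,\frac12)$), while the displayed right-hand side evaluates to $\frac34$ at $\eta=1$. So the chain as you describe it does not establish the identity in the form stated; you must either carry the factor of two honestly --- in which case you prove the version with $\sigma$ in place of $\frac{\sigma}{2}$, i.e., the formula matching the unnormalized $\|\cdot\|_1$ convention used in the prior work from which the expression is imported --- or adopt that alternative normalization of $D_{\mathrm{TV}}$ from the outset. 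The remaining bookkeeping you flag (restricting $\eta$ to $[0,H]$, ties at the minimum, the partially emptied threshold state) is routine once the normalization is settled.
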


\begin{proof}[Proof of Lemma \ref{lem: tv}]
    First, we note that when $Q^{\star}(s)>0$ for any $s\in\cS$, i.e., any $Q(\cdot)\in\Delta(\cS)$ is absolute continuous w.r.t. $Q^{\star}(\cdot)$, adapting the TV convention in \cite{yang2022toward} to Definition~\ref{def: tv}, we have that
    \begin{align*}
        \inf_{Q(\cdot):D_{\mathrm{TV}}(Q(\cdot)\|Q^{\star}(\cdot))\leq \sigma}\mathbb{E}_{Q(\cdot)}[f]
        = \sup_{\eta\in\mathbb{R}} \left\{ - \mathbb{E}_{Q^{\star}(\cdot)}\big[(\eta-f)_+\big] - \sigma\cdot \left(\eta-\min_{s\in\cS} f(s)\right)_+ + \eta \right\}.
    \end{align*}
    Furthermore, as is shown in Lemma H.8 in \cite{blanchet2023double}, the optimal dual variable $\eta^{\star}$ lies in $[0,H]$ when $f\in[0,H]$.
    Therefore, for $Q^{\star}(\cdot)$ such that $Q^{\star}(s)>0$ for any $s\in\cS$, we have
    \begin{align*}
        \inf_{Q(\cdot):D_{\mathrm{TV}}(Q(\cdot)\|Q^{\star}(\cdot))\leq \sigma}\mathbb{E}_{Q(\cdot)}[f]
        = \sup_{\eta\in[0,H]} \left\{ - \mathbb{E}_{Q^{\star}(\cdot)}\big[(\eta-f)_+\big] - \sigma\cdot \left(\eta-\min_{s\in\cS} f(s)\right)_+ + \eta \right\}.
    \end{align*}
    Now for any $Q^{\star}(\cdot)\in\Delta(\cS)$, we can prove the same result by averaging $Q^{\star}(\cdot)$ with a uniform distribution and taking the limit.
    More specifically, denote $U(\cdot)\in\Delta(\cS)$ as the uniform distribution on $\cS$, i.e., $U(s) = 1/|\cS|$ for any $s\in\cS$.
    Consider the following distributionally robust optimization problem, for any $\epsilon\in[0,1]$,
    \begin{align}
        \mathtt{P}(\epsilon):=\inf_{Q(\cdot):D_{\mathrm{TV}}\big(Q(\cdot)\|(1-\epsilon)Q^{\star}(\cdot) + \epsilon\cdot U(\cdot)\big)\leq \sigma}\mathbb{E}_{Q(\cdot)}[f].
    \end{align}
    By our previous discussions, since $(1-\epsilon)Q^{\star}(s) + \epsilon\cdot U(s)>0$ for any $s\in\cS$ and $\epsilon>0$, we have that
    \begin{align}
        \mathtt{P}(\epsilon) = \mathtt{D}(\epsilon),\quad \forall \epsilon\in(0,1],\label{eq: proof strong duality 1}
    \end{align}
    where the function $\mathtt{D}(\cdot):[0,1]\mapsto\mathbb{R}_+$ is defined as
    \begin{align}
        \mathtt{D}(\epsilon):=\sup_{\eta\in[0,H]} \left\{ - (1-\epsilon)\cdot \mathbb{E}_{Q^{\star}(\cdot)}\big[(\eta-f)_+\big] - \epsilon\cdot\mathbb{E}_{U(\cdot)}\big[(\eta-f)_+\big]- \sigma\cdot \left(\eta-\min_{s\in\cS} f(s)\right)_+ + \eta \right\}.
    \end{align}
    By the definition of $\mathtt{P}(\cdot)$ and $\mathtt{D}(\cdot)$, our goal is to prove that $\mathtt{P}(0) = \mathtt{D}(0)$.
    To this end, it suffices to prove that (i) $\lim_{\epsilon\rightarrow 0+}\mathtt{D}(\epsilon)$ exists and $\lim_{\epsilon\rightarrow 0+}\mathtt{D}(\epsilon) = \mathtt{D}(0)$; and (ii) $\lim_{\epsilon\rightarrow 0+}\mathtt{P}(\epsilon) = \mathtt{P}(0)$.
    To prove (i), consider that for any $\epsilon>0$, by the definition of $\mathtt{D}(\cdot)$,
    \begin{align}
        \left|\mathtt{D}(0) - \mathtt{D}(\epsilon)\right| \leq \sup_{\eta\in[0,H]}\Big\{\epsilon\cdot \mathbb{E}_{Q^{\star}(\cdot)}\big[(\eta-f)_+\big] +  \epsilon\cdot\mathbb{E}_{U(\cdot)}\big[(\eta-f)_+\big]\Big\} \leq \epsilon\cdot 2H.
    \end{align}
    Since the right hand side tends to $0$ as $\epsilon$ tends to $0$, we know that $\lim_{\epsilon\rightarrow 0+}\mathtt{D}(\epsilon)$ exists, $\lim_{\epsilon\rightarrow 0+}\mathtt{D}(\epsilon) = \mathtt{D}(0)$.
    This also indicates that $\lim_{\epsilon\rightarrow 0+}\mathtt{P}(\epsilon)$ exists due to \eqref{eq: proof strong duality 1}.
    This proves (i).
    Now we prove (ii).
    Notice that since the set
    \begin{align}
        \left\{Q(\cdot)\in\Delta(\cS):D_{\mathrm{TV}}\big(Q(\cdot)\|(1-\epsilon)Q^{\star}(\cdot) + \epsilon\cdot U(\cdot)\big)\leq \sigma\right\}
    \end{align}
    is a closed subset of $\mathbb{R}^{|\cS|}$, and $\mathbb{E}_{Q(\cdot)}[f]$ is a continuous function of $Q(\cdot)\in\mathbb{R}^{|\cS|}$ w.r.t. the $\|\cdot\|_2$-norm, we can denote the optimal solution to the optimization problem involved in $\mathtt{P}(\epsilon)$ as
    \begin{align}
        Q_{\epsilon}^{\dagger}(\cdot) = \arginf_{Q(\cdot):D_{\mathrm{TV}}\big(Q(\cdot)\|(1-\epsilon)Q^{\star}(\cdot) + \epsilon\cdot U(\cdot)\big)\leq \sigma}\mathbb{E}_{Q(\cdot)}[f],
    \end{align}
    which also gives that
    \begin{align}
        \mathtt{P}(\epsilon) = \mathbb{E}_{Q_{\epsilon}^{\dagger}(\cdot)}[f] = \sum_{s\in\cS}Q_{\epsilon}^{\dagger}(s)f(s).
    \end{align}
    With these preparations, we are able to prove (ii).
    On the one hand, consider for any $\epsilon\in(0,1]$,
    \begin{align}
        D_{\mathrm{TV}}\big((1-\epsilon)\cdot Q_{0}^{\dagger}(\cdot)+ \epsilon\cdot U(\cdot)\big\|(1-\epsilon)\cdot Q^{\star}(\cdot) + \epsilon\cdot U(\cdot)\big) \leq (1-\epsilon)\cdot \sigma \leq \sigma.
    \end{align}
    Therefore, for any $\epsilon\in(0,1]$, it holds that
    \begin{align}
        \mathtt{P}(\epsilon) = \inf_{Q(\cdot):D_{\mathrm{TV}}\big(Q(\cdot)\|(1-\epsilon)Q^{\star}(\cdot) + \epsilon\cdot U(\cdot)\big)\leq \sigma}\mathbb{E}_{Q(\cdot)}[f] \leq \mathbb{E}_{(1-\epsilon)\cdot Q_{0}^{\dagger}(\cdot)+ \epsilon\cdot U(\cdot)}[f] = (1-\epsilon)\cdot\mathbb{E}_{Q_0^{\dagger}}[f]+\epsilon\cdot \mathbb{E}_{U(\cdot)}[f],
    \end{align}
    which implies that
    \begin{align}
        \lim_{\epsilon\rightarrow 0+}\mathtt{P}(\epsilon)\leq  \mathbb{E}_{Q_0^{\dagger}}[f] = \mathtt{P}(0).\label{eq: proof strong duality 2-}
    \end{align}
    On the other hand, for any $\epsilon\in(0,1]$,
    \begin{align}
        \sigma\geq \frac{1}{2}\sum_{s\in\cS}\Big|Q^{\dagger}_{\epsilon}(s) - (1-\epsilon)\cdot Q^{\star}(s) - \epsilon\cdot U(s) \Big|\geq (1-\epsilon)\cdot D_{\mathrm{TV}}(Q^{\dagger}_{\epsilon}(\cdot)\|Q^{\star}(\cdot)) -\epsilon\cdot D_{\mathrm{TV}}(Q^{\dagger}_{\epsilon}(\cdot)\|U(\cdot)),
    \end{align}
    and by using $D_{\mathrm{TV}}(Q^{\dagger}_{\epsilon}(\cdot)\|U(\cdot))\leq 1$, we obtain that
    \begin{align}
        D_{\mathrm{TV}}(Q^{\dagger}_{\epsilon}(\cdot)\|Q^{\star}(\cdot))  \leq \frac{\sigma + \epsilon}{1-\epsilon}.\label{eq: proof strong duality 2}
    \end{align}
    Consider a sequence of $\{\epsilon_i\}_{i=1}^{\infty}$ converging to $0$, i.e., $\lim_{i\rightarrow \infty}\epsilon_i = 0$.
    Since $\{Q^{\dagger}_{\epsilon_i}(\cdot)\}_{i=1}^{\infty}$ is a sequence contained in a compact subset of $\mathbb{R}^{|\cS|}$, it has a converging (w.r.t. $\|\cdot\|_2$) subsequence denoted by $\{Q^{\dagger}_{\epsilon_{i_k}}(\cdot)\}_{k=1}^{\infty}$ whose limit is denoted as $Q^{\dagger}(\cdot)\in\Delta(\cS)$.
    By \eqref{eq: proof strong duality 2}, we know that
    \begin{align}
        D_{\mathrm{TV}}(Q^{\dagger}_{\epsilon_{i_k}}(\cdot)\|Q^{\star}(\cdot)) \leq \frac{\sigma + \epsilon_{i_k}}{1-\epsilon_{i_k}}.\label{eq: proof strong duality 3}
    \end{align}
    Taking limit on both sides of \eqref{eq: proof strong duality 3} (limit of LHS exists since the TV distance is a continuous function (w.r.t. $\|\cdot\|_2$) of its first entry and the limit of RHS obviously exists), we obtain that
    \begin{align}
        D_{\mathrm{TV}}(Q^{\dagger}(\cdot)\|Q^{\star}(\cdot)) \leq \sigma.\label{eq: proof strong duality 4}
    \end{align}
    Now we can arrive at the following,
    \begin{align}
        \lim_{\epsilon\rightarrow 0+}\mathtt{P}(\epsilon) = \lim_{\epsilon\rightarrow 0+}\mathbb{E}_{Q_{\epsilon}^{\dagger}(\cdot)}[f] = \lim_{k\rightarrow \infty}\mathbb{E}_{Q_{\epsilon_{i_k}}^{\dagger}(\cdot)}[f] = \mathbb{E}_{Q^{\dagger}(\cdot)}[f]\geq  \inf_{Q(\cdot):D_{\mathrm{TV}}(Q(\cdot)\|Q^{\star}(\cdot))\leq \sigma}\mathbb{E}_{Q(\cdot)}[f] = \mathtt{P}(0),\label{eq: proof strong duality 5}
    \end{align}
    where the first and the last equality follows from the definition of $\mathtt{P}(\cdot)$, the second equality follows from the choice of the sequence $\{\epsilon_{i_k}\}_{k=1}^{\infty}$ that converges to $0$, the third equality is due to the continuity of $\mathbb{E}_{Q(\cdot)}[f]$ of $Q(\cdot)$ (w.r.t. $\|\cdot\|_2$), and the inequality follows from \eqref{eq: proof strong duality 4}.
    Finally, with \eqref{eq: proof strong duality 2-} and \eqref{eq: proof strong duality 5}, we conclude that
    \begin{align}
        \lim_{\epsilon\rightarrow 0+}\mathtt{P}(\epsilon) = \mathtt{P}(0),
    \end{align}
    which proves (ii).
    Consequently, by (i) and (ii)
    \begin{align}
        \mathtt{P}(0) = \lim_{\epsilon\rightarrow 0+}\mathtt{P}(\epsilon) = \lim_{\epsilon\rightarrow 0+}\mathtt{D}(\epsilon) = \mathtt{D}(0).
    \end{align}
    Recalling the definitions of $\mathtt{P}(\cdot)$ and $\mathtt{D}(\cdot)$, we conclude the proof of Lemma~\ref{lem: tv}.
\end{proof}

\subsection{Proof of Proposition~\ref{prop: gap}}\label{subsec: proof prop gap}

\begin{proof}[Proof of Proposition~\ref{prop: gap}]
    Here we prove a stronger result that for any policy $\pi$ and step $h\in[H]$
    \begin{align}
        \max_{(s,a)\in\cS\times\cA} Q_{h,P,\boldsymbol{\Phi}}^{\pi} (s,a) - \min_{(s,a)\in\cS\times\cA}Q_{h,P,\boldsymbol{\Phi}}^{\pi} (s,a) &\leq \frac{1}{\rho}\cdot\Big(1 - (1-\rho)^{H-h+1}\Big),\label{eq: proof gap q}\\
        \max_{s\in\cS} V_{h,P,\boldsymbol{\Phi}}^{\pi} (s) - \min_{s\in\cS}V_{h,P,\boldsymbol{\Phi}}^{\pi} (s) &\leq \frac{1}{\rho}\cdot\Big(1 - (1-\rho)^{H-h+1}\Big).\label{eq: proof gap v}
    \end{align}
    First, we note that for the last step $h=H$, \eqref{eq: proof gap q} and \eqref{eq: proof gap v} naturally hold since $R_H\in[0,1]$.
    Now suppose that \eqref{eq: proof gap v} hold for some step $h+1$.
    By robust Bellman equation (Proposition~\ref{prop: robust bellman equation}), we have that
    \begin{align}
         \!\!\!\!\!\!\!\!\!Q_{h, P^\star, \boldsymbol{\Phi}}^{\pi}(s, a) &= R_{h}(s, a) + \EE_{\cP_\rho(s, a; P_h^\star)} \Big[V_{h+1, P^\star, \boldsymbol{\Phi}}^{\pi}\Big] \le 1 + \EE_{\cP_\rho(s, a; P_h^\star)}\Big[V_{h+1, P^\star, \boldsymbol{\Phi}}^\pi\Big],\quad\forall (s,a)\in\cS\times\cA,\label{eq: proof gap 0}
    \end{align}
    where the inequality uses the fact that $R_h\le 1$.
    Now we denote the state with the least robust value as
    \begin{align}
        s_0 \in\argmin_{s \in \cS} V_{h+1, P^\star, \boldsymbol{\Phi}}^{\pi}(s).\label{eq: proof gap 1-}
    \end{align}
    Inspired by \citet{shi2023curious}, we choose a transition kernel $\widetilde{P}_h$ satisfying that
    \begin{align}
    \Big\| \widetilde{P}_h(\cdot | s, a)\Big\|_1  = 1 - \rho,\quad  P_h^\star(s' | s, a) \ge \widetilde{P}_h(s'| s, a) \ge 0, \quad \forall (s,a,s')\in\cS\times\cA\times\cS,\label{eq: proof gap 1}
    \end{align}
    which implies that
    \begin{align}
    D_{\mathrm{TV}}\left( \widetilde{P}_h(\cdot | s, a) + \rho \cdot \delta_{s_0}(\cdot) \,\middle\|\, P_h^\star(\cdot | s, a) \right) \le \rho,\quad \forall (s,a)\in\cS\times\cA.
    \end{align}
    Here $\delta_{s_0}(\cdot)$ is the point measure centered at $s_0$ defined in \eqref{eq: proof gap 1-}.
    Combined with \eqref{eq: proof gap 0}, we have that
    \begin{align}
        Q_{h, P^\star, \boldsymbol{\Phi}}^{\pi}(s, a) &\leq 1 + \mathbb{E}_{\widetilde{P}_h(\cdot|s,a) +\rho\cdot\delta_{s_0}(\cdot)}\Big[V_{h+1, P^\star, \boldsymbol{\Phi}}^\pi\Big] \\
        &= 1 + \mathbb{E}_{\widetilde{P}_h(\cdot|s,a) }\Big[V_{h+1, P^\star, \boldsymbol{\Phi}}^\pi\Big] +\rho\cdot V_{h+1, P^\star, \boldsymbol{\Phi}}^\pi(s_0)\\
        & \leq 1+(1-\rho)\cdot\max_{s\in\cS}V_{h+1, P^\star, \boldsymbol{\Phi}}^\pi(s) + \rho\cdot \min_{s\in\cS}V_{h+1, P^\star, \boldsymbol{\Phi}}^\pi(s).\label{eq: proof gap 2}
    \end{align}
    Consequently from \eqref{eq: proof gap 2}, we further obtain that for any $(s,a)\in\cS\times\cA$,
    \begin{align}
        &Q_{h, P^\star, \boldsymbol{\Phi}}^{\pi}(s, a)  - \min_{(s,a)\in\cS\times\cA}Q_{h, P^\star, \boldsymbol{\Phi}}^{\pi}(s, a)\\
        &\quad \leq 1+(1-\rho)\cdot\max_{s\in\cS}V_{h+1, P^\star, \boldsymbol{\Phi}}^\pi(s) + \rho\cdot \min_{s\in\cS}V_{h+1, P^\star, \boldsymbol{\Phi}}^\pi(s)- \min_{(s,a)\in\cS\times\cA}Q_{h, P^\star, \boldsymbol{\Phi}}^{\pi}(s, a) \\
        &\quad=1+(1-\rho)\cdot\left(\max_{s\in\cS}V_{h+1, P^\star, \boldsymbol{\Phi}}^\pi(s) - \min_{s\in\cS}V_{h+1, P^\star, \boldsymbol{\Phi}}^\pi(s)\right) + \min_{s\in\cS}V_{h+1, P^\star, \boldsymbol{\Phi}}^\pi(s) - \min_{(s,a)\in\cS\times\cA}Q_{h, P^\star, \boldsymbol{\Phi}}^{\pi}(s, a) \\
        &\quad \leq 1+(1-\rho)\cdot\left(\max_{s\in\cS}V_{h+1, P^\star, \boldsymbol{\Phi}}^\pi(s) - \min_{s\in\cS}V_{h+1, P^\star, \boldsymbol{\Phi}}^\pi(s)\right),\label{eq: proof gap 3}
    \end{align}
    where the first inequality uses \eqref{eq: proof gap 2} and the last inequality uses the following fact,
    \begin{align}
        \min_{(s,a)\in\cS\times\cA}Q_{h, P^\star, \boldsymbol{\Phi}}^{\pi}(s, a) = \min_{(s,a)\in\cS\times\cA}\bigg\{R_{h}(s, a) + \EE_{\cP_\rho(s, a; P_h^\star)} \Big[V_{h+1, P^\star, \boldsymbol{\Phi}}^{\pi}\Big]\bigg\} \geq \min_{s\in\cS}V_{h+1, P^\star, \boldsymbol{\Phi}}^\pi(s).
    \end{align}
    Now applying the assumption that \eqref{eq: proof gap v} holds at step $h+1$ to the right hand side of \eqref{eq: proof gap 3}, we obtain that
    \begin{align}
        \max_{(s,a)\in\cS\times\cA}Q_{h, P^\star, \boldsymbol{\Phi}}^{\pi}(s, a)  - \min_{(s,a)\in\cS\times\cA}Q_{h, P^\star, \boldsymbol{\Phi}}^{\pi}(s, a) &\leq 1+ \frac{1-\rho}{\rho}\cdot\Big(1 - (1-\rho)^{H-h}\Big) \\
        &= \frac{1}{\rho}\cdot\Big(1 - (1-\rho)^{H-h+1}\Big).
    \end{align}
    Thus given \eqref{eq: proof gap v} at step $h+1$, we can derive \eqref{eq: proof gap q} at step $h$.
    Now by noticing that
    \begin{align}
        \min_{(s,a)\in\cS\times\cA}Q_{h, P^\star, \boldsymbol{\Phi}}^{\pi}(s, a) \leq \min_{s\in\cS}V_{h, P^\star, \boldsymbol{\Phi}}^{\pi}(s) \leq \max_{s\in\cS}V_{h, P^\star, \boldsymbol{\Phi}}^{\pi}(s) \leq  \max_{(s,a)\in\cS\times\cA}Q_{h, P^\star, \boldsymbol{\Phi}}^{\pi}(s, a),
    \end{align}
    we can conclude that \eqref{eq: proof gap v} also holds at step $h$.
    As a result, by an induction argument, we finish the proof of Proposition~\ref{prop: gap}.
\end{proof}

\subsection{Proof of Proposition~\ref{prop: equivalent robust set}}\label{subsec: proof prop equivalent robust set}

\begin{proof}[Proof of Proposition~\ref{prop: equivalent robust set}]
    We fix $(s, a, h) \in \cS \times \cA \times [H]$ throughout the proof.
    By Lemma~\ref{lem: tv}, we have that
    \#
     \EE_{\cP_\rho(s, a; P_h^\star)}\left[ V \right]
        & = \sup_{\eta\in\mathbb{R}} \left\{ - \mathbb{E}_{P_h^{\star}(\cdot|s, a)}\big[(\eta-V)_+\big] - \rho\cdot \left(\eta-\min_{s\in\cS} V(s)\right)_+ + \eta \right\} \notag \\
        & = \sup_{\eta\in[0, H]} \left\{ - \mathbb{E}_{ P_h^{\star}(\cdot|s, a)}\big[(\eta-V)_+\big] - \rho\cdot \left(\eta-\min_{s\in\cS} V(s)\right)_+ + \eta \right\} \notag \\
        & = \sup_{\eta\in[0, H]} \bigg\{ - \mathbb{E}_{ P_h^{\star}(\cdot|s, a)}\big[(\eta-V)_+\big] + \left(1 - \rho \right) \cdot \eta \bigg\}, \label{eq:8881}
    \#
    where the second equality follows from the fact the optimal dual variable $\eta^\star$ is in $[0, H]$ when $V\in[0,H]$ (see e.g., Lemma H.8 in \citet{blanchet2023double}), and the last equality is obtained by the fact that $\min_{s \in \cS} V(s) = 0$.
    \paragraph{Part (i).} For any $\eta \in [0, H]$ and $Q \in \cB_{\rho'}(s, a; P_h^{\star})$, we have that
    \#
    - \mathbb{E}_{P_h^{\star}(\cdot|s, a)}\big[(\eta-V)_+\big] + \left(1 - \rho \right) \cdot \eta & \le \left(1 - \rho \right) \cdot \Big( - \mathbb{E}_{ Q(\cdot)}\big[(\eta-V)_+\big] + \eta \Big) \notag \\
    & \le \left(1 - \rho \right) \cdot \Big( - \mathbb{E}_{Q(\cdot)}\big[\eta-V\big] + \eta \Big) \notag \\
    & = \left(1 - \rho \right) \cdot \mathbb{E}_{Q(\cdot)}\big[V\big],\label{eq:8881+}
    \#
    where the first inequality uses the definition of $\cB_{\rho'}(s, a; P_h^{\star})$, and the second inequality follows from $(x)_+ \ge x$.
    Furthermore, since \eqref{eq:8881+} holds for any $\eta \in [0, H]$ and $Q \in \cB_{\rho'}(s, a; P_h^{\star})$, we have that
    \#
    \sup_{\eta\in[0, H]} \bigg\{ - \mathbb{E}_{ P_h^{\star}(\cdot|s, a)}\big[(\eta-V)_+\big] + \left(1 - \rho \right) \cdot \eta \bigg\} \le \left(1 - \rho \right) \cdot \inf_{Q \in\cB_{\rho'}(s, a;P_h^{\star}) } \mathbb{E}_{ Q(\cdot)}\big[V\big].\label{eq:8882}
    \#
    Combining \eqref{eq:8881}  and \eqref{eq:8882}, we conclude that
    \$
    \EE_{\cP_\rho(s, a; P_h^\star)}\big[V\big]\le \rho' \cdot \EE_{ \cB_{\rho'}(s, a; P_h^{\star}) }   \big[V\big].
    \$
    \paragraph{Part (ii).} Since $\rho \in [0, 1)$, we know that there exists a $\widetilde{\eta} \in [0, H]$ such that
    \$
    \sum_{s': V(s') < \widetilde{\eta}} P_h^\star(s' | s, a) \le 1 - \rho \le \sum_{s': V(s') \le \widetilde{\eta}} P_h^\star(s' | s, a),
    \$
    which further implies that we have the following interpolation for some $\lambda \in [0, 1]$:
    \$
    1 - \rho = \lambda \sum_{s': V(s') < \widetilde{\eta}} P_h^\star(s' | s, a) + (1 - \lambda) \sum_{s': V(s') \le \widetilde{\eta}} P_h^\star(s' | s, a).
    \$
    We define a probability measure $\widetilde{P}_h^\star(\cdot) \in \Delta(\cS)$ as
    \# \label{eq:8883}
\widetilde{P}_h^\star(s')  = \frac{\lambda P_h^\star(s'|s, a) \cdot \mathbf{1}\{V(s') < \widetilde{\eta}\} + (1 - \lambda) P_h^\star(s'|s, a) \cdot \mathbf{1}\{V(s') \le \widetilde{\eta}\} }{1 - \rho}.
    \#
    It is not difficult to verify that $\widetilde{P}_h^\star  \in \cB_{\rho'}(s, a;P^{\star}_h)$. Hence, we have
    \#
    \left( 1 - \rho\right) \cdot \EE_{ \cB_{\rho'}(s, a;P^{\star}_h)}   [V] & \le \left( 1 - \rho\right) \cdot \EE_{ \widetilde{P}_h^\star(\cdot) }   \big[V\big] \\
    & = \left( 1 - \rho\right) \cdot \EE_{ \widetilde{P}_h^\star(\cdot) }   \big[V- \widetilde{\eta}\big] + \left( 1 - \rho\right) \cdot  \widetilde{\eta} \\
    & = - \EE_{P_h^\star(\cdot | s, a)} \big[(\widetilde{\eta}-V)_+\big] + \left( 1 - \rho\right)  \cdot\widetilde{\eta} ,\label{eq:8883+}
    \#
    where the last equality uses the definition of $\widetilde{P}^\star_h$ in \eqref{eq:8883}. Furthermore, by \eqref{eq:8883+} we have that
    \# \label{eq:8884}
    \rho' \cdot \EE_{ \cB_{\rho'}(s, a; P_h^{\star}) }   \big[V\big] &\le \sup_{\eta\in[0, H]} \bigg\{ - \mathbb{E}_{P_h^{\star}(\cdot|s, a)}\big[(\eta-V)_+\big] + \left(1 - \rho \right) \cdot \eta \bigg\} \\
    &= \EE_{\cP_\rho(s, a; P_h^\star)}\big[ V \big],
    \#
    where the equality follows from \eqref{eq:8881}.

    \paragraph{Combining Part (i) and Part (ii).}    Finally, combining \eqref{eq:8882} and \eqref{eq:8884}, we  prove Proposition~\ref{prop: equivalent robust set}.
\end{proof}

\section{Proofs for Hardness Results}

\subsection{Proof of Theorem~\ref{thm: hard example}}\label{subsec: proof thm hard example}

\begin{proof}[Proof of Theorem~\ref{thm: hard example}]
We first explicitly give the expressions of the robust value functions in Example~\ref{exp: hard}, based on which we derive the desired online regret lower bound.

\paragraph{Robust value function.}
Firstly, we can explicitly write down the expression of the robust value functions for any policy $\pi$ under Example~\ref{exp: hard}, i.e.,
$V_{h, P^{{\star},\cM_\theta}, \mathbf{\Phi}}^{\pi}$ and $Q_{h, P^{{\star},\cM_\theta}, \mathbf{\Phi}}^{\pi}$.
From now on we fix a policy $\pi$.

For step $h=3$, the robust value function is the reward received.
We can directly obtain for any $a\in\cA$,
\begin{align}\label{eq: h=3}
    Q_{3, P^{{\star},\cM_\theta}, \mathbf{\Phi}}^{\pi}(s_{\mathrm{good}}, a) = V_{3, P^{{\star},\cM_\theta}, \mathbf{\Phi}}^{\pi}(s_{\mathrm{good}}) = 1, \quad Q_{3, P^{{\star},\cM_\theta}, \mathbf{\Phi}}^{\pi}(s_{\mathrm{bad}}, a) = V_{3, P^{{\star},\cM_\theta}, \mathbf{\Phi}}^{\pi}(s_{\mathrm{bad}}) = 0.
\end{align}

For step $h=2$, by the robust Bellman equation (Proposition~\ref{prop: robust bellman equation}), we have that for the good state $s_{\mathrm{good}}$,
\begin{align}\label{eq: h=2 good}
     Q_{2, P^{{\star},\cM_\theta}, \mathbf{\Phi}}^{\pi}(s_{\mathrm{good}}, a) = 1 + \inf_{P\in \cP_{\rho}(s_{\mathrm{good}}, a; P_2^{\star,\cM_{\theta}})} \mathbb{E}_{P(\cdot)}\big[V_{3, P^{{\star},\cM_\theta}, \mathbf{\Phi}}^{\pi}\big] = 1+(1-\rho),\quad \forall a\in\cA,
\end{align}
where the last equality is because $V_{3, P^{{\star},\cM_\theta}, \mathbf{\Phi}}^{\pi}$ takes the minimal value $0$ at the bad state $s_{\mathrm{bad}}$ and thus the most adversarial transition distribution is achieved at
\begin{align}
    P^{\dagger}(s') = (1-\rho)\cdot\mathbf{1}\{s'=s_{\mathrm{good}}\} + \rho\cdot\mathbf{1}\{s'=s_{\mathrm{bad}}\}.
\end{align}
Similarly, we have that for the bad state $s_{\mathrm{bad}}$,
\begin{align}\label{eq: h=2 bad}
    Q_{2, P^{{\star},\cM_\theta}, \mathbf{\Phi}}^{\pi}(s_{\mathrm{bad}}, a) = 0 + \inf_{P\in \cP_{\rho}(s_{\mathrm{bad}}, a; P_2^{\star,\cM_{\theta}})} \mathbb{E}_{P(\cdot)}\big[V_{3, P^{{\star},\cM_\theta}, \mathbf{\Phi}}^{\pi}\big] = \left\{
    \begin{aligned}
        &p - \rho,\quad a = \theta\\
        &q - \rho,\quad a = 1-\theta
    \end{aligned}
    \right..
\end{align}
Finally by the robust Bellman equation again, we have that
\begin{align}
    V_{2, P^{{\star},\cM_\theta}, \mathbf{\Phi}}^{\pi}(s_{\mathrm{good}}) = 1+(1-\rho),\quad V_{2, P^{{\star},\cM_\theta}, \mathbf{\Phi}}^{\pi}(s_{\mathrm{bad}}) = \pi_2(\theta|s_{\mathrm{bad}})\cdot(p-\rho) + \pi_2(1-\theta|s_{\mathrm{bad}})\cdot(q-\rho).
\end{align}
Notice that by $q<p$ we know that $V_{2, P^{{\star},\cM_\theta}, \mathbf{\Phi}}^{\pi}(s_{\mathrm{bad}}) < p -\rho < 1+(1-\rho)< V_{2, P^{{\star},\cM_\theta}, \mathbf{\Phi}}^{\pi}(s_{\mathrm{good}})$.

For step $h=1$, we consider the robust values on the initial state $s_1 = s_{\mathrm{good}}$, by robust Bellman equation,
\begin{align}
    Q_{1, P^{{\star},\cM_\theta}, \mathbf{\Phi}}^{\pi}(s_{\mathrm{good}}, a) &= 1 + \inf_{P\in \cP_{\rho}(s_{\mathrm{good}}, a; P_1^{\star,\cM_{\theta}})} \mathbb{E}_{P(\cdot)}\big[V_{2, P^{{\star},\cM_\theta}, \mathbf{\Phi}}^{\pi}\big] \label{eq: h=1}\\
    &= 1 + (1-\rho)\cdot\big[1+(1-\rho)\big] + \rho\cdot\big[\pi_2(\theta|s_{\mathrm{bad}})\cdot(p-\rho) + \pi_2(1-\theta|s_{\mathrm{bad}})\cdot(q-\rho)\big],
\end{align}
for any action $a\in\cA$. By robust Bellman equation, we also derive $V_{1, P^{{\star},\cM_\theta}, \mathbf{\Phi}}^{\pi}(s_{\mathrm{good}}) = Q_{1, P^{{\star},\cM_\theta}, \mathbf{\Phi}}^{\pi}(s_{\mathrm{good}},a) $.

\paragraph{Lower bound the online regret under Example~\ref{exp: hard}.}

With all the previous preparation, we can lower bound the online regret for robust RL with interactive data collection in Example~\ref{exp: hard}.
But first, we present the following general lemma.

\begin{lemma}[Performance difference lemma for robust value function]\label{lem: performance difference}
    For any RMDP satisfying Assumption~\ref{ass: sa} and any policy $\pi$, the following inequality holds,
    \begin{align}
        V_{1,P^{\star},\boldsymbol{\Phi}}^{\pi^{\star}}(s) - V_{1,P^{\star},\boldsymbol{\Phi}}^{\pi}(s) \geq \mathbb{E}_{(P^{\pi^{\star},\dagger}, \pi^{\star})}\left[\sum_{h=1}^H\sum_{a\in\cA}\big(\pi^{\star}_h(a|s_h) - \pi_h(a|s_h)\big)\cdot Q_{h,P^{\star},\boldsymbol{\Phi}}^{\pi}(s_h,a)\middle| s_1=s \right],
    \end{align}
    where the expectation is taken with respect to the trajectories induced by policy $\pi^{\star}$, transition kernel $P^{\pi^{\star},\dagger}$.
    Here the transition kernel $P^{\pi^{\star},\dagger}$ is defined as
    \begin{align}
        P^{\pi^{\star},\dagger}_h(\cdot|s,a) = \arginf_{P\in \cP(s,a;P^{\star}_h)}\mathbb{E}_{P(\cdot)}\big[V_{h+1,P^{\star},\boldsymbol{\Phi}}^{\pi^{\star}}\big],
    \end{align}
    where $\cP(s,a;P^{\star}_h)$ is the robust set for state-action pair $(s,a)$ (see Assumption~\ref{ass: sa}).
\end{lemma}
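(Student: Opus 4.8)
The plan is to follow the structure of the classical (non-robust) performance difference lemma, but to arrange the decomposition so that the robust infimum defining $Q_{h,P^\star,\boldsymbol{\Phi}}^{\pi}$ contributes an inequality pointing in the favorable direction. Write $\delta_h(s) := V_{h,P^\star,\boldsymbol{\Phi}}^{\pi^\star}(s) - V_{h,P^\star,\boldsymbol{\Phi}}^{\pi}(s)$, with terminal condition $\delta_{H+1}\equiv 0$. The goal is to establish the one-step recursion
\begin{align}
    \delta_h(s) \ge{}& \sum_{a\in\cA}\big(\pi^\star_h(a|s)-\pi_h(a|s)\big)\,Q_{h,P^\star,\boldsymbol{\Phi}}^{\pi}(s,a) \notag\\
    &+ \mathbb{E}_{a\sim\pi^\star_h(\cdot|s),\, s'\sim P^{\pi^\star,\dagger}_h(\cdot|s,a)}\big[\delta_{h+1}(s')\big], \notag
\end{align}
and then to unroll it from $h=1$ down to $h=H$.

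To derive the recursion, I would apply the robust Bellman equation (Proposition~\ref{prop: robust bellman equation}) to both value functions and add and subtract the term $\sum_{a}\pi^\star_h(a|s)Q_{h,P^\star,\boldsymbol{\Phi}}^{\pi}(s,a)$, giving
\begin{align}
    \delta_h(s) ={}& \sum_{a\in\cA}\pi^\star_h(a|s)\Big[Q_{h,P^\star,\boldsymbol{\Phi}}^{\pi^\star}(s,a) - Q_{h,P^\star,\boldsymbol{\Phi}}^{\pi}(s,a)\Big] \notag\\
    &+ \sum_{a\in\cA}\big(\pi^\star_h(a|s)-\pi_h(a|s)\big)Q_{h,P^\star,\boldsymbol{\Phi}}^{\pi}(s,a). \notag
\end{align}
The second sum is already the advantage-type term appearing in the statement. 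For the first bracket, I would use that, by $\cS\times\cA$-rectangularity (Assumption~\ref{ass: sa}), the kernel $P^{\pi^\star,\dagger}_h(\cdot|s,a)$ attains the infimum in the robust Bellman equation for $\pi^\star$, so that $Q_{h,P^\star,\boldsymbol{\Phi}}^{\pi^\star}(s,a) = R_h(s,a) + \mathbb{E}_{P^{\pi^\star,\dagger}_h(\cdot|s,a)}[V_{h+1,P^\star,\boldsymbol{\Phi}}^{\pi^\star}]$; whereas since $P^{\pi^\star,\dagger}_h(\cdot|s,a)$ is merely \emph{one} element of the robust set $\cP(s,a;P^\star_h)$, the infimum defining $Q^\pi$ yields the one-sided bound $Q_{h,P^\star,\boldsymbol{\Phi}}^{\pi}(s,a) \le R_h(s,a) + \mathbb{E}_{P^{\pi^\star,\dagger}_h(\cdot|s,a)}[V_{h+1,P^\star,\boldsymbol{\Phi}}^{\pi}]$. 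Subtracting these two relations and cancelling $R_h(s,a)$ gives $Q_{h,P^\star,\boldsymbol{\Phi}}^{\pi^\star}(s,a) - Q_{h,P^\star,\boldsymbol{\Phi}}^{\pi}(s,a) \ge \mathbb{E}_{P^{\pi^\star,\dagger}_h(\cdot|s,a)}[\delta_{h+1}]$, which plugged back into the decomposition produces exactly the claimed recursion.

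Finally, I would iterate the recursion: taking the expectation over the trajectory generated by $\pi^\star$ together with the fixed kernel $P^{\pi^\star,\dagger}$ telescopes the residual $\mathbb{E}[\delta_{h+1}]$ terms, and the terminal condition $\delta_{H+1}\equiv 0$ closes the induction, producing the stated lower bound on $\delta_1(s)$. The main obstacle — and the one conceptual point that separates this from the non-robust \emph{equality} — is the direction of the key step: one must evaluate $Q^\pi$ at $\pi^\star$'s worst-case kernel rather than $\pi$'s own, and recognize that because $Q^\pi$ is defined through an infimum over the robust set this overestimates $Q^\pi$, which is precisely what degrades the classical equality into the inequality $\ge$. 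A secondary technical point is to verify that the minimizer defining $P^{\pi^\star,\dagger}_h$ is attained and lies in $\cP(s,a;P^\star_h)$, which follows from compactness of the robust set (e.g. the TV ball) together with boundedness of $V_{h+1,P^\star,\boldsymbol{\Phi}}^{\pi^\star}$.
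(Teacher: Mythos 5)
Your proposal is correct and follows essentially the same route as the paper's proof: the same add-and-subtract decomposition into an advantage term plus $\sum_a\pi^\star_h(a|s)[Q^{\pi^\star}_h-Q^{\pi}_h](s,a)$, the same one-sided bound obtained by evaluating the infimum defining $Q^{\pi}_h$ at the particular kernel $P^{\pi^\star,\dagger}_h\in\cP(s,a;P^\star_h)$, and the same unrolling of the resulting recursion with $\delta_{H+1}\equiv 0$. Your remark on attainment of the minimizer is a reasonable extra precaution that the paper leaves implicit.
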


\begin{proof}[Proof of Lemma~\ref{lem: performance difference}]
    Please refer to Appendix~\ref{subsec: proof lem performance difference} for a detailed proof of Lemma~\ref{lem: performance difference}.
\end{proof}

Now back to Example~\ref{exp: hard}, our previous calculation actually shows that, by \eqref{eq: h=3} for step $h=3$,
\begin{align}
    \sum_{a\in\cA}\big(\pi_3^{\star,\cM_{\theta}}(a|s_3) - \pi_3(a|s_3)\big)\cdot Q_{3,P^{\star,\cM_{\theta}},\boldsymbol{\Phi}}^{\pi}(s_3,a) = 0,\quad \forall s_3\in\{s_{\mathrm{good}}, s_{\mathrm{bad}}\}.\label{eq: h=3 hard}
\end{align}
and by \eqref{eq: h=1} we also have that for step $h=1$,
\begin{align}
    \sum_{a\in\cA}\big(\pi_1^{\star, \cM_{\theta}}(a|s_1) - \pi_1(a|s_1)\big)\cdot Q_{1,P^{\star, \cM_{\theta}},\boldsymbol{\Phi}}^{\pi}(s_1,a) = 0,\quad \text{where}\quad   s_1=s_{\mathrm{good}}.\label{eq: h=1 hard}
\end{align}
Finally, let's consider step $h=2$.
By \eqref{eq: h=2 good}, we have that for the good state, it holds that
\begin{align}
    \sum_{a\in\cA}\big(\pi_2^{\star, \cM_{\theta}}(a|s_{\mathrm{good}}) - \pi_2(a|s_{\mathrm{good}})\big)\cdot Q_{2,P^{\star, \cM_{\theta}},\boldsymbol{\Phi}}^{\pi}(s_{\mathrm{good}},a) = 0,\label{eq: h=2 good hard}
\end{align}
Meanwhile, by \eqref{eq: h=2 bad}, we have that for the bad state, it holds that (recall that $q<p$)
\begin{align}
     &\sum_{a\in\cA}\big(\pi_2^{\star, \cM_{\theta}}(a|s_{\mathrm{bad}}) - \pi_2(a|s_{\mathrm{bad}})\big)\cdot Q_{2,P^{\star, \cM_{\theta}},\boldsymbol{\Phi}}^{\pi}(s_{\mathrm{bad}},a) \\
     &\qquad= \max\big\{p-\rho, q-\rho\big\} - \Big(\pi_2(\theta|s_{\mathrm{bad}})\cdot(p-\rho) + \pi_2(1-\theta|s_{\mathrm{bad}})\cdot(q-\rho)\Big)\\
     &\qquad = p-\rho- \Big(\pi_2(\theta|s_{\mathrm{bad}})\cdot(p-\rho) + \pi_2(1-\theta|s_{\mathrm{bad}})\cdot(q-\rho)\Big) \\
     % &\qquad =(p-\rho)\cdot\big(1-\pi_2(\theta|s_{\mathrm{bad}})\big) - (q-\rho)\cdot \pi_2(1-\theta|s_{\mathrm{bad}})\\
     &\qquad = \frac{p-q}{2}\cdot\bigg(\left|\pi_2^{\star,\cM_{\theta}}(\theta|s_{\mathrm{bad}}) - \pi_2(\theta|s_{\mathrm{bad}})\right| + \left|\pi_2^{\star,\cM_{\theta}}(1-\theta|s_{\mathrm{bad}}) - \pi_2(1-\theta|s_{\mathrm{bad}})\right|\bigg)\\
     & \qquad = (p-q)\cdot D_{\mathrm{TV}}\left(\pi_2^{\star,\cM_{\theta}}(\cdot|s_{\mathrm{bad}}) \middle\|\pi_2(\cdot|s_{\mathrm{bad}})  \right),\label{eq: h=2 bad hard}
\end{align}
where according to \eqref{eq: h=2 bad} the optimal policy of $\cM_\theta$ at $h=2$ and $s_{\mathrm{bad}}$ is $\pi^{\star,\cM_{\theta}}_2(\theta|s_{\mathrm{bad}})=1$.
Now combining \eqref{eq: h=3 hard}, \eqref{eq: h=1 hard}, \eqref{eq: h=2 good hard}, and \eqref{eq: h=2 bad hard} with Lemma~\ref{lem: performance difference}, we can conclude that
\begin{align}
    &V_{1,P^{\star, \cM_{\theta}},\boldsymbol{\Phi}}^{\pi^{\star,\cM_{\theta}}}(s_{\mathrm{good}}) - V_{1,P^{\star, \cM_{\theta}},\boldsymbol{\Phi}}^{\pi}(s_{\mathrm{good}}) \\
    &\qquad\geq \mathbb{E}_{a_1\sim \pi^{\star,\cM_{\theta}}_1(\cdot|s_{\mathrm{good}}), s_2\sim P_1^{\pi^{\star,\cM_{\theta}},\dagger}(\cdot|s_{\mathrm{good}},a_1)}\left[\sum_{a\in\cA}\big(\pi_2^{\star}(a|s_2) - \pi_2(a|s_2)\big)\cdot Q_{2,P^{\star, \cM_{\theta}},\boldsymbol{\Phi}}^{\pi}(s_2,a)\right]\\
    &\qquad = P_1^{\pi^{\star,\cM_{\theta}},\dagger}(s_{\mathrm{bad}}|s_{\mathrm{good}},0)\cdot (p-q)\cdot D_{\mathrm{TV}}\left(\pi_2^{\star,\cM_{\theta}}(\cdot|s_{\mathrm{bad}}) \middle\|\pi_2(\cdot|s_{\mathrm{bad}})\right),\label{eq: lower bound}
\end{align}
where the adversarial transition kernel $P_1^{\pi^{\star,\cM_{\theta}},\dagger}$ is given by
\begin{align}
    P_1^{\pi^{\star,\cM_{\theta}},\dagger}(\cdot|s_{\mathrm{good}},0) &= \argmin_{P\in\cP(s_{\mathrm{good}},0;P_1^{\star,\cM_{\theta}})}\mathbb{E}_{P(\cdot)}\Big[V_{2,P^{\star,\cM_{\theta}},\boldsymbol{\Phi}}^{\pi^{\star,\cM_{\theta}}}\Big] \\
    &= (1-\rho)\cdot\mathbf{1}\{\cdot =s_{\mathrm{good}}\} + \rho\cdot\mathbf{1}\{\cdot =s_{\mathrm{bad}}\}.\label{eq: adversarial kernel}
\end{align}
Consequently, taking \eqref{eq: adversarial kernel} back into \eqref{eq: lower bound}, we have that
\begin{align}
    V_{1,P^{\star, \cM_{\theta}},\boldsymbol{\Phi}}^{\pi^{\star,\cM_{\theta}}}(s_{\mathrm{good}}) - V_{1,P^{\star, \cM_{\theta}},\boldsymbol{\Phi}}^{\pi}(s_{\mathrm{good}})\geq \rho\cdot (p-q)\cdot D_{\mathrm{TV}}\left(\pi_2^{\star,\cM_{\theta}}(\cdot|s_{\mathrm{bad}}) \middle\|\pi_2(\cdot|s_{\mathrm{bad}})\right).
\end{align}
This implies that for any algorithm executing $\pi^1,\cdots,\pi^K$, its online regret is lower bounded by the following,
\begin{align}
    \mathrm{Regret}^{\cM_{\theta},\mathcal{ALG}}_{\boldsymbol{\Phi}}(K) &=\sum_{k=1}^KV_{1,P^{\star, \cM_{\theta}},\boldsymbol{\Phi}}^{\pi^{\star,\cM_{\theta}}}(s_{\mathrm{good}}) - V_{1,P^{\star, \cM_{\theta}},\boldsymbol{\Phi}}^{\pi^k}(s_{\mathrm{good}}) \\
    &\geq \rho\cdot (p-q)\cdot \sum_{k=1}^KD_{\mathrm{TV}}\left(\pi_2^{\star,\cM_{\theta}}(\cdot|s_{\mathrm{bad}}) \middle\|\pi_2^k(\cdot|s_{\mathrm{bad}})\right).
\end{align}
However, since in RMDPs of Example~\ref{exp: hard}, the online interaction process is always kept in $s_{\mathrm{good}}$ and there is no information on $\theta$ which can only be accessed at $(s,h) = (s_{\mathrm{bad}},2)$.
As a result, the estimates $\pi_2^k(\cdot|s_{\mathrm{bad}})$ of $\pi^{\star,\cM_{\theta}}_2(\cdot|s_{\mathrm{bad}}) = \mathbf{1}\{\cdot=\theta\}$ can do no better than a random guess.
Put it formally, consider that
\begin{align}
    &\sup_{\theta\in\{0,1\}}\mathbb{E}_{\cM_{\theta},\mathcal{ALG}}\left[\mathrm{Regret}^{\cM_{\theta},\mathcal{ALG}}_{\boldsymbol{\Phi}}(K)\right] \\&\qquad \geq \rho\cdot (p-q)\cdot\sup_{\theta\in\{0,1\}} \mathbb{E}_{\cM_{\theta},\mathcal{ALG}}\left[\sum_{k=1}^KD_{\mathrm{TV}}\left(\pi_2^{\star,\cM_{\theta}}(\cdot|s_{\mathrm{bad}}) \middle\|\pi_2^k(\cdot|s_{\mathrm{bad}})\right)\right]\\
    &\qquad =\rho\cdot (p-q)\cdot\sup_{\theta\in\{0,1\}} \sum_{k=1}^K\mathbb{E}_{\mathcal{ALG}}\left[\pi_2^k(1-\theta|s_{\mathrm{bad}})\right].\label{eq: lower bound 2}
\end{align}
Here in the last equality we can drop the subscription of $\cM_{\theta}$ because the algorithm outputs $\pi^k_2$ independent of the $\theta$ due to our previous discussion.
Notice that
\begin{align}
    \sum_{\theta\in\{0,1\}}\sum_{k=1}^K\mathbb{E}_{\mathcal{ALG}}\left[\pi_2^k(1-\theta|s_{\mathrm{bad}})\right] =\sum_{k=1}^K\sum_{\theta\in\{0,1\}}\mathbb{E}_{\mathcal{ALG}}\left[\pi_2^k(1-\theta|s_{\mathrm{bad}})\right] = \sum_{k=1}^K 1 = K,
\end{align}
which further indicates that
\begin{align}
    \sup_{\theta\in\{0,1\}} \sum_{k=1}^K\mathbb{E}_{\mathcal{ALG}}\left[\pi_2^k(1-\theta|s_{\mathrm{bad}})\right]\geq \frac{K}{2}.\label{eq: sup lower bound}
\end{align}
Therefore, by combining \eqref{eq: lower bound 2} and \eqref{eq: sup lower bound}, we conclude that
\begin{align}
    \inf_{\mathcal{ALG}}\sup_{\theta\in\{0,1\}}\mathbb{E}_{\cM_{\theta},\mathcal{ALG}}\left[\mathrm{Regret}^{\cM_{\theta},\mathcal{ALG}}_{\boldsymbol{\Phi}}(K)\right] \geq  (p-q)\cdot\frac{\rho K}{2}.
\end{align}
This is the desired online regret lower bound of $\Omega(\rho \cdot K)$ for the RMDPs presented in Example~\ref{exp: hard}. Furthermore, we can construct two RMDPs $\{\widetilde{\mathcal{M}}_0, \widetilde{\mathcal{M}}_1\}$ with horizon $3H$ by concatenating $H$ RMDPs $\{\mathcal{M}_0, \mathcal{M}_1\}$ presented in Example~\ref{exp: hard}. Notably, at any steps $\{3i+1\}_{i=0}^{H-1}$, we define
\$
R_{3i+1}(s_{\mathrm{bad}}, a) = 1, \qquad P_{3i+1}^{\star, \widetilde{\cM}_{\theta}}(s_{\mathrm{good}}|s_{\mathrm{bad}}, a)=1,\quad \forall (a, \theta) \in\cA\times\{0,1\}.
\$
Then we have
\$
\inf_{\mathcal{ALG}}\sup_{\theta\in\{0,1\}}\mathbb{E}_{\widetilde{\cM}_{\theta},\mathcal{ALG}}\left[\mathrm{Regret}^{\widetilde{\cM}_{\theta},\mathcal{ALG}}_{\boldsymbol{\Phi}}(K)\right] \geq H \cdot \Omega(\rho \cdot K) = \Omega(\rho  \cdot HK),
\$
which completes the proof of Theorem~\ref{thm: hard example}.
\end{proof}

\subsection{Proof of Lemma~\ref{lem: performance difference}}\label{subsec: proof lem performance difference}

\begin{proof}[Proof of Lemma~\ref{lem: performance difference}]
    For any step $h\in[H]$, we have that by robust Bellman equation (Proposition~\ref{prop: robust bellman equation}),
    \begin{align}
        Q_{h,P^{\star},\boldsymbol{\Phi}}^{\pi^{\star}}(s, a) - Q_{h,P^{\star},\boldsymbol{\Phi}}^{\pi}(s, a) = \mathbb{E}_{\mathcal{P}_{\rho}(s,a;P_h^{\star})}\big[V_{h+1, P^{\star}, \mathbf{\Phi}}^{\pi^{\star}}\big] - \mathbb{E}_{\mathcal{P}_{\rho}(s,a;P_h^{\star})}\big[V_{h+1, P^{\star}, \mathbf{\Phi}}^{\pi}\big].
    \end{align}
    By the definition of the transition kernel $P^{\pi^{\star},\dagger}$ in Lemma~\ref{lem: performance difference} and the property of infimum, we have that
    \begin{align}
        Q_{h,P^{\star},\boldsymbol{\Phi}}^{\pi^{\star}}(s, a) - Q_{h,P^{\star},\boldsymbol{\Phi}}^{\pi}(s, a) &\geq \mathbb{E}_{P_h^{\pi^{\star},\dagger}(\cdot|s,a)}\big[V_{h+1, P^{\star}, \mathbf{\Phi}}^{\pi^{\star}}\big] - \mathbb{E}_{P_h^{\pi^{\star},\dagger}(\cdot|s,a)}\big[V_{h+1, P^{\star}, \mathbf{\Phi}}^{\pi}\big] \\
        & = \mathbb{E}_{P_h^{\pi^{\star},\dagger}(\cdot|s,a)}\big[V_{h+1, P^{\star}, \mathbf{\Phi}}^{\pi^{\star}} - V_{h+1, P^{\star}, \mathbf{\Phi}}^{\pi}\big].\label{eq: proof performance difference 1}
    \end{align}
    By robust Bellman equation (Proposition~\ref{prop: robust bellman equation}) and \eqref{eq: proof performance difference 1}, we further obtain that
    \begin{align}
        V_{h,P^{\star},\boldsymbol{\Phi}}^{\pi^{\star}}(s) - V_{h,P^{\star},\boldsymbol{\Phi}}^{\pi}(s) &= \mathbb{E}_{\pi_h^{\star}(\cdot|s)}\big[Q_{h, P^{\star}, \mathbf{\Phi}}^{\pi^{\star}}(s, \cdot)\big] - \mathbb{E}_{\pi_h(\cdot|s)}\big[Q_{h, P^{\star}, \mathbf{\Phi}}^{\pi}(s, \cdot)\big] \\
        & = \mathbb{E}_{\pi_h^{\star}(\cdot|s)}\big[Q_{h, P^{\star}, \mathbf{\Phi}}^{\pi}(s, \cdot)\big] - \mathbb{E}_{\pi_h(\cdot|s)}\big[Q_{h, P^{\star}, \mathbf{\Phi}}^{\pi}(s, \cdot)\big]  \\
        &\qquad + \mathbb{E}_{\pi_h^{\star}(\cdot|s)}\big[Q_{h, P^{\star}, \mathbf{\Phi}}^{\pi^{\star}}(s, \cdot)\big] - \mathbb{E}_{\pi_h^{\star}(\cdot|s)}\big[Q_{h, P^{\star}, \mathbf{\Phi}}^{\pi}(s, \cdot)\big] \\
        &\geq \sum_{a\in\cA}\big(\pi^{\star}_h(a|s) - \pi_h(a|s)\big)\cdot Q_{h,P^{\star},\boldsymbol{\Phi}}^{\pi}(s,a)  \\
        &\qquad + \mathbb{E}_{a\sim \pi_h^{\star}(\cdot|s), P_h^{\pi^{\star},\dagger}(\cdot|s,a)}\big[V_{h+1, P^{\star}, \mathbf{\Phi}}^{\pi^{\star}} - V_{h+1, P^{\star}, \mathbf{\Phi}}^{\pi}\big].\label{eq: proof performance difference 2}
    \end{align}
    Thus by recursively applying \eqref{eq: proof performance difference 2} over $h\in[H]$, we can conclude that
    \begin{align}
        V_{1,P^{\star},\boldsymbol{\Phi}}^{\pi^{\star}}(s) - V_{1,P^{\star},\boldsymbol{\Phi}}^{\pi}(s) \geq \mathbb{E}_{(P^{\pi^{\star},\dagger}, \pi^{\star})}\left[\sum_{h=1}^H\sum_{a\in\cA}\big(\pi^{\star}_h(a|s_h) - \pi_h(a|s_h)\big)\cdot Q_{h,P^{\star},\boldsymbol{\Phi}}^{\pi}(s_h,a)\middle| s_1=s \right],
    \end{align}
    which completes the proof of Lemma~\ref{lem: performance difference}.
\end{proof}

\section{Proofs for Theoretical Analysis of OPROVI-TV}\label{sec: proof tv}

In this section, we prove our main theoretical results (Theorem~\ref{thm: regret tv}).
In Appendix~\ref{subsec: proof them regret tv}, we outline the proof of the theorem.
In Appendix~\ref{subsec: key lemmas}, we list all the key lemmas used in the proof of the theorem.
We defer the proof of all the lemmas to subsequent sections (Appendices~\ref{subsec: proof optimism and pessimism restate} to \ref{subsec: other technical lemmas}).

Before presenting all the proofs, we define the typical event $\mathcal{E}$ as
\begin{align}
    \mathcal{E} &= \biggg\{\bigg|\left(\mathbb{E}_{P_h^{\star}(\cdot|s,a)} - \mathbb{E}_{\widehat{P}_h^k(\cdot|s,a)}\right)\Big[\big(\eta - V_{h+1,P^{\star},\boldsymbol{\Phi}}^{\star}\big)_+\Big]\bigg|\leq \sqrt{\frac{\mathbb{V}_{\widehat{P}_h^k(\cdot|s,a)}\Big[\big(\eta - V_{h+1,P^{\star},\boldsymbol{\Phi}}^{\star}\big)_+\Big]\cdot c_1\iota}{N_h^k(s,a)\vee 1}} + \frac{c_2H\iota}{N_h^k(s,a)\vee 1},\\
    &\qquad\left|P_h^{\star}(s'|s,a) - \widehat{P}_h^{k}(s'|s,a)\right| \leq \sqrt{\frac{\min\left\{P_h^{\star}(s'|s,a),\widehat{P}_h^k(s'|s,a)\right\}\cdot c_1\iota}{N_h^k(s,a)\vee 1}} + \frac{c_2\iota}{N_h^k(s,a)\vee 1}, \\
    &\qquad \forall (s,a,s',h,k)\in\cS\times\cA\times\cS\times[H]\times[K],\,\,\forall \eta \in\mathcal{N}_{1/(S\sqrt{K})}\big([0,H]\big)\biggg\}, \quad \iota = \log\big(S^3AH^2K^{3/2}/\delta\big),  \label{eq: typical event main}
\end{align}
where $c_1,c_2>0$ are two absolute constants, $\cN_{1/S\sqrt{K}}([0,H])$ denotes an $1/S\sqrt{K}$-cover of the interval $[0,H]$.

\begin{lemma}[Typical event]\label{lem: typical event}
    For the typical event $\mathcal{E}$ defined in \eqref{eq: typical event main}, it holds that $\mathbb{P}(\mathcal{E})\geq 1-\delta$.
\end{lemma}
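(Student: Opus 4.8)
The plan is to show that each of the two concentration inequalities that together define the event $\mathcal{E}$ in \eqref{eq: typical event main} holds simultaneously over all its indices, and then to combine them by a union bound with $\iota$ calibrated so that the total failure probability is at most $\delta$. Both displayed inequalities are handled by the same three-step template: (i) decouple the adaptive data collection through a fixed-count reservoir coupling, (ii) apply an empirical Bernstein inequality to the relevant bounded statistic, and (iii) union-bound over all index tuples, all admissible visitation counts, and all grid points $\eta$.

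First I would dispose of the adaptivity obstacle. Although $\widehat{P}_h^k$ is assembled from trajectories generated by the history-dependent policies $\{\pi^\tau\}_{\tau<k}$, the crucial point is that the two target statistics are \emph{data-independent}: the function $g_\eta(\cdot) := \big(\eta - V_{h+1,P^\star,\boldsymbol{\Phi}}^\star(\cdot)\big)_+ \in [0,H]$ depends only on the fixed optimal robust value function, and the indicator $\mathbf{1}\{\cdot = s'\}$ depends only on a fixed state. Hence, for each fixed $(s,a,h)$ I can introduce an i.i.d.\ reservoir $X_1, X_2, \ldots \sim P_h^\star(\cdot\mid s,a)$ and couple it to the algorithm (using the Markov property, every visit to $(s,a)$ at step $h$ produces a fresh draw from $P_h^\star(\cdot\mid s,a)$ regardless of how $\pi^\tau$ was chosen) so that on the event $\{N_h^k(s,a) = m\}$ the estimator $\widehat{P}_h^k(\cdot\mid s,a)$ equals the empirical measure of $X_1,\ldots,X_m$. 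This reduces each statement, for every fixed count $m \in [K]$, to ordinary i.i.d.\ concentration; the episode index $k$ is then subsumed, since $N_h^k(s,a)$ always lies in $[K]$ and we union-bound over every admissible count.

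Next, for the value-concentration line I would apply the empirical Bernstein inequality (Maurer--Pontil, or Bernstein combined with a variance-comparison step as in \citealp{azar2017minimax}) to $g_\eta$ under the first $m$ reservoir samples, which yields exactly the stated bound with the \emph{empirical} variance $\mathbb{V}_{\widehat{P}_h^k(\cdot|s,a)}\big[(\eta - V_{h+1,P^\star,\boldsymbol{\Phi}}^\star)_+\big]$ and range $H$. For the transition line I would apply Bernstein's inequality to $\mathbf{1}\{X=s'\}$, which first gives a bound in terms of the population variance $P_h^\star(s'|s,a)(1-P_h^\star(s'|s,a))$; converting this into the $\min\{P_h^\star(s'|s,a),\widehat{P}_h^k(s'|s,a)\}$ form is the standard two-sided quadratic-inequality manipulation (bound the population probability by the empirical one and vice versa), and this is the most delicate of the routine steps. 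In both applications the absolute constants $c_1, c_2$ are chosen large enough to absorb the numerical factors produced by the inequalities.

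Finally I would assemble the union bound. For the value line the events are indexed by $(s,a,h)\in\cS\times\cA\times[H]$ ($SAH$ choices), the count $m\in[K]$ ($K$ choices), and $\eta\in\mathcal{N}_{1/(S\sqrt{K})}([0,H])$ (of cardinality $\mathcal{O}(HS\sqrt{K})$), for a total of $\mathcal{O}(S^2AH^2K^{3/2})$ events; for the transition line they are indexed by $(s,a,s',h)\in\cS\times\cA\times\cS\times[H]$ and $m\in[K]$, for a total of $\mathcal{O}(S^2AHK)$ events. Running each concentration inequality at failure level $\mathcal{O}(e^{-\iota})$ with $\iota = \log(S^3AH^2K^{3/2}/\delta)$ makes the summed failure probability across both families at most $\delta$ (the two families contribute on the order of $\delta/S$ and $\delta/(SH\sqrt{K})$ respectively), which gives $\mathbb{P}(\mathcal{E}) \ge 1-\delta$ and completes the proof. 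The main conceptual hurdle is the decoupling of the second paragraph; once the data-independence of $V_{h+1,P^\star,\boldsymbol{\Phi}}^\star$ is exploited to justify the reservoir coupling, everything else is standard.
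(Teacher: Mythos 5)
Your proposal is correct and follows essentially the same route as the paper, whose proof is simply a one-line invocation of Bernstein's inequality and its empirical version \citep{maurer2009empirical} together with a union bound over $(s,a,s',h,k,\eta)\in\cS\times\cA\times\cS\times[H]\times[K]\times\mathcal{N}_{1/(S\sqrt{K})}([0,H])$, using that the cover has size of order $SH\sqrt{K}$. The additional details you supply --- the fixed-count reservoir coupling to decouple the adaptive data collection, and the quadratic-inequality conversion to the $\min\{P_h^{\star},\widehat{P}_h^k\}$ form --- are exactly the standard steps the paper leaves implicit, so there is no substantive difference in approach.
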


\begin{proof}[Proof of Lemma~\ref{lem: typical event}]
    This is a direct application of Bernstein inequality and its empirical version \citep{maurer2009empirical}, together with a union bound over $(s,a,s',h,k, \eta)\in\cS\times\cA\times\cS\times[H]\times[K]\times\mathcal{N}_{1/(S\sqrt{K})}([0,H])$.
    Note that the size of $\mathcal{N}_{1/(S\sqrt{K})}([0,H])$ is of order $SH\sqrt{K}$.
\end{proof}

In this section, we always let the event $\cE$ hold, which by Lemma~\ref{lem: typical event} is of probability at least $1-\delta$.

\subsection{Proof of Theorem~\ref{thm: regret tv}}\label{subsec: proof them regret tv}

\begin{proof}[Proof of Theorem~\ref{thm: regret tv}]

With Lemma \ref{lem: optimism and pessimism restate} (optimism and pessimism), we can upper bound the regret as
\begin{align}\label{eq: analysis overview 1}
    \mathrm{Regret}_{\boldsymbol{\Phi}}(K) &= \sum_{k=1}^KV_{1,P^{\star},\mathbf{\Phi}}^{\star}(s_1)-V_{1,P^{\star},\mathbf{\Phi}}^{\pi^k}(s_1) \leq \sum_{k=1}^K \overline{V}_1^k(s_1) - \underline{V}_1^k(s_1).
\end{align}
In the sequel, we break our proof into three steps.

\paragraph*{Step 1: upper bounding \eqref{eq: analysis overview 1}.}
According to the choice of $\overline{Q}_h^k$, $\underline{Q}_h^k$, $\overline{V}_h^k$, $\underline{V}_h^k$ in \eqref{eq: Q overline}, \eqref{eq: Q underline}, and \eqref{eq: V}, let's consider that for any $(h,k)\in[H]\times[K]$ and $(s,a)\in\cS\times\cA$,
\allowdisplaybreaks
\begin{align}
    \overline{Q}_h^{k}(s,a) - \underline{Q}_h^{k}(s,a)
    & = \min\bigg\{R_h(s,a) + \mathbb{E}_{\mathcal{P}_{\rho}(s,a;\widehat{P}_h^k)}\Big[\overline{V}_{h+1}^k\Big] + \texttt{bonus}_h^k(s,a),\,\min\big\{H, \rho^{-1}\big\}\bigg\}\\
    &\qquad -
    \max\bigg\{R_h(s,a) +  \mathbb{E}_{\mathcal{P}_{\rho}(s,a;\widehat{P}_h^k)}\Big[\underline{V}_{h+1}^k\Big] - \texttt{bonus}_h^k(s,a),\, 0\bigg\}\\
    &\leq \mathbb{E}_{\mathcal{P}_{\rho}(s,a;\widehat{P}_h^k)}\Big[\overline{V}_{h+1}^k\Big]- \mathbb{E}_{\mathcal{P}_{\rho}(s,a;\widehat{P}_h^k)}\Big[\underline{V}_{h+1}^k\Big] + 2\cdot \texttt{bonus}_h^k(s,a)\\
    &= \underbrace{\mathbb{E}_{\mathcal{P}_{\rho}(s,a;\widehat{P}_h^k)}\Big[\overline{V}_{h+1}^k\Big]- \mathbb{E}_{\mathcal{P}_{\rho}(s,a;P_h^{\star})}\Big[\overline{V}_{h+1}^k\Big] + \mathbb{E}_{\mathcal{P}_{\rho}(s,a;P_h^{\star})}\Big[\underline{V}_{h+1}^k\Big]- \mathbb{E}_{\mathcal{P}_{\rho}(s,a;\widehat{P}_h^k)}\Big[\underline{V}_{h+1}^k\Big]}_{_{\displaystyle{\text{Term (i)}}}}\\
    & \qquad +  \underbrace{\mathbb{E}_{\mathcal{P}_{\rho}(s,a;P_h^{\star})}\Big[\overline{V}_{h+1}^{k}\Big]- \mathbb{E}_{\mathcal{P}_{\rho}(s,a;P_h^{\star})}\Big[\underline{V}_{h+1}^k\Big]}_{\displaystyle{\text{Term (ii)}}}\,\, +\, 2\cdot \texttt{bonus}_h^k(s,a).\label{eq: analysis overview 2}
\end{align}

\paragraph{Step 1.1: upper bounding \text{Term (i)}.}
By using a Bernstein-style concentration argument customized for TV robust expectations (Lemma~\ref{lem: bonus for optimistic and pessimistic value estimators restate}), we can bound Term (i) by the bonus function, i.e.,
\begin{align}
    \text{Term (i)}\leq 2\cdot \texttt{bonus}_h^k(s,a).\label{eq: analysis overview 3}
\end{align}

\paragraph*{Step 1.2: upper bounding \text{Term (ii)}.}
By our definition of the operator $\mathbb{E}_{\mathcal{P}_{\rho}(s,a;P_h^\star)}[V]$ in \eqref{eq: duality tv}, we have
\begin{align}
     \text{Term (ii)} &= \sup_{\eta\in[0,H]}\bigg\{- \mathbb{E}_{P_h^\star(\cdot|s,a)}\bigg[\Big(\eta-\overline{V}^{k}_{h+1}\Big)_+\bigg] + \left(1-\rho\right)\cdot\eta \bigg\} \\
     &\qquad  - \sup_{\eta\in[0,H]}\bigg\{- \mathbb{E}_{P_h^\star(\cdot|s,a)}\bigg[\Big(\eta-\underline{V}^{k}_{h+1}\Big)_+\bigg] + \left(1-\rho\right)\cdot\eta \bigg\} \\
     & \leq \sup_{\eta\in[0,H]}\bigg\{\mathbb{E}_{P_h^\star(\cdot|s,a)}\bigg[\Big(\eta-\underline{V}^{k}_{h+1}\Big)_+ - \Big(\eta-\overline{V}^{k}_{h+1}\Big)_+\bigg]\bigg\}.\label{eq: analysis overview 5}
\end{align}
By Lemma~\ref{lem: optimism and pessimism restate} which shows that $\overline{V}^{k}_{h+1} \geq \underline{V}^{k}_{h+1}$ and the fact that $(\eta - x)_+ - (\eta - y)_+ \leq y-x$ for any $y>x$,
we can further upper bound the right hand side of \eqref{eq: analysis overview 5} by
\begin{align}
    \text{Term (ii)} \leq\mathbb{E}_{P_h^\star(\cdot|s,a)}\Big[\overline{V}^{k}_{h+1} - \underline{V}^{k}_{h+1}\Big].\label{eq: analysis overview 6}
\end{align}

\paragraph*{Step 1.3: combining the upper bounds.}
Now combining \eqref{eq: analysis overview 3} and \eqref{eq: analysis overview 6} with \eqref{eq: analysis overview 2}, we have that
\begin{align}
    \overline{Q}_h^{k}(s,a) - \underline{Q}_h^{k}(s,a) &\leq \mathbb{E}_{P_h^\star(\cdot|s,a)}\Big[\overline{V}_{h+1}^{k} - \underline{V}_{h+1}^{k}\Big] + 4\cdot \texttt{bonus}_h^k(s,a).
\end{align}
By Lemma~\ref{lem: control of bonus restate}, we can  upper bound the bonus function, and after rearranging terms we further obtain that
\begin{align}
    \overline{Q}_h^{k}(s,a) - \underline{Q}_h^{k}(s,a)&\leq  \left(1+\frac{12}{H}\right)\cdot \mathbb{E}_{ P_h^{\star}(\cdot|s,a)}\Big[\overline{V}_{h+1}^{k} - \underline{V}_{h+1}^{k}\Big]  \\
    % &\qquad \leq \mathbb{E}_{P_h^\star(\cdot|s,a)}\Big[\overline{V}_{h+1}^{k} - \underline{V}_{h+1}^{k}\Big] + \frac{12}{H}\cdot \mathbb{E}_{P^{\star}_h(\cdot|s,a)}\Big[\overline{V}_{h+1}^{k} - \underline{V}_{h+1}^{k}\Big] \\
    % &\qquad\qquad + 4\sqrt{\frac{\mathbb{V}_{P^{\star}_h(\cdot|s,a)}\Big[V^{\pi^k}_{h+1,P^{\star},\boldsymbol{\Phi}}\Big]\cdot 2\iota}{N_h^k(s,a)}} + \frac{128H^2S\iota}{3N_h^k(s,a)} + \frac{4}{\sqrt{K}}. \label{eq: analysis overview 6} \\
        &\qquad  + 4\sqrt{\frac{\mathbb{V}_{P^{\star}_h(\cdot|s,a)}\Big[V^{\pi^k}_{h+1,P^{\star},\boldsymbol{\Phi}}\Big]\cdot c_1\iota}{N_h^k(s,a)\vee 1}} + \frac{4c_2H^2S\iota}{N_h^k(s,a)\vee 1} + 4c_3\sqrt{\frac{\iota}{N_h^k(s,a)\vee 1}} + \frac{4}{\sqrt{K}}, \label{eq: analysis overview 7}
\end{align}
where $c_1,c_2,c_3>0$ are absolute constants.
For the sake of brevity, we introduce the following notations of differences, for any $(h,k)\in[H]\times[K]$,
\begin{align}
    \Delta_h^k &:= \overline{V}_h^k(s_h^k) -  \underline{V}_h^k(s_h^k),\label{eq: m d 1}\\
    \zeta_h^k &:= \Delta_h^k - \Big(\overline{Q}_h^k(s_h^k, a_h^k) -  \underline{Q}_h^k(s_h^k, a_h^k)\Big), \label{eq: m d 2}\\
    \xi_h^k &:= \mathbb{E}_{P_h^{\star}(\cdot|s_h^k,a_h^k)}\Big[\overline{V}_{h+1}^k -  \underline{V}_{h+1}^k\Big] - \Delta_{h+1}^k.\label{eq: m d 3}
\end{align}
If we further define the filtration $\{\cF_{h,k}\}_{(h,k)\in[H]\times[K]}$ as
\begin{align}
    \cF_{h,k} = \sigma\left(\{(s_i^{\tau},a_i^{\tau})\}_{(i,\tau)\in[H]\times[k-1]}\bigcup \{(s_i^{k},a_i^{k})\}_{i\in[h-1]}\bigcup\{s_h^k\}\right),
\end{align}
then we can find that $\{\zeta_h^k\}_{(h,k)\in[H]\times[K]}$ is a martingale difference sequence with respect to $\{\cF_{h,k}\}_{(h,k)\in[H]\times[K]}$ and $\{\xi_h^k\}_{(h,k)\in[H]\times[K]}$ is a martingale difference sequence with respect to $\{\cF_{h,k}\cup\{a_h^k\}\}_{(h,k)\in[H]\times[K]}$.
Also, we further have that
\begin{align}
    \Delta_h^k &= \zeta_h^k + \Big(\overline{Q}_h^k(s_h^k, a_h^k) -  \underline{Q}_h^k(s_h^k, a_h^k)\Big) \label{eq: analysis overview 7+} \\
    &\leq \zeta_h^k + \left(1+\frac{12}{H}\right)\cdot \mathbb{E}_{ P_h^{\star}(\cdot|s_h^k,a_h^k)}\Big[\overline{V}_{h+1}^{k} - \underline{V}_{h+1}^{k}\Big] \notag\\
    &\qquad +4\sqrt{\frac{\mathbb{V}_{P^{\star}_h(\cdot|s_h^k,a_h^k)}\Big[V^{\pi^k}_{h+1,P^{\star},\boldsymbol{\Phi}}\Big]\cdot c_1\iota}{N_h^k(s_h^k, a_h^k)\vee 1}}
    + \frac{4c_2H^2S\iota}{N_h^k(s_h^k, a_h^k)\vee 1}
    + 4c_3\sqrt{\frac{\iota}{N_h^k(s_h^k, a_h^k)\vee 1}}
    + \frac{4}{\sqrt{K}} \\
    &=\zeta_h^k + \left(1+\frac{12}{H}\right)\cdot \xi_h^k  + \left(1+\frac{12}{H}\right)\cdot \Delta_{h+1}^k \notag\\
    &\qquad + 4\sqrt{\frac{\mathbb{V}_{P^{\star}_h(\cdot|s_h^k,a_h^k)}\Big[V^{\pi^k}_{h+1,P^{\star},\boldsymbol{\Phi}}\Big]\cdot c_1\iota}{N_h^k(s_h^k, a_h^k)\vee 1}}
    + \frac{4c_2H^2S\iota}{N_h^k(s_h^k, a_h^k)\vee 1}
    + 4c_3\sqrt{\frac{\iota}{N_h^k(s_h^k, a_h^k)\vee 1}}
    + \frac{4}{\sqrt{K}},
\end{align}
where the inequality applies \eqref{eq: analysis overview 7}.
Recursively applying \eqref{eq: analysis overview 7+} and using the fact that $(1+\frac{12}{H})^h\leq (1+\frac{12}{H})^H\leq c$ for some absolute constant $c>0$, we can upper bound the right hand side of \eqref{eq: analysis overview 1} as
\begin{align}
    \mathrm{Regret}_{\boldsymbol{\Phi}}(K)
    &\leq \sum_{k=1}^K\Delta_1^k \notag\\
    &\leq C_1\cdot\sum_{k=1}^K\sum_{h=1}^H \bigg(\zeta_h^k + \xi_h^k
    + \sqrt{\frac{\mathbb{V}_{P^{\star}_h(\cdot|s_h^k,a_h^k)}\Big[V^{\pi^k}_{h+1,P^{\star},\boldsymbol{\Phi}}\Big]\cdot \iota}{N_h^k(s_h^k, a_h^k)\vee 1}} \notag\\
    &\qquad\qquad\qquad\qquad
    + \frac{H^2S\iota}{N_h^k(s_h^k, a_h^k)\vee 1}
    + \sqrt{\frac{\iota}{N_h^k(s_h^k,a_h^k)\vee 1}}
    + \frac{1}{\sqrt{K}}\bigg).\label{eq: analysis overview 8}
    %&\leq C_2\cdot\sum_{k=1}^K\sum_{h=1}^H\mathbb{E}_{(s_h,a_h)\sim(P_h^{\star}, \pi^k)}\left[\sqrt{\frac{\mathbb{V}_{P^{\star}_h(\cdot|s_h,a_h)}\Big[V^{\pi^k}_{h+1,P^{\star},\boldsymbol{\Phi}}\Big]\cdot 2\iota}{N_h^k(s_h,a_h)}}  + \frac{H^2S\iota}{N_h^k(s_h,a_h)} + \frac{1}{\sqrt{K}}\right].\label{eq: analysis overview 8}
\end{align}
where $C_1>0$ is an absolute constant.

\paragraph*{Step 2: controlling the summation of variance terms.}
In view of \eqref{eq: analysis overview 8}, it suffices to upper bound its right hand side.
The key difficulty is the analysis of the summation of the variance terms, which we focus on now.
By Cauchy-Schwartz inequality,
\allowbreak
\begin{align}
    &\sum_{k=1}^K\sum_{h=1}^H\sqrt{\frac{\mathbb{V}_{P^{\star}_h(\cdot|s_h^k,a_h^k)}\Big[V^{\pi^k}_{h+1,P^{\star},\boldsymbol{\Phi}}\Big]}{N_h^k(s_h^k,a_h^k)\vee 1}}  \leq \sqrt{\sum_{k=1}^K\sum_{h=1}^H\mathbb{V}_{P^{\star}_h(\cdot|s_h^k,a_h^k)}\Big[V^{\pi^k}_{h+1,P^{\star},\boldsymbol{\Phi}}\Big]\cdot \sum_{k=1}^K\sum_{h=1}^H\frac{1}{N_h^k(s_h^k,a_h^k)\vee 1}}.\label{eq: analysis overview 9}
\end{align}
On the right hand side of \eqref{eq: analysis overview 9}, the summation of the inverse of the count function is a well bounded term (Lemma~\ref{lem: lem7.5'}).
So the key is to upper bound the the summation of the variance of the robust value functions to obtain a sharp bound.
To this end, we invoke Lemma~\ref{lem: total variance restate} to obtain that with probability at least $1-\delta$,
\begin{align}
         \sum_{k=1}^K\sum_{h = 1}^H \mathbb{V}_{P_h^{\star}(\cdot|s_h^k,a_h^k)}\Big[V_{h+1,P^{\star},\boldsymbol{\Phi}}^{\pi^k}\Big]  \le C_2\cdot  \Big(\min\big\{H,\rho^{-1}\big\}\cdot HK +  \min\big\{H,\rho^{-1}\big\}^3\cdot H\iota\Big),\label{eq: total variance}
\end{align}
where $C_2>0$ is an absolute constant.
With inequality \eqref{eq: total variance} and Lemma \ref{lem: lem7.5'} that
\begin{align}
    \sum_{k=1}^K\sum_{h=1}^H\frac{1}{N_h^k(s_h^k, a_h^k)\vee 1} \leq C_2'\cdot HSA\iota,
\end{align}
with $C_2'>0$ being another constant, we can  upper bound the summation of the variance terms \eqref{eq: analysis overview 9} as
\begin{align}
    &\sum_{k=1}^K\sum_{h=1}^H\sqrt{\frac{\mathbb{V}_{P^{\star}_h(\cdot|s_h^k,a_h^k)}\Big[V^{\pi^k}_{h+1,P^{\star},\boldsymbol{\Phi}}\Big]}{N_h^k(s_h^k,a_h^k)\vee 1}}  \leq C_3  \sqrt{\min\big\{H,\rho^{-1}\big\}\cdot H^2SAK\iota +  \min\big\{H,\rho^{-1}\big\}^3\cdot H^2SA\iota^2}.\quad\quad\label{eq: analysis overview 10}
\end{align}
where $C_3>0$ is also an absolute constant.

\paragraph{Step 3: finishing the proof.}
With \eqref{eq: analysis overview 8} and \eqref{eq: analysis overview 10}, it suffices to control the remaining terms.
For the summation of the martingale difference terms, notice that by the definitions in \eqref{eq: m d 2} and \eqref{eq: m d 3}, both $\zeta_h^k$ and $\xi_h^k$ are bounded by $\min\{H,\rho^{-1}\}$ according to \eqref{eq: Q overline} and Lemma~\ref{lem: optimism and pessimism restate} (optimism and pessimism).
As a result, using Azuma-Hoeffding inequality, with probability at least $1-\delta$
\begin{align}
    \sum_{k=1}^K\sum_{h=1}^H(\zeta_h^k + \xi_h^k) \leq C_4\cdot \min\big\{H,\rho^{-1}\big\}\cdot \sqrt{HK\iota},
\end{align}
where $C_4>0$ is an absolute constant.
For the summation of the inverse of the count function in \eqref{eq: analysis overview 8}, it suffices to invoke again Lemma~\ref{lem: lem7.5'}.
{
For the additional square-root count term in \eqref{eq: analysis overview 8}, Cauchy--Schwarz and Lemma~\ref{lem: lem7.5'} give
\begin{align}
    \sum_{k=1}^K\sum_{h=1}^H
    \sqrt{\frac{\iota}{N_h^k(s_h^k,a_h^k)\vee 1}}
    \leq
    \sqrt{HK\iota\cdot\sum_{k=1}^K\sum_{h=1}^H
    \frac{1}{N_h^k(s_h^k,a_h^k)\vee 1}}
    \leq C_4'H\sqrt{SAK\iota^2},
\end{align}
for an absolute constant $C_4'>0$, which is absorbed by the leading term after adjusting logarithmic factors.
}
Combining all together, with probability at least $1-3\delta$, we have
\begin{align}
    \mathrm{Regret}_{\boldsymbol{\Phi}}(K) &\leq  C_5\cdot \bigg( \sqrt{\min\big\{H,\rho^{-1}\big\}\cdot H^2SAK\iota^2 +  \min\big\{H,\rho^{-1}\big\}^3\cdot H^2SA\iota^3}\\
    &\qquad  + \min\big\{H, \rho^{-1}\big\}\cdot \sqrt{HK\iota} + H^3S^2A\iota^2 + H\sqrt{SAK\iota^2} + H\sqrt{K}\bigg) \\
    & = \mathcal{O}\left(\sqrt{\min\big\{H,\rho^{-1}\big\}\cdot  H^2SAK\iota'} \right),
\end{align}
where $C_5>0$ is an absolute constant and $\iota' = \log^2(SAHK/\delta)$.
This completes the proof of Theorem~\ref{thm: regret tv}.
\end{proof}

\subsection{Key Lemmas}\label{subsec: key lemmas}

\begin{lemma}[Optimistic and pessimistic estimation of the robust values]\label{lem: optimism and pessimism restate}
    By setting the $\mathtt{bonus}_h^k$ as in \eqref{eq: bernstein bonus}, then under the typical event $\cE$, it holds that
    \begin{align}\label{eq: optimism and pessimism}
        \!\!\!\!\!\!\!\!\!\!\underline{Q}_h^k(s,a) \leq Q^{\pi^k}_{h,P^{\star},\boldsymbol{\Phi}}(s,a) \leq  Q^{\star}_{h,P^{\star},\boldsymbol{\Phi}}(s,a)\leq \overline{Q}^{k}_h(s,a), \quad \underline{V}_h^k(s) \leq V^{\pi^k}_{h,P^{\star},\boldsymbol{\Phi}}(s) \leq V^{\star}_{h,P^{\star},\boldsymbol{\Phi}}(s) \leq \overline{V}^{k}_h(s),
    \end{align}
    for any $(s,a,h,k)\in\mathcal{S}\times\mathcal{A}\times[H]\times[K]$.
\end{lemma}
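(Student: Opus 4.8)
The plan is to prove the optimism/pessimism sandwich \eqref{eq: optimism and pessimism} by backward induction on the step $h$, from $h = H+1$ down to $h=1$, simultaneously establishing all four inequalities for the $Q$-functions and the $V$-functions at each step. The base case $h = H+1$ is immediate since $\overline{V}_{H+1}^k = \underline{V}_{H+1}^k = 0 = V_{H+1,P^\star,\boldsymbol{\Phi}}^{\pi^k} = V_{H+1,P^\star,\boldsymbol{\Phi}}^\star$. For the inductive step, I would assume the $V$-function sandwich holds at step $h+1$, namely $\underline{V}_{h+1}^k(s) \le V_{h+1,P^\star,\boldsymbol{\Phi}}^{\pi^k}(s) \le V_{h+1,P^\star,\boldsymbol{\Phi}}^\star(s) \le \overline{V}_{h+1}^k(s)$ for all $s$, and deduce the corresponding statements at step $h$.

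For the upper bound $Q_{h,P^\star,\boldsymbol{\Phi}}^\star(s,a) \le \overline{Q}_h^k(s,a)$, I would compare the robust Bellman optimal equation (Proposition~\ref{prop: robust bellman optimal equation}), which gives $Q_{h,P^\star,\boldsymbol{\Phi}}^\star(s,a) = R_h(s,a) + \mathbb{E}_{\mathcal{P}_\rho(s,a;P_h^\star)}[V_{h+1,P^\star,\boldsymbol{\Phi}}^\star]$, with the definition \eqref{eq: Q overline} of $\overline{Q}_h^k$. The difference decomposes into two parts: a \emph{monotonicity} part, where the inductive hypothesis $V_{h+1,P^\star,\boldsymbol{\Phi}}^\star \le \overline{V}_{h+1}^k$ together with the monotonicity of the robust expectation operator $\mathbb{E}_{\mathcal{P}_\rho(s,a;\cdot)}[\cdot]$ in its value argument (apparent from the dual form \eqref{eq: duality tv}) gives $\mathbb{E}_{\mathcal{P}_\rho(s,a;P_h^\star)}[V_{h+1,P^\star,\boldsymbol{\Phi}}^\star] \le \mathbb{E}_{\mathcal{P}_\rho(s,a;P_h^\star)}[\overline{V}_{h+1}^k]$; and an \emph{estimation} part controlling $\mathbb{E}_{\mathcal{P}_\rho(s,a;P_h^\star)}[\overline{V}_{h+1}^k] - \mathbb{E}_{\mathcal{P}_\rho(s,a;\widehat{P}_h^k)}[\overline{V}_{h+1}^k]$, which I would bound by $\texttt{bonus}_h^k(s,a)$ on the typical event $\cE$. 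This is where the structure of the bonus \eqref{eq: bernstein bonus} matters: under $\cE$ the Bernstein-type concentration (the first line of \eqref{eq: typical event main}, applied at the optimal dual variable $\eta$ after discretizing over the $1/(S\sqrt K)$-cover) supplies exactly the variance term plus the $H^2 S \iota / N$ term, while the $\mathbb{E}[\overline{V}-\underline{V}]/H$ term and the $1/\sqrt K$ term absorb the difference between the true value and the variance proxy $(\overline{V}+\underline{V})/2$ used in the bonus. The final truncation against $\min\{H,\rho^{-1}\}$ is harmless because $Q_{h,P^\star,\boldsymbol{\Phi}}^\star(s,a) \le \min\{H,\rho^{-1}\}$ by Proposition~\ref{prop: gap} combined with $\min_{(s,a)} Q_{h,P^\star,\boldsymbol{\Phi}}^\star(s,a) = 0$ under Assumption~\ref{ass: zero min}.

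For the lower bound $\underline{Q}_h^k(s,a) \le Q_{h,P^\star,\boldsymbol{\Phi}}^{\pi^k}(s,a)$, I would argue symmetrically using the robust Bellman equation for $\pi^k$ (Proposition~\ref{prop: robust bellman equation}): $\underline{Q}_h^k$ uses $\overline{V}_{h+1}^k$ in its definition \eqref{eq: Q underline}, so I would first replace it by $\underline{V}_{h+1}^k$ at the cost of a term $\mathbb{E}_{\mathcal{P}_\rho(s,a;\widehat{P}_h^k)}[\overline{V}_{h+1}^k] - \mathbb{E}_{\mathcal{P}_\rho(s,a;\widehat{P}_h^k)}[\underline{V}_{h+1}^k]$, then subtract the bonus to dominate both this replacement gap and the estimation error, finally using the inductive hypothesis $\underline{V}_{h+1}^k \le V_{h+1,P^\star,\boldsymbol{\Phi}}^{\pi^k}$. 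The truncation against $0$ is harmless since $Q_{h,P^\star,\boldsymbol{\Phi}}^{\pi^k} \ge 0$. The middle inequality $Q_{h,P^\star,\boldsymbol{\Phi}}^{\pi^k} \le Q_{h,P^\star,\boldsymbol{\Phi}}^\star$ is just the definition of the optimal robust policy. Propagating from $Q$ to $V$ at step $h$ is then routine: $\overline{V}_h^k$ and $\underline{V}_h^k$ are defined in \eqref{eq: V} via $\pi_h^k = \argmax_a \overline{Q}_h^k(\cdot,a)$, and since $\pi_h^k$ is greedy with respect to $\overline{Q}_h^k$ we get $\overline{V}_h^k(s) = \max_a \overline{Q}_h^k(s,a) \ge \max_a Q_{h,P^\star,\boldsymbol{\Phi}}^\star(s,a) = V_{h,P^\star,\boldsymbol{\Phi}}^\star(s)$, while $\underline{V}_h^k(s) = \mathbb{E}_{\pi_h^k}[\underline{Q}_h^k(s,\cdot)] \le \mathbb{E}_{\pi_h^k}[Q_{h,P^\star,\boldsymbol{\Phi}}^{\pi^k}(s,\cdot)] = V_{h,P^\star,\boldsymbol{\Phi}}^{\pi^k}(s)$, using Proposition~\ref{prop: robust bellman equation} for the last equality since $\pi^k$ is the executed policy.

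I expect the main obstacle to be the estimation part, i.e.\ verifying that the concrete bonus \eqref{eq: bernstein bonus} truly dominates the robust-expectation estimation error $|\mathbb{E}_{\mathcal{P}_\rho(s,a;P_h^\star)}[\,\cdot\,] - \mathbb{E}_{\mathcal{P}_\rho(s,a;\widehat{P}_h^k)}[\,\cdot\,]|$ on the event $\cE$. The subtlety is that the dual representation \eqref{eq: duality tv} turns this into a supremum over $\eta \in [0,H]$ of differences of expectations of $(\eta - \overline{V}_{h+1}^k)_+$ under $P_h^\star$ versus $\widehat{P}_h^k$, so I must (i) discretize $\eta$ over the cover $\mathcal{N}_{1/(S\sqrt K)}([0,H])$ and control the discretization error by the additive $1/\sqrt K$ term in the bonus, (ii) handle the fact that the concentration event $\cE$ is stated for the \emph{true} optimal value $V_{h+1,P^\star,\boldsymbol{\Phi}}^\star$ rather than the random estimate $\overline{V}_{h+1}^k$, which requires either a covering argument over value functions or a self-bounding trick that replaces the empirical variance of $\overline{V}_{h+1}^k$ by that of the proxy $(\overline{V}_{h+1}^k + \underline{V}_{h+1}^k)/2$ plus the gap term $\mathbb{E}_{\widehat{P}_h^k}[\overline{V}_{h+1}^k - \underline{V}_{h+1}^k]/H$, and (iii) convert between variances under $P_h^\star$ and $\widehat{P}_h^k$. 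These are exactly the delicate steps that the separately-stated Lemma~\ref{lem: bonus for optimistic and pessimistic value estimators restate} is designed to encapsulate, so in the body of this proof I would invoke it as a black box and defer its verification; the remaining monotonicity, truncation, and $Q$-to-$V$ propagation arguments are the routine skeleton carried by the induction.
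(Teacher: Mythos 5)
Your overall strategy coincides with the paper's: backward induction on $h$, with the three ingredients being monotonicity of the robust expectation operator, concentration on the event $\cE$, and the observation that the truncations at $\min\{H,\rho^{-1}\}$ and at $0$ are harmless by Proposition~\ref{prop: gap} and Assumption~\ref{ass: zero min}. Your $Q$-to-$V$ propagation and your identification of the covering/self-bounding subtleties behind the concentration step are also exactly what the paper does. One structural difference in the upper bound: you apply the induction hypothesis $V^{\star}_{h+1}\le\overline{V}^k_{h+1}$ under the \emph{true} operator $\mathbb{E}_{\mathcal{P}_\rho(s,a;P_h^\star)}$ and are then left to control the estimation error for the \emph{random} function $\overline{V}^k_{h+1}$, whereas the paper applies it under the \emph{empirical} operator $\mathbb{E}_{\mathcal{P}_\rho(s,a;\widehat{P}_h^k)}$, reducing the estimation error to the fixed function $V^{\star}_{h+1,P^\star,\boldsymbol{\Phi}}$ (Lemma~\ref{lem: bernstein bonus for optimal value}), which avoids the covering-over-value-functions issue entirely and only needs Lemma~\ref{lem: variance analysis 1} to swap in the variance proxy. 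Your route is still viable because the paper's Lemma~\ref{lem: bernstein bound for opt and pess value} supplies the needed one-sided bounds for $\overline{V}^k_{h+1}$ (note that the summed statement of Lemma~\ref{lem: bonus for optimistic and pessimistic value estimators restate} alone does not bound each difference individually), but it is the more delicate of the two decompositions.

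The one genuine gap is in your lower bound. You propose to start from \eqref{eq: Q underline} as written (with $\overline{V}^k_{h+1}$ inside the robust expectation), replace $\overline{V}^k_{h+1}$ by $\underline{V}^k_{h+1}$ at the cost of $\mathbb{E}_{\mathcal{P}_\rho(s,a;\widehat{P}_h^k)}[\overline{V}^k_{h+1}]-\mathbb{E}_{\mathcal{P}_\rho(s,a;\widehat{P}_h^k)}[\underline{V}^k_{h+1}]$, and let the subtracted bonus absorb both this replacement gap and the estimation error. This does not work quantitatively: the bonus \eqref{eq: bernstein bonus} contains only $\frac{2}{H}\mathbb{E}_{\widehat{P}_h^k}[\overline{V}^k_{h+1}-\underline{V}^k_{h+1}]$, a $\Theta(1/H)$ fraction of the gap you need to absorb, and the other bonus terms vanish as $N_h^k\to\infty$ and $K\to\infty$ while the gap need not. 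Indeed, if $\underline{Q}_h^k$ genuinely used $\overline{V}^k_{h+1}$, the pessimism claim $\underline{Q}_h^k\le Q^{\pi^k}_{h,P^\star,\boldsymbol{\Phi}}$ would be false in general. The resolution is that the paper's own proof treats \eqref{eq: Q underline} as $R_h(s,a)+\mathbb{E}_{\mathcal{P}_\rho(s,a;\widehat{P}_h^k)}[\underline{V}^k_{h+1}]-\texttt{bonus}_h^k(s,a)$ (the $\overline{V}$ in the display is evidently a typo), after which the lower bound proceeds symmetrically to the upper bound: induction gives $\mathbb{E}_{\mathcal{P}_\rho(s,a;\widehat{P}_h^k)}[\underline{V}^k_{h+1}]\le\mathbb{E}_{\mathcal{P}_\rho(s,a;\widehat{P}_h^k)}[V^{\pi^k}_{h+1,P^\star,\boldsymbol{\Phi}}]$ and the remaining error is the estimation error for $V^{\pi^k}_{h+1,P^\star,\boldsymbol{\Phi}}$, handled by Lemma~\ref{lem: bernstein bound for value of pi k}. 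You should adopt that reading rather than patch the replacement gap with the bonus.
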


\begin{proof}[Proof of Lemma~\ref{lem: optimism and pessimism restate}]
    See Appendix~\ref{subsec: proof optimism and pessimism restate} for a detailed proof.
\end{proof}

\begin{lemma}[Proper bonus for TV robust sets and optimistic and pessimistic value estimators]\label{lem: bonus for optimistic and pessimistic value estimators restate}
    By setting the $\mathtt{bonus}_h^k$ as in \eqref{eq: bernstein bonus}, then under the typical event $\cE$, it holds that
    \begin{align}
        \mathbb{E}_{\mathcal{P}_{\rho}(s,a;\widehat{P}_h^k)}\Big[\overline{V}_{h+1}^k\Big]- \mathbb{E}_{\mathcal{P}_{\rho}(s,a;P_h^{\star})}\Big[\overline{V}_{h+1}^k\Big] + \mathbb{E}_{\mathcal{P}_{\rho}(s,a;P_h^{\star})}\Big[\underline{V}_{h+1}^k\Big]- \mathbb{E}_{\mathcal{P}_{\rho}(s,a;\widehat{P}_h^k)}\Big[\underline{V}_{h+1}^k\Big] \leq 2\cdot\mathtt{bonus}_h^k(s,a),
    \end{align}
\end{lemma}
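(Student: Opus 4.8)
The plan is to exploit the dual representation \eqref{eq: duality tv}, under which each robust expectation is a supremum over a scalar $\eta\in[0,H]$ of $-\mathbb{E}_{P(\cdot|s,a)}[(\eta-V)_+]+(1-\rho/2)\eta$. First I would reduce the left-hand side to differences of clipped-function expectations: letting $\overline{\eta}$ attain the supremum for $\widehat{P}_h^k$ with $V=\overline{V}_{h+1}^k$ and $\underline{\eta}$ attain it for $P_h^\star$ with $V=\underline{V}_{h+1}^k$, the elementary inequality $\sup f-\sup g\le f(\eta^\ast)-g(\eta^\ast)$ together with cancellation of the common linear $(1-\rho/2)\eta$ terms gives
\[
\text{LHS}\le(\mathbb{E}_{P_h^\star}-\mathbb{E}_{\widehat{P}_h^k})\big[(\overline{\eta}-\overline{V}_{h+1}^k)_+\big]+(\mathbb{E}_{\widehat{P}_h^k}-\mathbb{E}_{P_h^\star})\big[(\underline{\eta}-\underline{V}_{h+1}^k)_+\big],
\]
where the expectations are over the next state drawn from the respective kernels at $(s,a)$. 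The two summands are handled identically, so I describe only the first.

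Next I would discretize: since $(\eta-\cdot)_+$ is $1$-Lipschitz in $\eta$, rounding $\overline{\eta}$ to the nearest point $\tilde{\eta}$ of the cover $\mathcal{N}_{1/(S\sqrt{K})}([0,H])$ costs at most $1/(S\sqrt{K})\le 1/\sqrt{K}$ in each expectation, producing the $1/\sqrt{K}$ term of the bonus. I then split the clipped estimate around the optimal value function, $(\tilde{\eta}-\overline{V}_{h+1}^k)_+=(\tilde{\eta}-V_{h+1,P^\star,\boldsymbol{\Phi}}^\star)_+ + \big[(\tilde{\eta}-\overline{V}_{h+1}^k)_+-(\tilde{\eta}-V_{h+1,P^\star,\boldsymbol{\Phi}}^\star)_+\big]$. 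For the first, grid-point, piece I apply the Bernstein bound in the event $\mathcal{E}$ to get $\sqrt{\mathbb{V}_{\widehat{P}_h^k}[(\tilde{\eta}-V_{h+1,P^\star,\boldsymbol{\Phi}}^\star)_+]\,c_1\iota/(N_h^k\vee1)}+c_2H\iota/(N_h^k\vee1)$; by variance contraction under $1$-Lipschitz maps, $\mathbb{V}_{\widehat{P}_h^k}[(\tilde{\eta}-V_{h+1,P^\star,\boldsymbol{\Phi}}^\star)_+]\le\mathbb{V}_{\widehat{P}_h^k}[V_{h+1,P^\star,\boldsymbol{\Phi}}^\star]$, and since Lemma~\ref{lem: optimism and pessimism restate} gives $\underline{V}_{h+1}^k\le V_{h+1,P^\star,\boldsymbol{\Phi}}^\star\le\overline{V}_{h+1}^k$, I bound $\mathbb{V}_{\widehat{P}_h^k}[V_{h+1,P^\star,\boldsymbol{\Phi}}^\star]\le 2\mathbb{V}_{\widehat{P}_h^k}[(\overline{V}_{h+1}^k+\underline{V}_{h+1}^k)/2]+\tfrac{H}{2}\mathbb{E}_{\widehat{P}_h^k}[\overline{V}_{h+1}^k-\underline{V}_{h+1}^k]$, using $|V_{h+1,P^\star,\boldsymbol{\Phi}}^\star-(\overline{V}_{h+1}^k+\underline{V}_{h+1}^k)/2|\le(\overline{V}_{h+1}^k-\underline{V}_{h+1}^k)/2$ and Proposition~\ref{prop: gap} to cap the gap by $H$. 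An AM--GM split of the resulting square root then reproduces a constant multiple of the leading variance term of \eqref{eq: bernstein bonus}, a $\mathbb{E}_{\widehat{P}_h^k}[\overline{V}_{h+1}^k-\underline{V}_{h+1}^k]/H$ term, and the $H^2S\iota/(N_h^k\vee1)$ lower-order term.

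For the correction piece I would use that $|(\tilde{\eta}-\overline{V}_{h+1}^k)_+-(\tilde{\eta}-V_{h+1,P^\star,\boldsymbol{\Phi}}^\star)_+|\le \overline{V}_{h+1}^k-\underline{V}_{h+1}^k$ pointwise, so the difference of expectations is at most $\sum_{s'\in\cS}|P_h^\star(s'|s,a)-\widehat{P}_h^k(s'|s,a)|\,(\overline{V}_{h+1}^k-\underline{V}_{h+1}^k)(s')$; plugging in the per-coordinate concentration of $\mathcal{E}$, bounding $\min\{P_h^\star,\widehat{P}_h^k\}\le\widehat{P}_h^k$, and applying AM--GM coordinatewise yields $\tfrac{1}{2H}\mathbb{E}_{\widehat{P}_h^k}[\overline{V}_{h+1}^k-\underline{V}_{h+1}^k]+\tfrac{H^2Sc_1\iota}{2(N_h^k\vee1)}$, again absorbed into the gap term and the $H^2S\iota/(N_h^k\vee1)$ term of the bonus. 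Summing the $\overline{V}$ and $\underline{V}$ contributions and choosing $c_1,c_2$ large enough to swallow the accumulated constants then gives the bound by $2\,\texttt{bonus}_h^k(s,a)$. I expect the main obstacle to be this middle step: the event $\mathcal{E}$ only controls the empirical process for the fixed optimal value function $V_{h+1,P^\star,\boldsymbol{\Phi}}^\star$ at grid points, so the crux is to pay for replacing it by the random, data-dependent estimates $\overline{V}_{h+1}^k,\underline{V}_{h+1}^k$ while routing every leftover term into a specific piece of the carefully engineered bonus, with the correct $S$ and $H$ dependence entering exactly the gap term $2\mathbb{E}_{\widehat{P}_h^k}[\overline{V}_{h+1}^k-\underline{V}_{h+1}^k]/H$ and the $H^2S\iota/(N_h^k\vee1)$ term.
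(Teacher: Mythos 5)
Your proposal is correct and follows essentially the same route as the paper: the paper's proof simply invokes Lemma~\ref{lem: bernstein bound for opt and pess value} (dual representation, covering argument for the $1/\sqrt{K}$ term, splitting the clipped function around $V^{\star}_{h+1,P^{\star},\boldsymbol{\Phi}}$, and per-coordinate concentration for the correction) followed by Lemma~\ref{lem: variance analysis 1} to swap $\mathbb{V}_{\widehat{P}_h^k}[V^{\star}_{h+1,P^{\star},\boldsymbol{\Phi}}]$ for $\mathbb{V}_{\widehat{P}_h^k}[(\overline{V}_{h+1}^k+\underline{V}_{h+1}^k)/2]$ plus an $O(H)\cdot\mathbb{E}_{\widehat{P}_h^k}[\overline{V}_{h+1}^k-\underline{V}_{h+1}^k]$ remainder, and then the same $\sqrt{a+b}\le\sqrt{a}+\sqrt{b}$ and $\sqrt{ab}\le a+b$ manipulations. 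You have merely inlined those two lemmas, with immaterial differences in the constants of the variance-comparison step.
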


\begin{proof}[Proof of Lemma~\ref{lem: bonus for optimistic and pessimistic value estimators restate}]
    See Appendix~\ref{subsec: proof bonus for optimistic and pessimistic value estimators restate} for a detailed proof.
\end{proof}

\begin{lemma}[Control of the bonus term]\label{lem: control of bonus restate}
    Under the typical event $\cE$, the $\mathtt{bonus}_h^k$ in \eqref{eq: bernstein bonus} is bounded by
    \begin{align}
        \mathtt{bonus}_h^k(s,a)
        &\leq \sqrt{\frac{\mathbb{V}_{P^{\star}_h(\cdot|s,a)}\Big[V^{\pi^k}_{h+1,P^{\star},\boldsymbol{\Phi}}\Big]\cdot c_1\iota}{N_h^k(s,a)\vee 1}}  + \frac{4\cdot \mathbb{E}_{P^{\star}_h(\cdot|s,a)}\Big[\overline{V}_{h+1}^{k} \!-\! \underline{V}_{h+1}^{k}\Big]}{H} \nonumber\\
        &\qquad + \frac{c_2H^2S\iota}{N_h^k(s,a)\vee 1} + c_3\sqrt{\frac{\iota}{N_h^k(s,a)\vee 1}} + \frac{1}{\sqrt{K}},
    \end{align}
    where $\iota = \log(S^3AH^2K^{3/2}/\delta)$ and $c_1, c_2,c_3>0$ are absolute constants.
\end{lemma}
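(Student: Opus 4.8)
The plan is to bound the bonus term by term, converting every empirical quantity (the estimated kernel $\widehat{P}_h^k$ and the midpoint function $(\overline{V}_{h+1}^k+\underline{V}_{h+1}^k)/2$) into its population counterpart under the typical event $\cE$. The target bound differs from the definition \eqref{eq: bernstein bonus} in three places: the variance is taken under $\widehat{P}_h^k$ rather than $P_h^\star$, it uses the midpoint of the two value estimates rather than $V^{\pi^k}_{h+1,P^\star,\boldsymbol{\Phi}}$, and the expectation of the gap $\overline{V}_{h+1}^k-\underline{V}_{h+1}^k$ is under $\widehat{P}_h^k$ rather than $P_h^\star$. Throughout I will use that $\overline{V}_{h+1}^k-\underline{V}_{h+1}^k\in[0,\min\{H,\rho^{-1}\}]\subseteq[0,H]$, which follows from the truncation in \eqref{eq: Q overline}, the choice $\underline{V}^k\ge 0$, and Proposition~\ref{prop: gap}.

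First I would handle the linear expectation term. Writing $W=\overline{V}_{h+1}^k-\underline{V}_{h+1}^k\in[0,H]$, the per-coordinate concentration in $\cE$ gives $|\widehat{P}_h^k(s'|s,a)-P_h^\star(s'|s,a)|\le\sqrt{\min\{P_h^\star,\widehat{P}_h^k\}\,c_1\iota/(N_h^k(s,a)\vee1)}+c_2\iota/(N_h^k(s,a)\vee1)$. Multiplying by $W(s')$, summing over $s'$, bounding $\min\{P_h^\star,\widehat{P}_h^k\}\le P_h^\star$, and applying AM--GM to the square-root term in the standard way yields a multiplicative-plus-additive bound $\mathbb{E}_{\widehat{P}_h^k(\cdot|s,a)}[W]\le(1+1/H)\,\mathbb{E}_{P_h^\star(\cdot|s,a)}[W]+c\,H^2S\iota/(N_h^k(s,a)\vee1)$. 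Hence $2\mathbb{E}_{\widehat{P}_h^k}[W]/H\le 3\mathbb{E}_{P_h^\star}[W]/H+c'\,H^2S\iota/(N_h^k(s,a)\vee1)$ for $H\ge 2$, which contributes to the second term of the target while the additive error is absorbed into $c_2H^2S\iota/(N_h^k(s,a)\vee1)$.

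Next I would treat the variance term. Setting $g=(\overline{V}_{h+1}^k+\underline{V}_{h+1}^k)/2$, I apply the triangle inequality for standard deviations twice: a change of measure $\sqrt{\mathbb{V}_{\widehat{P}_h^k}[g]}\le\sqrt{\mathbb{V}_{P_h^\star}[g]}+(\text{error})$, and a change of function $\sqrt{\mathbb{V}_{P_h^\star}[g]}\le\sqrt{\mathbb{V}_{P_h^\star}[V^{\pi^k}_{h+1,P^\star,\boldsymbol{\Phi}}]}+\sqrt{\mathbb{V}_{P_h^\star}[g-V^{\pi^k}_{h+1,P^\star,\boldsymbol{\Phi}}]}$. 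The first of these reproduces the leading term $\sqrt{\mathbb{V}_{P_h^\star}[V^{\pi^k}_{h+1,P^\star,\boldsymbol{\Phi}}]\,c_1\iota/(N_h^k(s,a)\vee1)}$ with the same constant $c_1$. For the function-change remainder, optimism and pessimism (Lemma~\ref{lem: optimism and pessimism restate}) give $\underline{V}_{h+1}^k\le V^{\pi^k}_{h+1,P^\star,\boldsymbol{\Phi}}\le\overline{V}_{h+1}^k$ and $\underline{V}_{h+1}^k\le g\le\overline{V}_{h+1}^k$, so $|g-V^{\pi^k}_{h+1,P^\star,\boldsymbol{\Phi}}|\le W/2$ pointwise; therefore $\mathbb{V}_{P_h^\star}[g-V^{\pi^k}_{h+1,P^\star,\boldsymbol{\Phi}}]\le\mathbb{E}_{P_h^\star}[(g-V^{\pi^k}_{h+1,P^\star,\boldsymbol{\Phi}})^2]\le H\,\mathbb{E}_{P_h^\star}[W]/4$ using $W\le H$, and one more Young's inequality turns $\sqrt{(H/4)\mathbb{E}_{P_h^\star}[W]\cdot c_1\iota/(N_h^k(s,a)\vee1)}$ into $\mathbb{E}_{P_h^\star}[W]/H$ plus a term of order $H^2\iota/(N_h^k(s,a)\vee1)$. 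Combining with the expectation-term contribution produces exactly the coefficient $4/H$ in the target.

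The main obstacle is the variance change-of-measure step. Controlling $|\mathbb{V}_{\widehat{P}_h^k}[g]-\mathbb{V}_{P_h^\star}[g]|$ via the per-coordinate bound only gives an error of order $H^2\sqrt{S\iota/(N_h^k(s,a)\vee1)}$, and after the outer square root and the division by $\sqrt{N_h^k(s,a)\vee1}$ this produces an intermediate term of order $(N_h^k(s,a)\vee1)^{-3/4}$ that is neither $O(1/N_h^k(s,a))$ nor directly absorbable. This is precisely what the $1/\sqrt{K}$ slack built into the bonus \eqref{eq: bernstein bonus} is designed to absorb, via a final Young's inequality that splits such a term into a piece bounded by $1/\sqrt{K}$ and a piece of order $H^2S\iota/(N_h^k(s,a)\vee1)$. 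Care must be taken so that the triangle inequality isolates $\sqrt{\mathbb{V}_{P_h^\star}[V^{\pi^k}_{h+1,P^\star,\boldsymbol{\Phi}}]\,c_1\iota/(N_h^k(s,a)\vee1)}$ with its leading constant $c_1$ unchanged, while all cross terms are steered into the additive $c_2H^2S\iota/(N_h^k(s,a)\vee1)$ and $1/\sqrt{K}$ buckets under a sufficiently large choice of $c_2$.
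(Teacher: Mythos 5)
Your overall plan mirrors the paper's: the expectation term is converted to $P^\star_h$ exactly as in the paper's Lemma~\ref{lem: non-robust bound 1} (yielding the $3/H$ contribution), and the variance term is related to $\mathbb{V}_{P^{\star}_h(\cdot|s,a)}\big[V^{\pi^k}_{h+1,P^{\star},\boldsymbol{\Phi}}\big]$ by a measure change plus a function change, which is precisely the content of the paper's Lemmas~\ref{lem: variance analysis 1} and~\ref{lem: variance analysis 2}. Your function-change step (Minkowski for standard deviations, the pointwise bound $|g-V^{\pi^k}_{h+1,P^{\star},\boldsymbol{\Phi}}|\le W/2$ with $W=\overline{V}_{h+1}^k-\underline{V}_{h+1}^k$ and $g=(\overline{V}_{h+1}^k+\underline{V}_{h+1}^k)/2$, then Young) is a cosmetic variant of the paper's direct bound $|\mathbb{V}[g]-\mathbb{V}[V^{\pi^k}_{h+1,P^{\star},\boldsymbol{\Phi}}]|\le 4H\,\mathbb{E}[W]$, and it lands on the same total coefficient $4/H$.

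The gap is in the change-of-measure step for the variance, exactly where you flag ``the main obstacle.'' Your residual is of order $H(S\iota)^{1/4}\iota^{1/2}(N_h^k(s,a)\vee 1)^{-3/4}$, and you propose to split it by Young's inequality into a piece of order $H^2S\iota/(N_h^k(s,a)\vee 1)$ and a piece ``bounded by $1/\sqrt{K}$.'' This cannot work: writing $N=N_h^k(s,a)\vee 1$, any Young split of $N^{-3/4}$ against the $N^{-1}$ bucket leaves a remainder of order $N^{-1/2}$, and since $N\le K$ this remainder is at least $K^{-1/2}$; concretely, for $N\asymp K^{2/3}$ your term is $\asymp HK^{-1/2}$ while $H^2S\iota/N+1/\sqrt{K}\asymp H^2S\iota K^{-2/3}+K^{-1/2}$ is strictly smaller for large $K$. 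Moreover, the $1/\sqrt{K}$ in the bonus is not free slack for this purpose: it originates from the $1/(S\sqrt{K})$-covering of the dual variable $\eta$ in the event $\cE$ and maps one-to-one onto the $1/\sqrt{K}$ in the lemma's conclusion. The paper's resolution (in the proof of Lemma~\ref{lem: variance analysis 2}) is to apply $\sqrt{ab}\le a+b$ \emph{inside} the variance difference, i.e.\ $H^2\sqrt{c_1S\iota/N}\le 1+c_1H^4S\iota/N$, so that $|\mathbb{V}_{\widehat{P}_h^k(\cdot|s,a)}[g]-\mathbb{V}_{P_h^{\star}(\cdot|s,a)}[g]|\le 1+c'H^4S\iota/N$; after the outer square root the residual is then $\sqrt{c_1\iota/N}$, which the paper folds into the leading $\sqrt{\mathbb{V}_{P^{\star}_h(\cdot|s,a)}[V^{\pi^k}_{h+1,P^{\star},\boldsymbol{\Phi}}]\,c_1\iota/N}$ term by enlarging the absolute constant. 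You should restructure your measure-change step along these lines rather than relying on the $1/\sqrt{K}$ term.
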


\begin{proof}[Proof of Lemma~\ref{lem: control of bonus restate}]
    See Appendix~\ref{subsec: proof control of bonus restate} for a detailed proof.
\end{proof}

\begin{lemma}[Total variance law for robust MDP with TV robust sets]\label{lem: total variance restate}
    With probability at least $1-\delta$, the following inequality holds
    \begin{align}
        \sum_{k=1}^K\sum_{h = 1}^H \mathbb{V}_{P_h^{\star}(\cdot|s_h^k,a_h^k)}\Big[V_{h+1,P^{\star},\boldsymbol{\Phi}}^{\pi^k}\Big] \leq  c_3\cdot  \Big(\min\{H,\rho^{-1}\}\cdot HK +  \min\{H,\rho^{-1}\}^3\cdot H\iota\Big).
    \end{align}
    where $\iota = \log(S^3AH^2K^{3/2}/\delta)$ and $c_3>0$ is an absolute constant.
\end{lemma}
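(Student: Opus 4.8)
The plan is to prove a \emph{total variance law} for robust value functions, adapting the classical law of total variance for non-robust MDPs \citep{azar2017minimax} to the TV-robust setting under Assumption~\ref{ass: zero min}. The target is to show that the cumulative one-step variance of the next-step robust value function, summed over all $(h,k)$, is bounded by roughly $\min\{H,\rho^{-1}\}\cdot HK$ (up to lower-order terms), rather than the naive bound of $H^2 K\cdot\min\{H,\rho^{-1}\}$ one would get by bounding each variance term by $\min\{H,\rho^{-1}\}^2$ times $HK$. The saving of one factor of $\min\{H,\rho^{-1}\}$ is exactly what yields the sharp sample complexity, so the whole point is to exploit cancellation along each trajectory.

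First I would fix an episode $k$ and work along the trajectory $\{(s_h^k,a_h^k)\}_{h=1}^H$ generated by $\pi^k$ under the \emph{worst-case} transition kernel $P^{\pi^k,\dagger}$ associated with the robust value function $V_{\cdot,P^\star,\boldsymbol\Phi}^{\pi^k}$ (the infimizing kernel from Lemma~\ref{lem: performance difference}). Along this adversarial process the robust Bellman equation (Proposition~\ref{prop: robust bellman equation}) gives a genuine one-step recursion $V_{h,P^\star,\boldsymbol\Phi}^{\pi^k}(s_h) = R_h + \EE_{P_h^{\pi^k,\dagger}(\cdot\mid s_h,a_h)}[V_{h+1,P^\star,\boldsymbol\Phi}^{\pi^k}]$, so I can define a martingale and apply the standard telescoping/law-of-total-variance argument to the \emph{adversarial-kernel} variances $\mathbb V_{P_h^{\pi^k,\dagger}}[V_{h+1}^{\pi^k}]$. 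This telescoping bounds the sum of adversarial variances by the squared range of the value function, which is $\min\{H,\rho^{-1}\}^2$ by Proposition~\ref{prop: gap}, times the number of trajectories $K$ — giving $\min\{H,\rho^{-1}\}^2\cdot K$ per horizon factor. The key technical bridge is then to relate the \emph{nominal}-kernel variances $\mathbb V_{P_h^\star}[V_{h+1}^{\pi^k}]$ appearing in the lemma statement to these adversarial-kernel variances; here Proposition~\ref{prop: equivalent robust set} is indispensable, since under vanishing minimal value the worst-case kernel in the TV ball is a reweighting of $P_h^\star$ with bounded likelihood ratio $1/\rho'$, so the two variances differ by at most a constant factor absorbing the $\rho$-dependence.

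The main obstacle I anticipate is controlling the discrepancy between the empirical trajectory (generated under the true nominal kernel $P^\star$, since that is what the learner actually samples) and the adversarial process used in the telescoping argument. In the non-robust case the trajectory itself realizes the relevant kernel, but here the variance is measured under $P^\star$ while the value recursion naturally closes under $P^{\pi^k,\dagger}$. I expect the resolution to require either (i) pushing the law-of-total-variance through the nominal kernel directly and paying a correction term that is governed by the range bound of Proposition~\ref{prop: gap}, or (ii) carefully bounding the mismatch between $\EE_{P_h^\star}[(V_{h+1}^{\pi^k})^2]-(\EE_{P_h^\star}[V_{h+1}^{\pi^k}])^2$ and its robust-Bellman analog, which introduces the additive lower-order term $\min\{H,\rho^{-1}\}^3\cdot H\iota$ seen in the statement. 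The $\iota$ factor signals that a high-probability concentration step (Bernstein / Freedman on the relevant martingale, via the typical event $\cE$ and an Azuma-type bound) is needed to pass from the expected telescoped bound to an empirical one that holds with probability $1-\delta$. I would handle this mismatch term by the coarse range bound $\min\{H,\rho^{-1}\}$ wherever a single step cannot be telescoped, collecting all such leftovers into the cubic lower-order term, and conclude by summing over $k\in[K]$ and invoking Proposition~\ref{prop: gap} for the squared-range factor.
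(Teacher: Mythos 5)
Your overall architecture — telescope the variances under the adversarial kernel via the robust Bellman equation, pass from the empirical sum to its expectation by a martingale concentration step, and pay a change-of-measure cost between the nominal and adversarial kernels — is exactly the paper's decomposition (its Terms (i)–(iii)). However, two quantitative steps in your plan are wrong or missing, and both are load-bearing. First, the telescoping does \emph{not} bound the sum of adversarial variances by the squared range $\min\{H,\rho^{-1}\}^2$ per episode. Writing $\mathbb{V}_{\widetilde T_h}[V_{h+1}^{\pi}] = \mathbb{E}_{\widetilde T_h}[(V_{h+1}^{\pi})^2] - (V_h^{\pi}(s_h)-R_h)^2$ and summing, the squared terms telescope to something nonpositive, and the surviving contribution is the cross term $2V_h^{\pi}R_h \le 2\min\{H,\rho^{-1}\}$ at \emph{each} of the $H$ steps, giving $2H\cdot\min\{H,\rho^{-1}\}$ per episode (Lemma~\ref{lem:total:variance:tv}). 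Your claimed $\min\{H,\rho^{-1}\}^2\cdot K$ is not what this argument produces, and the ``per horizon factor'' accounting does not resolve to the correct $H\cdot\min\{H,\rho^{-1}\}\cdot K$.

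Second, and more seriously, your treatment of the nominal-vs-adversarial mismatch fails. There are $HK$ per-step mismatch terms, so they contribute at leading order in $K$ and cannot be ``collected into the cubic lower-order term $\min\{H,\rho^{-1}\}^3 H\iota$'' — that term is $K$-independent and in the paper arises only from the Azuma--Hoeffding step via AM--GM on $H\overline{H}^2\sqrt{K\iota}$. The mechanism that actually controls the mismatch is that both the change of visitation measure and the change of kernel inside the variance cost at most $\rho$ in TV distance per step, times the squared range $\min\{H,\rho^{-1}\}^2$, and then the identity $\rho\cdot\min\{H,\rho^{-1}\}\le 1$ collapses $\rho\cdot\min\{H,\rho^{-1}\}^2$ to $\min\{H,\rho^{-1}\}$, landing the whole contribution ($3\rho H\overline{H}^2K \le 3H\overline{H}K$) in the leading term. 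Your proposed alternative — a constant multiplicative relation between $\mathbb{V}_{P^\star}$ and $\mathbb{V}_{\widetilde T}$ via the bounded-ratio characterization of Proposition~\ref{prop: equivalent robust set} — does not work: that proposition only bounds $\widetilde T/P^\star$ from above by $1/\rho'$, which gives $\mathbb{E}_{P^\star}[f]\ge \rho'\,\mathbb{E}_{\widetilde T}[f]$ for $f\ge 0$, i.e.\ the wrong direction for upper-bounding $\mathbb{V}_{P^\star}$ by $\mathbb{V}_{\widetilde T}$; moreover variances are not comparable up to constants under bounded-ratio reweighting (the adversarial kernel can concentrate mass and drive the variance to zero), so an additive error of order $\rho\cdot\min\{H,\rho^{-1}\}^2$ is genuinely needed, not a multiplicative one.
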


\begin{proof}[Proof of Lemma~\ref{lem: total variance restate}]
    See Appendix~\ref{subsec: proof total variance restate} for a detailed proof.
\end{proof}

\subsection{Proof of Lemma~\ref{lem: optimism and pessimism restate}}\label{subsec: proof optimism and pessimism restate}

\begin{proof}[Proof of Lemma~\ref{lem: optimism and pessimism restate}]
    We prove Lemma \ref{lem: optimism and pessimism restate} by induction.
    Suppose the conclusion \eqref{eq: optimism and pessimism} holds at step $h+1$.
    For step $h$, let's first consider the robust $Q$ function part.
    Specifically, by using the robust Bellman optimal equation (Proposition~\ref{prop: robust bellman optimal equation}) and \eqref{eq: Q overline}, we have that
    \begin{align}
        &Q^{\star}_{h,P^{\star},\boldsymbol{\Phi}}(s,a) - \overline{Q}^{k}_h(s,a) \\
        &\quad \leq \max\bigg\{\mathbb{E}_{\mathcal{P}_{\rho}(s,a;P_h^{\star})}\Big[V_{h+1,P^{\star},\boldsymbol{\Phi}}^{\star}\Big] - \mathbb{E}_{\mathcal{P}_{\rho}(s,a;\widehat{P}_h^k)}\Big[\overline{V}_{h+1}^{k}\Big] - \texttt{bonus}_h^k(s,a), \,\,Q^{\star}_{h,P^{\star},\boldsymbol{\Phi}}(s,a)  - \min\big\{H, \rho^{-1}\big\} \bigg\}\\
        & \quad \leq \max\bigg\{\mathbb{E}_{\mathcal{P}_{\rho}(s,a;P_h^{\star})}\Big[V_{h+1,P^{\star},\boldsymbol{\Phi}}^{\star}\Big] - \mathbb{E}_{\mathcal{P}_{\rho}(s,a;\widehat{P}_h^k)}\Big[V_{h+1,P^{\star},\boldsymbol{\Phi}}^{\star}\Big] - \texttt{bonus}_h^k(s,a), \,\,0\bigg\},\label{eq: proof lem optimism and pessimism 1}
    \end{align}
    where the second inequality follows from the induction of $V_{h+1,P^{\star},\boldsymbol{\Phi}}^{\star}\leq \overline{V}_{h+1}^k$ at step $h+1$ and the fact that $Q^{\star}_{h,P^{\star},\boldsymbol{\Phi}} \leq \min\{H, \rho^{-1}\}$ (by Proposition~\ref{prop: gap} and Assumption~\ref{ass: zero min}).
    By Lemma~\ref{lem: bernstein bonus for optimal value}, we have that
    \begin{align}
        \mathbb{E}_{\mathcal{P}_{\rho}(s,a;P_h^{\star})}\Big[V^{\star}_{h+1,P^{\star},\boldsymbol{\Phi}}\Big] - \mathbb{E}_{\mathcal{P}_{\rho}(s,a;\widehat{P}_h^k)}\Big[V^{\star}_{h+1,P^{\star},\boldsymbol{\Phi}}\Big]\leq  \sqrt{\frac{\mathbb{V}_{\widehat{P}_h^k(\cdot|s,a)}\Big[V_{h+1,P^{\star},\boldsymbol{\Phi}}^{\star}\Big]\cdot c_1\iota}{N_h^k(s,a)\vee 1}} + \frac{c_2H\iota}{N_h^k(s,a)\vee 1} + \frac{1}{\sqrt{K}},
    \end{align}
    Now by further applying Lemma~\ref{lem: variance analysis 1} to the variance term in the above inequality, we can obtain that
    \allowdisplaybreaks
    \begin{align}
        &\mathbb{E}_{\mathcal{P}_{\rho}(s,a;P_h^{\star})}\Big[V^{\star}_{h+1,P^{\star},\boldsymbol{\Phi}}\Big] - \mathbb{E}_{\mathcal{P}_{\rho}(s,a;\widehat{P}_h^k)}\Big[V^{\star}_{h+1,P^{\star},\boldsymbol{\Phi}}\Big]\\
        &\quad \leq \sqrt{\frac{\left(\mathbb{V}_{\widehat{P}_h^k(\cdot|s,a)}\Big[\Big(\overline{V}_{h+1}^k+\underline{V}_{h+1}^k\Big) / 2\Big] + 4 H \cdot \mathbb{E}_{\widehat{P}_h^k(\cdot|s,a)}\Big[\overline{V}_{h+1}^k-\underline{V}_{h+1}^k\Big]\right)\cdot c_1\iota}{N_h^k(s,a)\vee 1}} + \frac{ c_2H\iota}{N_h^k(s,a)\vee 1} + \frac{1}{\sqrt{K}} \\
        &\quad \leq \sqrt{\frac{\mathbb{V}_{\widehat{P}_h^k(\cdot|s,a)}\Big[\Big(\overline{V}_{h+1}^k+\underline{V}_{h+1}^k\Big) / 2\Big]\cdot c_1\iota}{N_h^k(s,a)\vee 1}} + \sqrt{\frac{ \mathbb{E}_{\widehat{P}_h^k(\cdot|s,a)}\Big[\overline{V}_{h+1}^k-\underline{V}_{h+1}^k\Big]\cdot 4Hc_1\iota}{N_h^k(s,a)\vee 1}} + \frac{ c_2H\iota}{N_h^k(s,a)\vee 1} + \frac{1}{\sqrt{K}}\\
        &\quad \leq \sqrt{\frac{\mathbb{V}_{\widehat{P}_h^k(\cdot|s,a)}\Big[\Big(\overline{V}_{h+1}^k+\underline{V}_{h+1}^k\Big) / 2\Big]\cdot c_1\iota}{N_h^k(s,a)\vee 1}} + \frac{\mathbb{E}_{\widehat{P}_h^k(\cdot|s,a)}\Big[\overline{V}_{h+1}^k-\underline{V}_{h+1}^k\Big]}{H} + \frac{ c_2'H^2\iota}{N_h^k(s,a)\vee 1} + \frac{1}{\sqrt{K}},\label{eq: proof lem optimism and pessimism 2}
    \end{align}
    where the first inequality is due to Lemma~\ref{lem: variance analysis 1}, the second inequality is due to $\sqrt{a + b}\leq \sqrt{a} + \sqrt{b}$, and the last inequality is from $\sqrt{ab}\leq a+b$ where $c_2'>0$ is an absolute constant.
    Therefore, combining \eqref{eq: proof lem optimism and pessimism 1} and \eqref{eq: proof lem optimism and pessimism 2}, and the choice of $\texttt{bonus}_h^k(s,a)$ in \eqref{eq: bernstein bonus}, we can conclude that
    \begin{align}
        Q^{\star}_{h,P^{\star},\boldsymbol{\Phi}}(s,a) \leq  \overline{Q}^{k}_h(s,a).
    \end{align}
    Furthermore, it holds that $Q^{\pi^k}_{h,P^{\star},\boldsymbol{\Phi}}(s,a) \leq  Q^{\star}_{h,P^{\star},\boldsymbol{\Phi}}(s,a)$.
    Thus it reduces to prove $\underline{Q}_h^k(s,a)\leq Q^{\pi^k}_{h,P^{\star},\boldsymbol{\Phi}}(s,a)$.
    Again, by using the robust Bellman equation (Proposition~\ref{prop: robust bellman equation}) and \eqref{eq: Q underline}, we have that
    \begin{align}
        &\underline{Q}^{k}_h(s,a) - Q^{\pi^k}_{h,P^{\star},\boldsymbol{\Phi}}(s,a)\\
        &\qquad \leq \max\bigg\{\mathbb{E}_{\mathcal{P}_{\rho}(s,a;\widehat{P}_h^k)}\Big[\underline{V}_{h+1}^{k}\Big] - \mathbb{E}_{\mathcal{P}_{\rho}(s,a;P_h^{\star})}\Big[V_{h+1,P^{\star},\boldsymbol{\Phi}}^{\pi^k}\Big] - \texttt{bonus}_h^k(s,a),\,\, 0  \bigg\}\\
        & \qquad\leq \max\bigg\{\mathbb{E}_{\mathcal{P}_{\rho}(s,a;\widehat{P}_h^k)}\Big[V_{h+1,P^{\star},\boldsymbol{\Phi}}^{\pi^k}\Big] - \mathbb{E}_{\mathcal{P}_{\rho}(s,a;P_h^{\star})}\Big[V_{h+1,P^{\star},\boldsymbol{\Phi}}^{\pi^k}\Big]  - \texttt{bonus}_h^k(s,a),\,\,0\bigg\},\label{eq: proof lem optimism and pessimism 3}
    \end{align}
    where the second inequality follows from the induction of $\underline{V}_{h+1}^k\leq V_{h+1,P^{\star},\boldsymbol{\Phi}}^{\pi^k}$ at step $h+1$ and the fact that $Q^{\pi^k}_{h,P^{\star},\boldsymbol{\Phi}}\geq 0$.
    By Lemma~\ref{lem: bernstein bound for value of pi k}, we have that
    \begin{align}
        &\mathbb{E}_{\mathcal{P}_{\rho}(s,a;\widehat{P}_h^k)}\Big[V^{\pi^k}_{h+1, P^{\star},\boldsymbol{\Phi}}\Big] - \mathbb{E}_{\mathcal{P}_{\rho}(s,a;P_h^{\star})}\Big[V_{h+1, P^{\star}, \boldsymbol{\Phi}}^{\pi^k}\Big]  \\
        &\qquad \leq  \sqrt{\frac{\mathbb{V}_{\widehat{P}_h^k(\cdot|s,a)}\Big[V_{h+1,P^{\star},\boldsymbol{\Phi}}^{\star}\Big]\cdot c_1\iota}{N_h^k(s,a)\vee 1}} + \frac{\mathbb{E}_{\widehat{P}_h^k(\cdot|s,a)}\Big[\overline{V}_{h+1}^{k} - \underline{V}_{h+1}^{k}\Big]}{H} + \frac{c_2'H^2S\iota}{N_h^k(s,a)\vee 1} + \frac{1}{\sqrt{K}}.
    \end{align}
    Now by applying Lemma~\ref{lem: variance analysis 1} to the variance term, with an argument similar to \eqref{eq: proof lem optimism and pessimism 2}, we can obtain that
    \begin{align}
        &\mathbb{E}_{\mathcal{P}_{\rho}(s,a;\widehat{P}_h^k)}\Big[V^{\pi^k}_{h+1, P^{\star},\boldsymbol{\Phi}}\Big] - \mathbb{E}_{\mathcal{P}_{\rho}(s,a;P_h^{\star})}\Big[V_{h+1, P^{\star}, \boldsymbol{\Phi}}^{\pi^k}\Big]  \label{eq: proof lem optimism and pessimism 4} \\
        &\qquad \leq \sqrt{\frac{\mathbb{V}_{\widehat{P}_h^k(\cdot|s,a)}\Big[\Big(\overline{V}_{h+1}^k+\underline{V}_{h+1}^k\Big) / 2\Big]\cdot c_1\iota}{N_h^k(s,a)\vee 1}} + \frac{2\mathbb{E}_{\widehat{P}_h^k(\cdot|s,a)}\Big[\overline{V}_{h+1}^k-\underline{V}_{h+1}^k\Big]}{H} + \frac{ c_2''H^2\iota}{N_h^k(s,a)\vee 1} + \frac{1}{\sqrt{K}},
    \end{align}
    Thus by combining \eqref{eq: proof lem optimism and pessimism 3} and \eqref{eq: proof lem optimism and pessimism 4}, and the choice of $\texttt{bonus}_h^k(s,a)$ in \eqref{eq: bernstein bonus}, we can conclude that
    \begin{align}
        \underline{Q}_h^k(s,a)\leq Q^{\pi^k}_{h,P^{\star},\boldsymbol{\Phi}}(s,a).
    \end{align}
    Therefore, we have proved that at step $h$, it holds that
    \begin{align}
        \underline{Q}_h^k(s,a) \leq Q^{\pi^k}_{h,P^{\star},\boldsymbol{\Phi}}(s,a) \leq  Q^{\star}_{h,P^{\star},\boldsymbol{\Phi}}(s,a)\leq \overline{Q}^{k}_h(s,a).
    \end{align}
    Finally for the robust $V$ function part, consider that by robust Bellman equation (Proposition~\ref{prop: robust bellman equation}) and \eqref{eq: V},
    \begin{align}
        \underline{V}_h^k(s) = \mathbb{E}_{\pi_h^k(\cdot|s)} \Big[ \underline{Q}_h^k(s,\cdot)\Big] \leq \mathbb{E}_{\pi_h^k(\cdot|s)} \Big[ Q^{\pi^k}_{h,P^{\star},\boldsymbol{\Phi}}(s,\cdot)\Big] = V^{\pi^k}_{h,P^{\star},\boldsymbol{\Phi}}(s),
    \end{align}
    and that by robust Bellman optimal equation (Proposition~\ref{prop: robust bellman optimal equation}), the choice of $\pi^k$, and \eqref{eq: V},
    \begin{align}
        V^{\star}_{h,P^{\star},\boldsymbol{\Phi}}(s) = \max_{a\in\cA}Q^{\star}_{h,P^{\star},\boldsymbol{\Phi}}(s,a)\leq \max_{a\in\cA}\overline{Q}_{h}^k(s,a) = \overline{V}_{h}^k(s),
    \end{align}
    which proves that
    \begin{align}
        \underline{V}_h^k(s) \leq V^{\pi^k}_{h,P^{\star},\boldsymbol{\Phi}}(s) \leq V^{\star}_{h,P^{\star},\boldsymbol{\Phi}}(s) \leq \overline{V}^{k}_h(s).
    \end{align}
    Since the conclusion \eqref{eq: optimism and pessimism} holds for the $V$ function part at step $H+1$, an induction proves Lemma~\ref{lem: optimism and pessimism restate}.
\end{proof}

\subsection{Proof of Lemma~\ref{lem: bonus for optimistic and pessimistic value estimators restate}}\label{subsec: proof bonus for optimistic and pessimistic value estimators restate}

\begin{proof}[Proof of Lemma~\ref{lem: bonus for optimistic and pessimistic value estimators restate}]
    We upper bound the required signed sum by applying Lemma~\ref{lem: bernstein bound for opt and pess value} to the two absolute differences,
    \begin{align}
        &\mathbb{E}_{\mathcal{P}_{\rho}(s,a;\widehat{P}_h^k)}\Big[\overline{V}_{h+1}^k\Big]- \mathbb{E}_{\mathcal{P}_{\rho}(s,a;P_h^{\star})}\Big[\overline{V}_{h+1}^k\Big] + \mathbb{E}_{\mathcal{P}_{\rho}(s,a;P_h^{\star})}\Big[\underline{V}_{h+1}^k\Big]- \mathbb{E}_{\mathcal{P}_{\rho}(s,a;\widehat{P}_h^k)}\Big[\underline{V}_{h+1}^k\Big]\\
        &\qquad \leq 2\sqrt{\frac{\mathbb{V}_{\widehat{P}_h^k(\cdot|s,a)}\Big[V_{h+1,P^{\star},\boldsymbol{\Phi}}^{\star}\Big]\cdot c_1\iota}{N_h^k(s,a)\vee 1}} + \frac{2\cdot \mathbb{E}_{\widehat{P}_h^k(\cdot|s,a)}\Big[\overline{V}_{h+1}^{k} - \underline{V}_{h+1}^{k}\Big]}{H} + \frac{2c_2'H^2S\iota}{N_h^k(s,a)\vee 1} + \frac{2}{\sqrt{K}},\label{eq: proof lem bonus for optimistic and pessimistic value estimators restate 1}
    \end{align}
    where $c_1,c_2'>0$ are absolute constants.
    Then applying Lemma~\ref{lem: variance analysis 1} to the variance term in \eqref{eq: proof lem bonus for optimistic and pessimistic value estimators restate 1}, with an argument the same as \eqref{eq: proof lem optimism and pessimism 2} in the proof of Lemma~\ref{lem: optimism and pessimism restate}, we can obtain that
    \begin{align}
        &\mathbb{E}_{\mathcal{P}_{\rho}(s,a;\widehat{P}_h^k)}\Big[\overline{V}_{h+1}^k\Big]- \mathbb{E}_{\mathcal{P}_{\rho}(s,a;P_h^{\star})}\Big[\overline{V}_{h+1}^k\Big] + \mathbb{E}_{\mathcal{P}_{\rho}(s,a;P_h^{\star})}\Big[\underline{V}_{h+1}^k\Big]- \mathbb{E}_{\mathcal{P}_{\rho}(s,a;\widehat{P}_h^k)}\Big[\underline{V}_{h+1}^k\Big]\\
        &\qquad\leq 2\sqrt{\frac{\mathbb{V}_{\widehat{P}_h^k(\cdot|s,a)}\Big[\Big(\overline{V}_{h+1}^k+\underline{V}_{h+1}^k\Big) / 2\Big]\cdot c_1\iota}{N_h^k(s,a)\vee 1}} + \frac{4\cdot \mathbb{E}_{\widehat{P}_h^k(\cdot|s,a)}\Big[\overline{V}_{h+1}^k-\underline{V}_{h+1}^k\Big]}{H} + \frac{2c_2''H^2\iota}{N_h^k(s,a)\vee 1} + \frac{2}{\sqrt{K}}.
    \end{align}
    Therefore, by looking into the choice of $\texttt{bonus}_h^k(s,a)$ in \eqref{eq: bernstein bonus}, we can conclude that
    \begin{align}
        \mathbb{E}_{\mathcal{P}_{\rho}(s,a;\widehat{P}_h^k)}\Big[\overline{V}_{h+1}^k\Big]- \mathbb{E}_{\mathcal{P}_{\rho}(s,a;P_h^{\star})}\Big[\overline{V}_{h+1}^k\Big] + \mathbb{E}_{\mathcal{P}_{\rho}(s,a;P_h^{\star})}\Big[\underline{V}_{h+1}^k\Big]- \mathbb{E}_{\mathcal{P}_{\rho}(s,a;\widehat{P}_h^k)}\Big[\underline{V}_{h+1}^k\Big] \leq 2\cdot\mathtt{bonus}_h^k(s,a),
    \end{align}
    This finishes the proof of Lemma~\ref{lem: bonus for optimistic and pessimistic value estimators restate}.
\end{proof}

\subsection{Proof of Lemma~\ref{lem: control of bonus restate}}\label{subsec: proof control of bonus restate}

\begin{proof}[Proof of Lemma~\ref{lem: control of bonus restate}]
    Recall that the $\mathtt{bonus}_h^k(s,a)$ is defined as
    \begin{align}
        \texttt{bonus}_h^k(s,a) &= \sqrt{\frac{\mathbb{V}_{\widehat{P}_h^k(\cdot|s,a)}\Big[\Big(\overline{V}_{h+1}^k+\underline{V}_{h+1}^k\Big) / 2\Big]\cdot c_1\iota}{N_h^k(s,a)\vee 1}} + \frac{2\mathbb{E}_{\widehat{P}_h^k(\cdot|s,a)}\Big[\overline{V}_{h+1}^{k} - \underline{V}_{h+1}^{k}\Big]}{H}  + \frac{c_2H^2S\iota}{N_h^k(s,a)\vee 1} + \frac{1}{\sqrt{K}}.
    \end{align}
    The main thing we need to consider is to control the first term and the second term.
    We first deal with the second term of $\mathtt{bonus}_h^k(s,a)$ by invoking Lemma~\ref{lem: non-robust bound 1}, which gives
    \begin{align}
        \frac{2\mathbb{E}_{\widehat{P}_h^k(\cdot|s,a)}\Big[\overline{V}_{h+1}^{k} - \underline{V}_{h+1}^{k}\Big]}{H}& \leq \left(\frac{2}{H}+\frac{2}{H^2}\right)\cdot\mathbb{E}_{P_h^{\star}(\cdot|s,a)}\Big[\overline{V}_{h+1}^{k} - \underline{V}_{h+1}^{k}\Big] + \frac{c_2'HS\iota}{N_h^k(s,a)\vee 1}  \\
        & \leq \frac{3\mathbb{E}_{P_h^{\star}(\cdot|s,a)}\Big[\overline{V}_{h+1}^{k} - \underline{V}_{h+1}^{k}\Big]}{H} + \frac{c_2'HS\iota}{N_h^k(s,a)\vee 1},  \label{eq: proof lem control of bonus restate 1}
    \end{align}
    where the second inequality is from $H\geq 2$.
    Then we deal with the first term (variance term) of $\mathtt{bonus}_h^k(s,a)$ by invoking Lemma~\ref{lem: variance analysis 2}, which gives
    \begin{align}
        &\sqrt{\frac{\mathbb{V}_{\widehat{P}_h^k(\cdot|s,a)}\Big[\Big(\overline{V}_{h+1}^k+\underline{V}_{h+1}^k\Big) / 2\Big]\cdot c_1\iota}{N_h^k(s,a)\vee 1}} \label{eq: proof lem control of bonus restate 2}\\
        & \leq \sqrt{\frac{\left(\mathbb{V}_{P_h^{\star}(\cdot|s,a)}\Big[V_{h+1,P^{\star},\boldsymbol{\Phi}}^{\pi^k}\Big] + 4 H \cdot \mathbb{E}_{P_h^{\star}(\cdot|s,a)}\Big[\overline{V}_{h+1}^k-\underline{V}_{h+1}^k\Big]+\frac{c_2''H^4 S \iota}{N_h^k(s, a)\vee 1}+1\right)\cdot c_1\iota}{N_h^k(s,a)\vee 1}} \\
        & \leq \sqrt{\frac{\mathbb{V}_{P_h^{\star}(\cdot|s,a)}\Big[V_{h+1,P^{\star},\boldsymbol{\Phi}}^{\pi^k}\Big]\cdot c_1\iota}{N_h^k(s,a)\vee 1}} + \sqrt{\frac{4 H \cdot \mathbb{E}_{P_h^{\star}(\cdot|s,a)}\Big[\overline{V}_{h+1}^k-\underline{V}_{h+1}^k\Big]\cdot c_1\iota}{N_h^k(s,a)\vee 1}} + \frac{\sqrt{c_1c_2''S}H^2\iota}{N_h^k(s,a)\vee 1} + \sqrt{\frac{c_1\iota}{N_h^k(s,a)\vee 1}} \\
        & \leq \sqrt{\frac{\mathbb{V}_{P_h^{\star}(\cdot|s,a)}\Big[V_{h+1,P^{\star},\boldsymbol{\Phi}}^{\pi^k}\Big]\cdot c_1'\iota}{N_h^k(s,a)\vee 1}} + \frac{\mathbb{E}_{P_h^{\star}(\cdot|s,a)}\Big[\overline{V}_{h+1}^k-\underline{V}_{h+1}^k\Big]}{H} + \frac{\big(4c_1 + \sqrt{c_1c_2''S}\big)H^2\iota}{N_h^k(s,a)\vee 1} + c_3\sqrt{\frac{\iota}{N_h^k(s,a)\vee 1}}
    \end{align}
    Thus by combining \eqref{eq: proof lem control of bonus restate 1} and \eqref{eq: proof lem control of bonus restate 2} with the choice of $\mathtt{bonus}_h^k$, we can conclude the proof of Lemma~\ref{lem: control of bonus restate}.
\end{proof}

\subsection{Proof of Lemma~\ref{lem: total variance restate}}\label{subsec: proof total variance restate}

\begin{proof}[Proof of Lemma~\ref{lem: total variance restate}]
    The key idea is to relate the visitation distribution (w.r.t. $P^{\star}$) and the variance (w.r.t. $P^{\star}$) to the value function of $\pi^k$, after which we can derive an upper bound for the total variance.
    Throughout this proof, we use the shorthand
    \begin{align}
        \overline{H} = \min \big\{H, \rho^{-1}\big\}.
    \end{align}
    Under the convention stated after Definition~\ref{def: tv}, when $\rho=0$ we have $\overline H=H$.
    According to Proposition~\ref{prop: gap} and Assumption~\ref{ass: zero min}, for any policy $\pi$ and any step $h$, the robust value function of $\pi$ holds that
        \begin{align}\label{eq: proof total variance max - min}
        \max_{s \in \cS} V_{h, P^\star, \boldsymbol{\Phi}}^{\pi}(s)  \le \overline{H},
    \end{align}
    which we usually apply in the sequel.

    Now consider the following decomposition of our target,
    \allowdisplaybreaks
    \begin{align}
        &\sum_{k=1}^K\sum_{h = 1}^H \mathbb{V}_{P_h^{\star}(\cdot|s_h^k,a_h^k)}\Big[V_{h+1,P^{\star},\boldsymbol{\Phi}}^{\pi^k}\Big]  \\
        &\qquad = \underbrace{\sum_{k=1}^K\bigg\{\sum_{h = 1}^H \mathbb{V}_{P_h^{\star}(\cdot|s_h^k,a_h^k)}\Big[V_{h+1,P^{\star},\boldsymbol{\Phi}}^{\pi^k}\Big] - \mathbb{E}_{(s_h^k,a_h^k)\sim (P^{\star},\pi^k)}\left[\sum_{h = 1}^H \mathbb{V}_{P_h^{\star}(\cdot|s_h^k,a_h^k)}\Big[V_{h+1,P^{\star},\boldsymbol{\Phi}}^{\pi^k}\Big]\middle| \cG_{k-1} \right]\bigg\}}_{\displaystyle{\text{\textcolor{blue}{Term (i): martingale difference term}}}} \\
        &\qquad\qquad  +\underbrace{\sum_{k=1}^K\mathbb{E}_{(s_h^k,a_h^k)\sim (P^{\star},\pi^k)}\left[\sum_{h = 1}^H \mathbb{V}_{P_h^\star(\cdot|s_h^k,a_h^k)}\Big[V_{h+1,P^{\star},\boldsymbol{\Phi}}^{\pi^k}\Big]  \middle| \cG_{k-1} \right]}_{\displaystyle{\text{\textcolor{blue}{Term (ii): total variance law under $P^\star$}}}}.
    \end{align}
    where we denote the filtration $\cG_k = \sigma(\{(s_h^\tau,a_h^\tau,s_{h+1}^\tau)\}_{(h,\tau)\in[H]\times[k]})$ with $\cG_0$ understood as the trivial sigma-field.
    In the sequel, we upper bound each of the two terms respectively.

    \paragraph{Term (i): martingale difference term.}
    This is a summation of martingale difference term (with respect to filtration $\cG_k = \sigma(\{(s_h^\tau,a_h^\tau,s_{h+1}^\tau)\}_{(h,\tau)\in[H]\times[k]})$).
    By Azuma-Hoeffding's inequality, with probability at least $1-\delta$,
    \begin{align}
        \text{Term (i)}\leq c\cdot H\cdot \overline{H}^2\cdot\sqrt{K\iota},\label{eq: total variance term 1}
    \end{align}
    where $c>0$ is an absolute constant.
    We have utilized the fact of \eqref{eq: proof total variance max - min} to obtain the upper bound $H\overline{H}^2$ on each martingale difference term in the summation.

    \paragraph{Term (ii): total variance law under $P^\star$.}
    The upper bound of this term is the core part of the analysis, for which we summarize it in the following lemma.

\begin{lemma}[Total variance law under $P^\star$]\label{lem:total:variance:tv}
Under the same settings as Theorem~\ref{thm: regret tv}, given any deterministic policy $\pi$,
define
\begin{align}
\widetilde T_h(\cdot| s,a)\in\argmin_{P(\cdot)\in\mathcal P_\rho(s,a;P_h^\star)}\mathbb E_{P(\cdot)}\left[V_{h+1,P^\star,\boldsymbol{\Phi}}^{\pi}\right],    \quad\forall (s,a,h)\in\mathcal S\times\mathcal A\times[H],
    \label{eq:robust-bellman-tv-minimizer}
\end{align}
and set $\widetilde T=\{\widetilde T_h\}_{h=1}^H$. Then we have that
\begin{align}
    \mathbb E_{(s_h,a_h)\sim(P^\star,\pi)}\left[\sum_{h=1}^H\mathbb V_{P_h^\star(\cdot\mid s_h,a_h)}\left[V_{h+1,P^\star,\boldsymbol{\Phi}}^{\pi}\right]\right]\le 2H\cdot \overline H,
\label{eq:robust-bellman-tv-law}
\end{align}
Consequently, it holds that
\begin{align}
    \mathbb{E}_{(s_h^k,a_h^k)\sim (P^{\star},\pi^k)}\left[\sum_{h = 1}^H \mathbb{V}_{P_h^\star(\cdot|s_h^k,a_h^k)}\Big[V_{h+1,P^{\star},\boldsymbol{\Phi}}^{\pi^k}\Big]  \middle| \cG_{k-1} \right] \le    2H\cdot \overline H.    \label{eq:cond-exp-Zk-robust-bellman-tv-law}
\end{align}
\end{lemma}

    We defer the proof of Lemma~\ref{lem:total:variance:tv} to Appendix~\ref{subsec: proof lem total variance tv}.
    With Lemma~\ref{lem:total:variance:tv}, conditional on $\cG_{k-1}$, the policy $\pi^k$ is fixed and deterministic by \eqref{eq: V}; taking $\pi = \pi^k$ for $k\in[K]$ therein, we obtain that the Term (ii) is upper bounded by
    \begin{align}
        \text{Term (ii)}\leq 2H\cdot\overline{H}\cdot K.\label{eq: total variance term 2}
    \end{align}

    \paragraph{Finishing the proof.} Finally, combining the upper bounds for Terms (i) and (ii) i.e., \eqref{eq: total variance term 1} and \eqref{eq: total variance term 2}, we conclude that with probability at least $1-\delta$, it holds that
    \begin{align}
        \sum_{k=1}^K\sum_{h = 1}^H \mathbb{V}_{P_h^{\star}(\cdot|s_h^k,a_h^k)}\Big[V_{h+1,P^{\star},\boldsymbol{\Phi}}^{\pi^k}\Big]  &\leq c\cdot H\cdot \overline{H}^2\cdot\sqrt{K\iota} + 2 H\cdot\overline{H}\cdot K \\
        &\leq c'\cdot  H\cdot\overline{H}\cdot K + c'' \cdot H\cdot \overline{H}^3\cdot \iota,
    \end{align}
    where in the last inequality we use $\sqrt{ab}\leq a+b$ for any $a,b>0$.
    Plug in the notation that $\overline{H} = \min\{H,\rho^{-1}\}$
    and finish the proof of Lemma~\ref{lem: total variance restate}.
\end{proof}

\subsection{Proof of Lemma~\ref{lem:total:variance:tv}}\label{subsec: proof lem total variance tv}

\begin{proof}[Proof of Lemma~\ref{lem:total:variance:tv}]
For notational simplicity, given policy $\pi$, we denote
\begin{align}
    V_h(\cdot)
    :=
    V_{h,P^\star,\boldsymbol{\Phi}}^{\pi}(\cdot),
    \quad h\in[H+1].
\end{align}
According to Proposition~\ref{prop: gap} and Assumption~\ref{ass: zero min}, for any
step $h$, it holds that
\begin{align}
    0\le V_h(s)\le \overline H,
    \quad \forall (s,h)\in\mathcal S\times[H+1].
    \label{eq:value-bound-robust-bellman-tv-law}
\end{align}
Using the property of variance, we have that for any $s_h\in\mathcal S$ and
$a_h=\pi_h(s_h)$,
\begin{align}
    \mathbb V_{P_h^\star(\cdot\mid s_h,a_h)}
    \left[
        V_{h+1}
    \right]
    &=
    \mathbb E_{P_h^\star(\cdot\mid s_h,a_h)}
    \left[
        V_{h+1}^2
    \right]
    -
    \left(
        \mathbb E_{P_h^\star(\cdot\mid s_h,a_h)}
        \left[
            V_{h+1}
        \right]
    \right)^2 .
    \label{eq:variance-decomp-pstar}
\end{align}
By the robust Bellman equation and the definition of $\widetilde T_h$ in
\eqref{eq:robust-bellman-tv-minimizer}, we have that
\begin{align}
    V_h(s_h)
    =
    R_h(s_h,a_h)
    +
    \mathbb E_{\widetilde T_h(\cdot\mid s_h,a_h)}
    \left[
        V_{h+1}
    \right].
    \label{eq:robust-bellman-with-Ttilde}
\end{align}
We further define
\begin{align}
    \Delta_h(s,a)
    :=
    \mathbb E_{P_h^\star(\cdot\mid s,a)}
    \left[
        V_{h+1}
    \right]
    -
    \mathbb E_{\widetilde T_h(\cdot\mid s,a)}
    \left[
        V_{h+1}
    \right].
    \label{eq:def-delta-robust-bellman-tv-law}
\end{align}
Since $P_h^\star(\cdot\mid s,a)$ belongs to robust set
$\mathcal P_\rho(s,a;P_h^\star)$ and $\widetilde T_h$ minimizes the
expectation of $V_{h+1}$, we have
\begin{align}
    \Delta_h(s,a)\ge0,
    \quad
    \forall (s,a,h)\in\mathcal S\times\mathcal A\times[H].
    \label{eq:delta-nonnegative}
\end{align}
Combining \eqref{eq:robust-bellman-with-Ttilde} and
\eqref{eq:def-delta-robust-bellman-tv-law}, we obtain that
\begin{align}
    \mathbb E_{P_h^\star(\cdot\mid s_h,a_h)}
    \left[
        V_{h+1}
    \right]
    =
    V_h(s_h)-R_h(s_h,a_h)+\Delta_h(s_h,a_h).
    \label{eq:pstar-expectation-via-bellman}
\end{align}
Thus, by \eqref{eq:variance-decomp-pstar} and
\eqref{eq:pstar-expectation-via-bellman}, we have
\begin{align}
    \mathbb V_{P_h^\star(\cdot\mid s_h,a_h)}
    \left[
        V_{h+1}
    \right]
    &=
    \mathbb E_{P_h^\star(\cdot\mid s_h,a_h)}
    \left[
        V_{h+1}^2
    \right]
    -
    \left(
        V_h(s_h)-R_h(s_h,a_h)+\Delta_h(s_h,a_h)
    \right)^2
    \nonumber\\
    &=
    \mathbb E_{P_h^\star(\cdot\mid s_h,a_h)}
    \left[
        V_{h+1}^2
    \right]
    -
    \left(V_h(s_h)\right)^2
    +
    2V_h(s_h)
    \left(
        R_h(s_h,a_h)-\Delta_h(s_h,a_h)
    \right)
    \nonumber\\
    &\qquad
    -
    \left(
        R_h(s_h,a_h)-\Delta_h(s_h,a_h)
    \right)^2
    \nonumber\\
    &\le
    \mathbb E_{P_h^\star(\cdot\mid s_h,a_h)}
    \left[
        V_{h+1}^2
    \right]
    -
    \left(V_h(s_h)\right)^2
    +
    2\overline H .
    \label{eq:one-step-var-bound-pstar}
\end{align}
Here the last inequality follows from
\eqref{eq:value-bound-robust-bellman-tv-law},
$0\le R_h(s_h,a_h)\le1$, and $\Delta_h(s_h,a_h)\ge0$. Specifically, let
$x=R_h(s_h,a_h)-\Delta_h(s_h,a_h)$. If $x\le0$, then
$2V_h(s_h)x-x^2\le0$; otherwise $0<x\le1$, and
$2V_h(s_h)x-x^2\le2\overline H$.
Taking expectation with respect to the trajectory generated by
$(P^\star,\pi)$, we have for each $h\in[H]$ that
\begin{align}
    &\mathbb E_{(s_h,a_h)\sim(P^\star,\pi)}
    \left[
        \mathbb V_{P_h^\star(\cdot\mid s_h,a_h)}
        \left[
            V_{h+1}
        \right]
    \right]
    \le
    \mathbb E_{s_{h+1}\sim(P^\star,\pi)}
    \left[
        \left(V_{h+1}(s_{h+1})\right)^2
    \right]
    -
    \mathbb E_{s_h\sim(P^\star,\pi)}
    \left[
        \left(V_h(s_h)\right)^2
    \right]
    +
    2\overline H.
    \label{eq:one-step-var-bound-pstar-expectation}
\end{align}
Taking summation over $h\in[H]$ gives that
\begin{align}
    &\mathbb E_{(s_h,a_h)\sim(P^\star,\pi)}
    \left[
        \sum_{h=1}^H
        \mathbb V_{P_h^\star(\cdot\mid s_h,a_h)}
        \left[
            V_{h+1}
        \right]
    \right]
    \nonumber\\
    &\qquad\le
    \sum_{h=1}^H
    \left\{
        \mathbb E_{s_{h+1}\sim(P^\star,\pi)}
        \left[
            \left(V_{h+1}(s_{h+1})\right)^2
        \right]
        -
        \mathbb E_{s_h\sim(P^\star,\pi)}
        \left[
            \left(V_h(s_h)\right)^2
        \right]
    \right\}
    +
    2H\cdot \overline H
    \nonumber\\
    &\qquad=
    \mathbb E_{s_{H+1}\sim(P^\star,\pi)}
    \left[
        \left(V_{H+1}(s_{H+1})\right)^2
    \right]
    -
    \mathbb E_{s_1}
    \left[
        \left(V_1(s_1)\right)^2
    \right]
    +
    2H\cdot \overline H
    \nonumber\\
    &\qquad\le
    2H\cdot \overline H,
    \label{eq:telescoping-pstar-total-variance}
\end{align}
where the equality follows from telescoping and the last inequality uses
$V_{H+1}\equiv0$ and the nonnegativity of $(V_1(s_1))^2$.
This concludes the proof of
Lemma~\ref{lem:total:variance:tv}.
\end{proof}

\subsection{Other Technical Lemmas}\label{subsec: other technical lemmas}

Before presenting all lemmas, we recall that the typical event $\mathcal{E}$ is defined as
\begin{align}
    \mathcal{E} &= \biggg\{\left|\left(\mathbb{E}_{P_h^{\star}(\cdot|s,a)} - \mathbb{E}_{\widehat{P}_h^k(\cdot|s,a)}\right)\Big[\big(\eta - V_{h+1,P^{\star},\boldsymbol{\Phi}}^{\star}\big)_+\Big]\right|\leq \sqrt{\frac{\mathbb{V}_{\widehat{P}_h^k(\cdot|s,a)}\Big[\big(\eta - V_{h+1,P^{\star},\boldsymbol{\Phi}}^{\star}\big)_+\Big]\cdot c_1\iota}{N_h^k(s,a)\vee 1}} + \frac{c_2H\iota}{N_h^k(s,a)\vee 1},\\
    &\qquad\left|P_h^{\star}(s'|s,a) - \widehat{P}_h^{k}(s'|s,a)\right| \leq \sqrt{\frac{\min\left\{P_h^{\star}(s'|s,a),\widehat{P}_h^k(s'|s,a)\right\}\cdot c_1\iota}{N_h^k(s,a)\vee 1}} + \frac{c_2\iota}{N_h^k(s,a)\vee 1}, \\
    &\qquad \forall (s,a,s',h,k)\in\cS\times\cA\times\cS\times[H]\times[K],\,\,\forall \eta \in\mathcal{N}_{1/(S\sqrt{K})}\big([0,H]\big)\biggg\}, \quad \iota = \log\big(S^3AH^2K^{3/2}/\delta\big).  \label{eq: typical event}
\end{align}
where $c_1,c_2>0$ are two absolute constants, $\cN_{1/S\sqrt{K}}([0,H])$ denotes an $1/S\sqrt{K}$-cover of the interval $[0,H]$.

\subsubsection{Concentration Inequalities}\label{subsubsec: concentration tv}

\begin{lemma}[Bernstein bound for TV robust sets and the optimal robust value function]\label{lem: bernstein bonus for optimal value}
    Under event $\cE$ in \eqref{eq: typical event}, it holds that
    \begin{align*}
        \bigg|\mathbb{E}_{\mathcal{P}_{\rho}(s,a;\widehat{P}_h^k)}\Big[V^{\star}_{h+1,P^{\star},\boldsymbol{\Phi}}\Big] - \mathbb{E}_{\mathcal{P}_{\rho}(s,a;P_h^{\star})}\Big[V^{\star}_{h+1,P^{\star},\boldsymbol{\Phi}}\Big] \bigg|\leq  \sqrt{\frac{\mathbb{V}_{\widehat{P}_h^k(\cdot|s,a)}\Big[V_{h+1,P^{\star},\boldsymbol{\Phi}}^{\star}\Big]\cdot c_1\iota}{N_h^k(s,a)\vee 1}} + \frac{c_2H\iota}{N_h^k(s,a)\vee 1} + \frac{1}{\sqrt{K}},
    \end{align*}
    where $\iota  = \log(S^3AH^2K^{3/2}/\delta)$.
\end{lemma}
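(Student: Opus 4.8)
The plan is to start from the strong-duality representation \eqref{eq: duality tv} that \emph{re-defines} the robust expectation, under which, for any transition kernel $P$ and any $V:\cS\mapsto[0,H]$,
\[
\mathbb{E}_{\mathcal{P}_{\rho}(s,a;P)}[V] = \sup_{\eta\in[0,H]}\Big\{-\mathbb{E}_{P(\cdot|s,a)}\big[(\eta-V)_+\big] + \big(1-\tfrac{\rho}{2}\big)\eta\Big\}.
\]
Applying this to both $\widehat{P}_h^k$ and $P_h^\star$ with $V=V^\star_{h+1,P^\star,\boldsymbol{\Phi}}$ (hereafter abbreviated $V^\star_{h+1}$), the linear term $(1-\rho/2)\eta$ is identical in the two suprema and therefore cancels in their difference. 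Using the elementary inequality $|\sup_\eta f(\eta)-\sup_\eta g(\eta)|\le \sup_\eta|f(\eta)-g(\eta)|$, I would reduce the target to
\[
\Big|\mathbb{E}_{\mathcal{P}_{\rho}(s,a;\widehat{P}_h^k)}[V^\star_{h+1}] - \mathbb{E}_{\mathcal{P}_{\rho}(s,a;P_h^\star)}[V^\star_{h+1}]\Big| \le \sup_{\eta\in[0,H]}\Big|\big(\mathbb{E}_{P_h^\star(\cdot|s,a)}-\mathbb{E}_{\widehat{P}_h^k(\cdot|s,a)}\big)\big[(\eta-V^\star_{h+1})_+\big]\Big|,
\]
so that the whole problem becomes a uniform-over-$\eta$ concentration statement for the single scalar functional $(\eta-V^\star_{h+1})_+$.

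Next I would handle the continuum of $\eta$ by a covering argument matched to the typical event $\mathcal{E}$ in \eqref{eq: typical event}. For an arbitrary $\eta\in[0,H]$ let $\eta_0\in\mathcal{N}_{1/(S\sqrt{K})}([0,H])$ be a nearest net point, so $|\eta-\eta_0|\le 1/(S\sqrt{K})$. Since $\eta\mapsto(\eta-v)_+$ is $1$-Lipschitz uniformly in $v$, replacing $\eta$ by $\eta_0$ perturbs each of the two expectations by at most $1/(S\sqrt{K})$, hence perturbs their difference by at most $2/(S\sqrt{K})\le 1/\sqrt{K}$ (for $S\ge 2$; otherwise the constant is absorbed by refining the net). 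On the event $\mathcal{E}$, the net-point bound gives
\[
\Big|\big(\mathbb{E}_{P_h^\star}-\mathbb{E}_{\widehat{P}_h^k}\big)\big[(\eta_0-V^\star_{h+1})_+\big]\Big| \le \sqrt{\tfrac{\mathbb{V}_{\widehat{P}_h^k(\cdot|s,a)}[(\eta_0-V^\star_{h+1})_+]\cdot c_1\iota}{N_h^k(s,a)\vee 1}} + \tfrac{c_2H\iota}{N_h^k(s,a)\vee 1}.
\]

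The remaining step, which I view as the only genuinely substantive one, is to remove the dependence on $\eta_0$ inside the variance and replace it by $\mathbb{V}_{\widehat{P}_h^k}[V^\star_{h+1}]$. For this I would invoke the contraction property of variance under $1$-Lipschitz maps: writing $\mathbb{V}[g(X)]=\tfrac12\mathbb{E}_{X,X'}[(g(X)-g(X'))^2]$ for i.i.d.\ copies $X,X'$ and noting that $g=(\eta_0-\cdot)_+$ is $1$-Lipschitz, one gets $\mathbb{V}_{\widehat{P}_h^k}[(\eta_0-V^\star_{h+1})_+]\le \mathbb{V}_{\widehat{P}_h^k}[V^\star_{h+1}]$. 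Substituting this bound, taking the supremum over $\eta\in[0,H]$, and folding the $2/(S\sqrt{K})$ discretization slack into the stated $1/\sqrt{K}$ term yields exactly the claimed inequality. The main obstacle is not any single hard estimate but the orchestration of the covering argument with the variance-contraction bound so that (i) the discretization error lands in the $O(1/\sqrt{K})$ term and (ii) the empirical variance appearing in $\mathcal{E}$ is upgraded to $\mathbb{V}_{\widehat{P}_h^k}[V^\star_{h+1}]$ without losing a factor; both are ensured by the $1$-Lipschitzness of $(\eta-\cdot)_+$, which is the crux of the argument.
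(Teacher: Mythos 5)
Your proposal is correct and follows essentially the same route as the paper's proof: reduce via the duality representation \eqref{eq: duality tv} and the inequality $|\sup_\eta f - \sup_\eta g|\le\sup_\eta|f-g|$, apply the first inequality of the typical event $\mathcal{E}$ at net points, use $\mathbb{V}[(\eta - X)_+]\le \mathbb{V}[X]$ (which the paper invokes without the explicit $\tfrac12\mathbb{E}[(g(X)-g(X'))^2]$ justification you supply), and absorb the $\mathcal{N}_{1/(S\sqrt{K})}$ discretization error into the $1/\sqrt{K}$ term. No gaps.
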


\begin{proof}[Proof of Lemma \ref{lem: bernstein bonus for optimal value}]
    By our definition of the operator $\mathbb{E}_{\mathcal{P}_{\rho}(s,a;\widehat{P}_h^k)}[V^{\star}_{h+1,P^{\star},\boldsymbol{\Phi}}]$ in \eqref{eq: duality tv}, we can arrive that
    \allowdisplaybreaks
    \begin{align}
        &\bigg|\mathbb{E}_{\mathcal{P}_{\rho}(s,a;\widehat{P}_h^k)}\Big[V^{\star}_{h+1,P^{\star},\boldsymbol{\Phi}}\Big] - \mathbb{E}_{\mathcal{P}_{\rho}(s,a;P_h^{\star})}\Big[V^{\star}_{h+1,P^{\star},\boldsymbol{\Phi}}\Big]\bigg|\\
        &\qquad = \left|\sup_{\eta\in[0,H]} \bigg\{ - \mathbb{E}_{\widehat{P}_h^k(\cdot|s,a)}\Big[\big(\eta-V^{\star}_{h+1,P^{\star},\boldsymbol{\Phi}}\big)_+\Big] + \left(1-\rho\right)\cdot\eta \bigg\} \right.\\
        &\qquad\qquad \left.-\sup_{\eta\in[0,H]} \bigg\{ - \mathbb{E}_{P^{\star}_h(\cdot|s,a)}\Big[\big(\eta-V^{\star}_{h+1,P^{\star},\boldsymbol{\Phi}}\big)_+\Big] + \left(1-\rho\right)\cdot\eta \bigg\}\right|\\
        &\qquad \leq \sup_{\eta\in[0,H]} \Bigg\{ \bigg|\left(\mathbb{E}_{\widehat{P}_h^k(\cdot|s,a)} - \mathbb{E}_{P_h^{\star}(\cdot|s,a)}\right)\Big[\big(\eta-V^{\star}_{h+1, P^{\star},\boldsymbol{\Phi}}\big)_+\Big]\bigg| \Bigg\},\label{eq: proof bernstein bonus for optimal value 1}
    \end{align}
    Now according to the first inequality of event $\cE$, we have that
    \begin{align}
        \bigg|\left(\mathbb{E}_{P_h^{\star}(\cdot|s,a)} - \mathbb{E}_{\widehat{P}_h^k(\cdot|s,a)}\right)\Big[\big(\eta - V_{h+1,P^{\star},\boldsymbol{\Phi}}^{\star}\big)_+\Big]\bigg| & \leq \sqrt{\frac{\mathbb{V}_{\widehat{P}_h^k(\cdot|s,a)}\Big[\big(\eta - V_{h+1,P^{\star},\boldsymbol{\Phi}}^{\star}\big)_+\Big]\cdot c_1\iota}{N_h^k(s,a)\vee 1}} + \frac{c_2H\iota}{N_h^k(s,a)\vee 1} \\
        &\leq \sqrt{\frac{\mathbb{V}_{\widehat{P}_h^k(\cdot|s,a)}\Big[V_{h+1,P^{\star},\boldsymbol{\Phi}}^{\star}\Big]\cdot c_1\iota}{N_h^k(s,a)\vee 1}} + \frac{c_2H\iota}{N_h^k(s,a)\vee 1},
    \end{align}
    for any $\eta\in\mathcal{N}_{1/(S\sqrt{K})}([0,H])$.
    Here the second inequality is because $\mathrm{Var}[(a-X)_+]\leq \mathrm{Var}[X]$.
    Therefore, by a covering argument, for any $\eta\in[0,H]$, it holds that
    \begin{align}
        \bigg|\left(\mathbb{E}_{P_h^{\star}(\cdot|s,a)} - \mathbb{E}_{\widehat{P}_h^k(\cdot|s,a)}\right)\Big[\big(\eta - V_{h+1,P^{\star},\boldsymbol{\Phi}}^{\star}\big)_+\Big]\bigg| & \leq  \sqrt{\frac{\mathbb{V}_{\widehat{P}_h^k(\cdot|s,a)}\Big[V_{h+1,P^{\star},\boldsymbol{\Phi}}^{\star}\Big]\cdot c_1\iota}{N_h^k(s,a)\vee 1}} + \frac{c_2H\iota}{N_h^k(s,a)\vee 1} + \frac{1}{\sqrt{K}}.
    \end{align}
    This finishes the proof of Lemma \ref{lem: bernstein bonus for optimal value}.
\end{proof}

\begin{lemma}[Bernstein bound for TV robust sets and the robust value function of $\pi^k$]\label{lem: bernstein bound for value of pi k}
    Under event $\cE$ in \eqref{eq: typical event}, suppose that the optimism and pessimism \eqref{eq: optimism and pessimism} holds at $(h+1, k)$, then it holds that
    \begin{align}
        &\bigg|\mathbb{E}_{\mathcal{P}_{\rho}(s,a;\widehat{P}_h^k)}\Big[V^{\pi^k}_{h+1, P^{\star},\boldsymbol{\Phi}}\Big] - \mathbb{E}_{\mathcal{P}_{\rho}(s,a;P_h^{\star})}\Big[V_{h+1, P^{\star}, \boldsymbol{\Phi}}^{\pi^k}\Big] \bigg| \\
        &\qquad \leq  \sqrt{\frac{\mathbb{V}_{\widehat{P}_h^k(\cdot|s,a)}\Big[V_{h+1,P^{\star},\boldsymbol{\Phi}}^{\star}\Big]\cdot c_1\iota}{N_h^k(s,a)\vee 1}} + \frac{\mathbb{E}_{\widehat{P}_h^k(\cdot|s,a)}\Big[\overline{V}_{h+1}^{k} - \underline{V}_{h+1}^{k}\Big]}{H} + \frac{c_2'H^2S\iota}{N_h^k(s,a)\vee 1} + \frac{1}{\sqrt{K}},
    \end{align}
    where $\iota  = \log(S^3AH^2K^{3/2}/\delta)$ and $c_1$, $c_2'$ are absolute constants.
\end{lemma}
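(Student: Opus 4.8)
The plan is to mimic the proof of Lemma~\ref{lem: bernstein bonus for optimal value}, while accounting for the fact that the robust value function $V^{\pi^k}_{h+1,P^{\star},\boldsymbol{\Phi}}$ is a \emph{data-dependent} object, for which the typical event $\cE$ in \eqref{eq: typical event} provides no direct concentration; only the fixed function $V^{\star}_{h+1,P^{\star},\boldsymbol{\Phi}}$ enjoys the functional Bernstein bound recorded in $\cE$. First I would invoke the dual representation \eqref{eq: duality tv} of the robust expectation under both $\widehat{P}_h^k$ and $P_h^{\star}$. Since the two suprema differ only through the term $\mathbb{E}_{P(\cdot|s,a)}[(\eta - V^{\pi^k}_{h+1})_+]$ while the linear part $(1-\rho/2)\eta$ is identical, the elementary inequality $|\sup_\eta f(\eta) - \sup_\eta g(\eta)| \le \sup_\eta |f(\eta) - g(\eta)|$ yields
\begin{align}
\left|\mathbb{E}_{\mathcal{P}_{\rho}(s,a;\widehat{P}_h^k)}\big[V^{\pi^k}_{h+1,P^{\star},\boldsymbol{\Phi}}\big] - \mathbb{E}_{\mathcal{P}_{\rho}(s,a;P_h^{\star})}\big[V^{\pi^k}_{h+1,P^{\star},\boldsymbol{\Phi}}\big]\right| \le \sup_{\eta\in[0,H]}\left|\left(\mathbb{E}_{P_h^{\star}(\cdot|s,a)} - \mathbb{E}_{\widehat{P}_h^k(\cdot|s,a)}\right)\big[(\eta - V^{\pi^k}_{h+1,P^{\star},\boldsymbol{\Phi}})_+\big]\right|. \notag
\end{align}

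Next, for each fixed $\eta$ I would split the integrand against the fixed optimal value function, writing $(\eta - V^{\pi^k}_{h+1})_+ = (\eta - V^{\star}_{h+1})_+ + \big[(\eta - V^{\pi^k}_{h+1})_+ - (\eta - V^{\star}_{h+1})_+\big]$. The first piece is handled exactly as in Lemma~\ref{lem: bernstein bonus for optimal value}: applying the functional Bernstein inequality from $\cE$ over the cover $\mathcal{N}_{1/(S\sqrt{K})}([0,H])$, using $\mathbb{V}[(\eta-X)_+]\le \mathbb{V}[X]$, and paying a $1/\sqrt{K}$ discretization error produces the variance term $\sqrt{\mathbb{V}_{\widehat{P}_h^k}[V^{\star}_{h+1}]\, c_1\iota / (N_h^k\vee 1)}$ together with lower order terms. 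The second, \emph{perturbation} piece is where the data dependence must be absorbed; here I would use the pointwise sandwiching $\underline{V}_{h+1}^k \le V^{\pi^k}_{h+1} \le V^{\star}_{h+1} \le \overline{V}_{h+1}^k$ guaranteed by the optimism/pessimism hypothesis (Lemma~\ref{lem: optimism and pessimism restate} at step $h+1$) together with the $1$-Lipschitzness of $x\mapsto(\eta-x)_+$ to bound $|(\eta - V^{\pi^k}_{h+1}(s'))_+ - (\eta - V^{\star}_{h+1}(s'))_+| \le \overline{V}_{h+1}^k(s') - \underline{V}_{h+1}^k(s') =: W(s')$.

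Finally, since $W$ is itself data dependent I cannot use the functional bound for it, so instead I would control the perturbation piece entrywise through the second, per-coordinate concentration inequality in $\cE$, namely $|\widehat{P}_h^k(s'|s,a) - P_h^{\star}(s'|s,a)| \le \sqrt{\min\{P_h^{\star},\widehat{P}_h^k\}\, c_1\iota/(N_h^k\vee 1)} + c_2\iota/(N_h^k\vee 1)$, which gives $\sum_{s'}|\widehat{P}_h^k(s'|s,a) - P_h^{\star}(s'|s,a)|\, W(s')$. The key algebraic step is the AM--GM splitting $\sqrt{\widehat{P}_h^k(s')\, c_1\iota/(N_h^k\vee 1)}\, W(s') \le \frac{1}{2H}\widehat{P}_h^k(s') W(s') + \frac{H c_1\iota}{2(N_h^k\vee 1)} W(s')$, which yields exactly the $\mathbb{E}_{\widehat{P}_h^k}[\overline{V}^k_{h+1}-\underline{V}^k_{h+1}]/H$ term after summing over $s'\in\cS$; the residual $\frac{H c_1\iota}{2(N_h^k\vee 1)}\sum_{s'} W(s')$, together with the lower-order $c_2\iota/(N_h^k\vee 1)$ contributions, is bounded using $W(s')\le \min\{H,\rho^{-1}\}\le H$ (Proposition~\ref{prop: gap} with Assumption~\ref{ass: zero min}) and $|\cS|=S$, producing the $c_2' H^2 S\iota/(N_h^k\vee 1)$ term. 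The main obstacle, and the only genuinely nonstandard point, is precisely this perturbation step: one must route the bound on the data-dependent gap $V^{\pi^k}-V^{\star}$ through the entrywise transition concentration rather than the functional one, and the $1/H$ factor — essential for the downstream recursion in Lemma~\ref{lem: optimism and pessimism restate} and the regret analysis — emerges only from the correct AM--GM balancing.
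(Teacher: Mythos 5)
Your proposal is correct and follows essentially the same route as the paper's proof: the dual representation plus $|\sup_\eta f - \sup_\eta g| \le \sup_\eta|f-g|$, the split of $(\eta - V^{\pi^k}_{h+1})_+$ against the fixed function $(\eta - V^{\star}_{h+1})_+$ so that the functional Bernstein bound in $\cE$ (with the covering argument and $\mathbb{V}[(\eta-X)_+]\le\mathbb{V}[X]$) handles the first piece, and the entrywise transition concentration combined with $1$-Lipschitzness, the optimism/pessimism sandwich, and the $\sqrt{ab}\le a+b$ balancing to extract the $\frac{1}{H}\mathbb{E}_{\widehat{P}_h^k}[\overline{V}^k_{h+1}-\underline{V}^k_{h+1}]$ and $H^2S\iota/(N_h^k\vee 1)$ terms for the perturbation piece. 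This matches the paper's argument step for step, including the identification of the data-dependence of $V^{\pi^k}$ as the reason the perturbation must be routed through the per-coordinate bound.
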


\begin{proof}[Proof of Lemma~\ref{lem: bernstein bound for value of pi k}]
    By our definition of the operator $\mathbb{E}_{\mathcal{P}_{\rho}(s,a;P)}[V^{\pi^k}_{h+1,P^{\star},\boldsymbol{\Phi}}]$ in \eqref{eq: duality tv}, we can arrive that,
    \allowdisplaybreaks
    \begin{align}
        &\bigg|\mathbb{E}_{\mathcal{P}_{\rho}(s,a;\widehat{P}_h^k)}\Big[V^{\pi^k}_{h+1,P^{\star},\boldsymbol{\Phi}}\Big] - \mathbb{E}_{\mathcal{P}_{\rho}(s,a;P_h^{\star})}\Big[V^{\pi^k}_{h+1,P^{\star},\boldsymbol{\Phi}}\Big]\bigg|\\
        &\qquad = \left|\sup_{\eta\in[0,H]} \bigg\{ - \mathbb{E}_{\widehat{P}_h^k(\cdot|s,a)}\Big[\big(\eta-V^{\pi^k}_{h+1,P^{\star},\boldsymbol{\Phi}}\big)_+\Big]  + \left(1-\rho\right)\cdot\eta \bigg\} \right.\\
        &\qquad\qquad \left.-\sup_{\eta\in[0,H]} \bigg\{ - \mathbb{E}_{P^{\star}_h(\cdot|s,a)}\Big[\big(\eta-V^{\pi^k}_{h+1,P^{\star},\boldsymbol{\Phi}}\big)_+\Big] + \left(1-\rho\right)\cdot\eta \bigg\}\right|\\
        &\qquad \leq \sup_{\eta\in[0,H]} \Bigg\{ \bigg|\left(\mathbb{E}_{\widehat{P}_h^k(\cdot|s,a)} - \mathbb{E}_{P_h^{\star}(\cdot|s,a)}\right)\Big[\big(\eta-V^{\pi^k}_{h+1, P^{\star},\boldsymbol{\Phi}}\big)_+\Big]\bigg| \Bigg\} \\
        &\qquad \leq \underbrace{\sup_{\eta\in[0,H]} \Bigg\{ \bigg|\left(\mathbb{E}_{\widehat{P}_h^k(\cdot|s,a)} - \mathbb{E}_{P_h^{\star}(\cdot|s,a)}\right)\Big[\big(\eta-V^{\star}_{h+1, P^{\star},\boldsymbol{\Phi}}\big)_+\Big]\bigg| \Bigg\}}_{\displaystyle{\text{Term (i)}}} \\
        &\qquad\qquad + \underbrace{\sup_{\eta\in[0,H]} \Bigg\{ \bigg|\left(\mathbb{E}_{\widehat{P}_h^k(\cdot|s,a)} - \mathbb{E}_{P_h^{\star}(\cdot|s,a)}\right)\Big[\big(\eta-V^{\pi^k}_{h+1, P^{\star},\boldsymbol{\Phi}}\big)_+ - \big(\eta-V^{\star}_{h+1, P^{\star},\boldsymbol{\Phi}}\big)_+\Big]\bigg| \Bigg\}}_{\displaystyle{\text{Term (ii)}}},
    \end{align}
    We deal with \text{Term (i)} and \text{Term (ii)} respectively.
    For \text{Term (i)}, this is exactly the same as the right hand side of \eqref{eq: proof bernstein bonus for optimal value 1}.
    Therefore, applying the same argument as Lemma~\ref{lem: bernstein bonus for optimal value} gives the following upper bound,
    \begin{align}
        \text{Term (i)} \leq \sqrt{\frac{\mathbb{V}_{\widehat{P}_h^k(\cdot|s,a)}\Big[V_{h+1,P^{\star},\boldsymbol{\Phi}}^{\star}\Big]\cdot c_1\iota}{N_h^k(s,a)\vee 1}} + \frac{c_2H\iota}{N_h^k(s,a)\vee 1} + \frac{1}{\sqrt{K}}.\label{eq: proof bernstein bound for value of pi k 0}
    \end{align}
    For \text{Term (ii)}, we first apply the second inequality of event $\cE$ to obtain that,
    \begin{align}
        &\text{Term (ii)} \label{eq: proof bernstein bound for value of pi k 1}\\
        &\quad \leq \sup_{\eta\in[0,H]}\! \left\{ \sum_{s'\in\cS}\left(\sqrt{\frac{\widehat{P}_h^k(s'|s,a)\cdot c_1\iota}{N_h^k(s,a)\vee 1}} + \frac{c_2\iota}{N_h^k(s,a)\vee 1}\right)\cdot\left|\big(\eta-V^{\pi^k}_{h+1, P^{\star},\boldsymbol{\Phi}}(s')\big)_+ \!\!- \big(\eta-V^{\star}_{h+1, P^{\star},\boldsymbol{\Phi}}(s')\big)_+\right| \right\}.
    \end{align}
    By the assumption that \eqref{eq: optimism and pessimism} holds at $(h+1,k)$, we can upper bound the absolute value above by
    \begin{align}
        \left|\big(\eta-V^{\pi^k}_{h+1, P^{\star},\boldsymbol{\Phi}}(s')\big)_+ - \big(\eta-V^{\star}_{h+1, P^{\star},\boldsymbol{\Phi}}(s')\big)_+\right|&\leq \left|V^{\pi^k}_{h+1, P^{\star},\boldsymbol{\Phi}}(s') - V^{\star}_{h+1, P^{\star},\boldsymbol{\Phi}}(s')\right| \\
        &\leq \overline{V}_{h+1}^k(s') - \underline{V}_{h+1}^k(s').\label{eq: proof bernstein bound for value of pi k 2}
    \end{align}
    where the first inequality is due to the $1$-Lipschitz continuity of $\psi_{\eta}(x) = (\eta - x)_+$, and the second inequality is due to \eqref{eq: optimism and pessimism}.
    Thus combining \eqref{eq: proof bernstein bound for value of pi k 1} and \eqref{eq: proof bernstein bound for value of pi k 2}, we know that
    \begin{align}
        \text{Term (ii)} \leq \sum_{s'\in\cS}\left(\sqrt{\frac{\widehat{P}_h^k(s'|s,a)\cdot c_1\iota}{N_h^k(s,a)\vee 1}} + \frac{c_2\iota}{N_h^k(s,a)\vee 1}\right)\cdot\left(\overline{V}_{h+1}^k(s') - \underline{V}_{h+1}^k(s')\right).\label{eq: proof bernstein bound for value of pi k 3}
    \end{align}
    Now following the argument first identified by \cite{azar2017minimax}, we proceed to upper bound \eqref{eq: proof bernstein bound for value of pi k 3} as
    \begin{align}
        \text{Term (ii)} &\leq \sum_{s'\in\cS}\left(\frac{\widehat{P}_h^k(s'|s,a)}{H}+\frac{c_1 H\iota}{N_h^k(s,a)\vee 1} + \frac{c_2\iota}{N_h^k(s,a)\vee 1}\right)\cdot\left(\overline{V}_{h+1}^k(s') - \underline{V}_{h+1}^k(s')\right) \\
        &\leq \frac{\mathbb{E}_{\widehat{P}_h^k(\cdot|s,a)}\Big[\overline{V}_{h+1}^{k} - \underline{V}_{h+1}^{k}\Big]}{H} +  \frac{c_2'H^2S\iota}{N_h^k(s,a)\vee 1},\label{eq: proof bernstein bound for value of pi k 4}
    \end{align}
    where $c_2'>0$ is another absolute constant.
    The first inequality is by $\sqrt{ab}\leq a+b$ and the second inequality is due to $\overline{V}_{h+1}^k, \underline{V}_{h+1}^k\in[0,H]$.
    Finally, combining \eqref{eq: proof bernstein bound for value of pi k 0} and \eqref{eq: proof bernstein bound for value of pi k 4}, we prove Lemma~\ref{lem: bernstein bound for value of pi k}.
\end{proof}

\begin{lemma}[Bernstein bounds for TV robust sets and optimistic and pessimistic robust value estimators]\label{lem: bernstein bound for opt and pess value}
    Under event $\cE$ in \eqref{eq: typical event}, suppose that the optimism and pessimism \eqref{eq: optimism and pessimism} holds at $(h+1, k)$, it holds that
    \begin{align}
        &\max\bigg\{ \left|\mathbb{E}_{\mathcal{P}_{\rho}(s,a;\widehat{P}_h^k)}\Big[\overline{V}_{h+1}^k\Big] - \mathbb{E}_{\mathcal{P}_{\rho}(s,a;P_h^{\star})}\Big[\overline{V}_{h+1}^k\Big] \right|,  \left|\mathbb{E}_{\mathcal{P}_{\rho}(s,a;\widehat{P}_h^k)}\Big[\underline{V}_{h+1}^k\Big] - \mathbb{E}_{\mathcal{P}_{\rho}(s,a;P_h^{\star})}\Big[\underline{V}_{h+1}^k\Big] \right|\bigg\}\\
        &\qquad \leq  \sqrt{\frac{\mathbb{V}_{\widehat{P}_h^k(\cdot|s,a)}\Big[V_{h+1,P^{\star},\boldsymbol{\Phi}}^{\star}\Big]\cdot c_1\iota}{N_h^k(s,a)\vee 1}} + \frac{\mathbb{E}_{\widehat{P}_h^k(\cdot|s,a)}\Big[\overline{V}_{h+1}^{k} - \underline{V}_{h+1}^{k}\Big]}{H} + \frac{c_2'H^2S\iota}{N_h^k(s,a)\vee 1} + \frac{1}{\sqrt{K}},
    \end{align}
    where $\iota  = \log(S^3AH^2K^{3/2}/\delta)$ and $c_1, c_2'$ are absolute constants.
\end{lemma}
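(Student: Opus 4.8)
The plan is to treat $\overline{V}_{h+1}^k$ and $\underline{V}_{h+1}^k$ uniformly, exploiting the fact that under the optimism/pessimism hypothesis \eqref{eq: optimism and pessimism} at step $(h+1,k)$ both estimators sandwich the optimal robust value, and to mirror almost verbatim the argument already carried out for $V^{\pi^k}$ in Lemma~\ref{lem: bernstein bound for value of pi k}. First I would invoke the dual representation \eqref{eq: duality tv} of the robust expectation operator together with the elementary inequality $|\sup_\eta f(\eta) - \sup_\eta g(\eta)| \le \sup_\eta |f(\eta)-g(\eta)|$, so that for either $V \in \{\overline{V}_{h+1}^k, \underline{V}_{h+1}^k\}$ the quantity of interest is reduced to
\[
\sup_{\eta \in [0,H]} \left| \left( \mathbb{E}_{\widehat{P}_h^k(\cdot|s,a)} - \mathbb{E}_{P_h^\star(\cdot|s,a)} \right) \big[ (\eta - V)_+ \big] \right|,
\]
the two identical $(1-\rho/2)\eta$ terms cancelling inside the two suprema.

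Next I would split $(\eta - V)_+$ against the optimal robust value by writing $(\eta - V)_+ = (\eta - V^\star_{h+1,P^\star,\boldsymbol{\Phi}})_+ + \big[(\eta - V)_+ - (\eta - V^\star_{h+1,P^\star,\boldsymbol{\Phi}})_+\big]$, yielding a Term (i) involving only $V^\star_{h+1,P^\star,\boldsymbol{\Phi}}$ and a Term (ii) capturing the perturbation. Term (i) is handled exactly as in the proof of Lemma~\ref{lem: bernstein bonus for optimal value}: I apply the first defining inequality of the typical event $\cE$, use $\mathbb{V}[(\eta-X)_+]\le \mathbb{V}[X]$, and take a $1/(S\sqrt{K})$-cover over $\eta\in[0,H]$, producing the leading variance term $\sqrt{\mathbb{V}_{\widehat{P}_h^k(\cdot|s,a)}[V^\star_{h+1,P^\star,\boldsymbol{\Phi}}]\,c_1\iota/(N_h^k(s,a)\vee 1)}$ plus $c_2 H\iota/(N_h^k(s,a)\vee 1) + 1/\sqrt{K}$.

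For Term (ii) I would apply the second inequality of $\cE$ to control the per-coordinate deviation $|P_h^\star(s'|s,a) - \widehat{P}_h^k(s'|s,a)|$, and then use the $1$-Lipschitz continuity of $x \mapsto (\eta - x)_+$ to obtain $|(\eta - V(s'))_+ - (\eta - V^\star_{h+1,P^\star,\boldsymbol{\Phi}}(s'))_+| \le |V(s') - V^\star_{h+1,P^\star,\boldsymbol{\Phi}}(s')|$. The crucial step, and the only place where the hypothesis at $(h+1,k)$ enters, is the two-sided sandwich $\underline{V}_{h+1}^k \le V^\star_{h+1,P^\star,\boldsymbol{\Phi}} \le \overline{V}_{h+1}^k$ from \eqref{eq: optimism and pessimism}, which gives the pointwise bound $|V(s') - V^\star_{h+1,P^\star,\boldsymbol{\Phi}}(s')| \le \overline{V}_{h+1}^k(s') - \underline{V}_{h+1}^k(s')$ \emph{simultaneously} for $V = \overline{V}_{h+1}^k$ and $V = \underline{V}_{h+1}^k$. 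Hence both choices of $V$ yield the identical majorant $\sum_{s'}\big(\sqrt{\widehat{P}_h^k(s'|s,a)c_1\iota/(N_h^k(s,a)\vee 1)} + c_2\iota/(N_h^k(s,a)\vee 1)\big)\big(\overline{V}_{h+1}^k(s') - \underline{V}_{h+1}^k(s')\big)$ on Term (ii), and applying $\sqrt{ab}\le a+b$ in the form $\sqrt{\widehat{P}_h^k(s'|s,a)c_1\iota/N}\le \widehat{P}_h^k(s'|s,a)/H + c_1 H\iota/N$, followed by $\overline{V}_{h+1}^k - \underline{V}_{h+1}^k \le H$ over $S$ states, converts it into $\mathbb{E}_{\widehat{P}_h^k(\cdot|s,a)}[\overline{V}_{h+1}^k - \underline{V}_{h+1}^k]/H + c_2' H^2 S\iota/(N_h^k(s,a)\vee 1)$.

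Combining Term (i) and Term (ii) produces exactly the stated bound, and since this upper bound is insensitive to whether $V = \overline{V}_{h+1}^k$ or $V = \underline{V}_{h+1}^k$, the maximum of the two absolute robust-expectation differences obeys the same estimate. The main obstacle is purely organizational rather than analytic: one must verify that the Lipschitz-plus-sandwich control applies identically to the optimistic and pessimistic estimators, but this is immediate from the two-sided sandwich, so the proof amounts to reusing the concentration machinery already established for Lemmas~\ref{lem: bernstein bonus for optimal value} and~\ref{lem: bernstein bound for value of pi k} with essentially no new ingredients.
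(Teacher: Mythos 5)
Your proposal is correct and follows essentially the same route as the paper, which simply states that this lemma "follows from the same proof as Lemma~\ref{lem: bernstein bound for value of pi k}": you reuse the duality reduction, the decomposition against $V^{\star}_{h+1,P^{\star},\boldsymbol{\Phi}}$, the two inequalities defining $\cE$, and the sandwich $\underline{V}_{h+1}^k \le V^{\star}_{h+1,P^{\star},\boldsymbol{\Phi}} \le \overline{V}_{h+1}^k$ to get the common majorant $\overline{V}_{h+1}^k - \underline{V}_{h+1}^k$ for both estimators. Your write-up merely makes explicit the substitution of $V^{\pi^k}$ by $\overline{V}_{h+1}^k$ or $\underline{V}_{h+1}^k$ that the paper leaves implicit.
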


\begin{proof}[Proof of Lemma \ref{lem: bernstein bound for opt and pess value}]
    This follows from the same proof as Lemma~\ref{lem: bernstein bound for value of pi k} and is thus omitted.
\end{proof}

\begin{lemma}[Non-robust concentration]\label{lem: non-robust bound 1}
    Under event $\cE$ in \eqref{eq: typical event}, suppose that the optimism and pessimism \eqref{eq: optimism and pessimism} holds at $(h+1, k)$, then it holds that
    \begin{align}
        \bigg|\left(\mathbb{E}_{\widehat{P}_h^k(\cdot|s,a)} - \mathbb{E}_{P^{\star}_h(\cdot|s,a)}\right)\Big[\overline{V}_{h+1}^k - \underline{V}_{h+1}^k\Big]\bigg| \leq \frac{1}{H} \cdot \mathbb{E}_{P^{\star}_h(\cdot|s,a)}\Big[\overline{V}_{h+1}^k - \underline{V}_{h+1}^k\Big]+\frac{c_2'H^2S\iota}{N_h^k(s,a)\vee 1}.
    \end{align}
    where $\iota  = \log(S^2AH^2K^{3/2}/\delta)$ and $c_2'$ is an absolute constant.
\end{lemma}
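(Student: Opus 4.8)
The plan is to bound the non-robust fluctuation directly via the per-state $\ell_1$ concentration inequality built into the typical event $\cE$. Write $W(\cdot) := \overline{V}_{h+1}^k(\cdot) - \underline{V}_{h+1}^k(\cdot)$. Since the optimism/pessimism estimates \eqref{eq: optimism and pessimism} are assumed to hold at $(h+1,k)$, we have $\overline{V}_{h+1}^k \ge V^\star_{h+1,P^\star,\boldsymbol{\Phi}} \ge V^{\pi^k}_{h+1,P^\star,\boldsymbol{\Phi}} \ge \underline{V}_{h+1}^k$, so $W \ge 0$ pointwise; moreover $W(s') \in [0,H]$ because both estimates lie in $[0,H]$. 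This nonnegativity is what lets me move the absolute value inside the summation over next states.

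First I would apply the triangle inequality to reduce to a weighted sum of per-state transition errors,
\begin{align}
\bigg|\big(\mathbb{E}_{\widehat{P}_h^k(\cdot|s,a)} - \mathbb{E}_{P^{\star}_h(\cdot|s,a)}\big)[W]\bigg| \le \sum_{s'\in\cS} \big|\widehat{P}_h^k(s'|s,a) - P^\star_h(s'|s,a)\big|\, W(s'),
\end{align}
and then invoke the second concentration inequality in $\cE$, using $\min\{P_h^\star,\widehat{P}_h^k\} \le P_h^\star$, to get
\begin{align}
\bigg|\big(\mathbb{E}_{\widehat{P}_h^k(\cdot|s,a)} - \mathbb{E}_{P^{\star}_h(\cdot|s,a)}\big)[W]\bigg| \le \sum_{s'\in\cS}\left(\sqrt{\frac{P^\star_h(s'|s,a)\,c_1\iota}{N_h^k(s,a)\vee 1}} + \frac{c_2\iota}{N_h^k(s,a)\vee 1}\right) W(s').
\end{align}

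Next I would split the leading square-root term with the weighted AM–GM inequality $\sqrt{xy}\le x/(2H) + Hy/2$, taking $x = P_h^\star(s'|s,a)W(s')$ and $y = c_1\iota\, W(s')/(N_h^k(s,a)\vee 1)$, so that $\sqrt{P_h^\star c_1\iota/(N_h^k(s,a)\vee 1)}\,W(s') = \sqrt{xy} \le P_h^\star(s'|s,a) W(s')/(2H) + H c_1\iota\, W(s')/\big(2(N_h^k(s,a)\vee 1)\big)$. Summing over $s'$ yields the term $\tfrac{1}{2H}\,\mathbb{E}_{P_h^\star(\cdot|s,a)}[W] \le \tfrac1H\,\mathbb{E}_{P^\star_h(\cdot|s,a)}[W]$ exactly as claimed, together with the residual $\tfrac{H c_1\iota}{2(N_h^k(s,a)\vee 1)}\sum_{s'}W(s')$. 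I would then use $W(s')\le H$ and $|\cS| = S$ to bound $\sum_{s'}W(s') \le SH$, so that both this residual and the $c_2$-term collapse into a quantity of order $H^2 S\iota/(N_h^k(s,a)\vee 1)$; choosing $c_2' = c_1/2 + c_2$ (and using $H\ge 1$ to absorb $SH\iota \le SH^2\iota$) gives the stated bound.

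There is no genuinely deep step here — this is the standard self-bounding trick from non-robust optimistic value iteration — so the only points requiring care are calibrating the AM–GM weight to extract precisely the $1/H$ coefficient on $\mathbb{E}_{P^\star_h}[W]$ while keeping the error term at the advertised $H^2 S\iota/(N_h^k(s,a)\vee 1)$ scale, and checking that relaxing $\min\{P_h^\star,\widehat{P}_h^k\}$ to $P_h^\star$ loses nothing needed downstream (it does not, since the bound is stated in terms of $P_h^\star$ anyway). I expect the whole argument to be a short, direct computation once $W\ge 0$ and $W\le H$ are recorded.
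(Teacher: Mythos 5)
Your proposal is correct and follows essentially the same route as the paper's proof: triangle inequality to per-state transition errors, the second concentration inequality in $\cE$ with $\min\{P_h^\star,\widehat{P}_h^k\}\le P_h^\star$, an AM--GM split of the square-root term calibrated to produce the $\tfrac1H\,\mathbb{E}_{P^\star_h}[W]$ coefficient, and the crude bounds $W\le H$, $|\cS|=S$ to collapse the residuals into $c_2'H^2S\iota/(N_h^k(s,a)\vee 1)$. The only cosmetic difference is that you fold $W(s')$ into the AM--GM step whereas the paper applies $\sqrt{ab}\le a+b$ to the probability factor first and multiplies by $W(s')$ afterward; the resulting bounds are identical.
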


\begin{proof}[Proof of Lemma \ref{lem: non-robust bound 1}]
    According to the second inequality of event $\cE$, we have that
    \begin{align}
        &\bigg|\left(\mathbb{E}_{\widehat{P}_h^k(\cdot|s,a)} - \mathbb{E}_{P^{\star}_h(\cdot|s,a)}\right)\Big[\overline{V}_{h+1}^k - \underline{V}_{h+1}^k\Big]\bigg|\\
        &\qquad \leq \sum_{s'\in\cS}\left(\sqrt{\frac{P_h^{\star}(s'|s,a)\cdot c_1\iota}{N_h^k(s,a)\vee 1}}+\frac{c_2\iota}{N_h^k(s,a)\vee 1}\right)\cdot\left(\overline{V}_{h+1}^{k}(s') - \underline{V}_{h+1}^k(s')\right),
    \end{align}
    where we also apply \eqref{eq: optimism and pessimism} that $\overline{V}_{h+1}^{k}(s') \geq  \underline{V}_{h+1}^k(s')$.
    Now using the same argument as \eqref{eq: proof bernstein bound for value of pi k 4} in the proof of Lemma~\ref{lem: bernstein bound for value of pi k}, we can arrive at
    \begin{align}
        \bigg|\left(\mathbb{E}_{\widehat{P}_h^k(\cdot|s,a)} - \mathbb{E}_{P^{\star}_h(\cdot|s,a)}\right)\Big[\overline{V}_{h+1}^k - \underline{V}_{h+1}^k\Big]\bigg| \leq \frac{\mathbb{E}_{P_h^{\star}(\cdot|s,a)}\Big[\overline{V}_{h+1}^{k}(s') - \underline{V}_{h+1}^k(s')\Big]}{H} + \frac{c_2'H^2S\iota}{N_h^k(s,a)\vee 1},
    \end{align}
    which finishes the proof of Lemma~\ref{lem: non-robust bound 1}.
\end{proof}

\subsubsection{Variance Analysis}

\begin{lemma}[Variance analysis 1]\label{lem: variance analysis 1}
    Suppose that the optimism and pessimism \eqref{eq: optimism and pessimism} holds at $(h+1, k)$,
    then the following inequality holds,
    \begin{align}
        \bigg|\mathbb{V}_{\widehat{P}_h^k(\cdot|s,a)}\Big[\Big(\overline{V}_{h+1}^k+\underline{V}_{h+1}^k\Big) / 2\Big]-\mathbb{V}_{\widehat{P}_h^k(\cdot|s,a)} \Big[V_{h+1,P^{\star},\boldsymbol{\Phi}}^{\star}\Big]\bigg|\leq 4 H \cdot \mathbb{E}_{\widehat{P}_h^k(\cdot|s,a)}\Big[\overline{V}_{h+1}^k-\underline{V}_{h+1}^k\Big].
    \end{align}
\end{lemma}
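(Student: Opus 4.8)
The plan is to reduce the claim to elementary bookkeeping with the variance operator, exploiting the sandwiching $\underline{V}_{h+1}^k \le V_{h+1,P^{\star},\boldsymbol{\Phi}}^{\star} \le \overline{V}_{h+1}^k$ guaranteed by the optimism/pessimism hypothesis at $(h+1,k)$, together with the uniform boundedness of all value functions in $[0,H]$. To lighten notation, I would abbreviate $\widehat{P} = \widehat{P}_h^k(\cdot|s,a)$, write $W = (\overline{V}_{h+1}^k + \underline{V}_{h+1}^k)/2$ for the midpoint function, and set $V^{\star} = V_{h+1,P^{\star},\boldsymbol{\Phi}}^{\star}$.

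The crucial observation I would record first is the pointwise estimate $|W(s') - V^{\star}(s')| \le (\overline{V}_{h+1}^k(s') - \underline{V}_{h+1}^k(s'))/2$ valid for every $s' \in \cS$: since $V^{\star}(s')$ lies in the interval $[\underline{V}_{h+1}^k(s'), \overline{V}_{h+1}^k(s')]$ and $W(s')$ is its midpoint, the distance from $V^{\star}(s')$ to $W(s')$ is at most half the interval length. This is exactly where the optimism/pessimism sandwiching enters. I would also use that $0 \le W, V^{\star} \le H$, which follows from the truncations built into $\overline{Q}_h^k$ and $\underline{Q}_h^k$.

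Then, expanding both variances through $\mathbb{V}_{\widehat{P}}[f] = \mathbb{E}_{\widehat{P}}[f^2] - (\mathbb{E}_{\widehat{P}}[f])^2$ and applying the difference-of-squares identity to both the second-moment and the squared-mean parts, I would write
\[
\mathbb{V}_{\widehat{P}}[W] - \mathbb{V}_{\widehat{P}}\big[V^{\star}\big] = \mathbb{E}_{\widehat{P}}\big[(W - V^{\star})(W + V^{\star})\big] - \mathbb{E}_{\widehat{P}}\big[W - V^{\star}\big]\cdot\big(\mathbb{E}_{\widehat{P}}[W] + \mathbb{E}_{\widehat{P}}\big[V^{\star}\big]\big).
\]
For the first summand I would bound $W + V^{\star} \le 2H$ pointwise and insert the midpoint estimate, obtaining a bound of $H \cdot \mathbb{E}_{\widehat{P}}[\overline{V}_{h+1}^k - \underline{V}_{h+1}^k]$; for the second summand I would bound $\mathbb{E}_{\widehat{P}}[W] + \mathbb{E}_{\widehat{P}}[V^{\star}] \le 2H$ and use $|\mathbb{E}_{\widehat{P}}[W - V^{\star}]| \le \mathbb{E}_{\widehat{P}}[|W - V^{\star}|]$ followed again by the midpoint estimate, obtaining another $H \cdot \mathbb{E}_{\widehat{P}}[\overline{V}_{h+1}^k - \underline{V}_{h+1}^k]$. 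Adding the two contributions yields the desired inequality with the sharper constant $2H$, which in particular is at most the claimed $4H$.

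There is no genuine obstacle here: the entire argument is triangle-inequality bookkeeping. The only two places requiring any thought are the midpoint estimate (where the sandwiching is used) and the uniform $[0,H]$ bound (where the truncation in the algorithm is used), and neither loses anything beyond the stated constant.
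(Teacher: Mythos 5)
Your proof is correct and follows essentially the same route as the paper's: split the variance difference into the second-moment part and the squared-mean part, bound each via a difference-of-squares factorization using $W, V^{\star}\in[0,H]$, and control $|W-V^{\star}|$ by the sandwiching from optimism/pessimism. Your midpoint observation $|W-V^{\star}|\le(\overline{V}_{h+1}^k-\underline{V}_{h+1}^k)/2$ is a factor-of-two sharpening of the paper's cruder bound $|W-V^{\star}|\le\overline{V}_{h+1}^k-\underline{V}_{h+1}^k$, which is why you land on the constant $2H$ rather than $4H$; both suffice for the stated claim.
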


\begin{proof}[Proof of Lemma \ref{lem: variance analysis 1}]
    Directly consider that the left hand side can be upper bounded by the following,
    \begin{align}
        &\bigg|\mathbb{V}_{\widehat{P}_h^k(\cdot|s,a)}\Big[\Big(\overline{V}_{h+1}^k+\underline{V}_{h+1}^k\Big) / 2\Big]-\mathbb{V}_{\widehat{P}_h^k(\cdot|s,a)} \Big[V_{h+1,P^{\star},\boldsymbol{\Phi}}^{\star}\Big]\bigg| \\
        &\qquad \leq \Bigg|\mathbb{E}_{\widehat{P}_h^k(\cdot|s,a)}\bigg[\Big(\overline{V}_{h+1}^k+\underline{V}_{h+1}^k\Big)^2 / 4\bigg]-\mathbb{E}_{\widehat{P}_h^k(\cdot|s,a)} \bigg[\Big(V_{h+1,P^{\star},\boldsymbol{\Phi}}^{\star}\Big)^2\bigg]\Bigg| \\
        &\qquad\qquad + \bigg|\left(\mathbb{E}_{\widehat{P}_h^k(\cdot|s,a)}\Big[\Big(\overline{V}_{h+1}^k+\underline{V}_{h+1}^k\Big) / 2\Big]\right)^2-\left(\mathbb{E}_{\widehat{P}_h^k(\cdot|s,a)} \Big[V_{h+1,P^{\star},\boldsymbol{\Phi}}^{\star}\Big]\right)^2\bigg|.\label{eq: proof lem variance analysis 1 1}
    \end{align}
    Since all of $\overline{V}_{h+1}^k, \underline{V}_{h+1}^k, V_{h+1,P^{\star},\boldsymbol{\Phi}}^{\star}\in[0,H]$ (by the correctness of \eqref{eq: optimism and pessimism} and the definitions of $\overline{V}_{h+1}^k, \underline{V}_{h+1}^k$), we can further upper bound the right hand side of \eqref{eq: proof lem variance analysis 1 1} as
    \begin{align}
        \bigg|\mathbb{V}_{\widehat{P}_h^k(\cdot|s,a)}\Big[\Big(\overline{V}_{h+1}^k+\underline{V}_{h+1}^k\Big) / 2\Big]-\mathbb{V}_{\widehat{P}_h^k(\cdot|s,a)} \Big[V_{h+1,P^{\star},\boldsymbol{\Phi}}^{\star}\Big]\bigg| &\leq 4H\cdot\mathbb{E}_{\widehat{P}_h^k(\cdot|s,a)}\bigg[\Big|\Big(\overline{V}_{h+1}^k+\underline{V}_{h+1}^k\Big) / 2 - V_{h+1,P^{\star},\boldsymbol{\Phi}}^{\star}\Big|\bigg] \\
        &\leq 4H\cdot\mathbb{E}_{\widehat{P}_h^k(\cdot|s,a)}\Big[\overline{V}_{h+1}^k - \underline{V}_{h+1}^k\Big],
    \end{align}
    where the last inequality is due to the correctness of \eqref{eq: optimism and pessimism} at $(h+1,k)$.
    This proves Lemma~\ref{lem: variance analysis 1}.
\end{proof}

\begin{lemma}[Variance analysis 2]\label{lem: variance analysis 2}
    Under event $\cE$ in \eqref{eq: typical event}, suppose that optimism and pessimism \eqref{eq: optimism and pessimism} holds at $(h+1, k)$, then it holds that
    \begin{align*}
        & \bigg|\mathbb{V}_{\widehat{P}_h^k(\cdot|s,a)}\Big[\Big(\overline{V}_{h+1}^k+\underline{V}_{h+1}^k\Big) / 2\Big]-\mathbb{V}_{P_h^{\star}(\cdot|s,a)}\Big[ V_{h+1,P^{\star},\boldsymbol{\Phi}}^{\pi^k}\Big]\bigg| \leq  4 H \cdot \mathbb{E}_{P_h^{\star}(\cdot|s,a)}\Big[\overline{V}_{h+1}^k-\underline{V}_{h+1}^k\Big]+\frac{c_2'H^4 S \iota}{N_h^k(s, a)\vee 1}+1.
    \end{align*}
\end{lemma}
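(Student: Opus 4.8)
The plan is to bound the target by a triangle inequality that isolates the two distinct sources of discrepancy: the \emph{change of the value function} (from $(\overline{V}_{h+1}^k+\underline{V}_{h+1}^k)/2$ to the true robust value $V_{h+1,P^\star,\boldsymbol{\Phi}}^{\pi^k}$) and the \emph{change of measure} (from the empirical kernel $\widehat{P}_h^k$ to the true kernel $P_h^\star$). Concretely, I would write
$$\Big|\mathbb{V}_{\widehat{P}_h^k(\cdot|s,a)}\big[(\overline{V}_{h+1}^k+\underline{V}_{h+1}^k)/2\big]-\mathbb{V}_{P_h^{\star}(\cdot|s,a)}\big[V_{h+1,P^{\star},\boldsymbol{\Phi}}^{\pi^k}\big]\Big|\le \underbrace{\Big|\mathbb{V}_{\widehat{P}_h^k}\big[(\overline{V}_{h+1}^k+\underline{V}_{h+1}^k)/2\big]-\mathbb{V}_{\widehat{P}_h^k}\big[V_{h+1,P^{\star},\boldsymbol{\Phi}}^{\pi^k}\big]\Big|}_{\text{(I)}}+\underbrace{\Big|\mathbb{V}_{\widehat{P}_h^k}\big[V_{h+1,P^{\star},\boldsymbol{\Phi}}^{\pi^k}\big]-\mathbb{V}_{P_h^{\star}}\big[V_{h+1,P^{\star},\boldsymbol{\Phi}}^{\pi^k}\big]\Big|}_{\text{(II)}},$$
and treat the two terms separately.

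For Term (I) I would rerun the argument of Lemma~\ref{lem: variance analysis 1} verbatim, but with $V_{h+1,P^\star,\boldsymbol{\Phi}}^{\pi^k}$ replacing $V_{h+1,P^\star,\boldsymbol{\Phi}}^\star$: the only property that proof uses is that the comparison function is sandwiched between $\underline{V}_{h+1}^k$ and $\overline{V}_{h+1}^k$ and lies in $[0,H]$, which holds for $V^{\pi^k}$ by the optimism/pessimism guarantee \eqref{eq: optimism and pessimism} at $(h+1,k)$. Since both $(\overline{V}_{h+1}^k+\underline{V}_{h+1}^k)/2$ and $V^{\pi^k}_{h+1}$ lie in $[\underline{V}_{h+1}^k,\overline{V}_{h+1}^k]$, the midpoint estimate gives the pointwise gap $|(\overline{V}_{h+1}^k+\underline{V}_{h+1}^k)/2-V^{\pi^k}_{h+1}|\le(\overline{V}_{h+1}^k-\underline{V}_{h+1}^k)/2$, so $\text{(I)}\le 2H\,\mathbb{E}_{\widehat{P}_h^k(\cdot|s,a)}[\overline{V}_{h+1}^k-\underline{V}_{h+1}^k]$. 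I would then transfer this empirical expectation to the true kernel via Lemma~\ref{lem: non-robust bound 1}, which costs a multiplicative $(1+1/H)$ factor plus an additive $c H^2S\iota/(N_h^k(s,a)\vee1)$; using $H\ge1$ to bound $2(H+1)\le 4H$ yields the target $4H\,\mathbb{E}_{P_h^\star(\cdot|s,a)}[\overline{V}_{h+1}^k-\underline{V}_{h+1}^k]$ together with a lower-order $H^3S\iota/(N_h^k(s,a)\vee1)$ remainder.

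Term (II) is where the main difficulty lies: $V^{\pi^k}_{h+1}$ is \emph{data-dependent}, since $\pi^k$ is computed from the same history that defines $\widehat{P}_h^k$, so one cannot invoke a Bernstein bound tailored to a fixed function. The resolution is that the change-of-measure error only needs the \emph{value-agnostic} per-coordinate transition concentration, i.e.\ the second inequality of event $\cE$ in \eqref{eq: typical event}. Expanding $\mathbb{V}_p[V^{\pi^k}]=\mathbb{E}_p[(V^{\pi^k})^2]-(\mathbb{E}_p[V^{\pi^k}])^2$, I would bound $|\mathbb{E}_{\widehat{P}_h^k}[g]-\mathbb{E}_{P_h^\star}[g]|\le\sum_{s'\in\cS}|\widehat{P}_h^k(s'|s,a)-P_h^\star(s'|s,a)|\,|g(s')|$ for $g\in\{V^{\pi^k}_{h+1},(V^{\pi^k}_{h+1})^2\}$, substitute the per-coordinate bound $\sqrt{P_h^\star(s'|s,a)c_1\iota/(N_h^k(s,a)\vee1)}+c_2\iota/(N_h^k(s,a)\vee1)$, and use $\sum_{s'\in\cS}\sqrt{P_h^\star(s'|s,a)}\le\sqrt{S}$ together with $(V^{\pi^k}_{h+1})^2\le H^2$. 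The resulting square-root term $H^2\sqrt{Sc_1\iota/(N_h^k(s,a)\vee1)}=\sqrt{H^4Sc_1\iota/(N_h^k(s,a)\vee1)}$ is then split by $\sqrt{x}\le(1+x)/2$ into $\tfrac12 H^4Sc_1\iota/(N_h^k(s,a)\vee1)+\tfrac12$, and the squared-mean difference is linearized using $|\mathbb{E}_{\widehat{P}_h^k}[V^{\pi^k}_{h+1}]+\mathbb{E}_{P_h^\star}[V^{\pi^k}_{h+1}]|\le 2H$ to contribute an identical order, giving $\text{(II)}\le c_2'H^4S\iota/(N_h^k(s,a)\vee1)+1$.

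Finally I would add the two estimates and rename constants, folding the lower-order $H^3S\iota/N$ remainder of Term (I) into the $H^4S\iota/N$ term of Term (II), to recover the stated inequality. The only genuinely subtle point is the one flagged for Term (II): because $V^{\pi^k}$ is correlated with $\widehat{P}_h^k$, the change of measure must be routed entirely through the function-free transition concentration in $\cE$, which is precisely why this term carries the loose $H^4S$ dependence. This is harmless, since Lemma~\ref{lem: variance analysis 2} enters only as a lower-order contribution in the regret analysis of Theorem~\ref{thm: regret tv}.
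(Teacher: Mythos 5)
Your proof is correct, but it splits the triangle inequality in the opposite order from the paper, through a different intermediate quantity. The paper pivots through $\mathbb{V}_{P_h^{\star}(\cdot|s,a)}\big[(\overline{V}_{h+1}^k+\underline{V}_{h+1}^k)/2\big]$: it first performs the change of measure $\widehat{P}_h^k\to P_h^{\star}$ on the midpoint function (via the per-coordinate concentration in $\cE$, yielding the $1+c_2'H^4S\iota/(N_h^k(s,a)\vee 1)$ term), and only then swaps the function for $V^{\pi^k}_{h+1,P^{\star},\boldsymbol{\Phi}}$ \emph{already under} $P_h^{\star}$, so the Lemma~\ref{lem: variance analysis 1}-style argument directly produces $4H\cdot\mathbb{E}_{P_h^{\star}(\cdot|s,a)}[\overline{V}_{h+1}^k-\underline{V}_{h+1}^k]$ with no further transfer. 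You instead pivot through $\mathbb{V}_{\widehat{P}_h^k(\cdot|s,a)}\big[V^{\pi^k}_{h+1,P^{\star},\boldsymbol{\Phi}}\big]$, which forces one extra step the paper avoids: your function-swap term lands as an expectation under $\widehat{P}_h^k$ and must be carried back to $P_h^{\star}$ via Lemma~\ref{lem: non-robust bound 1}, costing a harmless $(1+1/H)$ factor (absorbed by $2(H+1)\le 4H$) and an additional lower-order $H^3S\iota/(N_h^k(s,a)\vee 1)$ remainder. Both routes use exactly the same two ingredients — the value-agnostic per-coordinate transition concentration for the change of measure, and the sandwich $\underline{V}_{h+1}^k\le V^{\pi^k}_{h+1,P^{\star},\boldsymbol{\Phi}}\le\overline{V}_{h+1}^k$ for the change of function — and your observation that the change of measure \emph{must} go through the function-free concentration because $V^{\pi^k}$ is data-dependent applies equally to the paper's pivot function $(\overline{V}_{h+1}^k+\underline{V}_{h+1}^k)/2$, which is also built from the same history; this is a correct and clearly articulated point, not a difficulty specific to your ordering. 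The only blemish is constant bookkeeping in your Term (II): summing the $\sqrt{x}\le(1+x)/2$ splits from both the second-moment and squared-mean pieces gives an additive constant slightly larger than the stated $1$, but since the lemma enters downstream only as an $O(1)$ perturbation under a square root in Lemma~\ref{lem: control of bonus restate} (and the paper's own derivation is equally loose at this point), this is immaterial.
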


\begin{proof}[Proof of Lemma \ref{lem: variance analysis 2}]
    {
    We first compare the variance under the empirical kernel with the variance under the true kernel.
    Since $\big(\overline{V}_{h+1}^k+\underline{V}_{h+1}^k\big)/2\in[0,H]$,
    \begin{align}
        &\bigg|\mathbb{V}_{\widehat{P}_h^k(\cdot|s,a)}
        \Big[\big(\overline{V}_{h+1}^k+\underline{V}_{h+1}^k\big)/2\Big]
        -\mathbb{V}_{P_h^\star(\cdot|s,a)}
        \Big[\big(\overline{V}_{h+1}^k+\underline{V}_{h+1}^k\big)/2\Big]\bigg| \nonumber\\
        &\qquad\leq
        3H^2\sum_{s'\in\cS}
        \left|\widehat{P}_h^k(s'|s,a)-P_h^\star(s'|s,a)\right|. \label{eq: proof lem variance analysis 2 1}
    \end{align}
    Under event $\cE$, the last display is further bounded by
    \begin{align}
        3H^2
        \sum_{s'\in\cS}\left(\sqrt{\frac{P_h^\star(s'|s,a)c_1\iota}{N_h^k(s,a)\vee1}}
        +\frac{c_2\iota}{N_h^k(s,a)\vee1}\right) \leq 3H^2\left(
        \sqrt{\frac{c_1S\iota}{N_h^k(s,a)\vee1}}
        +\frac{c_2S\iota}{N_h^k(s,a)\vee1}\right).
    \end{align}
    Using $\sqrt{x}\leq x+1$ after adjusting constants yields
    \begin{align}
        &\bigg|\mathbb{V}_{\widehat{P}_h^k(\cdot|s,a)}
        \Big[\big(\overline{V}_{h+1}^k+\underline{V}_{h+1}^k\big)/2\Big]
        -\mathbb{V}_{P_h^\star(\cdot|s,a)}
        \Big[\big(\overline{V}_{h+1}^k+\underline{V}_{h+1}^k\big)/2\Big]\bigg|
        \leq 1+\frac{c_2'H^4S\iota}{N_h^k(s,a)\vee1}. \label{eq: proof lem variance analysis 2 2}
    \end{align}
    }
    Thus by \eqref{eq: proof lem variance analysis 2 2}, we can bound our target as
    \begin{align}
        &\bigg|\mathbb{V}_{\widehat{P}_h^k(\cdot|s,a)}\Big[\Big(\overline{V}_{h+1}^k+\underline{V}_{h+1}^k\Big) / 2\Big]-\mathbb{V}_{P_h^{\star}(\cdot|s,a)}\Big[ V_{h+1,P^{\star},\boldsymbol{\Phi}}^{\pi^k}\Big]\bigg|\\
        &\qquad \leq \bigg|\mathbb{V}_{P_h^{\star}(\cdot|s,a)}\Big[\Big(\overline{V}_{h+1}^k+\underline{V}_{h+1}^k\Big) / 2\Big]-\mathbb{V}_{P_h^{\star}(\cdot|s,a)}\Big[ V_{h+1,P^{\star},\boldsymbol{\Phi}}^{\pi^k}\Big]\bigg| + \frac{c_2'H^4S\iota}{N_h^k(s,a)\vee 1} + 1.\label{eq: proof lem variance analysis 2 3}
    \end{align}
    Now by the same proof of Lemma~\ref{lem: variance analysis 1}, using the correctness of \eqref{eq: optimism and pessimism} at $(h+1,k)$, we can show that
    \begin{align}
        \bigg|\mathbb{V}_{P_h^{\star}(\cdot|s,a)}\Big[\Big(\overline{V}_{h+1}^k+\underline{V}_{h+1}^k\Big) / 2\Big]-\mathbb{V}_{P_h^{\star}(\cdot|s,a)}\Big[ V_{h+1,P^{\star},\boldsymbol{\Phi}}^{\pi^k}\Big]\bigg| \leq 4H\cdot\mathbb{E}_{P_h^{\star}(\cdot|s,a)}\Big[\overline{V}_{h+1}^k-\underline{V}_{h+1}^k\Big].\label{eq: proof lem variance analysis 2 4}
    \end{align}
    Combining \eqref{eq: proof lem variance analysis 2 3} and \eqref{eq: proof lem variance analysis 2 4}, we can finish the proof of Lemma~\ref{lem: variance analysis 2}.
\end{proof}

\subsubsection{Other Auxiliary Lemmas}

\begin{lemma}[Lemma 7.5 in \cite{agarwal2019reinforcement}]\label{lem: lem7.5'}
    For the sequences of $\{s_h^k, a_h^k\}_{h, k=1}^{H, K}$, it holds that
    \begin{align}
        \sum_{k=1}^K \sum_{h=1}^H\frac{1}{N_h^k(s_h^k, a_h^k)\vee 1} \leq c\cdot HSA\log(K).
    \end{align}
    where $c>0$ is an absolute constant.
\end{lemma}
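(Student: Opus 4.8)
The plan is to reorganize the double summation by grouping terms according to the state--action--step triple being visited, and then to recognize each per-triple contribution as a harmonic sum that grows only logarithmically in $K$.

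First I would exchange the order of summation and partition the episodes according to their visited triple. Fix a step $h \in [H]$ and a pair $(s,a)\in\cS\times\cA$, and let $k_1 < k_2 < \cdots < k_m$ enumerate the episodes at which $(s_h^k, a_h^k) = (s,a)$, where $m$ is the total number of such visits over the $K$ episodes. By the definition of the count function in \eqref{eq: count function}, at the $j$-th such episode $k_j$ we have exactly $N_h^{k_j}(s,a) = j-1$, since the count registers precisely the prior visits. Consequently,
\[
\sum_{k:\,(s_h^k,a_h^k)=(s,a)} \frac{1}{N_h^k(s,a)\vee 1} = \sum_{j=1}^m \frac{1}{(j-1)\vee 1} = 1 + \sum_{j=1}^{m-1}\frac{1}{j} \le 2 + \log m \le 2 + \log K,
\]
where the penultimate inequality bounds the harmonic number $\sum_{j=1}^{m-1} 1/j$ by $1 + \log(m-1) \le 1 + \log m$, and the last step uses $m \le K$.

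Next I would sum this bound over all triples that are actually visited. For each step $h \in [H]$ there are at most $SA$ distinct pairs $(s,a)$, so there are at most $HSA$ nonempty groups in total. Hence
\[
\sum_{k=1}^K \sum_{h=1}^H \frac{1}{N_h^k(s_h^k,a_h^k)\vee 1} \le HSA\,(2 + \log K) \le c\cdot HSA\log K
\]
for an absolute constant $c > 0$: once $K$ exceeds a fixed threshold one has $2 + \log K \le c\log K$, and the finitely many smaller values of $K$ are absorbed into the constant.

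The only point requiring care, rather than a genuine obstacle, is the bookkeeping identity $N_h^{k_j}(s,a) = j-1$, which is exactly what converts the reciprocal-of-count sum into a clean harmonic series; everything else is a standard pigeonhole argument. This is precisely the counting lemma of \citet{agarwal2019reinforcement}, and no ideas beyond the reordering are needed.
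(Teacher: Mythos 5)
Your proof is correct and is exactly the standard pigeonhole-plus-harmonic-sum argument behind Lemma 7.5 of \cite{agarwal2019reinforcement}, which the paper simply cites without reproducing: group episodes by the visited triple $(s,a,h)$, use the bookkeeping identity $N_h^{k_j}(s,a)=j-1$ to turn each group into $1+\sum_{i=1}^{m-1}1/i\le 2+\log K$, and multiply by the at most $HSA$ groups. The only (inessential) caveat is the degenerate case $K=1$, where $\log K=0$ and no absolute constant can absorb the left-hand side; as is conventional for such statements one reads $\log K$ as $\log(K+1)$ (or restricts to $K\ge 2$), a triviality already present in the lemma as stated.
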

\begin{proof}[Proof of Lemma \ref{lem: lem7.5'}]
    See Lemma 7.5 in \cite{agarwal2019reinforcement} for a detailed proof.
\end{proof}

\section{Proofs for Extension \texorpdfstring{I}{I} (Section~\ref{subsec: extentions})}

In this section, we prove the theoretical results in Section~\ref{subsec: extentions}.

\subsection{Proof of Corollary~\ref{cor: regret discount}}\label{subsec: proof cor regret discount}

\begin{proof}[Proof of Corollary~\ref{cor: regret discount}]
    We consider applying Algorithm~\ref{alg: tv} on the auxiliary $\widetilde{\cS}\times\cA$-rectangular RMDP with a TV robust set $\widetilde{\cM}$ (see Section~\ref{subsec: extentions}) which satisfies the vanishing minimal value assumption (Assumption~\ref{ass: zero min}).
    Suppose the algorithm outputs $\widetilde{\pi}^1,\cdots,\widetilde{\pi}^K$ for the $K$ episodes.
    Then Theorem~\ref{thm: regret tv} shows that by a proper choice of the hyperparameters, with probability at least $1-\delta$
    \begin{align}
        \mathrm{Regret}_{\widetilde{\boldsymbol{\Phi}}}(K) = \sum_{k=1}^K\max_{\widetilde{\pi}}V_{1, \widetilde{P}^{\star}, \widetilde{\boldsymbol{\Phi}}}^{\widetilde{\pi}}(s_1) - V_{1, \widetilde{P}^{\star}, \widetilde{\boldsymbol{\Phi}}}^{\widetilde{\pi}^k}(s_1)  \leq \mathcal{O}\bigg(\sqrt{\min\big\{H,\rho^{-1}\big\} H^2(S+1)AK\iota'}\,\bigg),\label{eq: proof discounted 0}.
    \end{align}
    where $\iota' = \log^2(SAHK/\delta)$ and $\rho = 1-\rho'\in[0,1)$.
    In the sequel, we prove that for any policy $\widetilde{\pi}$ of $\widetilde{\cM}$ and its induced policy $\widetilde{\pi}_{\cS}$ of $\cM_{\gamma}$, their robust value functions coincide at the initial state $s_1\in\cS$, that is,
    \begin{align}
        V_{1, \widetilde{P}^{\star}, \widetilde{\boldsymbol{\Phi}}}^{\widetilde{\pi}}(s_1) = V_{1, P^{\star}, \boldsymbol{\Phi}'}^{\widetilde{\pi}_{\cS}}(s_1),
    \end{align}
    where $V_{1, \widetilde{P}^{\star}, \widetilde{\boldsymbol{\Phi}}}^{\widetilde{\pi}}$ is the robust value function of $\widetilde{\pi}$ in $\widetilde{\cM} = (\widetilde{\cS},\cA,H, \widetilde{P}^{\star}, \widetilde{R}, \widetilde{\boldsymbol{\Phi}})$, and $V_{1, P^{\star}, \boldsymbol{\Phi}'}^{\widetilde{\pi}_{\cS}}$ is the robust value function of $\widetilde{\pi}_{\cS}$ in $\cM_{\gamma} = (\cS,\cA,H,P^{\star}, R_{\gamma},\boldsymbol{\Phi}')$.
    To this end, we actually prove a stronger result that for any step $h\in[H]$, it holds that
    \begin{align}
        (\rho')^{h-1}\cdot V_{h, \widetilde{P}^{\star}, \widetilde{\boldsymbol{\Phi}}}^{\widetilde{\pi}}(s) = V_{h, P^{\star}, \boldsymbol{\Phi}'}^{\widetilde{\pi}_{\cS}}(s),\quad \forall s\in\cS.\label{eq: proof discounted 1}
    \end{align}
    We prove \eqref{eq: proof discounted 1} by induction.
    For step $H$, by robust Bellman equation, we have that, for any $(s,a)\in\cS\times\cA$,
    \begin{align}
        (\rho')^{H-1}\cdot Q_{H, \widetilde{P}^{\star}, \widetilde{\boldsymbol{\Phi}}}^{\widetilde{\pi}}(s, a) = (\rho')^{H-1}\cdot \left(\frac{\gamma}{\rho'}\right)^{H-1}\cdot R_H(s,a) = R_{\gamma, H}(s,a) = Q_{H, P^{\star}, \boldsymbol{\Phi}'}^{\widetilde{\pi}_{\cS}}(s, a),
    \end{align}
    and thus for any $s\in\cS$,
    \begin{align}
        (\rho')^{H-1}\cdot V_{H, \widetilde{P}^{\star}, \widetilde{\boldsymbol{\Phi}}}^{\widetilde{\pi}}(s) = \mathbb{E}_{\widetilde{\pi}(\cdot|s)}\Big[(\rho')^{H-1}\cdot Q_{H, \widetilde{P}^{\star}, \widetilde{\boldsymbol{\Phi}}}^{\widetilde{\pi}}(s,\cdot)\Big] =\mathbb{E}_{\widetilde{\pi}_{\cS}(\cdot|s)}\Big[Q_{H, P^{\star}, \boldsymbol{\Phi}'}^{\widetilde{\pi}_{\cS}}(s,\cdot)\Big] = V_{H, P^{\star}, \boldsymbol{\Phi}'}^{\widetilde{\pi}_{\cS}}(s).
    \end{align}
    This proves \eqref{eq: proof discounted 1} for step $H$.
    Suppose that \eqref{eq: proof discounted 1} holds at some step $h+1$, that is,
    \begin{align}
        (\rho')^{h}\cdot V_{h+1, \widetilde{P}^{\star}, \widetilde{\boldsymbol{\Phi}}}^{\widetilde{\pi}}(s) = V_{h+1, P^{\star}, \boldsymbol{\Phi}'}^{\widetilde{\pi}_{\cS}}(s),\quad \forall s\in\cS.\label{eq: proof discounted 2}
    \end{align}
    Then for step $h$, by robust Bellman equation and Proposition~\ref{prop: equivalent robust set}, we have that
    \allowdisplaybreaks
    \begin{align}
        (\rho')^{h-1}\cdot Q_{h, \widetilde{P}^{\star}, \widetilde{\boldsymbol{\Phi}}}^{\widetilde{\pi}}(s, a) &= (\rho')^{h-1}\cdot \left(\frac{\gamma}{\rho'}\right)^{h-1}\cdot R_h(s,a)  + (\rho')^{h-1}\cdot \mathbb{E}_{\widetilde{\mathcal{P}}_{\rho}(s,a;\widetilde{P}_h^{\star})}\Big[V_{h+1, \widetilde{P}^{\star}, \widetilde{\boldsymbol{\Phi}}}^{\widetilde{\pi}}\Big] \\
        &=R_{\gamma,h}(s,a) + (\rho')^{h-1}\cdot \rho'\cdot \mathbb{E}_{\widetilde{\mathcal{B}}_{\rho'}(s,a;\widetilde{P}_h^{\star})}\Big[V_{h+1, \widetilde{P}^{\star}, \widetilde{\boldsymbol{\Phi}}}^{\widetilde{\pi}}\Big],\label{eq: proof discounted 3}
    \end{align}
    where the last equality utilizes Proposition~\ref{prop: equivalent robust set} since $\min_{s\in\widetilde{\cS}}V_{h+1, \widetilde{P}^{\star}, \widetilde{\boldsymbol{\Phi}}}^{\widetilde{\pi}}(s)=0$, and we adopt the notation
    \begin{align}
        \widetilde{\mathcal{B}}_{\rho'}(s,a;\widetilde{P}_h^{\star}) = \left\{\widetilde{P}(\cdot) \in \Delta(\widetilde{\cS}):  \sup_{s' \in \widetilde{\cS}}\frac{\widetilde{P}(s')}{\widetilde{P}_h^\star(s' | s, a)} \le \frac{1}{\rho'} \right\}.
    \end{align}
    Notice that by the definition~\eqref{eq: transition new}, we know for $(s,a)\in\cS\times\cA$ it holds that $\widetilde{P}_h^\star(\cdot| s, a) = P_h^\star(\cdot| s, a)$ which is supported on $\cS$.
    Therefore, we can equivalently write
    \begin{align}
        \widetilde{\mathcal{B}}_{\rho'}(s,a;\widetilde{P}_h^{\star}) &= \left\{\widetilde{P}(\cdot) \in \Delta(\widetilde{\cS}):  \sup_{s' \in \cS}\frac{\widetilde{P}(s')}{\widetilde{P}_h^\star(s' | s, a)} \le \frac{1}{\rho'} \right\} \\
        &= \left\{\widetilde{P}(\cdot) \in \Delta(\cS):  \sup_{s' \in \cS}\frac{\widetilde{P}(s')}{P_h^\star(s' | s, a)} \le \frac{1}{\rho'} \right\}\\
        &= \mathcal{B}_{\rho'}(s,a;P_h^{\star}).\label{eq: proof discounted 4}
    \end{align}
    Thus by \eqref{eq: proof discounted 3} and \eqref{eq: proof discounted 4} and the induction hypothesis \eqref{eq: proof discounted 2}, we obtain that for any $(s,a)\in\cS\times\cA$,
    \begin{align}
        (\rho')^{h-1}\cdot Q_{h, \widetilde{P}^{\star}, \widetilde{\boldsymbol{\Phi}}}^{\widetilde{\pi}}(s, a) &= R_{\gamma,h}(s,a) + (\rho')^{h}\cdot\mathbb{E}_{\mathcal{B}_{\rho'}(s,a;P_h^{\star})}\Big[V_{h+1, \widetilde{P}^{\star}, \widetilde{\boldsymbol{\Phi}}}^{\widetilde{\pi}}\Big] \\
        &=  R_{\gamma,h}(s,a) + \mathbb{E}_{\mathcal{B}_{\rho'}(s,a;P_h^{\star})}\Big[V_{h+1, P^{\star}, \boldsymbol{\Phi}'}^{\widetilde{\pi}_{\cS}}\Big] = Q_{h, P^{\star}, \boldsymbol{\Phi}'}^{\widetilde{\pi}_{\cS}}(s, a),
    \end{align}
    where the second equality applies \eqref{eq: proof discounted 2} and the last equality is from robust Bellman equation.
    Consequently, for any $s\in\cS$, we have that
    \begin{align}
        (\rho')^{h-1}\cdot V_{h, \widetilde{P}^{\star}, \widetilde{\boldsymbol{\Phi}}}^{\widetilde{\pi}}(s) = \mathbb{E}_{\widetilde{\pi}(\cdot|s)}\Big[(\rho')^{h-1}\cdot Q_{h, \widetilde{P}^{\star}, \widetilde{\boldsymbol{\Phi}}}^{\widetilde{\pi}}(s,\cdot)\Big] =\mathbb{E}_{\widetilde{\pi}_{\cS}(\cdot|s)}\Big[Q_{h, P^{\star}, \boldsymbol{\Phi}'}^{\widetilde{\pi}_{\cS}}(s,\cdot)\Big] = V_{h, P^{\star}, \boldsymbol{\Phi}'}^{\widetilde{\pi}_{\cS}}(s),
    \end{align}
    which finishes the induction argument, proving our claim \eqref{eq: proof discounted 1}.
    By taking $h=1$, we can derive that for any initial state $s_1\in\cS$, it holds that for any policy $\widetilde{\pi}$ of $\widetilde{\cM}$ and its induced policy $\widetilde{\pi}_{\cS}$ of $\cM_{\gamma}$,
    \begin{align}
         V_{1, \widetilde{P}^{\star}, \widetilde{\boldsymbol{\Phi}}}^{\widetilde{\pi}}(s_1) = V_{1, P^{\star}, \boldsymbol{\Phi}'}^{\widetilde{\pi}_{\cS}}(s_1).
    \end{align}
    This indicates two facts: the first is that
    \begin{align}
        \max_{\widetilde{\pi}}V_{1, \widetilde{P}^{\star}, \widetilde{\boldsymbol{\Phi}}}^{\widetilde{\pi}}(s_1) = \max_{\pi} V_{1, P^{\star}, \boldsymbol{\Phi}'}^{\pi}(s_1),\label{eq: proof discounted 5}
    \end{align}
    where on the right hand side the maximization is with respect to all the policies for $\cM_{\gamma}$; the second is that
    \begin{align}
        V_{1, \widetilde{P}^{\star}, \widetilde{\boldsymbol{\Phi}}}^{\widetilde{\pi}^k}(s_1) = V_{1, P^{\star}, \boldsymbol{\Phi}'}^{\widetilde{\pi}^k_{\cS}}(s_1),\label{eq: proof discounted 6}
    \end{align}
    for each $k\in[K]$, where we recall that $\widetilde{\pi}^k$ is the policy output by Algorithm~\ref{alg: tv} for episode $k$.
    As a result, the $K$ policies $\{\widetilde{\pi}^k_{\cS}\}_{k=1}^K$ of $\cM_{\gamma}$ during interactive data collection satisfy with probability at least $1-\delta$,
    \begin{align}
        \mathrm{Regret}_{\boldsymbol{\Phi}'}(K) &= \sum_{k=1}^K\max_{\pi}V_{1, P^{\star}, \boldsymbol{\Phi}'}^{\pi}(s_1) - V_{1, P^{\star}, \boldsymbol{\Phi}'}^{\widetilde{\pi}^k_{\cS}}(s_1)  \\
        &= \sum_{k=1}^K\max_{\widetilde{\pi}}V_{1, \widetilde{P}^{\star}, \widetilde{\boldsymbol{\Phi}}}^{\widetilde{\pi}}(s_1) - V_{1, \widetilde{P}^{\star}, \widetilde{\boldsymbol{\Phi}}}^{\widetilde{\pi}^k}(s_1)  \\
        & \leq \mathcal{O}\bigg(\sqrt{\min\big\{H,(1-\rho')^{-1}\big\} H^2SAK\iota'}\,\bigg),
    \end{align}
    where in the second equality we apply the facts \eqref{eq: proof discounted 5} and \eqref{eq: proof discounted 6}, and the last inequality follows from \eqref{eq: proof discounted 0} and that $\rho = 1-\rho'$.
    This completes the proof of Corollary~\ref{cor: regret discount}.
\end{proof}

% Game proof with the conservative bonus used in Algorithm~\ref{alg:onesided_oprovitv_mg}.

\section{Proofs for Extension II (Section~\ref{sec:rmg_extension})}

\subsection{Proof of Proposition~\ref{prop:robust_rmg_minimax}}\label{subsec:proof_rmg_minimax}

\begin{proof}[Proof of Proposition \ref{prop:robust_rmg_minimax}]
    This proposition is a finite-horizon robust analogue of Shapley's recursion for zero-sum stochastic games,
    where $\cS\times\cA\times\cB$-rectangular uncertainty ensures time consistency and enables backward induction \citep{blanchet2023double}.
    Our proof is based on the backward induction method.

\paragraph{Step 1: stage-game minimax at each $(h,s)$.}
Fix $(h,s)$ and treat $V^\star_{h+1,P^\star,\mathbf{\Phi}}$ as given.
By finiteness of $\mathcal S$ and compactness of the TV-ball, the infimum defining the robust expectation in \eqref{eq:robust_shapley_Q} is attained for every $(s,a,b)$, hence $Q^\star_{h,P^\star,\mathbf{\Phi}}(s,a,b)$ is well-defined.
Thus, for fixed $(h,s)$, $Q^\star_{h,P^\star,\mathbf{\Phi}}(s,\cdot,\cdot)$ defines a finite two-player zero-sum matrix game.
By von Neumann's minimax theorem, there exist mixed actions
$\pi_h^\star(\cdot|s)\in\Delta(\mathcal A)$ and $\nu_h^\star(\cdot|s)\in\Delta(\mathcal B)$ such that
\begin{equation}
\max_{\pi\in\Delta(\mathcal A)}\min_{\nu\in\Delta(\mathcal B)}\mathbb E_{\pi,\nu}\!\left[Q^\star_{h,P^\star,\mathbf{\Phi}}(s,a,b)\right]
=
\mathbb E_{\pi_h^\star,\nu_h^\star}\!\left[Q^\star_{h,P^\star,\mathbf{\Phi}}(s,a,b)\right]
=
\min_{\nu\in\Delta(\mathcal B)}\max_{\pi\in\Delta(\mathcal A)}\mathbb E_{\pi,\nu}\!\left[Q^\star_{h,P^\star,\mathbf{\Phi}}(s,a,b)\right].
\label{eq:stage_minimax_rmg}
\end{equation}
Collecting $\{\pi_h^\star(\cdot|s)\}_{h,s}$ and $\{\nu_h^\star(\cdot|s)\}_{h,s}$ yields Markov policies $(\pi^\star,\nu^\star)$.

\paragraph{Step 2: backward induction.} We now show \eqref{eq:robust_saddle_ineq} holds by backward induction on $h$. The claim is trivial at $h=H+1$ since $V^{\pi,\nu}_{H+1,P^\star,\mathbf{\Phi}}\equiv 0$.
Assume \eqref{eq:robust_saddle_ineq} holds at stage $h+1$ for all states.

Fix any opponent policy $\nu$.
By \eqref{eq:game_robust_bellman},
\[
V^{\pi^\star,\nu}_{h,P^\star,\mathbf{\Phi}}(s)
=
\mathbb E_{a\sim\pi_h^\star(\cdot|s),\,b\sim\nu_h(\cdot|s)}
\Big[
R_h(s,a,b)+\mathbb E_{\mathcal P_\rho(s,a,b;P_h^\star)}\!\left[V^{\pi^\star,\nu}_{h+1,P^\star,\mathbf{\Phi}}\right]
\Big].
\]
Since $V^{\pi^\star,\nu}_{h+1,P^\star,\mathbf{\Phi}}(s')\ge V^\star_{h+1,P^\star,\mathbf{\Phi}}(s')$ for all $s'$ by the induction hypothesis, monotonicity of the robust expectation yields
\[
\mathbb E_{\mathcal P_\rho(s,a,b;P_h^\star)}\!\left[V^{\pi^\star,\nu}_{h+1,P^\star,\mathbf{\Phi}}\right]
\ge
\mathbb E_{\mathcal P_\rho(s,a,b;P_h^\star)}\!\left[V^\star_{h+1,P^\star,\mathbf{\Phi}}\right].
\]
Therefore
\[
V^{\pi^\star,\nu}_{h,P^\star,\mathbf{\Phi}}(s)
\ge
\mathbb E_{a\sim\pi_h^\star,\,b\sim\nu_h}\!\left[Q^\star_{h,P^\star,\mathbf{\Phi}}(s,a,b)\right]
\ge
\min_{\lambda\in\Delta(\mathcal B)}\mathbb E_{a\sim\pi_h^\star,\,b\sim\lambda}\!\left[Q^\star_{h,P^\star,\mathbf{\Phi}}(s,a,b)\right]
=
V^\star_{h,P^\star,\mathbf{\Phi}}(s),
\]
where the last equality follows from the maximin optimality of $\pi_h^\star(\cdot|s)$ in \eqref{eq:stage_minimax_rmg}.

Fix any Player~1 policy $\pi$.
Similarly, by \eqref{eq:game_robust_bellman} and induction,
$V^{\pi,\nu^\star}_{h+1,P^\star,\mathbf{\Phi}}(s')\le V^\star_{h+1,P^\star,\mathbf{\Phi}}(s')$ for all $s'$,
hence (by monotonicity)
\[
\mathbb E_{\mathcal P_\rho(s,a,b;P_h^\star)}\!\left[V^{\pi,\nu^\star}_{h+1,P^\star,\mathbf{\Phi}}\right]
\le
\mathbb E_{\mathcal P_\rho(s,a,b;P_h^\star)}\!\left[V^\star_{h+1,P^\star,\mathbf{\Phi}}\right],
\]
which yields
\[
V^{\pi,\nu^\star}_{h,P^\star,\mathbf{\Phi}}(s)
\le
\mathbb E_{a\sim\pi_h,\,b\sim\nu_h^\star}\!\left[Q^\star_{h,P^\star,\mathbf{\Phi}}(s,a,b)\right]
\le
\max_{\mu\in\Delta(\mathcal A)}\mathbb E_{a\sim\mu,\,b\sim\nu_h^\star}\!\left[Q^\star_{h,P^\star,\mathbf{\Phi}}(s,a,b)\right]
=
V^\star_{h,P^\star,\mathbf{\Phi}}(s),
\]
where the last equality is the minimax optimality of $\nu_h^\star(\cdot|s)$ in \eqref{eq:stage_minimax_rmg}.
Combining the two bounds gives \eqref{eq:robust_saddle_ineq} and, by taking $\nu=\nu^\star$ (or $\pi=\pi^\star$),
also $V^{\pi^\star,\nu^\star}_{h,P^\star,\mathbf{\Phi}}(s)=V^\star_{h,P^\star,\mathbf{\Phi}}(s)$.

\paragraph{Step 3: strong duality.} The saddle inequalities \eqref{eq:robust_saddle_ineq} directly imply \eqref{eq:robust_global_minimax}.
\end{proof}

\subsection{Proof of Theorem~\ref{thm:onesided_regret}}
\label{sec:game_proof}

\begin{proof}[Proof of Theorem \ref{thm:onesided_regret}]
\begingroup
We recall the bonus choice in \eqref{eq: mg_bonus}:
\begin{equation}
    \mathrm{bonus}_h^k(s,a,b)
    =
    c_b\min\{H,\rho^{-1}\}
    \sqrt{
    \frac{S\iota}{N_h^k(s,a,b)\vee 1}
    },
    \qquad
    \iota=\log(SABHK/\delta),
    \label{eq:mg_hoeffding_bonus_new}
\end{equation}
where $c_b>0$ is a sufficiently large absolute constant.
The proof uses a positive-part regret decomposition and controls the robust Bellman error uniformly through an $L_1$ transition-estimation bound.

\paragraph{Minimax operator.}
For any $Q:\mathcal S\times\mathcal A\times\mathcal B\to\mathbb R$, define the state-wise minimax operator
\[
(\mathsf{MM}[Q])(s):=\max_{\pi\in\Delta(\mathcal A)}\min_{\nu\in\Delta(\mathcal B)}
\mathbb E_{a\sim\pi,\,b\sim\nu}[Q(s,a,b)].
\]
It is standard that $\mathsf{MM}$ is monotone: if $Q\le Q'$ entrywise, then $\mathsf{MM}[Q]\le \mathsf{MM}[Q']$ entrywise.

\paragraph{Step 1: a uniform transition-estimation event.}
We use a standard multinomial concentration event. There exists an event $\mathcal E$ with
$\mathbb P(\mathcal E)\ge 1-\delta/2$ such that, for all
$(k,h,s,a,b)\in[K]\times[H]\times\mathcal S\times\mathcal A\times\mathcal B$,
\begin{equation}
    \left\|
    \widehat P_h^k(\cdot|s,a,b)-P_h^\star(\cdot|s,a,b)
    \right\|_1
    \le
    c_e
    \sqrt{
    \frac{S\iota}{N_h^k(s,a,b)\vee 1}
    },
    \label{eq:onesided_l1_mg_new}
\end{equation}
where $c_e>0$ is an absolute constant and $\iota=\log(SABHK/\delta)$.
The deterministic inequalities below are derived on $\mathcal E$; the martingale concentration step is applied separately and then combined with $\mathcal E$ by a union bound.

We also use the game analogue of Proposition~\ref{prop: gap}: under the same TV ambiguity, the robust value and Q-functions have span at most $\min\{H,\rho^{-1}\}$.
The proof is identical to the single-agent backward-induction argument, with $(s,a)$ replaced by $(s,a,b)$ and the state-wise maximization replaced by the monotone max--min operator.
Together with Assumption~\ref{ass:onesided_vmv} and nonnegative rewards, this span bound implies that all true robust values and Q-values appearing below lie in $[0,\min\{H,\rho^{-1}\}]$; the optimistic iterates lie in the same interval by the clipping in \eqref{eq: game Q overline}.

We next record the consequence of \eqref{eq:onesided_l1_mg_new} for the TV-dual operator used in the robust Bellman updates.
For any function $f:\mathcal S\to[0,\min\{H,\rho^{-1}\}]$, this operator is
\[
\mathbb E_{\mathcal P_\rho(s,a,b;P)}[f]
=
\sup_{\eta\in[0,\min\{H,\rho^{-1}\}]}
\left\{
-
\mathbb E_{P(\cdot|s,a,b)}[(\eta-f)_+]
+(1-\rho)\eta
\right\}.
\]
The restriction of $\eta$ to $[0,\min\{H,\rho^{-1}\}]$ is without loss because $0\le f\le\min\{H,\rho^{-1}\}$.
Following the convention in \eqref{eq: duality tv}, we use this display as the dual operator in the algorithm.
When $f$ is $V^\star_{h,P^\star,\mathbf{\Phi}}$ or a realized value function $V^{\pi,\nu}_{h,P^\star,\mathbf{\Phi}}$, Assumption~\ref{ass:onesided_vmv} ensures the required vanishing condition, so the same display is the true TV-robust Bellman term.
Therefore, on $\mathcal E$,
\begin{align}
    &
    \left|
    \mathbb E_{\mathcal P_\rho(s,a,b;\widehat P_h^k)}[f]
    -
    \mathbb E_{\mathcal P_\rho(s,a,b;P_h^\star)}[f]
    \right|
    \nonumber\\
    &\qquad\le
    \sup_{\eta\in[0,\min\{H,\rho^{-1}\}]}
    \left|
    \mathbb E_{\widehat P_h^k(\cdot|s,a,b)}[(\eta-f)_+]
    -
    \mathbb E_{P_h^\star(\cdot|s,a,b)}[(\eta-f)_+]
    \right|
    \nonumber\\
    &\qquad\le
    \min\{H,\rho^{-1}\}
    \left\|
    \widehat P_h^k(\cdot|s,a,b)-P_h^\star(\cdot|s,a,b)
    \right\|_1
    \nonumber\\
    &\qquad\le
    \mathrm{bonus}_h^k(s,a,b),
    \label{eq:onesided_robust_hoeffding_mg_new}
\end{align}
where the last inequality follows by choosing $c_b$ sufficiently large.

\paragraph{Step 2: proper bonus for TV-robust expectations.}
By construction, the optimistic value iterates satisfy
$0\le \overline V_h^k\le\min\{H,\rho^{-1}\}$ for all $(h,k)$.
Thus \eqref{eq:onesided_robust_hoeffding_mg_new} implies that, on $\mathcal E$,
for all $(k,h,s,a,b)$,
\begin{equation}
    \left|
    \mathbb E_{\mathcal P_\rho(s,a,b;\widehat P_h^k)}
    \big[\overline V_{h+1}^k\big]
    -
    \mathbb E_{\mathcal P_\rho(s,a,b;P_h^\star)}
    \big[\overline V_{h+1}^k\big]
    \right|
    \le
    \mathrm{bonus}_h^k(s,a,b).
    \label{eq:proper_bonus_over_new}
\end{equation}

\paragraph{Step 3: optimism by backward induction.}
We next establish the optimism property needed for regret analysis.
On event $\mathcal E$, for all $k\in[K]$, $h\in[H]$, and $s\in\mathcal S$,
\begin{equation}
V^\star_{h,P^\star,\mathbf{\Phi}}(s)
\le
\overline V_h^k(s),
\label{eq:onesided_optimism_hoeffding_new}
\end{equation}
and entrywise, for all $(s,a,b)\in\mathcal S\times\mathcal A\times\mathcal B$,
\begin{equation}
Q^\star_{h,P^\star,\mathbf{\Phi}}(s,a,b)\le \overline Q_h^k(s,a,b).
\label{eq:onesided_Q_optimism_hoeffding_new}
\end{equation}
The proof is by backward induction on $h$.
For $h=H+1$, the claim is immediate.
Suppose the claim holds at step $h+1$.
For every $(s,a,b)$, the robust Bellman recursion, the induction hypothesis, monotonicity of the dual TV operator, and \eqref{eq:proper_bonus_over_new} imply
\begin{align*}
Q^\star_{h,P^\star,\mathbf{\Phi}}(s,a,b)
&=
R_h(s,a,b)
+
\mathbb E_{\mathcal P_\rho(s,a,b;P_h^\star)}
\big[V^\star_{h+1,P^\star,\mathbf{\Phi}}\big]\\
&\le
R_h(s,a,b)
+
\mathbb E_{\mathcal P_\rho(s,a,b;P_h^\star)}
\big[\overline V_{h+1}^k\big]\\
&\le
R_h(s,a,b)
+
\mathbb E_{\mathcal P_\rho(s,a,b;\widehat P_h^k)}
\big[\overline V_{h+1}^k\big]
+
\mathrm{bonus}_h^k(s,a,b).
\end{align*}
Since
\[
Q^\star_{h,P^\star,\mathbf{\Phi}}(s,a,b)\le\min\{H,\rho^{-1}\},
\]
the clipping in the definition of $\overline Q_h^k$ is harmless, and hence
\[
Q^\star_{h,P^\star,\mathbf{\Phi}}(s,a,b)\le \overline Q_h^k(s,a,b).
\]
The monotonicity of $\mathsf{MM}$ then implies
\[
V^\star_{h,P^\star,\mathbf{\Phi}}(s)
=
(\mathsf{MM}[Q^\star_{h,P^\star,\mathbf{\Phi}}])(s)
\le
(\mathsf{MM}[\overline Q_h^k])(s)
=
\overline V_h^k(s).
\]
This proves \eqref{eq:onesided_optimism_hoeffding_new}--\eqref{eq:onesided_Q_optimism_hoeffding_new}.

\paragraph{Step 4: episode-wise regret decomposition.}
Fix an adaptive Markov opponent sequence $\nu=\{\nu^k\}_{k=1}^K$ that is non-anticipating.
By the fixed-initial-state convention, $s_1^k=s_1$ for every episode $k$.
Define the realized-opponent optimistic gap
\[
D_h^k(s):=
\Big(
\overline V_h^k(s)-V^{\pi^k,\nu^k}_{h,P^\star,\mathbf{\Phi}}(s)
\Big)_+,
\qquad
\forall (h,k,s)\in[H]\times[K]\times\mathcal S.
\]
By \eqref{eq:onesided_optimism_hoeffding_new}, for each episode $k$,
\[
V^\star_{1,P^\star,\mathbf{\Phi}}(s_1^k)-V^{\pi^k,\nu^k}_{1,P^\star,\mathbf{\Phi}}(s_1^k)
\le
\overline V_1^k(s_1^k)-V^{\pi^k,\nu^k}_{1,P^\star,\mathbf{\Phi}}(s_1^k)
\le
D_1^k(s_1^k).
\]
Thus it suffices to upper bound $\sum_{k=1}^K D_1^k(s_1^k)$.

Now fix $(h,k,s,a,b)$.
By the definition of $\overline Q_h^k$, the robust Bellman recursion for
$Q^{\pi^k,\nu^k}_{h,P^\star,\mathbf{\Phi}}$, and \eqref{eq:proper_bonus_over_new},
\begin{align}
    &\overline Q_h^k(s,a,b)
    -
    Q^{\pi^k,\nu^k}_{h,P^\star,\mathbf{\Phi}}(s,a,b)
    \nonumber\\
    &\qquad\le
    \mathbb E_{\mathcal P_\rho(s,a,b;P_h^\star)}
    \big[\overline V_{h+1}^k\big]
    -
    \mathbb E_{\mathcal P_\rho(s,a,b;P_h^\star)}
    \big[V^{\pi^k,\nu^k}_{h+1,P^\star,\mathbf{\Phi}}\big]
    +
    2\mathrm{bonus}_h^k(s,a,b).
    \label{eq:game_hoeffding_analysis_1}
\end{align}
For the robust-expectation difference, the TV dual form gives
\begin{align}
    &\mathbb E_{\mathcal P_\rho(s,a,b;P_h^\star)}
    \big[\overline V_{h+1}^k\big]
    -
    \mathbb E_{\mathcal P_\rho(s,a,b;P_h^\star)}
    \big[V^{\pi^k,\nu^k}_{h+1,P^\star,\mathbf{\Phi}}\big]
    \nonumber\\
    &\qquad\le
    \sup_{\eta\in[0,\min\{H,\rho^{-1}\}]}
    \mathbb E_{P_h^\star(\cdot|s,a,b)}
    \left[
    (\eta-V^{\pi^k,\nu^k}_{h+1,P^\star,\mathbf{\Phi}})_+
    -
    (\eta-\overline V_{h+1}^k)_+
    \right]
    \nonumber\\
    &\qquad\le
    \mathbb E_{P_h^\star(\cdot|s,a,b)}
    \left[
    \Big(
    \overline V_{h+1}^k
    -
    V^{\pi^k,\nu^k}_{h+1,P^\star,\mathbf{\Phi}}
    \Big)_+
    \right]
    =
    \mathbb E_{P_h^\star(\cdot|s,a,b)}[D_{h+1}^k],
    \label{eq:game_hoeffding_analysis_2}
\end{align}
where the second inequality uses
$(\eta-x)_+-(\eta-y)_+\le (y-x)_+$.
Combining \eqref{eq:game_hoeffding_analysis_1} and \eqref{eq:game_hoeffding_analysis_2},
\begin{equation}
    \Big(
    \overline Q_h^k(s,a,b)
    -
    Q^{\pi^k,\nu^k}_{h,P^\star,\mathbf{\Phi}}(s,a,b)
    \Big)_+
    \le
    \mathbb E_{P_h^\star(\cdot|s,a,b)}[D_{h+1}^k]
    +
    2\mathrm{bonus}_h^k(s,a,b).
    \label{eq:game_hoeffding_analysis_3}
\end{equation}

For each $(h,k)$, define
\begin{align}
    \Delta_h^k
    &:=
    \mathbb E_{a\sim\pi_h^k(\cdot|s_h^k),\,b\sim\nu_h^k(\cdot|s_h^k)}
    \left[
    \Big(
    \overline Q_h^k(s_h^k,a,b)
    -
    Q^{\pi^k,\nu^k}_{h,P^\star,\mathbf{\Phi}}(s_h^k,a,b)
    \Big)_+
    \right],
    \label{eq:game_hoeffding_delta}\\
    \zeta_h^k
    &:=
    \Delta_h^k-
    \Big(
    \overline Q_h^k(s_h^k,a_h^k,b_h^k)
    -
    Q^{\pi^k,\nu^k}_{h,P^\star,\mathbf{\Phi}}(s_h^k,a_h^k,b_h^k)
    \Big)_+,
    \label{eq:game_hoeffding_zeta}\\
    \xi_h^k
    &:=
    \mathbb E_{P_h^\star(\cdot|s_h^k,a_h^k,b_h^k)}[D_{h+1}^k]
    -
    D_{h+1}^k(s_{h+1}^k).
    \label{eq:game_hoeffding_xi}
\end{align}
By the minimax update of $\overline V_h^k$,
\[
\overline V_h^k(s_h^k)
\le
\mathbb E_{a\sim\pi_h^k(\cdot|s_h^k),\,b\sim\nu_h^k(\cdot|s_h^k)}
[\overline Q_h^k(s_h^k,a,b)],
\]
and by the robust Bellman recursion under $(\pi^k,\nu^k)$,
\[
V^{\pi^k,\nu^k}_{h,P^\star,\mathbf{\Phi}}(s_h^k)
=
\mathbb E_{a\sim\pi_h^k(\cdot|s_h^k),\,b\sim\nu_h^k(\cdot|s_h^k)}
[Q^{\pi^k,\nu^k}_{h,P^\star,\mathbf{\Phi}}(s_h^k,a,b)].
\]
Therefore,
\[
D_h^k(s_h^k)
\le
\left(
\mathbb E_{a\sim\pi_h^k(\cdot|s_h^k),\,b\sim\nu_h^k(\cdot|s_h^k)}
[
\overline Q_h^k(s_h^k,a,b)
-
Q^{\pi^k,\nu^k}_{h,P^\star,\mathbf{\Phi}}(s_h^k,a,b)
]
\right)_+
\le
\Delta_h^k.
\]
The last inequality uses Jensen's inequality and the convexity of $x\mapsto x_+$.
Consequently,
\begin{equation}
    \mathrm{Regret}_{\mathbf{\Phi},\{\nu^k\}}(K)
    \le
    \sum_{k=1}^K D_1^k(s_1^k)
    \le
    \sum_{k=1}^K \Delta_1^k.
    \label{eq:game_hoeffding_analysis_4}
\end{equation}
Let
\[
\mathcal F_{h,k}
:=
\sigma\left(
\{(s_i^\tau,a_i^\tau,b_i^\tau,s_{i+1}^\tau)\}_{i\in[H],\,\tau<k}
\cup
\{(s_i^k,a_i^k,b_i^k,s_{i+1}^k)\}_{i<h}
\cup
\{s_h^k\}
\right).
\]
The policies $\pi_h^k(\cdot|s_h^k)$ and $\nu_h^k(\cdot|s_h^k)$ are $\mathcal F_{h,k}$-measurable, and the opponent is non-anticipating with respect to Player~1's current randomization.
Thus
\[
\mathbb E[\zeta_h^k\mid\mathcal F_{h,k}]=0,
\qquad
\mathbb E[\xi_h^k\mid\sigma(\mathcal F_{h,k},a_h^k,b_h^k)]=0.
\]
The martingale concentration below is applied to this natural sequential filtration, which first reveals the history, then the sampled action pair, and then the next state.
Using \eqref{eq:game_hoeffding_analysis_3},
\begin{align}
    \Delta_h^k
    &=
    \zeta_h^k+
    \Big(
    \overline Q_h^k(s_h^k,a_h^k,b_h^k)
    -
    Q^{\pi^k,\nu^k}_{h,P^\star,\mathbf{\Phi}}(s_h^k,a_h^k,b_h^k)
    \Big)_+
    \nonumber\\
    &\le
    \zeta_h^k
    +
    \mathbb E_{P_h^\star(\cdot|s_h^k,a_h^k,b_h^k)}[D_{h+1}^k]
    +
    2\mathrm{bonus}_h^k(s_h^k,a_h^k,b_h^k)
    \nonumber\\
    &=
    \zeta_h^k+\xi_h^k+D_{h+1}^k(s_{h+1}^k)
    +
    2\mathrm{bonus}_h^k(s_h^k,a_h^k,b_h^k)
    \nonumber\\
    &\le
    \zeta_h^k+\xi_h^k+\Delta_{h+1}^k
    +
    2\mathrm{bonus}_h^k(s_h^k,a_h^k,b_h^k),
    \label{eq:game_hoeffding_analysis_5}
\end{align}
where the last inequality uses $D_{h+1}^k(s_{h+1}^k)\le\Delta_{h+1}^k$.
Recursively applying \eqref{eq:game_hoeffding_analysis_5} from $h=1$ to $H$ and using
the convention $D_{H+1}^k\equiv0$ and $\Delta_{H+1}^k=0$ yields
\begin{equation}
    \sum_{k=1}^K\Delta_1^k
    \le
    \sum_{k=1}^K\sum_{h=1}^H(\zeta_h^k+\xi_h^k)
    +
    2\sum_{k=1}^K\sum_{h=1}^H
    \mathrm{bonus}_h^k(s_h^k,a_h^k,b_h^k).
    \label{eq:game_hoeffding_analysis_6}
\end{equation}

\paragraph{Step 5: summing up the bonuses.}
By the value bound $0\le D_h^k\le\min\{H,\rho^{-1}\}$ and the clipped $Q$-updates, both
$\zeta_h^k$ and $\xi_h^k$ are uniformly bounded by $\min\{H,\rho^{-1}\}$ up to an absolute constant.
Thus a separate application of the Azuma--Hoeffding inequality implies that, with probability at least $1-\delta/2$,
\begin{equation}
    \sum_{k=1}^K\sum_{h=1}^H(\zeta_h^k+\xi_h^k)
    \le
    C_1\min\{H,\rho^{-1}\}\sqrt{HK\iota},
    \label{eq:game_hoeffding_mds_sum}
\end{equation}
where $C_1>0$ is an absolute constant.

It remains to control the bonus sum. By \eqref{eq:mg_hoeffding_bonus_new},
\begin{align}
    &\sum_{k=1}^K\sum_{h=1}^H
    \mathrm{bonus}_h^k(s_h^k,a_h^k,b_h^k)
    \nonumber\\
    &\qquad\le
    c_b\min\{H,\rho^{-1}\}\sqrt{S\iota}
    \sum_{k=1}^K\sum_{h=1}^H
    \frac{1}{\sqrt{N_h^k(s_h^k,a_h^k,b_h^k)\vee 1}}.
    \label{eq:game_hoeffding_bonus_sum_1}
\end{align}
For each fixed $h$, summing over the $SAB$ state-action-opponent-action triples and using
$\sum_{j=0}^{n-1}(j\vee1)^{-1/2}\le 2\sqrt n+1$ gives
\[
\sum_{k=1}^K
\frac{1}{\sqrt{N_h^k(s_h^k,a_h^k,b_h^k)\vee 1}}
\le
C_2\sqrt{SABK}
\]
for an absolute constant $C_2>0$.
Summing over $h\in[H]$ and substituting into \eqref{eq:game_hoeffding_bonus_sum_1}, we get
\begin{equation}
    \sum_{k=1}^K\sum_{h=1}^H
    \mathrm{bonus}_h^k(s_h^k,a_h^k,b_h^k)
    \le
    C_3\min\{H,\rho^{-1}\}\,H S\sqrt{ABK\iota},
    \label{eq:game_hoeffding_bonus_sum_2}
\end{equation}
where $C_3>0$ is an absolute constant.

Combining
\eqref{eq:game_hoeffding_analysis_4},
\eqref{eq:game_hoeffding_analysis_6},
\eqref{eq:game_hoeffding_mds_sum}, and
\eqref{eq:game_hoeffding_bonus_sum_2}, and absorbing logarithmic factors into
$\widetilde{\mathcal O}(\cdot)$, we obtain by a union bound that, with probability at least $1-\delta$,
\[
\mathrm{Regret}_{\mathbf{\Phi},\{\nu^k\}}(K)
\le
\widetilde{\mathcal O}\!\left(
\min\{H,\rho^{-1}\}\,H S\sqrt{ABK}
\right).
\]
This completes the proof of Theorem~\ref{thm:onesided_regret}.
\endgroup
\end{proof}

\end{document}